\definecolor{links}{HTML}{0078b0} 
\definecolor{files}{HTML}{fc6160}
\newenvironment{breakablealgorithm}
  {
   \begin{center}
     \refstepcounter{algorithm}
     \hrule height.8pt depth0pt \kern2pt
     \renewcommand{\caption}[2][\relax]{
       {\raggedright\textbf{\ALG@name~\thealgorithm} ##2\par}%
       \ifx\relax##1\relax 
         \addcontentsline{loa}{algorithm}{\protect\numberline{\thealgorithm}##2}%
       \else 
         \addcontentsline{loa}{algorithm}{\protect\numberline{\thealgorithm}##1}%
       \fi
       \kern2pt\hrule\kern2pt
     }
  }{
     \kern2pt\hrule\relax
   \end{center}
  }
\begin{document}

\title{On the Importance of Sampling in Training GCNs: Tighter Analysis and Variance Reduction
}


\author{Weilin Cong \and Morteza Ramezani \and Mehrdad Mahdavi
}


\institute{ Weilin Cong \at
            Pennsylvania State University \\
            \email{wxc272@psu.edu}
            \and
            Morteza Ramezani \at
            Pennsylvania State University \\
            \email{morteza@cse.psu.edu}
            \and
            Mehrdad Mahdavi \at
            Pennsylvania State University \\
            \email{mzm616@psu.edu}
}

\date{Received: date / Accepted: date}

\maketitle

\begin{abstract}
Graph Convolutional Networks (GCNs) have achieved impressive empirical advancement across a wide variety of semi-supervised node classification tasks. Despite their great success, training GCNs on large graphs suffers from computational and memory issues. A potential path to circumvent these obstacles is sampling-based methods, where at each layer a subset of nodes is sampled. Although recent studies have empirically demonstrated the effectiveness of sampling-based methods, these works lack theoretical convergence guarantees under realistic settings and cannot fully leverage the information of evolving parameters during optimization. In this paper, we describe and analyze a general \textbf{\textit{doubly variance reduction}} schema that can accelerate any sampling method under the memory budget. The motivating impetus for the proposed schema is a careful analysis for the variance of sampling methods where it is shown that the induced variance can be decomposed into node embedding approximation variance (\emph{zeroth-order variance}) during forward propagation and layerwise-gradient variance (\emph{first-order variance}) during backward propagation. We theoretically analyze the convergence of the proposed schema and show that it enjoys an $\mathcal{O}(1/T)$ convergence rate.  We complement our theoretical results by integrating the proposed schema in different sampling methods and applying them to different large real-world graphs. 
\keywords{Graph representation learning \and Graph convolutional networks \and Semi-supervised learning \and Variance reduction \and Optimization}
\end{abstract}

\section{Introduction}
Recently, there has been a thrust towards representation learning approaches for machine learning on 
graph structured data for
various applications in different domains including social networks~\cite{kipf2016semi,hamilton2017inductive,wang2019mcne,deng2019learning,qiu2018deepinf}, traffic prediction~\cite{cui2019traffic,rahimi2018semi,li2019predicting,kumar2019predicting}, knowledge graphs~\cite{wang2019knowledge,wang2019kgat,park2019estimating}, drug reaction~\cite{do2019graph,duvenaud2015convolutional} and recommendation system~\cite{berg2017graph,ying2018graph}. The overarching goal is to learn representations, i.e., embeding nodes as points in a low-dimensional vector space, by encoding the structural information captured by the underlying graph along with  node and (possibly) edge features  that can be used as feature inputs for downstream machine
learning tasks.

The field of graph representation learning has been greatly developed over
the past decade~\cite{zhang2020deep,ma2021deep},  and there has been a surge of approaches that seek to learn representations in graph data. In particular, graph convolutional networks (GCNs) have achieved great success in many graph-related applications, such as semi-supervised node classification~\cite{semigcn}, supervised graph classification~\cite{graph_isomorphism_network}, protein interface prediction~\cite{Fout2017ProteinIP}, and knowledge graph~\cite{schlichtkrull2018modeling,wang2017knowledge}. However, most works on GCNs focus on relatively small graphs, and scaling GCNs for large-scale graphs is not straightforward.
Due to the dependency of the nodes in the graph, we need to consider a large receptive-field to calculate the representation of each node, while the receptive field grows exponentially with respect to the number of convolutional layers or filters-- a phenomenon known as ``neighborhood explosio''~\cite{ma2021deep}. This  issue in turn hinders the {\em scalability} of stochastic optimization methods, in particular Stochastic Gradient Descent (SGD), and prevents
them from being adapted to large-scale graphs  to train GCNs.

To alleviate the exponential computation and memory requirements of training GCNs with multiple graph convolutional layers,  and  correspondingly improve their scalability,  sampling-based methods, such as node-wise sampling~\cite{graphsage,pinsage,control_variacne_gcn}, layer-wise sampling~\cite{fastgcn,ladies},  subgraph sampling~\cite{clustergcn,graphsaint},  bandit sampling~\cite{liu2020bandit,zhang2021biased}, minimal-variance sampling~\cite{cong2020minimal}, and lazy sampling with recycling~\cite{ramezani2020gcn} are proposed to be utilized   in  mini-batch training of GCNs  to accelerate the optimization.  The main idea of sampling methods is to reduce the number of nodes involved in the computing the representation of nodes and hence lower the required time and memory requirements. 

Although empirical results show that sampling-based methods can scale GCN training to large graphs, these methods suffer from a few key issues. On the one hand, sampling methods can significantly degrade the convergence rate of SGD due to bias and high variance introduced by sampling nodes at intermediate layers. This issue calls for novel algorithmic methods to reduce the negative effect of sampling methods. Moreover,  the theoretical understanding of sampling-based methods is still lacking. On the other hand, the aforementioned sampling strategies are only based on the static structure of the graph.
Although  recent attempts~\cite{adaptivegcn,cong2020minimal,liu2020bandit} propose to utilize adaptive importance sampling strategies to constantly re-evaluate the relative importance of nodes during training (e.g., current gradient or representation of nodes), finding the optimal adaptive sampling distribution is computationally inadmissible, as it requires to calculate the full gradient or node representations in each iteration. 
This necessitates developing alternative solutions that can efficiently be computed and that come with theoretical guarantees.

\begin{figure}[ht]
    \centering
    \includegraphics[width=0.99\textwidth]{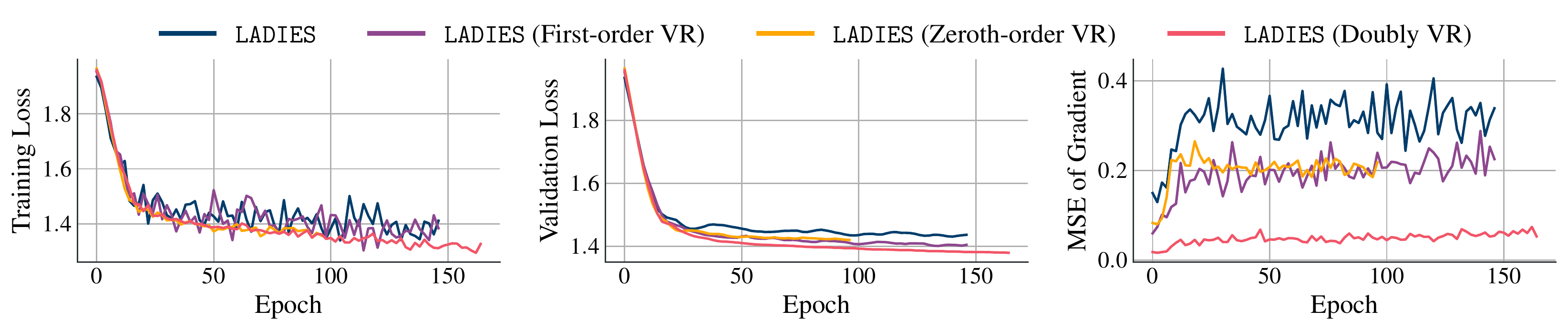}
    \vspace{-2mm}
    \caption{The effect of doubly variance reduction on training loss, validation loss, and mean-square error (MSE) of gradient on \texttt{Flickr} dataset using layer-wise sampling schema (\texttt{LADIES}~\cite{ladies}).}
    \label{fig:decomposition_loss}
\end{figure}

In this paper, we develop a novel variance reduction schema that can be applied to any sampling strategy
to significantly reduce the induced variance. The key idea is to use the historical node embeddings and the historical layerwise gradient of each graph convolution layer as control variants.
The main motivation behind the proposed schema stems from our theoretical analysis of the sampling methods' variance in training GCNs. 
Specifically, due to the composite structure of objective in GCN training~\cite{cong2020minimal}, any sampling strategy introduces two types of variances in estimating the stochastic gradients: 
node embedding approximation variance (\emph{zeroth-order variance}) which results from  embeddings approximation during forward propagation, and layerwise-gradient variance (\emph{first-order variance}) which results from gradient estimation during backward propagation.
In Figure~\ref{fig:decomposition_loss}, we exhibit the performance of proposed schema when utilized in the sampling strategy introduced in~\cite{ladies}. 
The plots show that applying our proposal can lead to a significant reduction in mean-square error of stochastic gradients; hence faster convergence rate and better test accuracy while enjoying the scalability feature of sampling strategy. 
We can also see that both zeroth-order and first-order methods are equally important and demonstrate significant improvement when applied jointly (i.e, \emph{doubly variance reduction}).

\paragraph{Contributions.}
To this end, we summarize the contributions of this paper as follows:
\begin{itemize}
\item We provide the theoretical analysis for sampling-based GCN training (\texttt{SGCN}) with a non-asymptotic convergence rate. 
We show that due to the node embedding approximation variance, \texttt{SGCN}s suffer from \emph{residual error} that hinders their convergence. \vspace{2mm}
\item We mathematically show that the aforementioned \emph{residual error} can be resolved by employing zeroth-order variance reduction to node embedding approximation (dubbed as \texttt{SGCN+}), which explains why \texttt{VRGCN}~\cite{control_variacne_gcn} enjoys a better convergence than \texttt{GraphSAGE}~\cite{graphsage}, even with less sampled neighbors. \vspace{2mm}
\item We extend the algorithm from node embedding approximation to stochastic gradient approximation, and propose a generic and efficient doubly variance reduction schema (\texttt{SGCN++}). 
\texttt{SGCN++} can be integrated with different sampling-based methods to significantly reduce both zeroth- and first-order variance, and resulting in a faster convergence rate and better generalization. \vspace{2mm}
\item We theoretically analyze the convergence of \texttt{SGCN++} and obtain an $\mathcal{O}(1/T)$ rate, which significantly improves the best known bound $\mathcal{O}(1/\sqrt{T})$. 
We empirically verify \texttt{SGCN++} through various experiments on several real-world datasets and different sampling methods, where it demonstrates significant improvements over the original sampling methods.  
\end{itemize}

\paragraph{Organization.} The paper is organized as follows. We discuss further related works in Section~\ref{section:related_works}. In Section~\ref{section:problem_formulation} we formally state the problem of training GCNs in a semi-supervised multi-class classification setting.  In  Section~\ref{section:SGCN}, we introduce sampling-based methods for scalable training of GCNs  (\texttt{SGCN}), and establish their convergence rate. We propose our  zeroth-order variance reduction  (\texttt{SGCN+}) and  a doubly variance reduction  (\texttt{SGCN++}) algorithms in Section~\ref{section:SGCN+} and Section~\ref{section:SGCN++}, respectively and obtain their convergence rates. Finally, we empirically evaluate our algorithms in Section~\ref{section:experiments} and conclude the paper in Section~\ref{section:conclusion}. For ease of exposition, we defer the proof of theoretical results to the appendix.


\section{Related works}\label{section:related_works}

\paragraph{Training GCNs via sampling.} The full-batch training of a  typical GCN is employed in~\cite{semigcn} which necessitates keeping the whole graph data and intermediate nodes' representations in the memory. 
This is the key bottleneck that hinders the scalability of full-batch GCN training. 
To overcome this issues, sampling-based GCN training methods \cite{graphsage,control_variacne_gcn,clustergcn,fastgcn,adaptivegcn} are proposed to train GCNs based on mini-batch of nodes, and only aggregate the embeddings of a sampled subset of neighbors of nodes in the mini-batch. 
For example, 
\texttt{GraphSAGE}~\cite{graphsage} restricts the computation complexity by uniformly sampling a fixed number of neighbors from the previous layer nodes. 
However, a significant computational overhead is introduced when GCN goes deep.
\texttt{VRGCN}~\cite{control_variacne_gcn} further reduces the neighborhood size and uses history activation of the previous layer to reduce variance. 
However, they require to perform a full-batch graph convolutional operation on history activation during each forward propagation, which is computationally expensive.
Another direction 
applies layerwise importance sampling to reduce variance. 
For example, \texttt{FastGCN}~\cite{fastgcn} independently samples a constant number of nodes in all layers using importance sampling. However, the sampled nodes are too sparse to achieve high accuracy.
\texttt{LADIES}~\cite{ladies} further restrict the candidate nodes in the union of the neighborhoods of the sampled nodes in the upper layer. However, significant overhead may be incurred due to the expensive sampling algorithm.
In addition, subgraph sampling methods such as \texttt{GraphSAINT}~\cite{graphsaint} construct mini-batches by importance sampling, and apply normalization techniques to eliminate bias and reduce variance.
However, the sampled subgraphs are usually sparse and require a large sampling size to guarantee the performance.

\paragraph{Theoretical analysis.}
Despite many algorithmic progresses over the years, the theoretical understanding of the convergence for \texttt{SGCN}s training method is still limited.
\texttt{VRGCN} provides a convergence analysis under a strong assumption that the stochastic gradient due to sampling is unbiased and achieved a convergence rate of $\mathcal{O}(1/\sqrt{T})$. 
However, the convergence analysis is limited to \texttt{VRGCN}, and the assumption is not true due to the composite structure of the training objective as will be elaborated.
~\cite{biased_but_consistent} provides another convergence analysis for \texttt{FastGCN} under a strong assumption that the stochastic gradient of GCN converges to the consistent gradient exponentially fast with respect to the sample size, and results in the same convergence rate as unbiased ones, i.e., $\mathcal{O}(1/\sqrt{T})$. 
\cite{huang2021wide,du2019graph} analyze the convergence of full-batch GNN training from the perspective of Neural Tangent Kernels, \cite{xu2021optimization} analyze the  convergence of full-batch linear GNN training with assumptions on the initialization.
Most recently, \cite{sato2020constant} provides PAC learning-style bounds on the node embedding and gradient estimation for \texttt{SGCN}s training. 
Another direction of theoretical research focuses on analyzing the expressive power of GCN~\cite{garg2020generalization,chen2019generalization,zhang2020efficient,xu2020neural}, which is not the focus of this paper and omitted for brevity.


\paragraph{Training GCNs and composite optimization.}
The proposed \emph{doubly variance reduction} shares the same spirit with the variance reduced composite optimization problem considered in ~\cite{zhang2019composite,hu2020biased,tran2020hybrid,zhang2019stochastic,zhang2019multi}, but we remark that there are few key differences that make the theoretical analysis significantly more challenging for the GCNs. Please see Remark~\ref{remark:composite} and Appendix~\ref{supp:connect_to_comp_opt} for detailed discussion.

\section{Problem formulation}
\label{section:problem_formulation}

We begin by introducing the basic mathematical formulation of training GCNs. 
In this paper, we consider training GCNs in a semi-supervised multi-class classification setting. 
Given an undirected graph $\mathcal{G}(\mathcal{V},\mathcal{E})$ with $N=|\mathcal{V} |$ and $| \mathcal{E} |$, edges and the adjacency matrix $\mathbf{A}\in \{0, 1\}^{N\times N}$, we assume that each node is associated with a feature vector $\mathbf{x}_i\in\mathbb{R}^d$ and categorical label $y_i \in\mathbb{Z}+$. 
We use $\mathbf{X} = \{ \mathbf{x}_i \}_{i=1}^N$ and  $\mathbf{y} = \{y_i\}_{i=1}^N$ to denote the node feature matrix and label vector, respectively. The Laplacian matrix is calculated as $\mathbf{L} = \mathbf{D}^{-1/2}\mathbf{A} \mathbf{D}^{-1/2}$ or $\mathbf{L} = \mathbf{D}^{-1}\mathbf{A}$ where  $\mathbf{D}\in\mathbb{R}^{N\times N}$ is the degree matrix.  We use $\bm{\theta} = \{ \mathbf{W}^{(\ell)} \}_{\ell=1}^L$ to denote the stacked weight parameters of a $L$-layer GCN. The training of full-batch GCN (\texttt{FullGCN}) as an empirical risk minimization problem aims at  minimizing the loss $\mathcal{L}(\bm{\theta})$ over all training data
\begin{equation}\label{eqn:errm:gcn}
    \mathcal{L}(\bm{\theta}) = \frac{1}{N}\sum_{i=1}^N \text{Loss}(\mathbf{h}^{(L)}_i, y_i),~~
    \mathbf{H}^{(\ell)} = \sigma(\underbrace{\mathbf{L} \mathbf{H}^{(\ell-1)} \mathbf{W}^{(\ell)}}_{\mathbf{Z}^{(\ell)}}),
\end{equation}
where $\mathbf{H}^{(0)} = \mathbf{X} \in \mathbb{R}^{N \times d}$ denotes the input node features matrix, $\mathbf{h}^{(\ell)}_i$ is the $i$th row of $\mathbf{H}^{(\ell)}$ that corresponds to representation of $i$th node at layer $\ell$, $\text{Loss}(\cdot,\cdot)$ is the  loss function (e.g., cross-entropy loss) to measure the discrepancy between the prediction of the GCN and its ground truth label, and $\sigma(\cdot)$ is the activation function (e.g., ReLU function).

\begin{algorithm}[t]
  \caption{\texttt{SGCN}: Vanilla sampling-based GCN training method }
\begin{algorithmic}[1]
\label{algorithm:sgcn_complete}
  \STATE {\bfseries Input:} Learning rate $\eta>0$, a sampling strategy
    \FOR{$t = 1,\ldots,T$}
        \STATE Sample a mini-batch $\mathcal{V}_\mathcal{B}\subset \mathcal{V}$ of all nodes with size $B$ \\
        \STATE Sample a  subset of nodes $\mathcal{B}^{(\ell)}$ for each layer $\ell = 1, 2, \ldots, L$ based on given sampling strategy 
        \STATE Compute per-layer sparse Laplacian matrices $\widetilde{\mathbf{L}}^{(\ell)}$ based on $\mathcal{B}^{(\ell)}$ for all layers $\ell = 1, 2, \ldots, L$
        \STATE Calculate node embeddings using 
        \begin{equation}
            \widetilde{\mathbf{H}}^{(\ell)} = \sigma(\widetilde{\mathbf{L}}^{(\ell)} \widetilde{\mathbf{H}}^{(\ell-1)} \mathbf{W}^{(\ell)}),\text{ where }\widetilde{\mathbf{H}}^{(0)} = \mathbf{X},
        \end{equation}\vspace{-5pt}
        \STATE Calculate loss as $\widetilde{\mathcal{L}}(\bm{\theta}_t) = \frac{1}{B}\sum_{i\in\mathcal{V}_\mathcal{B}} \text{Loss}(\widetilde{\bm{h}}_i^{(L)}, y_i)$
        \STATE Calculate stochastic gradient $\nabla \widetilde{\mathcal{L}}(\bm{\theta}_t) = \{ \widetilde{\mathbf{G}}^{(\ell)}\}_{\ell=1}^L $ as
        \begin{equation}
            \begin{aligned}
            \widetilde{\mathbf{G}}_t^{(\ell)} &:= [\widetilde{\mathbf{L}}^{(\ell)} \widetilde{\mathbf{H}}_t^{(\ell-1)}]^\top \Big( \widetilde{\mathbf{D}}_t^{(\ell+1)} \circ \nabla \sigma(\widetilde{\mathbf{Z}}_t^{(\ell)}) \Big),~\\
            \widetilde{\mathbf{D}}_t^{(\ell)} &:= [\widetilde{\mathbf{L}}^{(\ell)}]^\top \Big( \widetilde{\mathbf{D}}_t^{(\ell+1)} \circ \nabla \sigma(\widetilde{\mathbf{Z}}_t^{(\ell)}) \Big) \mathbf{W}_t^{(\ell)},~
            \widetilde{\mathbf{D}}_t^{(L+1)} = \frac{\partial \widetilde{\mathcal{L}}(\bm{\theta}_t)}{\partial \widetilde{\mathbf{H}}^{(L)}}
            \end{aligned}
        \end{equation}
        \STATE Update parameters as $\bm{\theta}_{t+1} = \bm{\theta}_t - \eta \nabla \widetilde{\mathcal{L}}(\bm{\theta}_t)$ 
    \ENDFOR
\STATE {\bfseries Output:} Model with parameter $\bm{\theta}_{T+1}$ \\
\end{algorithmic}
\end{algorithm}
\section{A tight analysis of sampling based stochastic GCN training} \label{section:SGCN}

In this section, we start by introducing the challenges in efficiently training GCNs and introduce sampling-based GCN strategies to alleviate the issue (in Section~\ref{section:sampling_related}), and finally provide a tight analysis of the convergence rate of the sampling-based strategies for  training GCNs (in Section~\ref{section:sampling_analysis}).

\subsection{Scalable training via sampling} \label{section:sampling_related}
To efficiently solve empirical risk in problem~(Eq.~\ref{eqn:errm:gcn}), a simple idea to apply Stochastic Gradient Descent (SGD), where at each iteration we utilize a mini-batch $\mathcal{V}_{\mathcal{B}}$ of all nodes  to update the model parameters.  However, due to the interdependence of nodes in a graph, training GNNs on large-scale graphs remains a big challenge.
More specifically, in GNNs, the representation (embedding) of a node is obtained by gathering the embeddings of its neighbors from the previous layers. 
Unlike other neural networks that the final output and gradient can be perfectly decomposed over individual data samples, in GNNs, the embedding of a given node depends recursively on all its neighbor’s embedding, and such dependency grows exponentially with respect to the number of layers, a phenomenon known as \textit{neighbor explosion}, which prevents their application to large-scale graphs.

One practical solution to alleviate this issue is to leverage sampling-based GCN training strategy (\texttt{SGCN} in Algorithm~\ref{algorithm:sgcn_complete}) to sample a subset of nodes and construct a sparser normalized
Laplacian matrix $\widetilde{\mathbf{L}}^{(\ell)}$ for each layer with $\text{supp}(\widetilde{\mathbf{L}}^{(\ell)}) \ll \text{supp}(\mathbf{L})$, and perform forward and backward propagation only based on the sampled Laplacian matrices. 
More specifically, during training,
we sample a mini-batch of nodes  $\mathcal{V}_\mathcal{B}\subseteq\mathcal{V}$ from all nodes with size $B=|\mathcal{V}_\mathcal{B}|$, and construct the set of sparser Laplacian matrices $\{\widetilde{\mathbf{L}}^{(\ell)}\}_{\ell=1}^L$   based on nodes sampled at each layer  and compute the stochastic gradient to update  parameters as
\footnote{We use a tilde symbol $\widetilde{\square}$ for their stochastic form}
\begin{equation}
    \nabla \widetilde{\mathcal{L}}(\bm{\theta}) = \frac{1}{B} \sum_{i\in\mathcal{V}_\mathcal{B}} \nabla \text{Loss}(\widetilde{\mathbf{h}}^{(L)}_i, y_i),~~
    \widetilde{\mathbf{H}}^{(\ell)} = \sigma(\underbrace{\widetilde{\mathbf{L}}^{(\ell)} \widetilde{\mathbf{H}}^{(\ell-1)} \mathbf{W}^{(\ell)}}_{\widetilde{\mathbf{Z}}^{(\ell)}}).
\end{equation}

The sparse Laplacian matrix construction algorithms can be roughly classified as \emph{nodewise} sampling, \emph{layerwise} sampling, and \emph{subgraph} sampling. 

\paragraph{Node-wide sampling.}
The main idea of node-wise sampling is to first sample all the nodes needed for the computation using neighbor sampling (NS), then train the GCN based on the sampled nodes. 
For each node in the $\ell$th GCN layer, NS randomly samples $s$ of its neighbors at the $(\ell-1)$th GCN layer and formulate $\widetilde{\mathbf{L}}^{(\ell)}$ by
\begin{equation}\label{eq:node_wise}
    \widetilde{L}_{i,j}^{(\ell)} = \begin{cases}
 \frac{|\mathcal{N}(i)|}{s} \times {L}_{i,j}, &\text{ if } j \in \widetilde{\mathcal{N}}^{(\ell)}(i) \\
 0, &\text{ otherwise }
\end{cases}
\end{equation}
where $\mathcal{N}(i)$ is the full set of $i$th node neighbor, $\widetilde{\mathcal{N}}^{(\ell)}(i) \subseteq \mathcal{N}(i)$ is the sampled neighbors of node $i$ for $\ell$th GCN layer.
\texttt{GraphSAGE}~\cite{graphsage} and \texttt{VRGCN}~\cite{control_variacne_gcn} follows the spirit of node-wise sampling where it performs uniform node sampling on the previous layer neighbors for a fixed number of nodes to bound the mini-batch computation complexity. 

\paragraph{Layer-wise sampling.}
To avoid the neighbor explosion issue, layer-wise sampling is introduced to control the size of sampled neighborhoods in each layer. 
For the $\ell$th GCN layer, layer-wise sampling methods sample a set of nodes $\mathcal{B}^{(\ell)}\subseteq\mathcal{V}$ of size $s$ under the distribution $\boldsymbol{p}$ to approximate the Laplacian by
\begin{equation}\label{eq:layer-wise}
    \widetilde{L}_{i,j}^{(\ell)} = \begin{cases}
 \frac{1}{s \times p_j}\times {L}_{i,j}, &\text{ if } j \in \mathcal{B}^{(\ell)} \\
 0, &\text{ otherwise }
\end{cases}
\end{equation}
Existing work \texttt{FastGCN}~\cite{fastgcn} and \texttt{LADIES}~\cite{ladies} follows the spirit of layer-wise sampling.
\texttt{FastGCN} performs independent node sampling for each layer and applies important sampling to reduce variance and results in a constant sample size in all layers. 
However, mini-batches potentially become too sparse to achieve high accuracy. 
\texttt{LADIES} improves \texttt{FastGCN} by layer-dependent sampling. 
Based on the sampled nodes in the upper layer, it selects their neighborhood nodes, constructs a bipartite subgraph, and computes the importance probability accordingly. 
Then, it samples a fixed number of nodes based on the calculated probability, and recursively conducts such a procedure per layer to construct the whole computation graph. 

\paragraph{Subgraph sampling.}
Subgraph sampling is similar to layer-wise sampling by restricting the sampled Laplacian matrices at each layer are identical
\begin{equation}\label{eq:graph-wise}
    \widetilde{L}_{i,j}^{(1)} = \ldots = \widetilde{L}_{i,j}^{(L)} = \begin{cases}
 \frac{1}{s \times p_j}\times {L}_{i,j}, &\text{ if } j \in \mathcal{B}^{(\ell)} \\
 0, &\text{ otherwise }
\end{cases}
\end{equation}
For example, \texttt{GraphSAINT}~\cite{graphsaint} can be viewed as a special case of layer-wise sampling algorithm \texttt{FastGCN} by restricting the nodes sampled at the $1$-st to the $(L-1)$th layer the same as the nodes sampled at the $L$th layer. 
However, \texttt{GraphSAINT} requires a significant large mini-batch size compared to other layer-wise sampling methods. 
We leave this as a potential future direction to explore.

Although sampling methods can alleviate the neighborhood explosion issue and make scalable training possible, sampling introduces a bias which degrades the convergence rate. 
In the next subsection, we provide a rigorous analysis of convergence rate to demonstrate the impact of sampling on convergence rate compared to SGD with no sampling being utilized.

\subsection{A tight analysis of convergence rate} \label{section:sampling_analysis}

Compared to vanilla SGD, the key challenge of theoretical understanding for \texttt{SGCN} training is the \textit{biasedness} of stochastic gradient due to sampling of nodes at inner layers. 
To see this, let denote \texttt{FullGCN}'s full-batch gradient as $\nabla \mathcal{L}(\bm{\theta}) = \{ \mathbf{G}^{(\ell)} \}_{\ell=1}^L,~\mathbf{G}^{(\ell)} = \frac{\partial \mathcal{L}(\bm{\theta})}{\partial \mathbf{W}^{(\ell)}}$ and \texttt{SGCN}'s stochastic gradient as $\nabla \widetilde{\mathcal{L}}(\bm{\theta}) = \{ \widetilde{\mathbf{G}}^{(\ell)} \}_{\ell=1}^L,~ \widetilde{\mathbf{G}}^{(\ell)} = \frac{\partial \widetilde{\mathcal{L}}(\bm{\theta})}{ \partial \mathbf{W}^{(\ell)}}$.
By the chain rule, we can compute the  full-batch gradient $\mathbf{G}_t^{(\ell)}$ w.r.t.  the $\ell$th layer weight matrix  $\mathbf{W}^{(\ell)}$ as
\begin{equation}\label{eq:full_batch_gradient}
\begin{aligned}
    \mathbf{G}_t^{(\ell)} &= [\mathbf{L} \mathbf{H}_t^{(\ell-1)}]^\top \big( \mathbf{D}_t^{(\ell+1)} \circ \sigma^\prime(\mathbf{Z}_t^{(\ell)}) \big), \\
    \mathbf{D}_t^{(\ell)} &= \mathbf{L}^\top \big( \mathbf{D}_t^{(\ell+1)} \circ \sigma^\prime(\mathbf{Z}_t^{(\ell)}) \big) \mathbf{W}_t^{(\ell)},~\text{and}~\mathbf{D}_t^{(L+1)} = \frac{\partial \mathcal{L}(\bm{\theta}_t)}{\partial \mathbf{H}^{(L)}},
\end{aligned}
\end{equation}
and compute stochastic gradient   $\widetilde{\mathbf{G}}_t^{(\ell)}$ utilized in \texttt{SGCN} for the $\ell$th layer w.r.t. $\mathbf{W}^{(\ell)}$ as 
\begin{equation}\label{eq:stochastic_gradient_sgcn}
\begin{aligned}
    \widetilde{\mathbf{G}}_t^{(\ell)} &= [\widetilde{\mathbf{L}}^{(\ell)} \widetilde{\mathbf{H}}_t^{(\ell-1)}]^\top \big( \widetilde{\mathbf{D}}_t^{(\ell+1)} \circ \sigma^\prime(\widetilde{\mathbf{Z}}_t^{(\ell)}) \big) ,\\
    \widetilde{\mathbf{D}}_t^{(\ell)} &= [\widetilde{\mathbf{L}}^{(\ell)}]^\top \big( \widetilde{\mathbf{D}}_t^{(\ell+1)} \circ \sigma^\prime(\widetilde{\mathbf{Z}}_t^{(\ell)}) \big) \mathbf{W}_t^{(\ell)},~\text{and}~\widetilde{\mathbf{D}}_t^{(L+1)} = \frac{\partial \widetilde{\mathcal{L}}(\bm{\theta}_t)}{\partial \widetilde{\mathbf{H}}^{(L)}}.
\end{aligned}
\end{equation}
For any layer $\ell\in[L]$, the stochastic gradient $\widetilde{\mathbf{G}}_t^{(\ell)}$ is a biased estimator of full-batch gradient $\mathbf{G}_t^{(\ell)}$, as it is computed from $\mathbf{H}^{(L)}_t$ and $\mathbf{Z}_t^{(\ell)}$, which are not available in \texttt{SGCN} since $\widetilde{\mathbf{H}}_t^{(L)}$ and $\widetilde{\mathbf{Z}}_t^{(\ell)}$ are used as an approximation 
during training. Recently,~\cite{control_variacne_gcn} established a convergence rate under the strong assumption that the stochastic gradient of \texttt{SGCN} is unbiased and \cite{biased_but_consistent} provided another analysis under the strong assumption that the stochastic gradient converges to the consistent gradient exponentially fast as the number of sampled nodes increases. While both studies establish the same convergence rate of $\mathcal{O}(1/\sqrt{T})$, however,  these assumptions do not hold in reality  due to the composite structure of the training objectives and sampling of nodes at inner layers. Motivated by this, we aim at providing a tight analysis without the aforementioned strong assumptions on the stochastic gradient. Our analysis is inspired by the bias and variance decomposition of the mean-square error of stochastic gradient, which has been previously used in~\cite{cong2020minimal} to analyze the variance of  stochastic gradient in GCN.\footnote{In this paper, we use a slightly different formulation of bias-variance decomposition as in~\cite{cong2020minimal}. Please refer to Section~\ref{supp:connect_to_comp_opt} for details.}
Formally, we can decompose mean-square error of stochastic gradient as
\begin{equation}
\begin{aligned}
    &\mathbb{E}[\|\nabla \widetilde{\mathcal{L}}(\bm{\theta}) - \nabla\mathcal{L}(\bm{\theta}) \|_\mathrm{F}^2]= \underbrace{\mathbb{E}[\|\mathbb{E}[ \nabla \widetilde{\mathcal{L}}(\bm{\theta})] - \nabla \mathcal{L}(\bm{\theta}) \|_\mathrm{F}^2]}_{\text{Bias}~\mathbb{E}[\|\mathbf{b}\|_\mathrm{F}^2]} + \underbrace{\mathbb{E}[\|\nabla \widetilde{\mathcal{L}}(\bm{\theta}) - \mathbb{E}[\nabla \widetilde{\mathcal{L}}(\bm{\theta})] \|_\mathrm{F}^2]}_{\text{Variance}~\mathbb{E}[\|\mathbf{n}\|_\mathrm{F}^2]},
\end{aligned}
\end{equation}
where the bias terms $\mathbb{E}[\|\mathbf{b}\|_\mathrm{F}^2]$ is mainly due to the node embedding approximation variance (\emph{zeroth-order variance}) during forward propagation and the variance term $\mathbb{E}[\|\mathbf{n}\|_\mathrm{F}^2]$ is mainly due to the layerwise gradient variance (\emph{first-order variance}) during backward propagation. Please refer to Figure~\ref{fig:illustrate_variance} for an illustration about the two types of variances. Before proceeding to analysis, we make the following standard assumptions on the Lipschitz-continuity and smoothness of the loss function $\text{Loss}(\cdot,\cdot)$ and activation function $\sigma(\cdot)$.

\begin{figure}[t]
    \centering
    \includegraphics[width=1.0\textwidth]{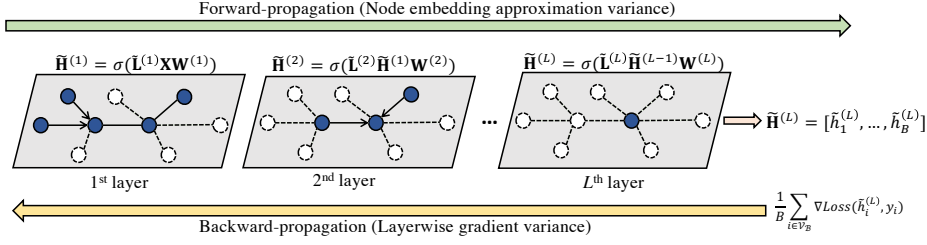}
    \caption{Relationship between the two types of variance with the training process, where embedding approximation variance (zeroth-order variance) happens during forward-propagation and layerwise gradient variance (first-order variance) happens during backward-propagation.}
    \label{fig:illustrate_variance}
\end{figure}


\begin{assumption}\label{assumption:loss_lip_smooth}
Let assume there exist constant $C_\text{loss}, L_\text{loss}$ such that the following inequality holds: 
\begin{equation}
\|\text{Loss}(\mathbf{h}^{(L)},y) - \text{Loss}({\mathbf{h}^\prime}^{(L)},y) \|_2 \leq C_\text{loss}\|\mathbf{h}^{(L)} - {\mathbf{h}^\prime}^{(L)}\|_2,
\end{equation}
and 
\begin{equation}
\|\nabla \text{Loss}(\mathbf{h}^{(L)},y) - \nabla \text{Loss}({\mathbf{h}^\prime}^{(L)},y) \|_2 \leq L_\text{loss}\|\mathbf{h}^{(L)} - {\mathbf{h}^\prime}^{(L)}\|_2.
\end{equation}
\end{assumption}

\begin{assumption}\label{assumption:sigma_lip_smooth}
Let assume there exist constant $C_\sigma, L_\sigma$ such that the following inequality holds: $\|\sigma(\mathbf{z}^{(\ell)}) - \sigma({\mathbf{z}^\prime}^{(\ell)}) \|_2 \leq C_\sigma\|\mathbf{z}^{(\ell)} - {\mathbf{z}^\prime}^{(\ell)}\|_2$ and $\|\sigma^\prime(\mathbf{z}^{(\ell)}) - \sigma^\prime({\mathbf{z}^\prime}^{(\ell)}) \|_2 \leq L_\sigma\|\mathbf{z}^{(\ell)} - {\mathbf{z}^\prime}^{(\ell)}\|_2$.
\end{assumption}

We also make the following customary assumptions on the norm of weight matrices, Laplacian matrices, and node feature matrix, which are also used in the generalization analysis of GNNs~\cite{garg2020generalization,liao2020pac}. 
\begin{assumption}\label{assumption:bound_norm}
For any $\ell\in[L]$, the norm of weight matrices, Laplacian matrices, node features are bounded $\|\mathbf{W}^{(\ell)}\|_{_\mathrm{F}} \leq B_W$, $\|\widetilde{\mathbf{L}}^{(\ell)}\|_{_\mathrm{F}} \leq B_{LA}$, $\|\mathbf{L}\|_{_\mathrm{F}} \leq B_{LA}$, and $ \|\mathbf{X}\|_{_\mathrm{F}} \leq B_X$, respectively.
\end{assumption}


Before presenting the convergence of \texttt{SGCN}, we introduce the notation of \emph{propagation matrices} $\{ \mathbf{P}^{(\ell)} \}_{\ell=1}^L $, which are defined as the column-wise expectation of the sparser Laplacian matrices.
For example, let consider a general form of the sampled Laplacian matrix $\widetilde{\mathbf{L}}^{(\ell)} \in\mathbb{R}^{N\times N}$ as
\begin{equation}
    \widetilde{L}_{i,j}^{(\ell)} = \begin{cases}
    \frac{L_{i,j}}{\alpha_{i,j}} & \text{ if } i\in\mathcal{B}^{(\ell)}~\text{and}~j\in\mathcal{B}^{(\ell-1)} \\ 
    0 & \text{ otherwise } 
    \end{cases},
\end{equation}
where $\alpha_{i,j}$ is the weighted constant depends on the sampling algorithms.Then, the propagation matrix $\mathbf{P}^{(\ell)}\in\mathbb{R}^{N\times N}$ is denoted as $P_{i,j}^{(\ell)} = \mathbb{E}_{i\in\mathcal{B}^{(\ell)}} \Big[ \widetilde{L}_{i,j}^{(\ell)}~|~i\in\mathcal{B}^{(\ell)} \Big]$,  
where the expectation is taken over row indices $i$. 
Note that this notation is only for presenting the theoretical results, and are not used in the practical training algorithms.
By doing so, we can decompose the difference between  $\widetilde{\mathbf{L}}^{(\ell)}$ and  $\mathbf{L}$ as the summation of  difference $\|\widetilde{\mathbf{L}}^{(\ell)} - \mathbf{P}^{(\ell)}\|_\mathrm{F}^2$ and difference $\|\mathbf{P}^{(\ell)} - \mathbf{L}\|_\mathrm{F}^2$.


In the following theorem, we show that the upper bound of the bias and variance of stochastic gradient is closely related to the expectation of  difference $\mathbb{E}[\|\widetilde{\mathbf{L}}^{(\ell)} - \mathbf{P}^{(\ell)}\|_\mathrm{F}^2]$ and  difference $\mathbb{E}[\|\mathbf{P}^{(\ell)} - \mathbf{L}\|_\mathrm{F}^2]$ which can significantly impact the convergence of \texttt{SGCN}. 
\begin{theorem} [Convergence of \texttt{SGCN}] \label{theorem:convergence_of_sgcn}
Suppose Assumptions~\ref{assumption:loss_lip_smooth}, \ref{assumption:sigma_lip_smooth}, \ref{assumption:bound_norm} hold and apply \texttt{SGCN} with learning rate chosen as $\eta=\min\{1/L_\mathrm{F}, 1/\sqrt{T}\}$ where $L_\mathrm{F}$ is the smoothness constant. 
Let $\Delta_\mathbf{n}$ and $\Delta_\mathbf{b}$ denote the upper bound on the variance and bias of stochastic gradients as:
\begin{equation}
    \Delta_\mathbf{n} = \sum_{\ell=1}^L \mathcal{O}(\mathbb{E}[\| \widetilde{\mathbf{L}}^{(\ell)} - \mathbf{P}^{(\ell)} \|_{_\mathrm{F}}^2]) + \mathcal{O}(\mathbb{E}[\| \mathbf{P}^{(\ell)} - \mathbf{L} \|_\mathrm{F}^2]),~
    \Delta_\mathbf{b} = \sum_{\ell=1}^L \mathcal{O}(\mathbb{E}[\|\mathbf{P}^{(\ell)} - \mathbf{L}\|_{_\mathrm{F}}^2]).
\end{equation}
Then, the output of \texttt{SGCN} satisfies
\begin{equation}
    \min_{t\in[T]} \mathbb{E}[\|\nabla \mathcal{L}(\bm{\theta}_t)\|_{\mathrm{F}}^2]
    \leq \frac{2(\mathcal{L}(\bm{\theta}_1) - \mathcal{L}(\bm{\theta}^\star))}{\sqrt{T}} + \frac{L_\mathrm{F} \Delta_\mathbf{n}}{\sqrt{T}} + \Delta_\mathbf{b}.
\end{equation}
\end{theorem}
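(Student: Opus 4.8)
The plan is to separate the argument into two stages: a standard but \emph{biased} nonconvex descent recursion that directly yields the stated rate once the bias and variance are controlled, and the genuinely GCN-specific work of certifying that $\mathbb{E}[\|\mathbf{b}\|_\mathrm{F}^2]\le\Delta_\mathbf{b}$ and $\mathbb{E}[\|\mathbf{n}\|_\mathrm{F}^2]\le\Delta_\mathbf{n}$. I would start from the $L_\mathrm{F}$-smoothness of $\mathcal{L}$ and substitute the update $\bm{\theta}_{t+1}=\bm{\theta}_t-\eta\nabla\widetilde{\mathcal{L}}(\bm{\theta}_t)$ to obtain the descent inequality
\[
\mathcal{L}(\bm{\theta}_{t+1}) \le \mathcal{L}(\bm{\theta}_t) - \eta\langle \nabla\mathcal{L}(\bm{\theta}_t), \nabla\widetilde{\mathcal{L}}(\bm{\theta}_t)\rangle + \tfrac{L_\mathrm{F}\eta^2}{2}\|\nabla\widetilde{\mathcal{L}}(\bm{\theta}_t)\|_\mathrm{F}^2 .
\]
Taking the conditional expectation given $\bm{\theta}_t$ and writing $\mathbb{E}[\nabla\widetilde{\mathcal{L}}(\bm{\theta}_t)]=\nabla\mathcal{L}(\bm{\theta}_t)+\mathbf{b}_t$ with zero-mean fluctuation $\mathbf{n}_t$, the whole difficulty of the \emph{biasedness} is handled by the polarization identity $\langle g,\,g+b\rangle=\tfrac12\|g\|_\mathrm{F}^2+\tfrac12\|g+b\|_\mathrm{F}^2-\tfrac12\|b\|_\mathrm{F}^2$ applied to the inner-product term, together with the bias--variance split $\mathbb{E}[\|\nabla\widetilde{\mathcal{L}}\|_\mathrm{F}^2]=\|\nabla\mathcal{L}+\mathbf{b}\|_\mathrm{F}^2+\mathbb{E}[\|\mathbf{n}\|_\mathrm{F}^2]$ for the quadratic term.

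Combining these produces a coefficient $-\tfrac{\eta}{2}(1-L_\mathrm{F}\eta)\|\nabla\mathcal{L}(\bm{\theta}_t)+\mathbf{b}_t\|_\mathrm{F}^2$, which is nonpositive exactly when $\eta\le 1/L_\mathrm{F}$ and may therefore be discarded; this is precisely why the choice $\eta=\min\{1/L_\mathrm{F},1/\sqrt{T}\}$ appears. What survives is the clean one-step bound
\[
\mathbb{E}[\mathcal{L}(\bm{\theta}_{t+1})] \le \mathcal{L}(\bm{\theta}_t) - \tfrac{\eta}{2}\|\nabla\mathcal{L}(\bm{\theta}_t)\|_\mathrm{F}^2 + \tfrac{\eta}{2}\mathbb{E}[\|\mathbf{b}_t\|_\mathrm{F}^2] + \tfrac{L_\mathrm{F}\eta^2}{2}\mathbb{E}[\|\mathbf{n}_t\|_\mathrm{F}^2].
\]
Telescoping over $t=1,\dots,T$, using $\mathcal{L}(\bm{\theta}_{T+1})\ge\mathcal{L}(\bm{\theta}^\star)$, inserting $\mathbb{E}[\|\mathbf{b}_t\|_\mathrm{F}^2]\le\Delta_\mathbf{b}$ and $\mathbb{E}[\|\mathbf{n}_t\|_\mathrm{F}^2]\le\Delta_\mathbf{n}$, dividing by $\eta T/2$, and using $\min_t\le\mathrm{avg}_t$ gives $\min_t\mathbb{E}[\|\nabla\mathcal{L}(\bm{\theta}_t)\|_\mathrm{F}^2]\le \tfrac{2(\mathcal{L}(\bm{\theta}_1)-\mathcal{L}(\bm{\theta}^\star))}{\eta T}+\Delta_\mathbf{b}+L_\mathrm{F}\eta\Delta_\mathbf{n}$. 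For $T$ large enough that $\eta=1/\sqrt{T}$ this is exactly the claimed bound, and the derivation transparently explains why $\Delta_\mathbf{b}$ survives as a non-vanishing residual (it is multiplied by $\eta$ in each step, hence by $\eta T$ after telescoping, which cancels the $\eta T$ in the denominator) whereas $\Delta_\mathbf{n}$ is damped by an extra factor of $\eta=1/\sqrt{T}$.

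The main obstacle is what I would carry out \emph{first} as supporting lemmas, namely the bias and variance bounds $\mathbb{E}[\|\mathbf{b}_t\|_\mathrm{F}^2]\le\Delta_\mathbf{b}$ and $\mathbb{E}[\|\mathbf{n}_t\|_\mathrm{F}^2]\le\Delta_\mathbf{n}$. I would propagate the embedding error $\widetilde{\mathbf{H}}^{(\ell)}-\mathbf{H}^{(\ell)}$ layer by layer through the identity $\widetilde{\mathbf{L}}^{(\ell)}\widetilde{\mathbf{H}}^{(\ell-1)}-\mathbf{L}\mathbf{H}^{(\ell-1)}=\widetilde{\mathbf{L}}^{(\ell)}(\widetilde{\mathbf{H}}^{(\ell-1)}-\mathbf{H}^{(\ell-1)})+(\widetilde{\mathbf{L}}^{(\ell)}-\mathbf{L})\mathbf{H}^{(\ell-1)}$ together with the split $\widetilde{\mathbf{L}}^{(\ell)}-\mathbf{L}=(\widetilde{\mathbf{L}}^{(\ell)}-\mathbf{P}^{(\ell)})+(\mathbf{P}^{(\ell)}-\mathbf{L})$, and then repeat the same bookkeeping for the backward quantities $\widetilde{\mathbf{D}}^{(\ell)}$ that feed $\widetilde{\mathbf{G}}^{(\ell)}$. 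The essential structural fact is $\mathbb{E}[\widetilde{\mathbf{L}}^{(\ell)}-\mathbf{P}^{(\ell)}]=\mathbf{0}$, so the zero-mean fluctuation $\|\widetilde{\mathbf{L}}^{(\ell)}-\mathbf{P}^{(\ell)}\|_\mathrm{F}^2$ feeds only the variance $\Delta_\mathbf{n}$, while the deterministic offset $\|\mathbf{P}^{(\ell)}-\mathbf{L}\|_\mathrm{F}^2$ feeds the bias $\Delta_\mathbf{b}$ (and, through the quadratic expansion, also $\Delta_\mathbf{n}$). Assumptions~\ref{assumption:loss_lip_smooth}--\ref{assumption:bound_norm} are exactly what keeps these coupled forward/backward recursions stable: the bounded norms $B_W,B_{LA},B_X$ control the accumulating matrix products, and the constants $C_\sigma,L_\sigma,C_\text{loss},L_\text{loss}$ let the error pass through $\sigma$, $\sigma'$, and $\nabla\text{Loss}$. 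The delicate part — isolating the expectation cross-terms that vanish from those that do not, and ensuring the recursion does not blow up faster than the constants hidden in the $\mathcal{O}(\cdot)$ — is where the real effort lies and is what makes the analysis harder than the unbiased composite-optimization results it superficially resembles.
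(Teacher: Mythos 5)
Your proposal follows essentially the same route as the paper's proof: the identical descent-lemma algebra (your polarization step is exactly the paper's combination $-2\langle \mathbf{g},\mathbf{g}+\mathbf{b}\rangle+\|\mathbf{g}+\mathbf{b}\|_\mathrm{F}^2=-\|\mathbf{g}\|_\mathrm{F}^2+\|\mathbf{b}\|_\mathrm{F}^2$ after the $\eta\le 1/L_\mathrm{F}$ term is discarded), the same conditional bias--variance split, the same telescoping with $\eta=\min\{1/L_\mathrm{F},1/\sqrt{T}\}$, and the same layerwise forward/backward error propagation through the split $\widetilde{\mathbf{L}}^{(\ell)}-\mathbf{L}=(\widetilde{\mathbf{L}}^{(\ell)}-\mathbf{P}^{(\ell)})+(\mathbf{P}^{(\ell)}-\mathbf{L})$ that underlies the paper's Lemmas~\ref{lemma:upper-bound-sgcn-vars} and~\ref{lemma:upper-bound-sgcn-bias}. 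The one caveat --- treating $\mathbf{P}^{(\ell)}$ as the exact unconditional mean of $\widetilde{\mathbf{L}}^{(\ell)}$, whereas the paper defines it as a row-conditional expectation --- is a simplification the paper's own identification $\bar{\mathbf{G}}^{(\ell)}=\mathbb{E}[\widetilde{\mathbf{G}}^{(\ell)}]$ equally relies on, so it is not a gap relative to the target proof.
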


The proof is deferred to Appendix~\ref{section:proof_of_thm1} where we also compute  the exact value of key parameters $L_{\mathrm{F}}$, $\Delta_\mathbf{n}$, and $\Delta_\mathbf{b}$  in Lemma~\ref{lemma:smoothness_L_layer}, Lemma~\ref{lemma:upper-bound-sgcn-vars}, and Lemma~\ref{lemma:upper-bound-sgcn-bias}, respectively.

From the rate obtained in Theorem~\ref{theorem:convergence_of_sgcn}, we can observe that  after $T$ iterations the gradient norm of
\texttt{SGCN} is at most $\mathcal{O}(\Delta_\mathbf{n}/\sqrt{T})+\Delta_\mathbf{b}$, which suffers from a constant residual error $\Delta_\mathbf{b}$ that is  not decreasing as the number of iterations $T$ increases. 
Without the bias\footnote{We have $\Delta_\mathbf{b}=0$ if all neighbors are used to calculate the exact node embeddings, i.e., $\mathbf{P}^{(\ell)} = \mathbf{L},~\forall \ell\in[L]$.} we recover the convergence of vanilla SGD.  Of course, this type of convergence is only useful if $\Delta_\mathbf{b}$ and $\Delta_\mathbf{n}$ are small enough.
We note that existing \texttt{SGCN} algorithms propose to reduce $\Delta_\mathbf{b}$ by increasing the number of neighbors sampled at each layer (e.g., \texttt{GraphSAGE}), or  applying importance sampling (e.g., \texttt{FastGCN}, \texttt{LADIES} and \texttt{GraphSAINT}).

\begin{remark}\label{remark:composite} Although we formulate sampling-based GCNs as a special case of the composite optimization problem, it is worth noting that compared to the classical composite optimization, there are a few key differences that make the utilization variance reduced composite optimization non-trivial: 
(a) different objective function that makes the GCN analysis challenging; 
(b) different gradient computation, analysis, and algorithm which make directly applying multi-level variance reduction methods such as SPIDER~\cite{zhang2019multi} nontrivial; 
(c) different theoretical results and novel intuition for sampling-based GCN training.
Due to the space limit, we defer the detail discussion to the Appendix~\ref{supp:connect_to_comp_opt}.
\end{remark}



\section{\text{SGCN+}: Zeroth-order Variance Reduction} \label{section:SGCN+}


An important question to answer is: \textit{can we eliminate the residual error without using all neighbors during forward-propagation?}
A remarkable attempt to answer this question has been recently made in \texttt{VRGCN}~\cite{control_variacne_gcn} 
where they propose to use historical node embeddings as an approximation to estimate the true node embeddings.
More specifically, the graph convolution in \texttt{VRGCN} is defined as $\widetilde{\mathbf{H}}^{(\ell)}_t = \sigma\big( \mathbf{L} \widetilde{\mathbf{H}}^{(\ell-1)}_{t-1} \mathbf{W}^{(\ell)} + \widetilde{\mathbf{L}}^{(\ell)} (\widetilde{\mathbf{H}}^{(\ell-1)}_t - \widetilde{\mathbf{H}}^{(\ell-1)}_{t-1}) \mathbf{W}^{(\ell)} \big)$.
Taking advantage of historical node embeddings, \texttt{VRGCN} requires less sampled neighbors and results in significant less computation overhead during gradient computation.
Although \texttt{VRGCN} achieves significant speed up and better performance compared to other \texttt{SGCN}s, it involves using the full Laplacian matrix at each iteration, which can be computationally prohibitive. 
Moreover, since both \texttt{SGCN}s and \texttt{VRGCN} are approximating the exact node embeddings calculated using all neighbors, it is still not clear why \texttt{VRGCN} achieves a better convergence result than \texttt{SGCN}s using historical node embeddings. 

\begin{algorithm}[tb]
  \caption{\texttt{SGCN+}: Zeroth-order variance reduction (Detailed version in Algorithm~\ref{algorithm:sgcn_plus_complete})}
  \label{algorithm:sgcn+}
\begin{algorithmic}[1]
  \STATE {\bfseries Input:} Learning rate $\eta>0$, snapshot gap $K\geq 1$, $t_0=1$ and $s=1$, staleness factor $\alpha \geq 1$
    \FOR{$t = 1,\ldots,T$}
        \IF{$(t - t_{s-1})~\text{mod}~K=0$}
            \STATE Calculate node embeddings using Eq.~\ref{eq:sgcn_plus_snapshot} \label{line:sgcn+snapshot_start}
            \STATE Calculate full-batch gradient $\nabla \mathcal{L}(\bm{\theta}_t)$ as Eq.~\ref{eq:full_batch_gradient} and update as $\bm{\theta}_{t+1} = \bm{\theta}_t - \eta \nabla \mathcal{L}(\bm{\theta}_t)$ 
            \STATE Set $t_s = t$ and $s = s + 1$ \label{line:sgcn+snapshot_end}
        \ELSE
            \STATE Calculate node embeddings using Eq.~\ref{eq:sgcn_plus_regular} \label{line:sgcn+regular_start}
            \IF{$\|\widetilde{\mathbf{H}}^{(\ell)}_{t-1}\|_\mathrm{F} \geq \alpha \| \mathbf{H}^{(\ell)}_{t_{s-1}} \|_\mathrm{F}$ for any $\ell \in [L]$} \label{line:sgcn+early_stop}
                \STATE Go to line~\ref{line:sgcn+snapshot_start}
            \ENDIF
            \STATE Calculate stochastic gradient $\nabla \widetilde{\mathcal{L}}(\bm{\theta}_t)$ as Eq.~\ref{eq:stochastic_gradient_sgcn} and update as $\bm{\theta}_{t+1} = \bm{\theta}_t - \eta \nabla \widetilde{\mathcal{L}}(\bm{\theta}_t)$  \label{line:sgcn+regular_end}
        \ENDIF
    \ENDFOR
\STATE {\bfseries Output:} Model with parameter $\bm{\theta}_{T+1}$ \\
\end{algorithmic}
\end{algorithm}

To fill in these gaps, we introduce \emph{zeroth-order variance reduced} sampling-based GCN training method dubbed as \texttt{SGCN+}. 
As shown in Algorithm~\ref{algorithm:sgcn+}, \texttt{SGCN+}\footnote{For ease the exposition, we include a higher level version of the algorithm here and defer the detailed version to the Algorithm~\ref{algorithm:sgcn_plus_complete} in Appendix~\ref{supp:detail_algorithm}.} has two types of forward propagation: the forward propagation at the \emph{snapshot steps} and the forward propagation at the \emph{regular steps}.
At the snapshot step (lines~\ref{line:sgcn+snapshot_start}-\ref{line:sgcn+snapshot_end} in Algorithm~\ref{algorithm:sgcn+}), a full Laplacian matrix is utilized:
\begin{equation} \label{eq:sgcn_plus_snapshot}
    \mathbf{H}^{(\ell)}_t = \sigma(\mathbf{Z}^{(\ell)}_t),~
    \mathbf{Z}^{(\ell)}_t = \mathbf{L} \mathbf{H}_t^{(\ell-1)} \mathbf{W}^{(\ell)}_t,~
    \widetilde{\mathbf{Z}}^{(\ell)}_t \leftarrow \mathbf{Z}^{(\ell)}_t
\end{equation}

During the regular steps  (lines~\ref{line:sgcn+regular_start}-\ref{line:sgcn+regular_end} in Algorithm~\ref{algorithm:sgcn+}), the sampled Laplacian matrix is utilized:
\begin{equation} \label{eq:sgcn_plus_regular}
    \widetilde{\mathbf{H}}^{(\ell)}_t = \sigma(\widetilde{\mathbf{Z}}^{(\ell)}),~
    \widetilde{\mathbf{Z}}^{(\ell)}_t = \widetilde{\mathbf{Z}}^{(\ell)}_{t-1} + \widetilde{\mathbf{L}}^{(\ell)} \widetilde{\mathbf{H}}_t^{(\ell-1)} \mathbf{W}_t^{(\ell)} - \widetilde{\mathbf{L}}^{(\ell)} \widetilde{\mathbf{H}}_{t-1}^{(\ell-1)} \mathbf{W}_{t-1}^{(\ell)}
\end{equation}

Besides, due to the aforementioned recursive update rule (Eq.~\ref{eq:sgcn_plus_regular}), the norm of node embeddings are not guaranteed to be bounded as in vanilla \texttt{SGCN}. 
Notice that the unbounded norm embedding potentially results in gradient explosion that unstabilize the training process, and this issue also exists in \texttt{VRGCN} (Proposition~\ref{proposition:matrix_norm_bound}).
To overcome the issue, we introduce an early stop criterion (line~\ref{line:sgcn+early_stop}) by checking the relative scale between the norm of the current node embedding to the snapshot one, and immediately start another snapshot step if the condition is violated. 
By properly chosen $\alpha$ and snapshot gap $K$, the proposed \texttt{SGCN+} only requires one full Laplacian graph convolution operation at most every $K$ iterations, which significantly reduce the computation burden of \texttt{VRGCN}.

In the following theorem, we introduce the convergence result of \texttt{SGCN+}.
Recall that the node embedding approximation variance (\emph{zeroth-order variance}) determines the bias of stochastic gradient $\mathbb{E}[\|\mathbf{b}\|_\mathrm{F}^2]$. 
Applying \texttt{SGCN+} can significantly reduce the bias of stochastic gradients, such that its value is small enough that it will not deteriorate the convergence.

\begin{theorem} [Convergence of \texttt{SGCN}+]\label{theorem:convergence_of_sgcn_plus}
Suppose Assumptions~\ref{assumption:loss_lip_smooth}, \ref{assumption:sigma_lip_smooth}, \ref{assumption:bound_norm} hold and apply \texttt{SGCN+} with learning rate chosen as $\eta=\min\{1/L_\mathrm{F}, 1/\sqrt{T}\}$ where $L_\mathrm{F}$ is the smoothness constant.
Let $\Delta_\mathbf{n}$ and $\Delta_\mathbf{b}^+$ denote the upper bound for the variance and bias of stochastic gradient as:
\begin{equation}
    \begin{aligned}
        \Delta_\mathbf{n} &= \sum_{\ell=1}^L \mathcal{O}(\mathbb{E}[\| \widetilde{\mathbf{L}}^{(\ell)} - \mathbf{P}^{(\ell)} \|_{_\mathrm{F}}^2]) + \mathcal{O}(\mathbb{E}[\| \mathbf{P}^{(\ell)} - \mathbf{L} \|_\mathrm{F}^2]), \\
        \Delta_\mathbf{b}^+ &= \eta^2 \Delta_\mathbf{b}^{+\prime},~
        \Delta_\mathbf{b}^{+\prime} = \mathcal{O}\Big( \alpha^4 K\sum_{\ell=1}^L |\mathbb{E}[\| \mathbf{P}^{(\ell)} \|_\mathrm{F}^2] - \|\mathbf{L}\|_\mathrm{F}^2| \Big)
    \end{aligned}
\end{equation}

Then, the output of \texttt{SGCN+} satisfies 
\begin{equation}
\begin{aligned}
    \min_{t\in[T]} \mathbb{E}[\|\nabla \mathcal{L}(\bm{\theta}_t)\|_{\mathrm{F}}^2]
    &\leq \frac{2(\mathcal{L}(\bm{\theta}_1) - \mathcal{L}(\bm{\theta}^\star))}{\sqrt{T}} + \frac{L_\mathrm{F} \Delta_\mathbf{n}}{\sqrt{T}} + \frac{\Delta_\mathbf{b}^{+\prime}}{T}.
    \end{aligned}
\end{equation}
\end{theorem}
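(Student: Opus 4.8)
Following the non-convex descent template already used for Theorem~\ref{theorem:convergence_of_sgcn}, the plan is to isolate the bias and noise of the \texttt{SGCN+} stochastic gradient and then replace the \emph{constant} bias bound of vanilla \texttt{SGCN} with a sharper, $\eta$-dependent bound made possible by the historical-embedding control variate in Eq.~\ref{eq:sgcn_plus_regular}. First I would invoke $L_\mathrm{F}$-smoothness of $\mathcal{L}$ (Lemma~\ref{lemma:smoothness_L_layer}) to write the descent inequality $\mathcal{L}(\bm{\theta}_{t+1}) \leq \mathcal{L}(\bm{\theta}_t) - \eta \langle \nabla\mathcal{L}(\bm{\theta}_t), \nabla\widetilde{\mathcal{L}}(\bm{\theta}_t)\rangle + \tfrac{L_\mathrm{F}\eta^2}{2}\|\nabla\widetilde{\mathcal{L}}(\bm{\theta}_t)\|_\mathrm{F}^2$, take the conditional expectation over the sampling at step $t$, and decompose $\nabla\widetilde{\mathcal{L}}(\bm{\theta}_t) = \nabla\mathcal{L}(\bm{\theta}_t) + \mathbf{b}_t + \mathbf{n}_t$ into the bias $\mathbf{b}_t = \mathbb{E}[\nabla\widetilde{\mathcal{L}}(\bm{\theta}_t)] - \nabla\mathcal{L}(\bm{\theta}_t)$ and the mean-zero noise $\mathbf{n}_t$. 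A Young's inequality on the cross term $\langle \nabla\mathcal{L}(\bm{\theta}_t), \mathbf{b}_t\rangle$ together with $\eta \leq 1/L_\mathrm{F}$ yields a recursion $c\,\eta\,\mathbb{E}[\|\nabla\mathcal{L}(\bm{\theta}_t)\|_\mathrm{F}^2] \leq \mathbb{E}[\mathcal{L}(\bm{\theta}_t) - \mathcal{L}(\bm{\theta}_{t+1})] + \mathcal{O}(\eta)\,\mathbb{E}[\|\mathbf{b}_t\|_\mathrm{F}^2] + \mathcal{O}(L_\mathrm{F}\eta^2)\,\mathbb{E}[\|\mathbf{n}_t\|_\mathrm{F}^2]$.

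Summing over $t=1,\dots,T$ telescopes the function-value gap into $\mathcal{L}(\bm{\theta}_1)-\mathcal{L}(\bm{\theta}^\star)$; dividing by $c\,\eta\,T$ and minimizing over $t$ gives the three-term bound. The noise term admits the same variance bound $\Delta_\mathbf{n}$ as in the \texttt{SGCN} analysis (Lemma~\ref{lemma:upper-bound-sgcn-vars}), since the first-order sampling fluctuation enters through the same sampled Laplacians, and with $\eta = \min\{1/L_\mathrm{F}, 1/\sqrt{T}\}$ its contribution is $L_\mathrm{F}\Delta_\mathbf{n}/\sqrt{T}$; the telescoped gap contributes $2(\mathcal{L}(\bm{\theta}_1)-\mathcal{L}(\bm{\theta}^\star))/\sqrt{T}$. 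What distinguishes this theorem from Theorem~\ref{theorem:convergence_of_sgcn} is to show the bias obeys $\mathbb{E}[\|\mathbf{b}_t\|_\mathrm{F}^2] \leq \Delta_\mathbf{b}^+ = \eta^2 \Delta_\mathbf{b}^{+\prime}$, so that averaging over $T$ steps yields the $\Delta_\mathbf{b}^{+\prime}/T$ residual rather than a constant $\Delta_\mathbf{b}$.

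The heart of the argument is thus a recursive control of the node-embedding approximation error $\mathbb{E}[\|\widetilde{\mathbf{H}}_t^{(\ell)} - \mathbf{H}_t^{(\ell)}\|_\mathrm{F}^2]$. I would unroll the pre-activation update Eq.~\ref{eq:sgcn_plus_regular} back to the most recent snapshot $t_{s-1}$, where Eq.~\ref{eq:sgcn_plus_snapshot} resets $\widetilde{\mathbf{Z}}_{t_{s-1}}^{(\ell)} = \mathbf{Z}_{t_{s-1}}^{(\ell)}$ exactly. Telescoping both the exact $\mathbf{Z}_t^{(\ell)} = \mathbf{L}\mathbf{H}_t^{(\ell-1)}\mathbf{W}_t^{(\ell)}$ and its estimate over the window $[t_{s-1}, t]$ of length at most $K$, the error becomes a sum of per-step increments $\widetilde{\mathbf{L}}^{(\ell)}(\widetilde{\mathbf{H}}_\tau^{(\ell-1)}\mathbf{W}_\tau^{(\ell)} - \widetilde{\mathbf{H}}_{\tau-1}^{(\ell-1)}\mathbf{W}_{\tau-1}^{(\ell)}) - \mathbf{L}(\mathbf{H}_\tau^{(\ell-1)}\mathbf{W}_\tau^{(\ell)} - \mathbf{H}_{\tau-1}^{(\ell-1)}\mathbf{W}_{\tau-1}^{(\ell)})$. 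Two features drive the bound: each increment contains the parameter drift $\mathbf{W}_\tau^{(\ell)} - \mathbf{W}_{\tau-1}^{(\ell)} = -\eta\,\widetilde{\mathbf{G}}_{\tau-1}^{(\ell)}$, which is $\mathcal{O}(\eta)$ in norm, so squaring and summing over the $\leq K$ window steps produces the $\eta^2 K$ prefactor; and taking expectations replaces $\widetilde{\mathbf{L}}^{(\ell)}$ by $\mathbf{P}^{(\ell)}$, so only the systematic gap between $\mathbf{P}^{(\ell)}$ and $\mathbf{L}$ survives in the bias (the mean-zero part is absorbed into $\mathbf{n}_t$), giving the factor $|\mathbb{E}[\|\mathbf{P}^{(\ell)}\|_\mathrm{F}^2] - \|\mathbf{L}\|_\mathrm{F}^2|$. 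Propagating this layerwise error forward through the Lipschitz $\sigma$ (Assumption~\ref{assumption:sigma_lip_smooth}) and then through the backward pass Eq.~\ref{eq:stochastic_gradient_sgcn} — using Assumptions~\ref{assumption:loss_lip_smooth}, \ref{assumption:sigma_lip_smooth} and the norm bounds of Assumption~\ref{assumption:bound_norm} — transfers the embedding error into a gradient-bias bound and assembles $\Delta_\mathbf{b}^{+\prime} = \mathcal{O}(\alpha^4 K \sum_\ell |\mathbb{E}[\|\mathbf{P}^{(\ell)}\|_\mathrm{F}^2] - \|\mathbf{L}\|_\mathrm{F}^2|)$.

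The main obstacle is twofold. The update Eq.~\ref{eq:sgcn_plus_regular} does \emph{not} pass through a norm-contracting nonlinearity at the pre-activation level, so the iterates $\widetilde{\mathbf{H}}_t^{(\ell)}$ are not automatically bounded the way Assumption~\ref{assumption:bound_norm} guarantees for \texttt{FullGCN}; controlling the per-step increments therefore relies essentially on the early-stop criterion of line~\ref{line:sgcn+early_stop}, which enforces $\|\widetilde{\mathbf{H}}_{t-1}^{(\ell)}\|_\mathrm{F} \leq \alpha \|\mathbf{H}_{t_{s-1}}^{(\ell)}\|_\mathrm{F}$ and is precisely what injects the staleness factor $\alpha$ into $\Delta_\mathbf{b}^{+\prime}$ (the fourth power arising because the squared-norm control of embeddings feeds into both the forward approximation and the backward gradient weighting); the analogous norm-blow-up hazard for \texttt{VRGCN} is recorded in Proposition~\ref{proposition:matrix_norm_bound}. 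Second, because the layerwise errors compound across the $L$ layers and the gradient at layer $\ell$ depends on all downstream embedding errors through $\widetilde{\mathbf{D}}_t^{(\ell)}$, the bookkeeping must carefully separate the mean-zero sampling fluctuations (which belong to $\Delta_\mathbf{n}$ and must not be double-counted) from the genuinely biased $\mathbf{P}^{(\ell)}-\mathbf{L}$ contributions; keeping these two channels disentangled while unrolling over both the $L$ layers and the $K$-step snapshot window is the most delicate part of the calculation.
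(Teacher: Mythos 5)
Your proposal is correct and mirrors the paper's own proof essentially step for step: the same $L_\mathrm{F}$-smoothness descent inequality with the bias/variance split (the variance term reusing Lemma~\ref{lemma:upper-bound-sgcn-vars} unchanged), and the bias bounded by unrolling the pre-activation recursion back to the last snapshot --- where the error resets to zero --- so that each increment is driven by the $\mathcal{O}(\eta)$ weight drift $\mathbf{W}_\tau^{(\ell)}-\mathbf{W}_{\tau-1}^{(\ell)}$ and the early-stop criterion supplies the $\alpha^4$ factor, which is precisely Lemmas~\ref{lemma:upper-bound-sgcn-plus-bias-1} and~\ref{lemma:upper-bound-sgcn-plus-bias}. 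The one step worth making explicit is that the single factor $K$ (rather than $K^2$) in $\Delta_\mathbf{b}^{+\prime}$ hinges on the conditional-unbiasedness identity $\mathbb{E}[\bar{\mathbf{Z}}_t^{(\ell)}-\bar{\mathbf{Z}}_{t-1}^{(\ell)}\mid\mathcal{F}_t]=\mathbf{L}\bar{\mathbf{H}}_t^{(\ell-1)}\mathbf{W}_t^{(\ell)}-\mathbf{L}\bar{\mathbf{H}}_{t-1}^{(\ell-1)}\mathbf{W}_{t-1}^{(\ell)}$, which cancels the cross terms in the telescoped square so the per-step $\mathcal{O}(\eta^2)$ errors accumulate additively (the paper's martingale-style recursion in the proof of Lemma~\ref{lemma:upper-bound-sgcn-plus-bias-1}); a naive Cauchy--Schwarz on your sum of increments would only yield $\eta^2 K^2$, though your remark that the mean-zero part is absorbed into $\mathbf{n}_t$ indicates you intend exactly this cancellation.
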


\vspace{-5pt}

The proof of theorem is deferred to Appendix~\ref{section:proof_of_thm2} and the exact value of key parameters $L_\mathrm{F}$, $\Delta_\mathbf{n}$, and $\Delta_\mathbf{b}^+$ are computed in Lemma~\ref{lemma:smoothness_L_layer}, Lemma~\ref{lemma:upper-bound-sgcn-vars}, and Lemma~\ref{lemma:upper-bound-sgcn-plus-bias},  respectively, and can be found in Appendices~\ref{section:proof_of_thm1} and~\ref{section:proof_of_thm2}.

An immediate implication of Theorem~\ref{theorem:convergence_of_sgcn_plus} is that  that after $T$ iterations the gradient norm of \texttt{SGCN+} is at most $\mathcal{O
}(\Delta_\mathbf{n}/\sqrt{T}) + \mathcal{O}(\Delta_\mathbf{b}^{+\prime}/T)$. 
When using all neighbors for calculating the exact node embeddings, we have $\mathbf{P}^{(\ell)} = \mathbf{L}$ such that $\Delta_\mathbf{b}^{+\prime} = 0$, which leads to convergence rate of SGD. 
Compared with vallina \texttt{SGCN}, the bias of \texttt{SGCN+} is scaled by learning rate $\eta$. 
Therefore, we can reduce the negative effect of bias by choosing the learning rate as $\eta=\mathcal{O}(1/\sqrt{T})$. 
This also explains why  \texttt{SGCN+} achieves a significantly better convergence rate compared to \texttt{SGCN}.

\section{\text{SGCN++}: Doubly Variance Reduction} \label{section:SGCN++}
Algorithm~\ref{algorithm:sgcn+} applies zeroth-order variance reduction on node embedding matrices and results in a faster convergence. 
However, both \texttt{SGCN} and \texttt{SGCN+} suffer from the same stochastic gradient variance $\Delta_\mathbf{n}$, which can be only reduced either by increasing the mini-batch size of \texttt{SGCN} or applying variance reduction on stochastic gradient.
An interesting question that arises is: \emph{can we further accelerate the convergence by simultaneously employing zeroth-order variance reduction on node embeddings and first-order variance reduction on layerwise gradient?}
To answer this question, we propose \emph{doubly variance reduction} algorithm \texttt{SGCN++}, that extends the variance reduction algorithm from node embedding approximation to layerwise gradient estimation. 

\begin{algorithm}[tb]
  \caption{\texttt{SGCN++}: Doubly variance reduction (Detailed version in Algorithm~\ref{algorithm:sgcn_plus_plus_complete}) }
  \label{algorithm:sgcn++}
\begin{algorithmic}[1]
  \STATE {\bfseries Input:} Learning rate $\eta>0$, snapshot gap $K\geq 1$, $t_0 = 1$ and $s=1$, staleness factor $\alpha, \beta \geq 1$
    \FOR{$t = 1,\ldots,T$}
        \IF{$(t-t_{s-1})~\text{mod}~K=0$} \label{line:sgcn++snapshot_start} 
            \STATE Calculate node embeddings using  Eq.~\ref{eq:sgcn_plus_snapshot} \label{line:sgcn++snapshot_start}
            \STATE Calculate full-batch gradient $\nabla \mathcal{L}(\bm{\theta}_t)$ using Eq.~\ref{eq:full_batch_gradient} and update as $\bm{\theta}_{t+1} = \bm{\theta}_t - \eta \nabla \mathcal{L}(\bm{\theta}_t)$ 
            \STATE Save the per layerwise gradient $\widetilde{\mathbf{G}}_t^{(\ell)} \leftarrow \mathbf{G}_t^{(\ell)},~     \widetilde{\mathbf{D}}_t^{(\ell)} \leftarrow \mathbf{D}_t^{(\ell)},~\forall \ell\in[L]$
            \STATE Set $t_s = t$ and $s = s+1$    \label{line:sgcn++snapshot_end}         
        \ELSE
            \STATE Calculate node embeddings using Eq.~\ref{eq:sgcn_plus_regular} \label{line:sgcn++regular_start}
            \IF{$\|\widetilde{\mathbf{H}}^{(\ell)}_{t-1}\|_\mathrm{F} \geq \alpha \| \mathbf{H}^{(\ell)}_{t_{s-1}} \|_\mathrm{F}$ or $\|\widetilde{\mathbf{D}}^{(\ell)}_{t-1}\|_\mathrm{F} \geq \beta \| \mathbf{D}^{(\ell)}_{t_{s-1}} \|_\mathrm{F}$ for any $\ell \in [L]$} \label{line:sgcn++early_stop}
                \STATE Go to line~\ref{line:sgcn++snapshot_start}
            \ENDIF
            \STATE Calculate stochastic gradient $\nabla \widetilde{\mathcal{L}}(\bm{\theta}_t)$ usng Eq.~\ref{eq:sgcn_plus_plus_regular} and update as $\bm{\theta}_{t+1} = \bm{\theta}_t - \eta \nabla \widetilde{\mathcal{L}}(\bm{\theta}_t)$ \label{line:sgcn++regular_end}
        \ENDIF
    \ENDFOR
\STATE {\bfseries Output:} Model with parameter $\bm{\theta}_{T+1}$ \\
\end{algorithmic}
\end{algorithm}

As shown in Algorithm~\ref{algorithm:sgcn++}, the main idea of \texttt{SGCN++}~\footnote{For ease the exposition, we include a higher level version of the algorithm here and defer the detailed version to the Algorithm~\ref{algorithm:sgcn_plus_plus_complete} in Appendix~\ref{supp:detail_algorithm}.} is to use the historical gradient as control variants for current layerwise gradient estimation.
More specifically, similar to \texttt{SGCN+} that has two types of forward propagation steps, \texttt{SGCN++} also has two types of backward propagation: 
at the \emph{snapshot steps} and at the \emph{regular steps}. 
The snapshot steps (lines~\ref{line:sgcn++snapshot_start}-\ref{line:sgcn++snapshot_end}) backward propagation are full-batch gradient computation as is defined in Eq.~\ref{eq:full_batch_gradient}, and the computed full-batch gradient are saved as control variants for the following regular steps.
The backward propagation (lines~\ref{line:sgcn++regular_start}-\ref{line:sgcn++regular_end}) at  regular steps are defined as
\begin{equation}\label{eq:sgcn_plus_plus_regular}
\begin{aligned}
    \widetilde{\mathbf{G}}_t^{(\ell)} 
    &= \widetilde{\mathbf{G}}_{t-1}^{(\ell)} + [\widetilde{\mathbf{L}}^{(\ell)} \widetilde{\mathbf{H}}_t^{(\ell-1)}]^\top \Big(\widetilde{\mathbf{D}}_t^{(\ell+1)} \circ  \sigma^\prime(\widetilde{\mathbf{Z}}_t)\Big) - [\widetilde{\mathbf{L}}^{(\ell)} \widetilde{\mathbf{H}}_{t-1}^{(\ell-1)}]^\top \Big(\widetilde{\mathbf{D}}_{t-1}^{(\ell+1)} \circ  \sigma^\prime(\widetilde{\mathbf{Z}}_{t-1})\Big)  \\
    \widetilde{\mathbf{D}}_t^{(\ell)} 
    &= \widetilde{\mathbf{D}}_{t-1}^{(\ell)}  + [\widetilde{\mathbf{L}}^{(\ell)}]^\top \Big(\widetilde{\mathbf{D}}_t^{(\ell+1)} \circ \sigma^\prime(\widetilde{\mathbf{Z}}_t) \Big) [\mathbf{W}_t^{(\ell)}] - [\widetilde{\mathbf{L}}^{(\ell)}]^\top \Big(\widetilde{\mathbf{D}}_{t-1}^{(\ell+1)} \circ \sigma^\prime(\widetilde{\mathbf{Z}}_{t-1}) \Big) [\mathbf{W}_{t-1}^{(\ell)}]
\end{aligned}
\end{equation}
\vspace{-5pt}

Besides, similar to the discussion we had for \texttt{SGCN+}, the norm of node embeddings and gradient are not guaranteed to be bounded as in vanilla \texttt{SGCN} (Proposition~\ref{proposition:matrix_norm_bound}) due to the aforementioned recursive update rule. 
To overcome the issue, we introduce an early stop criterion (line~\ref{line:sgcn++early_stop}) by checking the relative scale between the norm of the current node embedding and gradient to the snapshot one, and immediately start another snapshot step if the condition is violated. 

Next, in the following theorem, we establish the convergence rate of \texttt{SGCN++}. Recall that the mean-square error of the stochastic gradient can be decomposed into bias $\mathbb{E}[\|\mathbf{b}\|_\mathrm{F}^2]$ that is due to node embedding approximation and variance $\mathbb{E}[\|\mathbf{n}\|_\mathrm{F}^2]$ that is due to layerwise gradient estimation. 
Applying doubly variance reduction on node embedding and layerwise gradient simultaneously can significantly reduce mean-square error of stochastic gradient and speed up convergence.
\begin{theorem} [Convergence of \texttt{SGCN}++] \label{theorem:convergence_of_sgcn_plus_plus}
Suppose Assumptions~\ref{assumption:loss_lip_smooth}, \ref{assumption:sigma_lip_smooth}, \ref{assumption:bound_norm} hold, and denote  $L_\mathrm{F}$ as the smoothness constant
and $\Delta^{++}_{\mathbf{n}+\mathbf{b}}$ as the upper-bound of mean-square error of stochastic gradient 
\begin{equation}
    \Delta_{\mathbf{n}+\mathbf{b}}^{++} = \eta^2 \Delta_{\mathbf{n}+\mathbf{b}}^{++\prime} = \eta^2 \mathcal{O}\Big((\alpha^2 + 1)(\alpha^2 + \beta^2 + \alpha^2 \beta^2) K \sum_{\ell=1}^L |\mathbb{E}[\| \widetilde{\mathbf{L}}^{(\ell)} \|_\mathrm{F}^2] - \|\mathbf{L}\|_\mathrm{F}^2| \Big)
\end{equation}
Apply \texttt{SGCN++} in Algorithm~\ref{algorithm:sgcn++} with learning rate as $\eta=\frac{2}{L_\mathrm{F}+\sqrt{L_\mathrm{F}^2+4\Delta^{++\prime}_{\mathbf{n}+\mathbf{b}}}}$. Then it holds that
\begin{equation}
\frac{1}{T}\sum_{t=1}^T \mathbb{E}[ \|\nabla \mathcal{L}(\bm{\theta}_t)\|^2 ] \leq \frac{1}{T} \Big(L_\mathrm{F}+\sqrt{L_\mathrm{F}^2+4\Delta_{\mathbf{n}+\mathbf{b}}^{++\prime}} \Big) \Big( \mathcal{L}(\bm{\theta}_1) - \mathcal{L}(\bm{\theta}^\star) \Big).
\end{equation}
\end{theorem}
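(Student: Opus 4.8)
The plan is to follow the standard descent-lemma template for smooth nonconvex optimization, but with the crucial twist that the doubly variance-reduced estimator makes the accumulated gradient mean-square error proportional to $\eta^2$, which is exactly what upgrades the rate from $\mathcal{O}(1/\sqrt{T})$ to $\mathcal{O}(1/T)$. First I would invoke $L_\mathrm{F}$-smoothness of $\mathcal{L}$ (with $L_\mathrm{F}$ computed in Lemma~\ref{lemma:smoothness_L_layer}) together with the update rule $\bm{\theta}_{t+1} = \bm{\theta}_t - \eta\nabla\widetilde{\mathcal{L}}(\bm{\theta}_t)$ to obtain the per-step inequality $\mathcal{L}(\bm{\theta}_{t+1}) \leq \mathcal{L}(\bm{\theta}_t) - \eta\langle\nabla\mathcal{L}(\bm{\theta}_t), \nabla\widetilde{\mathcal{L}}(\bm{\theta}_t)\rangle + \tfrac{L_\mathrm{F}\eta^2}{2}\|\nabla\widetilde{\mathcal{L}}(\bm{\theta}_t)\|^2$. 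Because the \texttt{SGCN++} estimator is biased, I cannot take a conditional expectation to annihilate the cross term; instead I apply the polarization identity $-\langle a,b\rangle = \tfrac12\|a-b\|^2 - \tfrac12\|a\|^2 - \tfrac12\|b\|^2$ to the inner product, which yields
\begin{equation}
\mathcal{L}(\bm{\theta}_{t+1}) \leq \mathcal{L}(\bm{\theta}_t) - \frac{\eta}{2}\|\nabla\mathcal{L}(\bm{\theta}_t)\|^2 - \frac{\eta}{2}(1 - L_\mathrm{F}\eta)\|\nabla\widetilde{\mathcal{L}}(\bm{\theta}_t)\|^2 + \frac{\eta}{2}\|\nabla\widetilde{\mathcal{L}}(\bm{\theta}_t) - \nabla\mathcal{L}(\bm{\theta}_t)\|^2,
\end{equation}
isolating the gradient estimation error as the only term left to control.

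The heart of the argument is a variance-reduction lemma showing that the accumulated mean-square error telescopes against the stochastic gradient norms. I would first establish a one-step recursion of SPIDER type,
\begin{equation}
\mathbb{E}[\|\nabla\widetilde{\mathcal{L}}(\bm{\theta}_t) - \nabla\mathcal{L}(\bm{\theta}_t)\|^2] \leq \mathbb{E}[\|\nabla\widetilde{\mathcal{L}}(\bm{\theta}_{t-1}) - \nabla\mathcal{L}(\bm{\theta}_{t-1})\|^2] + C\,\mathbb{E}[\|\bm{\theta}_t - \bm{\theta}_{t-1}\|^2],
\end{equation}
where the single-step variance of the recursive difference estimators in Eq.~\ref{eq:sgcn_plus_plus_regular} is bounded via Assumptions~\ref{assumption:loss_lip_smooth}--\ref{assumption:bound_norm}. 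The early-stopping test on line~\ref{line:sgcn++early_stop} guarantees $\|\widetilde{\mathbf{H}}^{(\ell)}\|_\mathrm{F}$ and $\|\widetilde{\mathbf{D}}^{(\ell)}\|_\mathrm{F}$ stay within factors $\alpha$ and $\beta$ of their snapshot values, which is what produces the constant $C = \mathcal{O}\big((\alpha^2+1)(\alpha^2+\beta^2+\alpha^2\beta^2)\sum_\ell|\mathbb{E}[\|\widetilde{\mathbf{L}}^{(\ell)}\|_\mathrm{F}^2] - \|\mathbf{L}\|_\mathrm{F}^2|\big)$ as the zeroth- and first-order variance propagate through the forward and backward passes across all $L$ layers. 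Since every snapshot step refreshes the estimator to the exact full-batch gradient (so the error is zero there), unrolling the recursion over a snapshot window and substituting $\|\bm{\theta}_\tau - \bm{\theta}_{\tau-1}\|^2 = \eta^2\|\nabla\widetilde{\mathcal{L}}(\bm{\theta}_{\tau-1})\|^2$ bounds the error at step $t$ by $C\eta^2$ times the sum of stochastic gradient norms since the last snapshot. Summing over $t\in[T]$ and noting each window has length at most $K$, so each norm is counted at most $K$ times, gives the key estimate
\begin{equation}
\sum_{t=1}^T \mathbb{E}[\|\nabla\widetilde{\mathcal{L}}(\bm{\theta}_t) - \nabla\mathcal{L}(\bm{\theta}_t)\|^2] \leq \eta^2 \Delta^{++\prime}_{\mathbf{n}+\mathbf{b}} \sum_{t=1}^T \mathbb{E}[\|\nabla\widetilde{\mathcal{L}}(\bm{\theta}_t)\|^2], \qquad \Delta^{++\prime}_{\mathbf{n}+\mathbf{b}} = CK.
\end{equation}

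To finish, I would take expectations in the descent inequality, sum over $t=1,\ldots,T$, and substitute the key estimate. The two $\|\nabla\widetilde{\mathcal{L}}\|^2$ contributions then combine into the single term $-\tfrac{\eta}{2}\big(1 - L_\mathrm{F}\eta - \eta^2\Delta^{++\prime}_{\mathbf{n}+\mathbf{b}}\big)\sum_t\mathbb{E}[\|\nabla\widetilde{\mathcal{L}}(\bm{\theta}_t)\|^2]$, and the prescribed learning rate $\eta = 2/(L_\mathrm{F} + \sqrt{L_\mathrm{F}^2 + 4\Delta^{++\prime}_{\mathbf{n}+\mathbf{b}}})$ is chosen precisely as the positive root of $\Delta^{++\prime}_{\mathbf{n}+\mathbf{b}}\eta^2 + L_\mathrm{F}\eta - 1 = 0$, so this coefficient vanishes and the stochastic-gradient-norm terms drop out entirely. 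What remains telescopes to $\tfrac{\eta}{2}\sum_t\mathbb{E}[\|\nabla\mathcal{L}(\bm{\theta}_t)\|^2] \leq \mathcal{L}(\bm{\theta}_1) - \mathbb{E}[\mathcal{L}(\bm{\theta}_{T+1})] \leq \mathcal{L}(\bm{\theta}_1) - \mathcal{L}(\bm{\theta}^\star)$, and dividing by $\eta T/2$ together with $1/\eta = \tfrac12(L_\mathrm{F} + \sqrt{L_\mathrm{F}^2 + 4\Delta^{++\prime}_{\mathbf{n}+\mathbf{b}}})$ delivers the claimed rate. The main obstacle is the variance-reduction lemma itself: carefully tracking how the node-embedding error (zeroth order) from Eq.~\ref{eq:sgcn_plus_regular} and the layerwise-gradient error (first order) from Eq.~\ref{eq:sgcn_plus_plus_regular} jointly propagate through their coupled recursions to produce a clean one-step bound proportional to $\|\bm{\theta}_t - \bm{\theta}_{t-1}\|^2$, and verifying that the early-stopping criterion keeps all intermediate norms bounded so that the constant $C$, and hence $\Delta^{++\prime}_{\mathbf{n}+\mathbf{b}}$, is finite.
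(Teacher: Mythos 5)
Your proposal matches the paper's proof essentially step for step: the same polarization-identity descent inequality isolating $\frac{\eta}{2}\|\nabla\widetilde{\mathcal{L}}(\bm{\theta}_t)-\nabla\mathcal{L}(\bm{\theta}_t)\|^2$, the same SARAH-style recursion whose cross terms vanish because the conditional expectation of each recursive increment equals the true increment, the same unrolling from snapshot steps (where the error resets to zero) so the accumulated MSE telescopes into $\eta^2 K$ times the summed stochastic gradient norms with the $(\alpha^2+1)(\alpha^2+\beta^2+\alpha^2\beta^2)$ factor supplied by the early-stopping criterion, and the same choice of $\eta$ as the positive root of $\Delta^{++\prime}_{\mathbf{n}+\mathbf{b}}\eta^2 + L_\mathrm{F}\eta - 1 = 0$ so the $\|\nabla\widetilde{\mathcal{L}}\|^2$ terms cancel. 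The only difference is presentational: the paper executes the variance-reduction step through a sequence of layerwise lemmas tracking the coupled $\widetilde{\mathbf{D}}^{(\ell)}$ and $\widetilde{\mathbf{G}}^{(\ell)}$ recursions separately, whereas you state it as a single flattened one-step recursion on the full gradient error, correctly flagging that coupled propagation as the remaining technical work.
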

The proof of theorem is deferred to Appendix~\ref{section:proof_of_thm3} and the exact value of key parameter $L_\mathrm{F}$ and $\Delta^{++\prime}_{\mathbf{n}+\mathbf{b}}$ are computed in Lemma~\ref{lemma:smoothness_L_layer} and Lemma~\ref{lemma:upper-bound-sgcn-plus-plus-mse}, respectively, and can be found in Appendices~\ref{section:proof_of_thm1} and~\ref{section:proof_of_thm3}.


Theorem~\ref{theorem:convergence_of_sgcn_plus_plus} implies that applying doubly variance reduction can scale the mean-square error $\mathcal{O}(\eta^2 (\alpha^2 + 1)(\alpha^2 + \beta^2 + \alpha^2 \beta^2) K)$ times smaller. As a result, after $T$ iterations the norm of gradient of solution obtained by \texttt{SGCN++} is at most $\mathcal{O}(\Delta_{\mathbf{n}+\mathbf{b}}^{++\prime}/T)$, which enjoys the same rate as  vanilla variance reduced SGD~\cite{reddi2016stochastic,fang2018spider}, and a similar storage and computation overhead as \texttt{VRGCN}.
Please notice that our algorithm  only requires the historical node embeddings and stochastic gradient for variance reduction, therefore \texttt{SGCN++} is also applicable to other GCN variants~\cite{li2019deepgcns,velivckovic2017graph} and adaptive sampling algorithms~\cite{liu2020bandit}. Our proof follows the proof of SARAH~\cite{nguyen2019finite} by selecting the learning rate reverse proportional to the bias and variance, thus minimize the effect on convergence.
Notice that since the objective function here is significantly more complex comparing to classical variance reduced composite optimization, for expressiveness we use $\mathcal{O}(\cdot)$ for both bias and variance. As a result, we cannot exactly derive the optimal snapshot gap size $K$. However, our empirical results suggests that choosing $K=10$ and then selecting $\alpha=\beta=1.1$ such that algorithm at most restart once at early stage works well on most datasets.


\paragraph{Scalability of \texttt{SGCN++}.}
One might doubt \emph{whether the computation at the snapshot step with full-batch gradient will hinder the scalability of \texttt{SGCN++} for extremely large graphs}? 
Heuristically, we can approximate the full-batch gradient with the gradient calculated on a large-batch using all neighbors. 
The intuition stems from matrix Bernstein inequality~\cite{gross2011recovering}, where the probability of the approximation error violating the desired accuracy decreases exponentially as the number of samples increase.
Please refer to Algorithm~\ref{algorithm:sgcn_plus_plus_complete_no_full_batch} for the full-batch free \texttt{SGCN++} and explanation on why large-batch approximation is feasible using tools from matrix concentration. 
We remark that large-batch approximation can also be utilized in \texttt{SGCN+} to further reduce the memory requirement for historical node embeddings.


\paragraph{Effect of using dropout and data augmentations.}
Exploring the effect of data augmentation and dropout on variance reduction algorithms is interesting. 
Since data augmentation is usually applied to the feature matrices before training, it will not affect the result. 
However, applying variance reduction with Dropout will reduce the randomness introduced by Dropout, potentially leads to a more stable and faster convergence, but lessen the effect of dropout.
Applying batch normalization (not commonly used in GCN) on variance reduction algorithms is non-trivial because batch normalization has trainable parameters that are changing at each iteration.

\paragraph{Relation between two types of variance.}
We illustrate in Figure~\ref{fig:illustrate_variance} the relationship between node embedding approximation variance and layerwise gradient variance to the forward- and backward-propagation. We remark that zeroth-order variance reduction \texttt{SGCN+} is only applied during forward-propagation, and doubly (zeroth- and first-order) variance reduction \texttt{SGCN++} is applied during forward- and backward-propagation, simultaneously.

\section{Experiments} \label{section:experiments}

\begin{table}[t]
\caption{Summary of dataset statistics. 
\textbf{m} stands for \textbf{m}ulti-class classification, and \textbf{s} stands for \textbf{s}ingle-class.
}
\label{table:datasets}
\centering
\begin{tabular}{rcccccc}
\toprule
\textbf{Dataset}   & \textbf{Nodes} & \textbf{Edges} & \textbf{Degree} & \textbf{Feature} & \textbf{Classes} & \textbf{Train/Val/Test} \\ \midrule
\textbf{PPI}       & $14,755$         & $225,270$        & $15$              & $50$               & $121$(\textbf{m})          & $66\%/12\%/22\%$          \\ 
\textbf{PPI-Large} & $56,944$         & $818,716$        & $14$              & $50$               & $121$(\textbf{m})          & $79\%/11\%/10\%$          \\ 
\textbf{Flickr}    & $89,250$         & $899,756$        & $10$              & $500$              & $7$ (\textbf{s})               & $50\%/25\%/25\%$          \\ 
\textbf{Reddit}    & $232,965$        & $11,606,919$     & $50$              & $602$              & $41$(\textbf{s})               & $66\%/10\%/24\%$          \\ 
\textbf{Yelp}      & $716,847$        & $6,977,410$      & $10$              & $300$              & $100$(\textbf{m})          & $75\%/10\%/15\% $         \\ 
\bottomrule
\end{tabular}
\end{table}

\paragraph{Experimental setup.}
We evaluate our proposed methods in semi-supervised learning setting on various classification datasets. We summarize the dataset statistics in Table~\ref{table:datasets}.
In addition to different sampling mechanisms, we introduce \texttt{Exact} sampling that takes all neighbors for node embedding computation during mini-batch training (no zeroth-order variance), which can be used to explicitly test the importance of first-order variance reduction.
We add \texttt{SGCN+(Zeroth)} and \texttt{SGCN++(Doubly)} on top of each sampling method to illustrate how zeroth-order and doubly variance reduction affect GCN training.
By default, we train $2$-layer GCNs with hidden dimension of $256$, snapshot gap $K=10$, staleness factors $\alpha=\beta=1.1$. 
We use all nodes for snapshot computation on Flickr, PPI, PPI-large datasets, and employ snapshot large-batch approximation for both \texttt{SGCN+} and \texttt{SGCN++} by randomly selecting $50\%$ of nodes for Reddit and $15\%$ of nodes for Yelp dataset.
We update the model with a mini-batch size of $B=512$ and Adam optimizer with a learning rate of $\eta=0.01$.
We conduct training $3$ times for $200$ epochs and report the average results.
We choose the model with the lowest validation error as the convergence point. 
Code to reproduce the experiment results can be found at \href{https://github.com/CongWeilin/SGCN}{here}.

\begin{table}[h]
\caption{Configuration of different sampling algorithms during training}
\label{table:train_config}
\centering
\scalebox{0.99}{
\begin{tabular}{lcccccc}
\toprule
                                 & \texttt{GraphSAGE} & \texttt{VRGCN} & \texttt{Exact} & \texttt{FastGCN} & \texttt{LADIES} & \texttt{GraphSAINT} \\ \midrule
\textbf{Mini-batch size}         & 512                & 512            & 512            & 512             & 512             & 2048                \\
\textbf{Sampled neighbors}       & 5                  & 2              & All            & -                & -               & -                   \\ 
\textbf{Samples in layerwise}    & -                  & -              & -              & 4096             & 512             & 2048                \\ \bottomrule
\end{tabular}}
\end{table}

\paragraph{Implementation details.}
To demonstrate the effectiveness of doubly variance reduction, we modified the PyTorch implementation of {GCN}\footnote{\url{https://github.com/tkipf/pygcn}}~\cite{semigcn} to add \texttt{LADIES}~\cite{ladies}, \texttt{FastGCN}~\cite{fastgcn}, \texttt{GraphSAGE}~\cite{graphsage}, \texttt{GraphSAINT}~\cite{graphsaint}, \texttt{VRGCN}~\cite{control_variacne_gcn}, and \texttt{Exact} sampling mechanism. 
Then, we implement \texttt{SGCN+} and \texttt{SGCN++} on the top of each sampling method to illustrate how zeroth-order variance reduction and doubly variance reduction help for GCN training. 
By default, we train $2$-layers GCNs with hidden state dimension of $256$, element-wise ELU as the activation function and symmetric normalized Laplacian matrix $\mathbf{L} = \mathbf{D}^{-1/2} \mathbf{A} \mathbf{D}^{-1/2}$. 
We use \emph{mean-aggregation} for single-class classification task and \emph{concatenate-aggregation} for multi-class classification.
The default mini-batch batch size and sampled node size are summarized in Table~\ref{table:train_config}.

During training, we update the model using Adam optimizer with a learning rate of $0.01$. 
For \texttt{SGCN++}, historical node embeddings are first calculated on GPUs and transfer to CPU memory using PyTorch command \texttt{Tensor.to(device)}. 
Therefore, no extra GPU memory is required when training with \texttt{SGCN++}. 
To balance the staleness of snapshot model and the computational efficiency, as default we choose snapshot gap $K=10$ and early stop inner-loop if the Euclidean distance between current step gradient to snapshot gradient is larger than $0.002$ times the norm of snapshot gradient. For each epoch we construct $10$ mini-batches in parallel using Python package \texttt{multiprocessing} and perform training on the sampled $10$ mini-batches. 
To achieves a fair comparison of different sampling strategies in terms of sampling complexity, we implement all sampling algorithms using \texttt{numpy.random} and \texttt{scipy.sparse} package. 

We have to emphasize that, in order to better observe the impact of sampling on convergence, we have not use any augmentation methods (e.g., ``layer normalization'', ``skip-connection'', and ``attention''), which have been proven to impact the GCN performance in~\cite{cai2020graphnorm,dwivedi2020benchmarkgnns}.
Notice that we are not criticizing the usage of these augmentations. 
Instead, we use the most primitive network structure to better explore the impact of sampling and variance reduction on convergence.

\paragraph{Hardware specification and environment.}
We run our experiments on a single machine with Intel i$9$-$10850$K, Nvidia RTX $3090$ GPU, and $32$GB RAM memory. 
The code is written in Python $3.7$ and we use PyTorch $1.4$ on CUDA $10.1$ to train the model on GPU.

\renewcommand{\arraystretch}{1.0}
\begin{table*}[t]
\centering
\caption{Comparison of the accuracy (F1-score) of \texttt{SGCN}, \texttt{SGCN+}, and \texttt{SGCN++}. }
\label{table:f1_results}
\begin{small}\begin{sc}
\resizebox{.99\linewidth}{!}{
\begin{tabular}{llccccc}
\toprule
\multicolumn{2}{c}{\textbf{Method}}                    & \textbf{PPI}   & \textbf{PPI-Large} & \textbf{Flickr} & \textbf{Reddit}  & \textbf{Yelp}             \\\midrule
\multirow{2}{*}{\textbf{Exact}}      & SGCN            & $78.14\pm0.14$            & $77.90\pm0.67$            & $52.30\pm0.49$          & $95.07\pm0.01$          & $59.99\pm0.72$              \\ 
                                     & SGCN++ (Doubly)  & $\mathbf{81.66}\pm0.45$   & $\mathbf{89.07}\pm0.18$   & $\mathbf{52.88}\pm0.97$ & $\mathbf{95.17}\pm0.25$ & $\mathbf{62.09}\pm0.25$     \\\midrule
\multirow{2}{*}{\textbf{VRGCN}}      & SGCN+ (Zeroth)  & $77.65\pm0.23$            & $77.81\pm0.57$            & $52.57\pm0.77$          & $95.17\pm0.38$          & $61.29\pm0.11$              \\
                                     & SGCN++ (Doubly) & $\mathbf{82.50}\pm0.42$   & $\mathbf{88.65}\pm 0.73$   & $52.53\pm0.47$          & $\mathbf{95.17}\pm0.31$          & $\mathbf{62.64}\pm0.80$     \\\midrule
\multirow{3}{*}{\textbf{GraphSAGE}}  & SGCN            & $72.19\pm0.29$            & $69.51\pm0.25$            & $51.13\pm0.54$          & $94.73\pm0.12$          & $59.58\pm0.60$              \\
                                     & SGCN+ (Zeroth)  & $70.82\pm0.35$            & $69.67\pm0.73$            & $51.13\pm0.80$          & $94.89\pm0.02$          & $58.62\pm0.98$              \\  
                                     & SGCN++ (Doubly) & $\mathbf{80.30}\pm0.81$   & $\mathbf{85.41}\pm0.39$   & $\mathbf{52.65}\pm0.97$ & $\mathbf{95.18}\pm0.58$ & $\mathbf{61.75}\pm0.62$     \\\midrule
\multirow{3}{*}{\textbf{FastGCN}}    & SGCN            & $64.21\pm0.35$            & $59.60\pm0.32$            & $50.74\pm0.93$          & $87.36\pm0.18$          & $55.75\pm0.81$              \\
                                     & SGCN+ (Zeroth)  & $71.78\pm0.66$            & $72.30\pm0.89$            & $51.07\pm0.81$          & $94.54\pm0.36$          & $56.86\pm0.35$              \\  
                                     & SGCN++ (Doubly) & $\mathbf{80.25}\pm0.73$   & $\mathbf{85.04}\pm0.99$   & $\mathbf{52.57}\pm0.84$ & $\mathbf{94.99}\pm0.35$ & $\mathbf{60.63}\pm0.50$     \\\midrule
\multirow{3}{*}{\textbf{LADIES}}     & SGCN            & $61.80\pm0.87$            & $60.74\pm0.99$            & $50.29\pm0.86$          & $94.11\pm0.57$          & $59.84\pm0.29$              \\ 
                                     & SGCN+ (Zeroth)  & $71.00\pm0.45$            & $74.03\pm0.56$            & $51.83\pm0.34$          & $94.39\pm0.43$          & $57.06\pm0.26$              \\ 
                                     & SGCN++ (Doubly) & $\mathbf{82.10}\pm0.15$   & $\mathbf{84.18}\pm0.08$   & $\mathbf{52.09}\pm0.14$ & $\mathbf{95.05}\pm0.29$ & $\textbf{60.96}\pm0.66$     \\\midrule
\multirow{3}{*}{\textbf{GraphSAINT}} & SGCN            & $61.15\pm0.16$            & $38.68\pm0.99$            & $50.10\pm0.44$          & $93.68\pm0.23$          & $54.65\pm0.34$              \\
                                     & SGCN+ (Zeroth)  & $74.90\pm0.52$            & $41.40\pm0.17$            & $50.66\pm0.63$          & $84.61\pm0.95$          & $55.42\pm0.46$              \\ 
                                     & SGCN++ (Doubly) & $\mathbf{79.45}\pm0.71$   & $\mathbf{79.71}\pm0.05$   & $\mathbf{50.94}\pm0.94$ & $\mathbf{94.18}\pm0.57$ & $\mathbf{57.03}\pm0.39$     \\ \midrule
\multirow{1}{*}{\textbf{FullGCN}}    & N/A              & $82.14\pm0.12$            & $90.62\pm0.13$            & $52.99\pm0.26$          & $95.15\pm0.04$          & $62.77\pm0.10$              \\ 
                                     \bottomrule
\end{tabular}}
\end{sc}\end{small}
\end{table*}

\begin{figure}[h]
    \centering
    \includegraphics[width=0.95\textwidth]{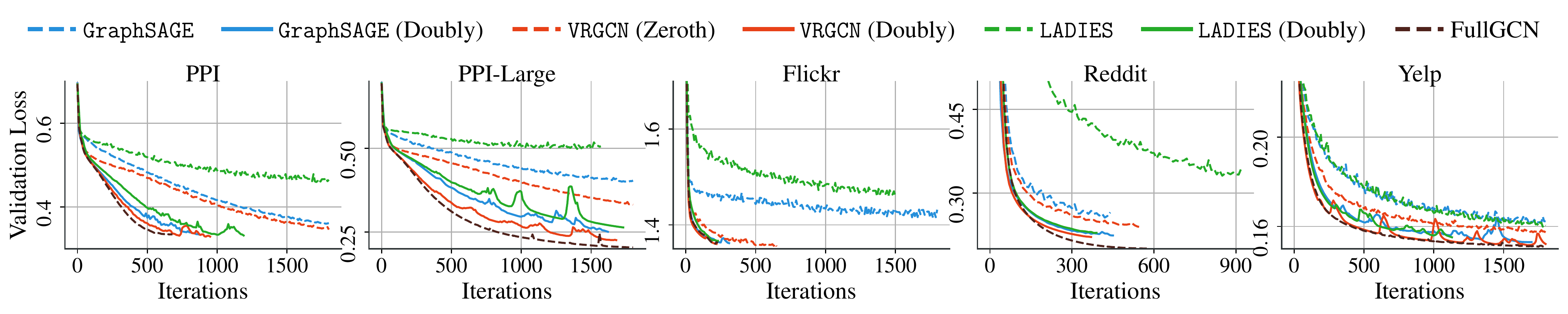}
    \caption{Comparing the validation loss of \texttt{SGCN} and \texttt{SGCN++} on real world datasets.}
    \label{fig:convergence_result}
\end{figure}

\begin{figure}[h]
    \centering
    \includegraphics[width=0.9\textwidth]{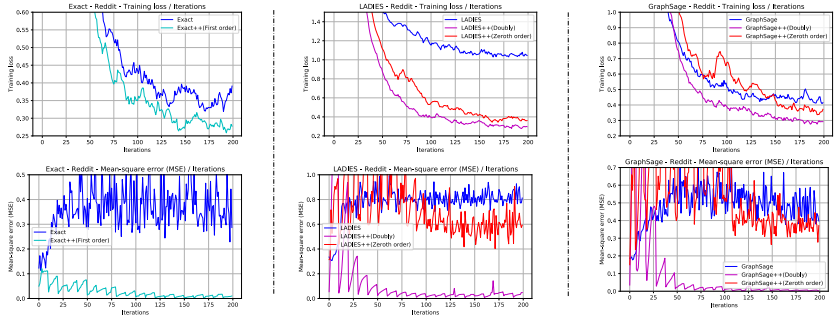}
    \caption{Comparing the mean-square error of stochastic gradient to full gradient and training loss of \texttt{SGCN}, \texttt{SGCN+}, \texttt{SGCN++} in the first $200$ iterations of training process on \texttt{Reddit} dataset.}
    \label{fig:mse_compare}
\end{figure}

\paragraph{Overall results.}
In Table~\ref{table:f1_results} and Figure~\ref{fig:convergence_result}, we show the accuracy and convergence comparison of \texttt{SGCN}, \texttt{SGCN+}, and \texttt{SGCN++}. In Figure~\ref{fig:mse_compare}, we evaluate the effect of variance reduction on the mean-square error of stochastic gradient and its convergence.
We remark that multi-class classification tasks prefer a more stable node embedding and gradient than single-class classification tasks. 
Therefore, even the vanilla \texttt{Exact}, \texttt{GraphSAGE} and \texttt{VRGCN} already outperforms other baseline methods on PPI, PPI-large, and Yelp. 
Applying variance reductions can significantly reduce the MSE of stochastic gradient full gradient, thus further improve its performance.
In addition, we observe that the effect of variance reduction depends on its base sampling algorithms. 
Even though the performance of base sampling algorithm various significantly, the doubly variance reduction can bring their performance to a similar level.
Moreover, we can observe from the loss curves that \texttt{SGCN}s suffers an residual error as discussed in Theorem~\ref{theorem:convergence_of_sgcn}, and the residual error is proportional to node embedding approximation variance (zeroth-order variance), where \texttt{VRGCN} has less variance than \texttt{GraphSAGE} because of its zeroth-order variance reduction, and \texttt{GraphSAGE} has less variance than \texttt{LADIES} because more nodes are sampled for node embedding approximation. 

\paragraph{GPU memory usage.}
In Figure~\ref{fig:gpu_utils}, We compare the GPU memory usage of \texttt{SGCN} and \texttt{SGCN++}.
We calculate the allocated memory by \texttt{torch.cuda.memory\_allocated}, which is the current GPU memory occupied by tensors in bytes for a given device. 
We calculate the maximum allocated memory by \texttt{torch.cuda.max\_memory\_allocated}, which is the maximum GPU memory occupied by tensors in bytes for a given device.
From Figure~\ref{fig:gpu_utils}, we observe that neither running full-batch GCN nor saving historical node embeddings and gradients will significantly increase the computation overhead during training.
Besides, since all historical activations are stored outside GPU, we see that \texttt{SGCN++} only requires several megabytes to transfer data between GPU memory to the host, which can be ignored compared to the memory usage of calculation itself. 

\begin{figure}[h]
    \centering
    \includegraphics[width=0.7\textwidth]{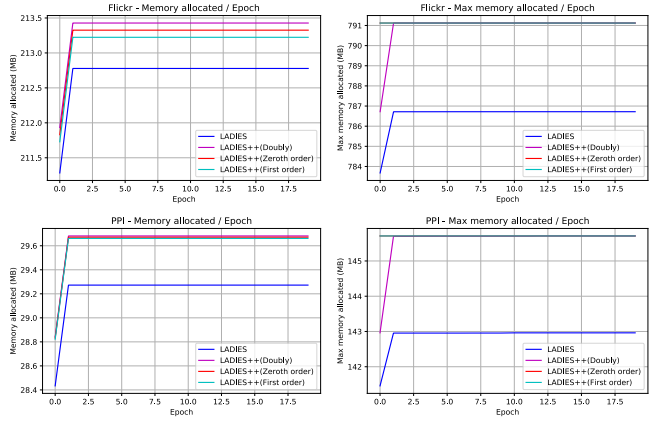}
    \caption{Comparison of GPU memory usage of \texttt{SGCN} and \texttt{SGCN++} on Flickr and PPI dataset.}
    \label{fig:gpu_utils}
\end{figure}

\begin{figure}
    \centering
    \includegraphics[width=1.0\textwidth]{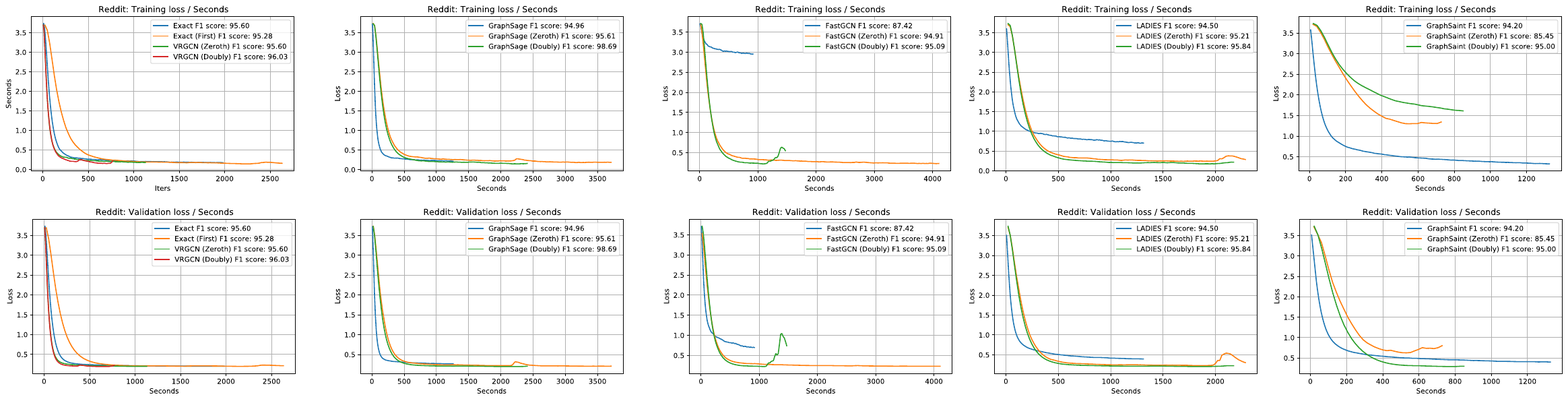}
    \caption{Comparison of training and validation loss of \texttt{SGCN}, \texttt{SGCN+}, \texttt{SGCN++} using wall clock time on \texttt{Reddit} dataset. }
    \label{fig:reddit-wall-clock}
\end{figure}

\begin{figure}
    \centering
    \includegraphics[width=1.0\textwidth]{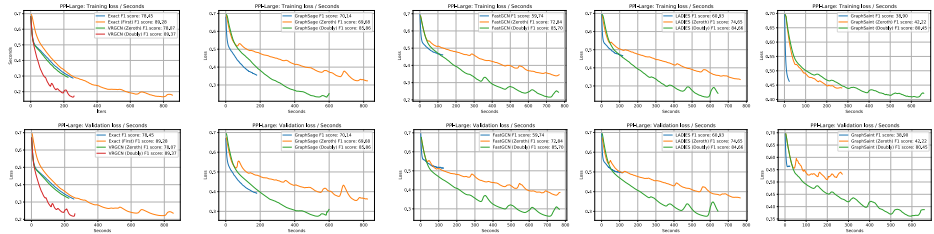}
    \caption{Comparison of training and validation loss of \texttt{SGCN}, \texttt{SGCN+}, \texttt{SGCN++} using wall clock time on \texttt{PPI-Large} dataset. }
    \label{fig:ppi-large-wall-clock}
\end{figure}
\paragraph{Evaluation of total time.}
In Table~\ref{table:doubly_ladies_time} and Table~\ref{table:ladies_time}, we report the average time of doubly variance reduced \texttt{LADIES++} and vanilla \texttt{LADIES}. We classify the wall clock time during the training process into five categories:
\begin{itemize}
    \item \textit{Snapshot step sampling time}: The time used to construct the snapshot full-batch or the snapshot large-batch. In practice, we directly use full-batch training for the smaller datasets (e.g., PPI, PPI-large, and Flickr) and use sampled snapshot large-batch for large datasets (e.g., Reddit and Yelp). When constructing snapshot large-batch, the \texttt{Exact} sampler has to go through all neighbors of each node using for-loops based on the graph structure, such that it is time-consuming.
    \item \textit{Snapshot step transfer time}: The time required to transfer the sampled snapshot batch nodes and Laplacian matrices to the GPUs.
    \item \textit{Regular step sampling time}: The time used to construct the mini-batches using layerwise \texttt{LADIES} sampler. 
    \item \textit{Regular step transfer time}: The time required to transfer the sampled mini-batch nodes and Laplacian matrices to GPUs, and the time to transfer the historical node embeddings and the stochastic gradient between GPUs and CPUs.
    \item \textit{Computation time}: The time used for forward- and backward-propagation.
\end{itemize}

Notice that we are reporting the total time per iteration because the vanilla sampling-based method cannot reach the same accuracy as the doubly variance reduced algorithm (due to the residual error as shown in Theorem~\ref{theorem:convergence_of_sgcn}).

From Table~\ref{table:doubly_ladies_time} and Table~\ref{table:ladies_time}, we can observe that the most time-consuming process in sampling-based GCN training is data sampling and data transfer.
The extra computation time introduces by employing the snapshot step is negligible when comparing to the mini-batch sampling time during each regular step.
Therefore, a promising future direction for large-scale graph training is developing a provable sampling algorithm with low sampling complexity.
Besides, in Figure~\ref{fig:reddit-wall-clock} and Figure~\ref{fig:ppi-large-wall-clock}, we compare the wall clock time of all methods on \texttt{PPI-large} and \texttt{Reddit dataset} with our default setup. From Figure~\ref{fig:reddit-wall-clock} and Figure~\ref{fig:ppi-large-wall-clock}, we can observe that although \texttt{SGCN+} and \texttt{SGCN++} sometime require more wall-clock time, they can get a better training and validation result than vanilla \texttt{SGCN}.

\begin{table}[h]
\caption{Comparison of average time (1 snapshot step and 10 regular steps) of doubly variance reduced \texttt{LADIES++} with regular step batch size as $512$. Full-batch is used for snapshot step on PPI, PPI-Large, and Flickr. $50\%$ training set nodes are sampled for the snapshot step on Reddit, and $15\%$ training set nodes are sampled for the snapshot step on Yelp.}
\label{table:doubly_ladies_time}
\centering
\scalebox{0.99}{
\begin{tabular}{lccccc}
\toprule
\textbf{Time (second)} & \textbf{PPI} & \textbf{PPI-Large} & \textbf{Flickr} & \textbf{Reddit} & \textbf{Yelp} \\ \midrule
Snapshot step sampling      & $0.182$      & $0.355$            & $0.221$          & $18.446$         &  $21.909$      \\ 
Snapshot step transfer      & $0.035$      & $0.070$            & $0.036$          & $0.427$         & $0.176$        \\ 
Regular step sampling       & $1.128$      & $1.322$            & $0.899$          & $9.499$         & $9.102$       \\ 
Regular step transfer       & $0.393$      & $0.459$            & $0.250$          & $0.550$         & $0.372$       \\ 
Computation            & $0.215$      & $0.196$             & $0.136$          &  $0.399$        & $0.139$       \\ \midrule
Total time             & $1.954$      & $2.377$            & $1.442$          &   $29.321$       & $31.697$      \\ 
\bottomrule
\end{tabular}
}
\end{table}

\begin{table}[h]
\caption{Comparison of average time (10 regular steps) of \texttt{LADIES} with regular step batch size as $512$.}
\label{table:ladies_time}
\centering
\scalebox{0.99}{
\begin{tabular}{lccccc}
\toprule
\textbf{Time (second)} & \textbf{PPI} & \textbf{PPI-Large} & \textbf{Flickr} & \textbf{Reddit} & \textbf{Yelp} \\ \midrule
Regular step sampling       & $1.042$      &  $1.077$       & $0.977$         & $9.856$ & $9.155$    \\
Regular step transfer       & $0.036$      &  $0.047$       & $0.016$         & $0.496$ & $0.041$    \\
Computation            & $0.077$     &   $0.068$         & $0.034$      &  $0.029$  &  $0.082$      \\ \midrule
Total time             & $1.156$     &   $1.192$         & $1.028$      &  $10.381$  &  $9.278$      \\ 
\bottomrule
\end{tabular}
}
\end{table}

\paragraph{Evaluation of snapshot gap for \texttt{SGCN+} and \texttt{SGCN++}.}
Doubly variance reduced \texttt{SGCN++} requires performing full-batch (large-batch) training periodically to calculate the snapshot node embeddings and gradients. 
A larger snapshot gap $K$ can make training faster, but also might make the snapshot node embeddings and gradients too stale for variance reduction. 
In this experiment, we evaluate the effect of snapshot grap on training by choosing mini-batch size as $B=512$, disable the early stop criterion, and change the inner-loop intervals from $K=5$ mini-batches to $K=20$ mini-batches.
In Figure~\ref{fig:change_of_K} and Figure~\ref{fig:change_k_zeroth_result}, we show the comparison of training loss and validation loss with different number of inner-loop intervals for \texttt{SGCN++} and \texttt{SGCN+} on \texttt{Reddit} dataset, respectively.
We can observe that the model with a smaller snapshot gap requires less number of iterations to reach the same training and validation loss, and gives us a better generalization performance (F1-score).

\begin{figure}[h]
    \centering
    \includegraphics[width=1\textwidth]{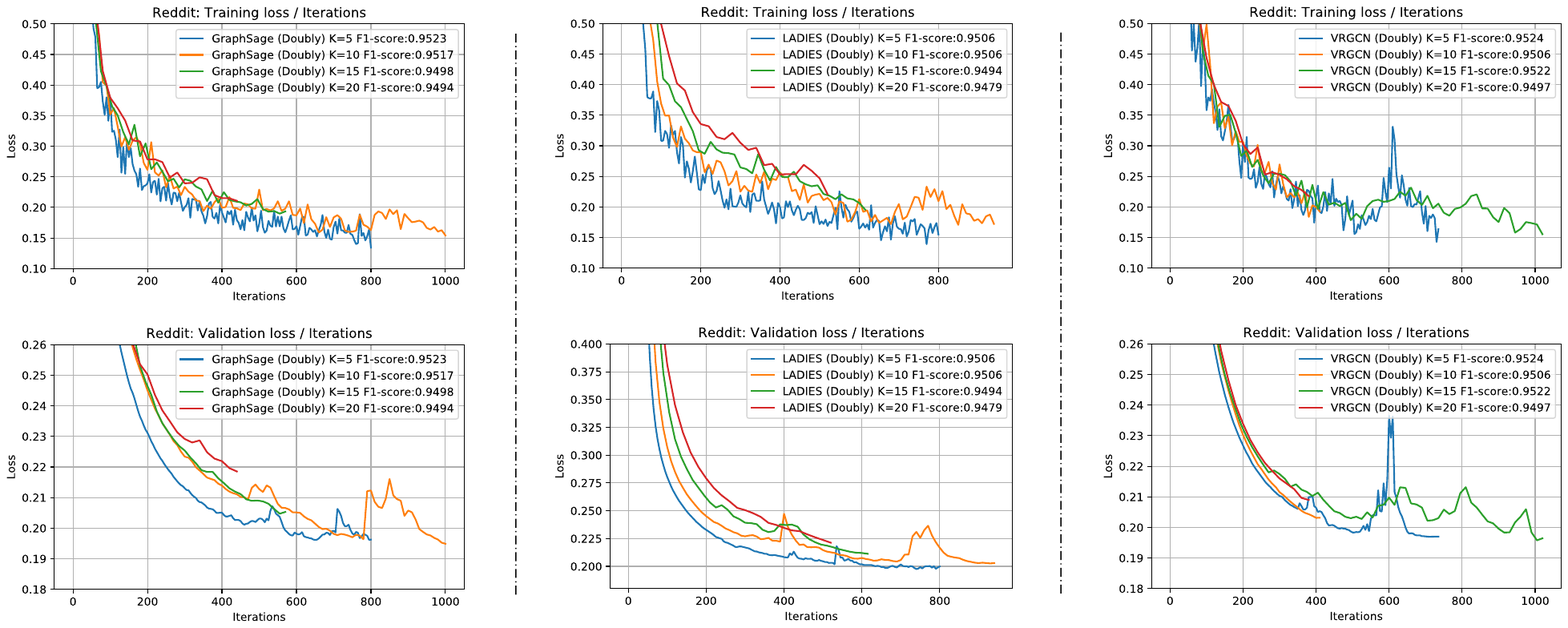}
    \caption{Comparison of training loss, validation loss, and F1-score of \texttt{SGCN++} with different snapshot gap on \texttt{Reddit} dataset. }
    \label{fig:change_of_K}
\end{figure}

\begin{figure}
    \centering
    \includegraphics[width=1\textwidth]{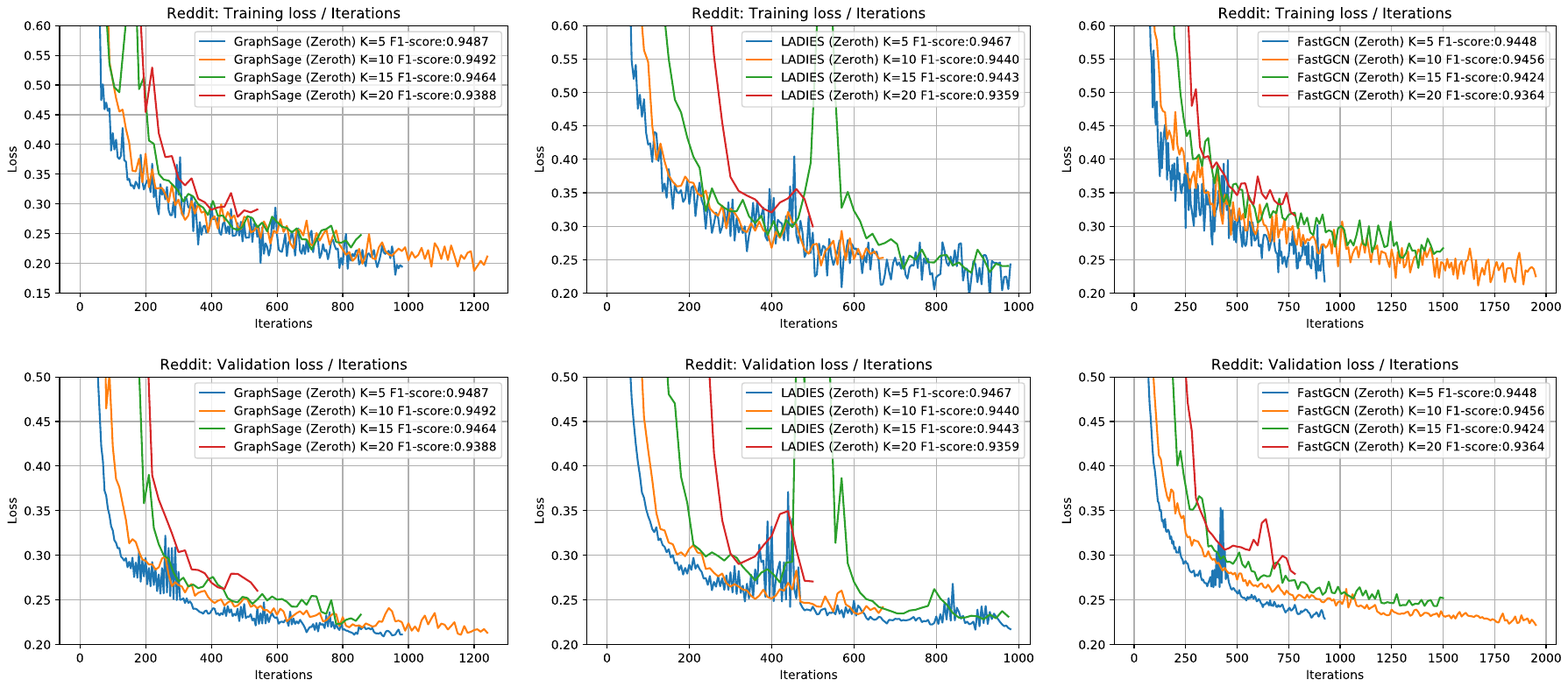}
    \caption{Comparison of training loss, validation loss, and F1-score of \texttt{SGCN+} with different snapshot gap on \texttt{Reddit} dataset.}
    \label{fig:change_k_zeroth_result}
\end{figure}

\paragraph{Evaluation of large-batch size for \texttt{SGCN+} and \texttt{SGCN++}.}
The full-batch gradient calculation at each snapshot step is computationally expensive. 
Heuristically, we can approximate the full-batch gradient by using the gradient computed on a large-batch of nodes. 
Besides, it is worth noting that large-batch approximation can be also used for the node embedding approximation in zeroth-order variance reduction. 
In \texttt{SGCN+}, saving the historical node embeddings for all nodes in an extreme large graph can be computationally prohibitive. 
An alternative strategy is sampling a large-batch during the snapshot step, computing the node embeddings for all nodes in the large-batch, and saving the freshly computed node embeddings on the storage. After that, mini-batch nodes are sampled from the large-batch during the regular steps.
Let denote $B^\prime$ as the snapshot step large-batch size and $B$ denote the regular step mini-batch size. 
By default, we choose snapshot gap as $K=10$, disable the early stop criterion, fix the regular step batch size as $B=512$, and change the snapshot step batch size $B^\prime$ from $20,000$ (20K) to $80,000$ (80K).
In Figure~\ref{fig:chang_B_prime} and Figure~\ref{fig:change_K_zeroth}, we show the comparison of training loss and validation loss with different snapshot step large-batch size $B^\prime$ for \texttt{SGCN++} and \texttt{SGCN+}, respectively.

\begin{figure}[h]
    \centering
    \includegraphics[width=1.0\textwidth]{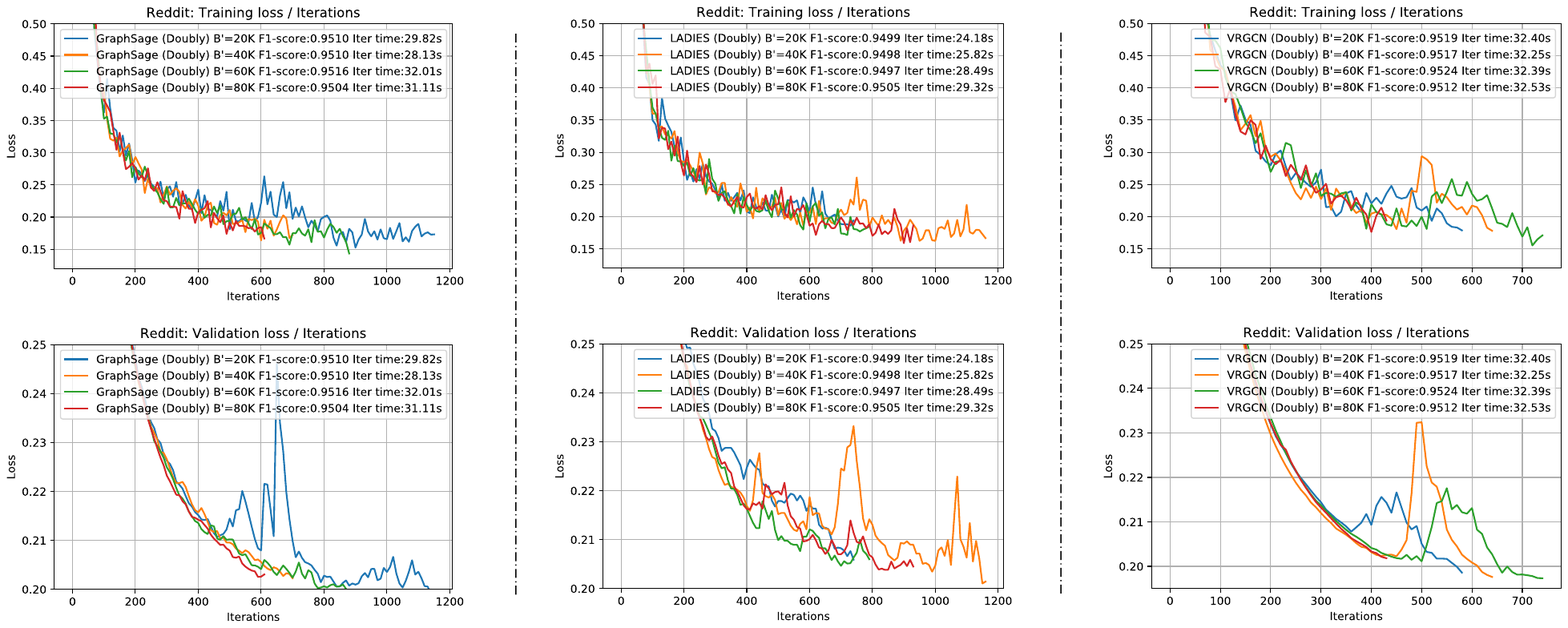}
    \caption{{Comparison of training loss, validation loss, and F1-score of \texttt{SGCN++} with different snapshot large-batch size on \texttt{Reddit} dataset.}}
    \label{fig:chang_B_prime}
\end{figure}

\begin{figure}
    \centering
    \includegraphics[width=1.0\textwidth]{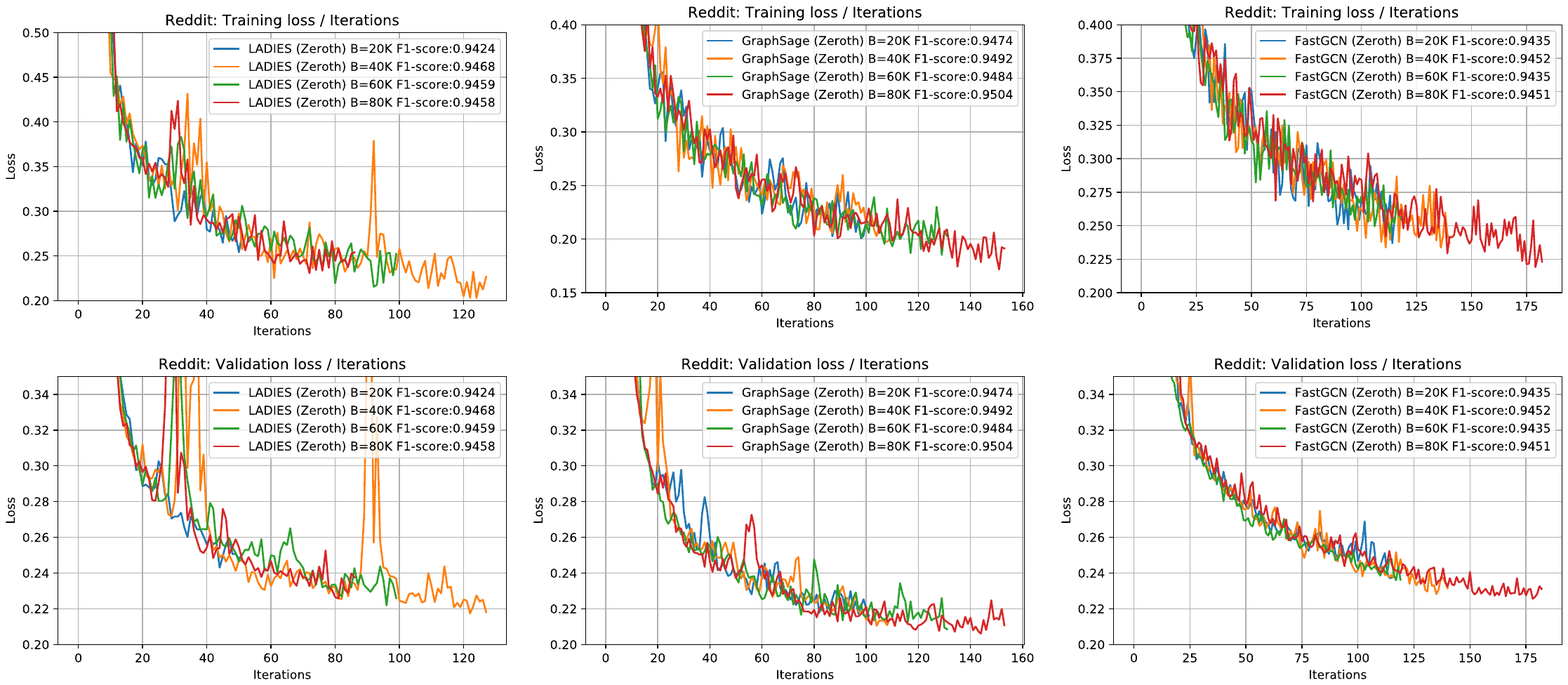}
    \caption{{Comparison of training loss, validation loss, and F1-score of \texttt{SGCN+} with different snapshot large-batch size on \texttt{Reddit} dataset.}}
    \label{fig:change_K_zeroth}
\end{figure}




\paragraph{Evaluation of increasing snapshot gap.}
Snapshot gap size $K$ serves as a budge hyper-parameter that balances between training speed and the quality of variance reduction. 
During training, as the number of iterations increases, the GCN models convergences to a saddle point. Therefore, it is interesting to explore whether increasing the snapshot gap $K$ during the training process can obtain a speed boost. In Figure~\ref{fig:change_K_valid_loss}, we show the comparison of validation loss of fixed snapshot gap $K=10$ and gradually increasing snapshot gap $K=10+0.1\times s, s=1,2,\ldots$, where $s$ is the number of snapshot steps has been computed. 
We disable the early stop criterion so that we can run the desired number of inner-loop steps without being affected by early stopping.
Recall that the key bottleneck for \texttt{SGCN++} is memory budget and sampling complexity, rather than snapshot computing. Dynamically increasing snapshot gap can reduce the number of snapshot steps, but cannot significantly reduce the training time but might lead to a performance drop.

\begin{figure}
    \centering
    \includegraphics[width=1.0\textwidth]{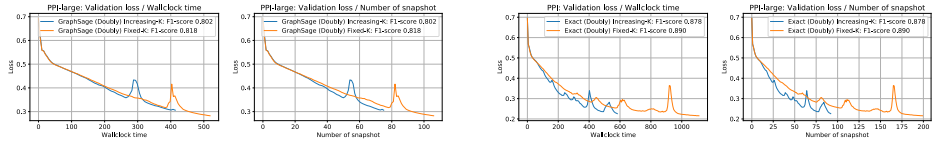}
    \caption{{Effectiveness of gradually increasing snapshot gap $K$ during training on wallclock time (second) and accuracy on \texttt{PPI} dataset. We choose snapshot gap $K=10$ for fixed-$K$. For increasing $K$, we choose snapshot gap $K=10+0.1\times s, s=1,2,\ldots$, where $s$ is the number of snapshot steps.}}
    \label{fig:change_K_valid_loss}
\end{figure}

\begin{figure}[h]
    \centering
    \includegraphics[width=1.0\textwidth]{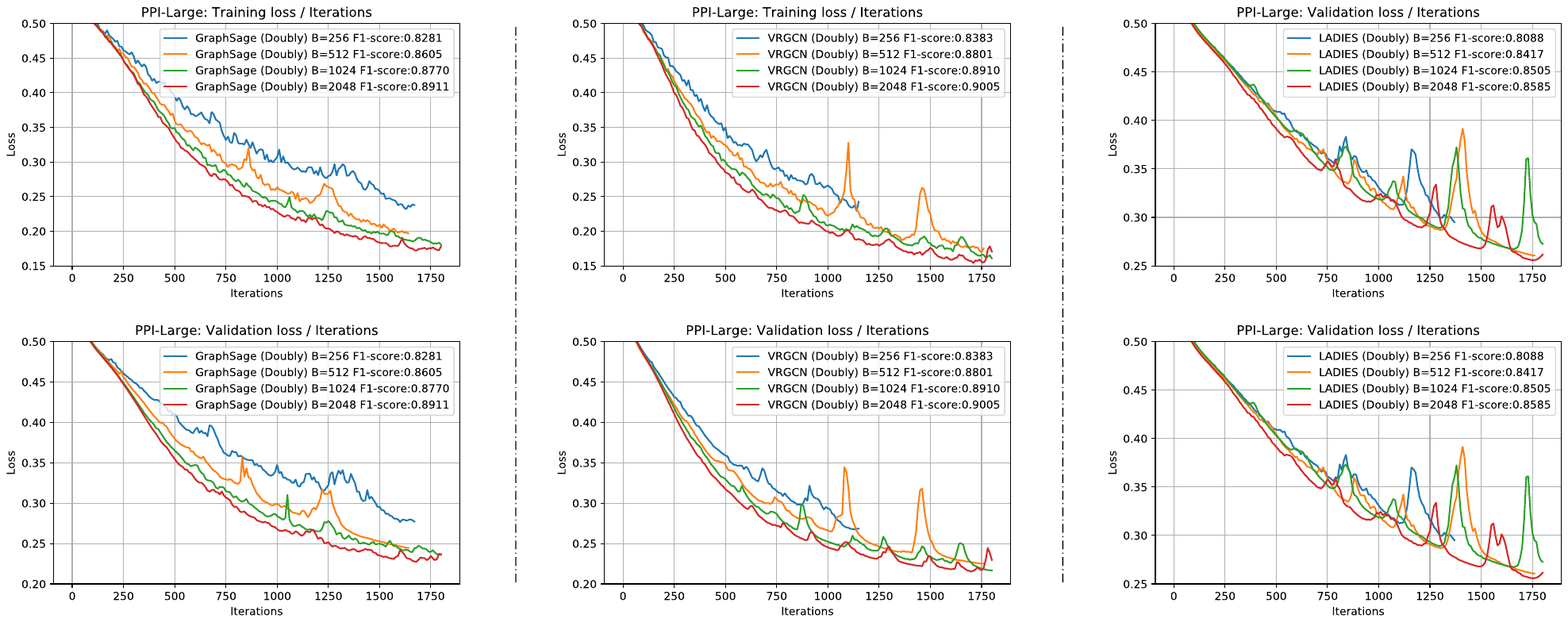}
    \caption{Comparison of training loss, validation loss, and F1-score of \texttt{SGCN++} with different mini-batch size on \texttt{PPI-Large} dataset.}
    \vspace{10pt}
    \label{fig:change_mini_batch}
\end{figure}

\paragraph{The effect of mini-batch size.}
In Figure~\ref{fig:change_mini_batch}, we show the comparsion of training loss and validation loss with different regular step mini-batch size. 
By default, we choose the snapshot gap as $K=10$, fix the snapshot step batch size as $B^\prime = 80,000$, disable the early stop criterion, and change the regular step mini-batch size $B$ from $256$ to $2,048$.
Besides, we note that subgraph sampling algorithm \texttt{GraphSAINT} requires an extreme large mini-batch size every iterations. In Figure~\ref{fig:graphsaint_batch_size}, we explicitly compare the effectiveness of mini-batch size on doubly variance reduced \texttt{GraphSAINT++} and vanilla \texttt{GraphSAINT}, and show that a smaller mini-batch is required by \texttt{GraphSAINT++}.

\begin{figure}[h]
    \centering \vspace{-0.5cm}
    \includegraphics[width=0.5\textwidth]{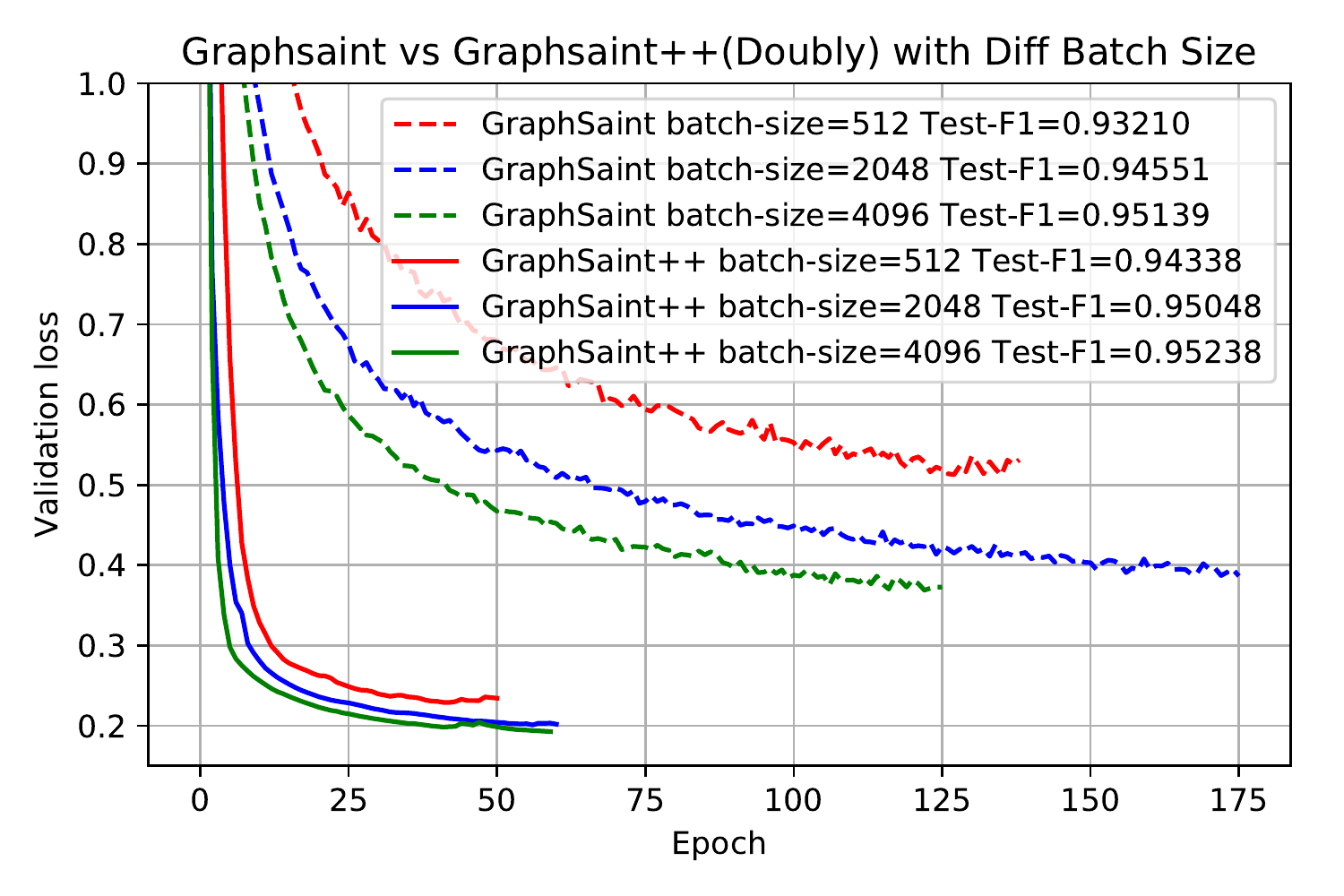}
    \caption{Comparing the validation loss and F1-score of \texttt{GraphSAINT} and \texttt{GraphSAINT++} with different mini-batch size on \texttt{Reddit} dataset}
    \label{fig:graphsaint_batch_size}
\end{figure}


\paragraph{Comparison of SGD and Adam.}
It is worth noting that Adam optimizer is used as the default optimizer during training. We choose Adam optimizer over SGD optimizer for the following reasons: 
\begin{itemize}
    \item [(a)] Baseline methods training with SGD cannot converge when using a constant learning rate due to the bias and variance in stochastic gradient (Adam has some implicit variance reduction effect, which can alleviate the issue). 
    The empirical result of SGD trained baseline models has a huge performance gap to the one trained with Adam, which makes the comparison meaningless. 
    For example in Figure~\ref{fig:sgd_vs_adam_ppi}, we compare Adam and SGD optimizer on PPI dataset. For Adam optimizer we use PyTorch's default learning rate $0.01$, and for SGD optimizer we choose learning rate as $0.1$, which is selected as the most stable learning rate from range $[0.01, 1]$ for this dataset. Although the SGD is using a learning rate $10$ times larger than Adam, it requires $100$ times more iterations than Adam to reach the early stop point (valid loss do not decrease for $200$ iterations), and suffers a giant performance gap when comparing to Adam optimizer. 
    \item [(b)] Most public implementation of GCNs, including all implementations in PyTorch Geometric and DGL packages, use Adam optimizer instead of SGD optimizer.
    \item [(c)] In this paper, we mainly focus on how to estimate a stabilized stochastic gradient, instead of how to take the existing gradient for weight update. 
    We employ Adam optimizer for all algorithms during experiment, which lead to a fair comparison.
\end{itemize}

\begin{figure}
    \centering
    \includegraphics[width=0.8\textwidth]{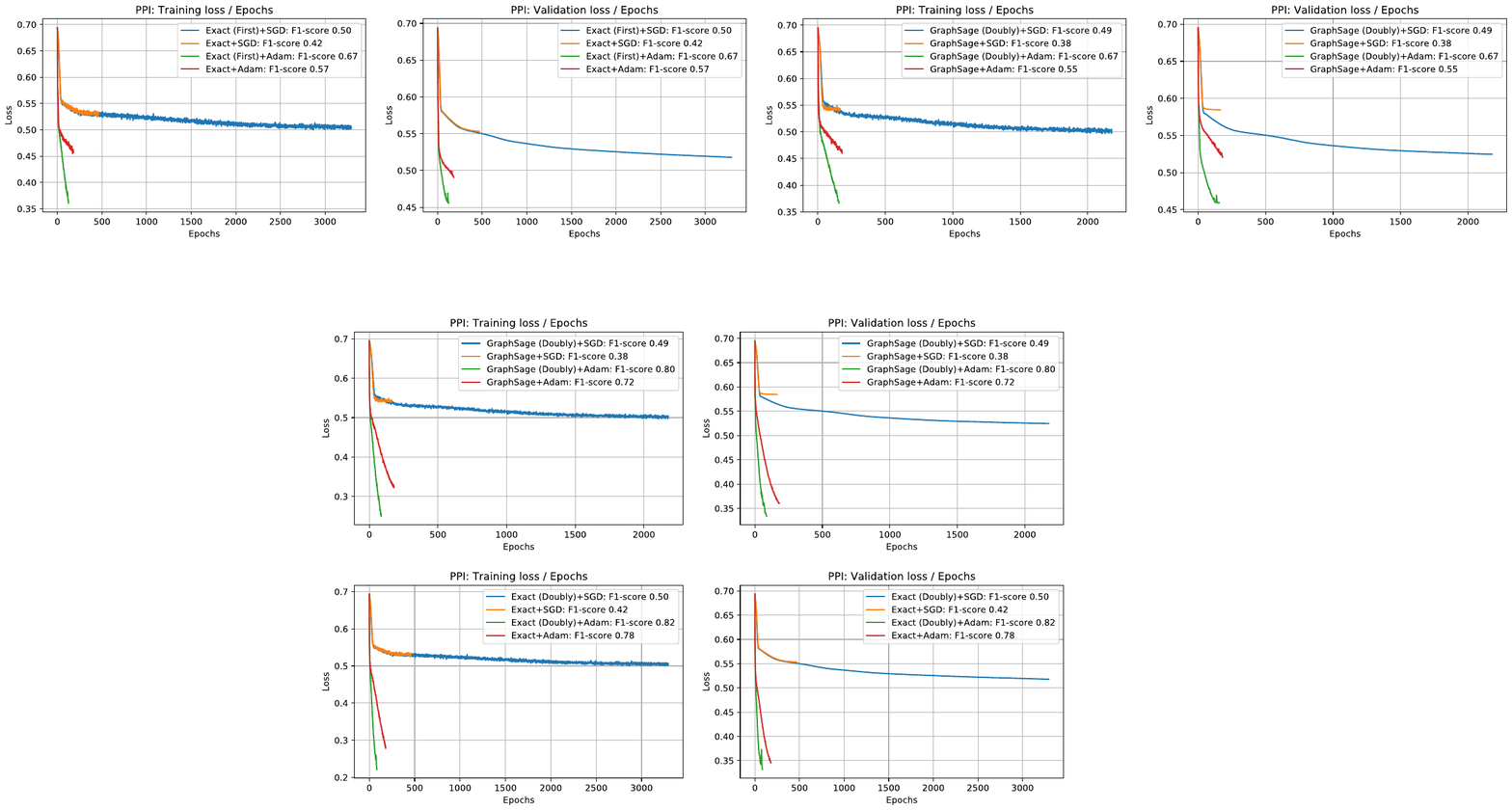}
    \caption{Comparison of doubly variance reduction and vanilla sampling-based GCN training on PPI dataset with SGD (learning rate $0.1$) and Adam optimizer (learning rate $0.01$). All other configurations are as default.}
    \label{fig:sgd_vs_adam_ppi} \vspace{-0.5cm}
\end{figure}

\section{Conclusion}\label{section:conclusion}
In this work, we develop a theoretical framework for analyzing the convergence of sampling based
mini-batch GCNs training. We show that the node embedding approximation variance and layerwise
gradient variance are two key factors that slow down the convergence of these methods. Furthermore, we propose doubly variance reduction schema and theoretically analyzed its convergence.
Experimental results on benchmark datasets demonstrate the effectiveness of proposed schema to
significantly reduce the variance of different sampling strategies to achieve better generalization.




\clearpage
\appendix



\section{Detailed algorithms} \label{supp:detail_algorithm}


In order to help readers better compare the difference of different algorithms, we summarize 
the zeroth-order variance reduced algorithm \texttt{SGCN+} in Algorithm~\ref{algorithm:sgcn_plus_complete}, and doubly variance reduced algorithm \texttt{SGCN++} in Algorithm~\ref{algorithm:sgcn_plus_plus_complete}.

\subsection{\texttt{SGCN+}}
The main idea of \texttt{SGCN+} is to use historical node features to reduce the node embedding approximation variance due to neighbor sampling. The detailed descriptions are summarized in Algorithm~\ref{algorithm:sgcn_plus_complete}.

\vspace{0pt}
\begin{breakablealgorithm}
  \caption{\texttt{SGCN+}: Zeroth-order variance reduction (Detailed version of Algorithm~\ref{algorithm:sgcn_plus_complete})}
  \label{algorithm:sgcn_plus_complete}
\begin{algorithmic}[1]
  \STATE {\bfseries Input:} Learning rate $\eta>0$, snapshot gap $K\geq 1$, $t_0=1$ ans $s=1$, staleness factor $\alpha \geq 1$.
    \FOR{$t = 1,\ldots,T$}
        \IF{$(t-t_s)~\text{mod}~K=0$}
            \STATE \texttt{\% Snapshot steps} 
            \STATE Calculate node embeddings and update historical node embeddings using \label{line:algorithm_full_sgcn+_snapshot_start}
            \begin{equation}
                \mathbf{Z}^{(\ell)}_t = \mathbf{L} \mathbf{H}_t^{(\ell-1)} \mathbf{W}^{(\ell)}_t,~
                \mathbf{H}^{(\ell)}_t=\sigma(\mathbf{Z}^{(\ell)}_t),~
                \widetilde{\mathbf{Z}}^{(\ell)}_t \leftarrow \mathbf{Z}^{(\ell)}_t
            \end{equation}
            \STATE Calculate loss as $\mathcal{L}(\bm{\theta}_t) = \frac{1}{N}\sum_{i=1}^N \text{Loss}(\bm{h}_i^{(L)}, y_i)$ 
            \STATE Calculate full-batch gradient $\nabla \mathcal{L}(\bm{\theta}_t) = \{\mathbf{G}^{(\ell)}\}_{\ell=1}^L $ as 
            \begin{equation}
                \begin{aligned}
                \mathbf{G}_t^{(\ell)} &:= [\mathbf{L} \mathbf{H}_t^{(\ell-1)}]^\top \Big( \mathbf{D}_t^{(\ell+1)} \circ \nabla \sigma(\mathbf{Z}_t^{(\ell)}) \Big),\\
                ~\mathbf{D}_t^{(\ell)} &:= \mathbf{L}^\top \Big( \mathbf{D}_t^{(\ell+1)} \circ \nabla \sigma(\mathbf{Z}_t^{(\ell)}) \Big) \mathbf{W}_t^{(\ell)},~ 
                \mathbf{D}_t^{(L+1)} = \frac{\partial \mathcal{L}(\bm{\theta}_t)}{\partial \mathbf{H}^{(L)}} 
                \end{aligned}
            \end{equation}
            \STATE Update parameters as $\bm{\theta}_{t+1} = \bm{\theta}_t - \eta \nabla \mathcal{L}(\bm{\theta}_t)$, set $t_s = t$ and $s = s+1$
        \ELSE
            \STATE \texttt{\% Regular steps}
            \STATE Sample mini-batch $\mathcal{V}_\mathcal{B}\subset \mathcal{V}$
            \STATE Calculate node embeddings using
            \begin{equation}
                \widetilde{\mathbf{Z}}^{(\ell)}_t = \widetilde{\mathbf{Z}}^{(\ell)}_{t-1} + \widetilde{\mathbf{L}}^{(\ell)} \widetilde{\mathbf{H}}_t^{(\ell-1)} \mathbf{W}_t^{(\ell)} - \widetilde{\mathbf{L}}^{(\ell)} \widetilde{\mathbf{H}}_{t-1}^{(\ell-1)} \mathbf{W}_{t-1}^{(\ell)},~
                \widetilde{\mathbf{H}}^{(\ell)}_t=\sigma(\widetilde{\mathbf{Z}}^{(\ell)})
            \end{equation}
            \IF{$\|\widetilde{\mathbf{H}}^{(\ell)}_{t}\|_\mathrm{F} \geq \alpha \| \mathbf{H}^{(\ell)}_{t_{s-1}} \|_\mathrm{F}$ for any $\ell \in [L]$}
                \STATE Go to line~\ref{line:algorithm_full_sgcn+_snapshot_start}
            \ELSE
            
            \STATE Calculate loss as $\widetilde{\mathcal{L}}(\bm{\theta}_t) = \frac{1}{B}\sum_{i\in\mathcal{V}_\mathcal{B}} \text{Loss}(\widetilde{\bm{h}}_i^{(L)}, y_i)$ 
            \STATE Calculate the stochastic gradient $\nabla \widetilde{\mathcal{L}}(\bm{\theta}_t) = \{ \widetilde{\mathbf{G}}^{(\ell)}\}_{\ell=1}^L $ as
            \begin{equation}
                \begin{aligned}
                \widetilde{\mathbf{G}}_t^{(\ell)} &:= [\widetilde{\mathbf{L}}^{(\ell)} \widetilde{\mathbf{H}}_t^{(\ell-1)}]^\top \Big( \widetilde{\mathbf{D}}_t^{(\ell+1)} \circ \nabla \sigma(\widetilde{\mathbf{Z}}_t^{(\ell)}) \Big),~ \\
                \widetilde{\mathbf{D}}_t^{(\ell)} &:= [\widetilde{\mathbf{L}}^{(\ell)}]^\top \Big( \widetilde{\mathbf{D}}_t^{(\ell+1)} \circ \nabla \sigma(\widetilde{\mathbf{Z}}_t^{(\ell)}) \Big) \mathbf{W}_t^{(\ell)},~
                \widetilde{\mathbf{D}}_t^{(L+1)} = \frac{\partial \widetilde{\mathcal{L}}(\bm{\theta}_t)}{\partial \widetilde{\mathbf{H}}^{(L)}}
                \end{aligned}
            \end{equation}
            \STATE Update parameters as $\bm{\theta}_{t+1} = \bm{\theta}_t - \eta \nabla \widetilde{\mathcal{L}}(\bm{\theta}_t)$ 
            \ENDIF
            \ENDIF
    \ENDFOR
\STATE {\bfseries Output:} Model with parameter $\bm{\theta}_{T+1}$ \\
\end{algorithmic}
\end{breakablealgorithm}
\subsection{\texttt{SGCN++}}
The main idea of \texttt{SGCN++} is to use historical node features and historical gradient to reduce the node embedding approximation variance due to neighbor sampling and the stochastic gradient variance due to mini-batch training. The detailed descriptions are summarized in Algorithm~\ref{algorithm:sgcn_plus_plus_complete}.

\vspace{20pt}
\begin{breakablealgorithm}
  \caption{\texttt{SGCN++}: Doubly variance reduction (Detailed version of Algorithm~\ref{algorithm:sgcn_plus_plus_complete})}
  \label{algorithm:sgcn_plus_plus_complete}
\begin{algorithmic}[1]
  \STATE {\bfseries Input:} Learning rate $\eta>0$, snapshot gap $K\geq 1$, $t_0=1$ ans $s=1$, staleness factor $\alpha \geq 1$.
    \FOR{$t = 1,\ldots,T$}
        \IF{$(t-t_{s-1})~\text{mod}~K=0$}
            \STATE \texttt{\% Snapshot steps}
            \STATE Calculate node embeddings and update historical node embeddings using \label{line:sgcn++_full_snapshot_v1}
            \begin{equation}
                \mathbf{Z}^{(\ell)}_t = \mathbf{L} \mathbf{H}_t^{(\ell-1)} \mathbf{W}^{(\ell)}_t,~
                \mathbf{H}^{(\ell)}_t=\sigma(\mathbf{Z}^{(\ell)}_t),~
                \widetilde{\mathbf{Z}}^{(\ell)}_t \leftarrow \mathbf{Z}^{(\ell)}_t
            \end{equation}
            \STATE Calculate loss as $\mathcal{L}(\bm{\theta}_t) = \frac{1}{N}\sum_{i=1}^N \text{Loss}(\bm{h}_i^{(L)}, y_i)$ 
            \STATE Calculate the full-batch gradient $\nabla \mathcal{L}(\bm{\theta}_t) = \{\mathbf{G}^{(\ell)}\}_{\ell=1}^L $ as
            \begin{equation}
                \begin{aligned}
                \mathbf{G}_t^{(\ell)} &:= [\mathbf{L} \mathbf{H}_t^{(\ell-1)}]^\top \Big( \mathbf{D}_t^{(\ell)} \circ \nabla \sigma(\mathbf{Z}_t^{(\ell)}) \Big),~\\
                \mathbf{D}_t^{(\ell)} &:= \mathbf{L}^\top \Big( \mathbf{D}_t^{(\ell+1)} \circ \nabla \sigma(\mathbf{Z}_t^{(\ell)}) \Big) \mathbf{W}_t^{(\ell)},~
                \mathbf{D}_t^{(L+1)} = \frac{\partial \mathcal{L}(\bm{\theta}_t)}{\partial \mathbf{H}^{(L)}} 
                \end{aligned}
            \end{equation}
            \STATE Save the per layerwise gradient $\widetilde{\mathbf{G}}_t^{(\ell)} \leftarrow \mathbf{G}_t^{(\ell)},~     \widetilde{\mathbf{D}}_t^{(\ell)} \leftarrow \mathbf{D}_t^{(\ell)}$ for all $\ell\in[L]$
            \STATE Update parameters as $\bm{\theta}_{t+1} = \bm{\theta}_t - \eta \nabla \mathcal{L}(\bm{\theta}_t)$, set $t_s = t$ and $s = s+1$
        \ELSE
            \STATE \texttt{\% Regular steps}
            \STATE Sample mini-batch $\mathcal{V}_\mathcal{B}\subset \mathcal{V}$
            \STATE Calculate node embeddings using
            \begin{equation}
                \widetilde{\mathbf{Z}}^{(\ell)}_t = \widetilde{\mathbf{Z}}^{(\ell)}_{t-1} + \widetilde{\mathbf{L}}^{(\ell)} \widetilde{\mathbf{H}}_t^{(\ell-1)} \mathbf{W}_t^{(\ell)} - \widetilde{\mathbf{L}}^{(\ell)} \widetilde{\mathbf{H}}_{t-1}^{(\ell-1)} \mathbf{W}_{t-1}^{(\ell)},~
                \widetilde{\mathbf{H}}^{(\ell)}_t=\sigma(\widetilde{\mathbf{Z}}^{(\ell)})
            \end{equation}
            \IF{$\|\widetilde{\mathbf{H}}^{(\ell)}_{t-1}\|_\mathrm{F} \geq \alpha \| \mathbf{H}^{(\ell)}_{t_{s-1}} \|_\mathrm{F}$ or $\|\widetilde{\mathbf{D}}^{(\ell)}_{t-1}\|_\mathrm{F} \geq \beta \| \mathbf{D}^{(\ell)}_{t_{s-1}} \|_\mathrm{F}$ for any $\ell \in [L]$}
            \STATE Go to line~\ref{line:sgcn++_full_snapshot_v1}
            \ELSE
            \STATE Calculate loss as $\widetilde{\mathcal{L}}(\bm{\theta}_t) = \frac{1}{B}\sum_{i\in\mathcal{V}_\mathcal{B}} \text{Loss}(\widetilde{\bm{h}}_i^{(L)}, y_i)$ 
            \STATE Calculate the stochastic gradient $\nabla \widetilde{\mathcal{L}}(\bm{\theta}_t) = \{ \widetilde{\mathbf{G}}^{(\ell)}\}_{\ell=1}^L $ as
            \begin{equation}
                \begin{aligned}
                \widetilde{\mathbf{G}}_t^{(\ell)} 
                &= \widetilde{\mathbf{G}}_{t-1}^{(\ell)} + [\widetilde{\mathbf{L}}^{(\ell)} \widetilde{\mathbf{H}}_t^{(\ell-1)}]^\top \Big(\widetilde{\mathbf{D}}_t^{(\ell+1)} \circ  \nabla \sigma(\widetilde{\mathbf{Z}}_t)\Big) \\
                &\qquad - [\widetilde{\mathbf{L}}^{(\ell)} \widetilde{\mathbf{H}}_{t-1}^{(\ell-1)}]^\top \Big(\widetilde{\mathbf{D}}_{t-1}^{(\ell+1)} \circ  \nabla \sigma(\widetilde{\mathbf{Z}}_{t-1})\Big)  \\
                \widetilde{\mathbf{D}}_t^{(\ell)} 
                &= \widetilde{\mathbf{D}}_{t-1}^{(\ell)} + [\widetilde{\mathbf{L}}^{(\ell)}]^\top \Big(\widetilde{\mathbf{D}}_t^{(\ell+1)} \circ \nabla \sigma(\widetilde{\mathbf{Z}}_t) \Big) [\mathbf{W}_t^{(\ell)}] \\
                &\qquad - [\widetilde{\mathbf{L}}^{(\ell)}]^\top \Big(\widetilde{\mathbf{D}}_{t-1}^{(\ell+1)} \circ \nabla \sigma(\widetilde{\mathbf{Z}}_{t-1}) \Big) [\mathbf{W}_{t-1}^{(\ell)}],~\widetilde{\mathbf{D}}_t^{(L+1)} = \frac{\partial \widetilde{\mathcal{L}}(\bm{\theta}_t)}{\partial \widetilde{\mathbf{H}}_t^{(L)}}
                \end{aligned}
            \end{equation}
            \STATE Update parameters as $\bm{\theta}_{t+1} = \bm{\theta}_t - \eta \nabla \widetilde{\mathcal{L}}(\bm{\theta}_t)$ 
            \ENDIF\ENDIF
    \ENDFOR
\STATE {\bfseries Output:} Model with parameter $\bm{\theta}_{T+1}$ \\
\end{algorithmic}
\end{breakablealgorithm}

\subsection{\texttt{SGCN++} without full-batch}
Furthermore, in Algorithm~\ref{algorithm:sgcn_plus_plus_complete_no_full_batch}, we provide an alternative version of \texttt{SGCN++} that does not require full-batch forward- and backward-propagation at the snapshot step.
The basic idea is to approximate the full-batch gradient by sampling a large mini-batch $\mathcal{V}_\mathcal{B}^\prime$ of size $B^\prime=|\mathcal{V}_\mathcal{B}^\prime|$ using \texttt{Exact} sampling, then compute the node embedding matrices and stochastic gradients on the sampled large-batch $\mathcal{V}_\mathcal{B}^\prime$.

\begin{breakablealgorithm}
  \caption{\texttt{SGCN++} (without full-batch): Doubly variance reduction }
  \label{algorithm:sgcn_plus_plus_complete_no_full_batch}
\begin{algorithmic}[1]
  \STATE {\bfseries Input:} Learning rate $\eta>0$, snapshot gap $K>0$
    \FOR{$t = 1,\ldots,T$}
        \IF{$(t-t_{s-1})~\text{mod}~K=0$}
            \STATE \texttt{\% Snapshot steps}
            \STATE Sample a large-batch $\mathcal{V}_\mathcal{B}^\prime$ of size $B^\prime$ and construct the Laplacian matrices $\mathbf{L}^{(\ell)}$ for each layer using all neighbors, i.e., \label{line:sgcn++_full_snapshot_v2}
            \begin{equation}
                L_{i,j}^{(\ell)} = 
                \begin{cases}
                 {L}_{i,j}, &\text{ if } j \in \mathcal{N}^{(\ell)}(i) \\
                 0, &\text{ otherwise }
                \end{cases}
            \end{equation}
            \STATE Calculate node embeddings and update historical node embeddings using 
            \begin{equation}
                \mathbf{Z}^{(\ell)}_t = \mathbf{L}^{(\ell)} \mathbf{H}_t^{(\ell-1)} \mathbf{W}^{(\ell)}_t,~
                \mathbf{H}^{(\ell)}_t=\sigma(\mathbf{Z}^{(\ell)}_t),~
                \widetilde{\mathbf{Z}}^{(\ell)}_t \leftarrow \mathbf{Z}^{(\ell)}_t
            \end{equation}
            \STATE Calculate loss as $\mathcal{L}(\bm{\theta}_t) = \frac{1}{B^\prime}\sum_{i\in\mathcal{V}_\mathcal{B}^\prime} \text{Loss}(\bm{h}_i^{(L)}, y_i)$ 
            \STATE Calculate the approximated snapshot gradient $\nabla \mathcal{L}(\bm{\theta}_t) = \{\mathbf{G}^{(\ell)}\}_{\ell=1}^L $ as
            \begin{equation}
                \begin{aligned}
                \mathbf{G}_t^{(\ell)} &:= [\mathbf{L}^{(\ell)} \mathbf{H}_t^{(\ell-1)}]^\top \Big( \mathbf{D}_t^{(\ell+1)} \circ \nabla \sigma(\mathbf{Z}_t^{(\ell)}) \Big),~\\
                \mathbf{D}_t^{(\ell)} &:= [\mathbf{L}^{(\ell)}]^\top \Big( \mathbf{D}_t^{(\ell+1)} \circ \nabla \sigma(\mathbf{Z}_t^{(\ell)}) \Big) \mathbf{W}_t^{(\ell)},~
                \mathbf{D}_t^{(L+1)} = \frac{\partial \mathcal{L}(\bm{\theta}_t)}{\partial \mathbf{H}^{(L)}} 
                \end{aligned}
            \end{equation}
            \STATE Save the per layerwise gradient $\widetilde{\mathbf{G}}_t^{(\ell)} \leftarrow \mathbf{G}_t^{(\ell)},~     \widetilde{\mathbf{D}}_t^{(\ell)} \leftarrow \mathbf{D}_t^{(\ell)},~\forall \ell\in[L]$
            \STATE Update parameters as $\bm{\theta}_{t+1} = \bm{\theta}_t - \eta \nabla \mathcal{L}(\bm{\theta}_t)$ , set $t_s = t$ and $s = s + 1$
        \ELSE
            \STATE \texttt{\% Regular steps}
            \STATE Sample mini-batch $\mathcal{V}_\mathcal{B}\subset \mathcal{V}_\mathcal{B}^\prime$
            \STATE Calculate node embeddings using
            \begin{equation}
                \widetilde{\mathbf{Z}}^{(\ell)}_t = \widetilde{\mathbf{Z}}^{(\ell)}_{t-1} + \widetilde{\mathbf{L}}^{(\ell)} \widetilde{\mathbf{H}}_t^{(\ell-1)} \mathbf{W}_t^{(\ell)} - \widetilde{\mathbf{L}}^{(\ell)} \widetilde{\mathbf{H}}_{t-1}^{(\ell-1)} \mathbf{W}_{t-1}^{(\ell)},~
                \widetilde{\mathbf{H}}^{(\ell)}_t=\sigma(\widetilde{\mathbf{Z}}^{(\ell)})
            \end{equation}
            \IF{$\|\widetilde{\mathbf{H}}^{(\ell)}_{t-1}\|_\mathrm{F} \geq \alpha \| \mathbf{H}^{(\ell)}_{t_{s-1}} \|_\mathrm{F}$ or $\|\widetilde{\mathbf{D}}^{(\ell)}_{t-1}\|_\mathrm{F} \geq \beta \| \mathbf{D}^{(\ell)}_{t_{s-1}} \|_\mathrm{F}$ for any $\ell \in [L]$}
            \STATE Go to line~\ref{line:sgcn++_full_snapshot_v2}
            \ELSE
            \STATE Calculate loss as $\widetilde{\mathcal{L}}(\bm{\theta}_t) = \frac{1}{B}\sum_{i\in\mathcal{V}_\mathcal{B}} \text{Loss}(\widetilde{\bm{h}}_i^{(L)}, y_i)$ 
            \STATE Calculate the stochastic gradient $\nabla \widetilde{\mathcal{L}}(\bm{\theta}_t) = \{ \widetilde{\mathbf{G}}^{(\ell)}\}_{\ell=1}^L $ as
            \begin{equation}
                \begin{aligned}
                &\widetilde{\mathbf{G}}_t^{(\ell)} = \widetilde{\mathbf{G}}_{t-1}^{(\ell)} + [\widetilde{\mathbf{L}}^{(\ell)} \widetilde{\mathbf{H}}_t^{(\ell-1)}]^\top \Big(\widetilde{\mathbf{D}}_t^{(\ell+1)} \circ  \nabla \sigma(\widetilde{\mathbf{Z}}_t)\Big) - [\widetilde{\mathbf{L}}^{(\ell)} \widetilde{\mathbf{H}}_{t-1}^{(\ell-1)}]^\top \Big(\widetilde{\mathbf{D}}_{t-1}^{(\ell+1)} \circ  \nabla \sigma(\widetilde{\mathbf{Z}}_{t-1})\Big)  \\
                &\widetilde{\mathbf{D}}_t^{(\ell)} = \widetilde{\mathbf{D}}_{t-1}^{(\ell)} + [\widetilde{\mathbf{L}}^{(\ell)}]^\top \Big(\widetilde{\mathbf{D}}_t^{(\ell+1)} \circ \nabla \sigma(\widetilde{\mathbf{Z}}_t) \Big) [\mathbf{W}_t^{(\ell)}] - [\widetilde{\mathbf{L}}^{(\ell)}]^\top \Big(\widetilde{\mathbf{D}}_{t-1}^{(\ell+1)} \circ \nabla \sigma(\widetilde{\mathbf{Z}}_{t-1}) \Big) [\mathbf{W}_{t-1}^{(\ell)}],~\\
                &\widetilde{\mathbf{D}}_t^{(L+1)} = \frac{\partial \widetilde{\mathcal{L}}(\bm{\theta}_t)}{\partial \widetilde{\mathbf{H}}_t^{(L)}}
                \end{aligned}
                \vspace{-10pt}
            \end{equation}
            \STATE Update parameters as $\bm{\theta}_{t+1} = \bm{\theta}_t - \eta \nabla \widetilde{\mathcal{L}}(\bm{\theta}_t)$ 
            \ENDIF\ENDIF
    \ENDFOR
\STATE {\bfseries Output:} Model with parameter $\bm{\theta}_{T+1}$ \\
\end{algorithmic}
\end{breakablealgorithm}


The intuition of snapshot step large-batch approximation stems from matrix Bernstein inequality~\cite{gross2011recovering}. More specifically, suppose given $\widetilde{\mathbf{G}}_i \in \mathbb{R}^{d\times d}$ be the stochastic gradient computed by using the $i$th node with \texttt{Exact} sampling (all neighbors are used to calculate the exact node embeddings). 
Suppose the different between $\widetilde{\mathbf{G}}_i$ and full-gradient $\mathbb{E} [ \widetilde{\mathbf{G}}_i ]$ is uniformly bounded and the variance is bounded:
\begin{equation}
    \| \widetilde{\mathbf{G}}_i - \mathbb{E} [ \widetilde{\mathbf{G}}_i ] \|\mathrm{F} \leq \mu,~ \mathbb{E}[\| \widetilde{\mathbf{G}}_i - \mathbb{E} [ \widetilde{\mathbf{G}}_i ] \|\mathrm{F}^2] \leq \sigma^2.
\end{equation}
Let $\widetilde{\mathbf{G}}^\prime$ as the snapshot step gradient computed on the sampled large batch
\begin{equation}
    \widetilde{\mathbf{G}}^\prime = \frac{1}{B^\prime} \sum_{i\in\mathcal{V}_\mathcal{B}^\prime} \widetilde{\mathbf{G}}_i.
\end{equation}
By matrix Bernstein inequality, we know the probability of $\| \widetilde{\mathbf{G}}^\prime-\mathbb{E} [ \widetilde{\mathbf{G}}_i ] \|\mathrm{F}$ larger than some constant $\epsilon$ decreases exponentially as the size of the sampled large-batch size $B^\prime$ increase, i.e.,
\begin{equation}
    \Pr(\|\widetilde{\mathbf{G}}^\prime - \mathbb{E} [ \widetilde{\mathbf{G}}_i ] \|\mathrm{F} \geq \epsilon) \leq 2d \exp\left(-n\cdot \min \left\{\frac{\epsilon^2}{4\sigma^2},\frac{\epsilon}{2\mu} \right\}\right).
\end{equation}
Therefore, by choosing a large enough snapshot step batch size $B^\prime$, we can obtain a good approximation of full-gradient.
\section{Proof of Theorem~\ref{theorem:convergence_of_sgcn}}\label{section:proof_of_thm1}


Before processing the proof of Theorem~\ref{theorem:convergence_of_sgcn}, let first recall the definition and notation on the forward and backward propagation of GCNs.

\subsection{Notations for gradient computation}
We introduce the following notations to simplify the representation and make it easier for readers to understand. 
Let formulate each GCN layer in \texttt{FullGCN} as a function
\begin{equation}
    \mathbf{H}^{(\ell)} = f^{(\ell)}(\mathbf{H}^{(\ell-1)}, \mathbf{W}^{(\ell)}) = \sigma( \underbrace{\mathbf{L} \mathbf{H}^{(\ell-1)} \mathbf{W}^{(\ell)}}_{\mathbf{Z}^{(\ell)}})  \in \mathbb{R}^{N\times d_\ell},
\end{equation}
and its gradient w.r.t. the input node embedding matrix $\mathbf{D}^{(\ell)} \in \mathbb{R}^{N \times d_{\ell-1}}$ is computed as
\begin{equation}
    \mathbf{D}^{(\ell)} = \nabla_H f^{(\ell)}(\mathbf{D}^{(\ell+1)}, \mathbf{H}^{(\ell-1)}, \mathbf{W}^{(\ell)})= [\mathbf{L}]^\top \Big(\mathbf{D}^{(\ell+1)} \circ \sigma^\prime(\mathbf{L}\mathbf{H}^{(\ell-1)} \mathbf{W}^{(\ell)}) \Big) [\mathbf{W}^{(\ell)}]^\top,
\end{equation}
and its gradient w.r.t. the weight matrix $\mathbf{G}^{(\ell)} \in \mathbb{R}^{d_{\ell-1} \times d_\ell}$ is computed as
\begin{equation}
    \mathbf{G}^{(\ell)} = \nabla_W f^{(\ell)}(\mathbf{D}^{(\ell+1)}, \mathbf{H}^{(\ell-1)}, \mathbf{W}^{(\ell)}) =  [\mathbf{L} \mathbf{H}^{(\ell-1)}]^\top \Big(\mathbf{D}^{(\ell+1)} \circ  \sigma^\prime(\mathbf{L}^{(\ell)}\mathbf{H}^{(\ell-1)}\mathbf{W}^{(\ell)})\Big) \Big].
\end{equation}

Similarly, we can formulate the calculation of node embedding matrix $\widetilde{\mathbf{H}}^{(\ell)} \in \mathbb{R}^{N\times d_\ell}$ at each GCN layer in \texttt{SGCN} as 
\begin{equation}
    \widetilde{\mathbf{H}}^{(\ell)} = \widetilde{f}^{(\ell)}(\widetilde{\mathbf{H}}^{(\ell-1)}, \mathbf{W}^{(\ell)}) = \sigma( \underbrace{ \widetilde{\mathbf{L}}^{(\ell)} \widetilde{\mathbf{H}}^{(\ell-1)} \mathbf{W}^{(\ell)}}_{\widetilde{\mathbf{Z}}^{(\ell)}} ), 
\end{equation}
and its gradient w.r.t. the input node embedding matrix $\widetilde{\mathbf{D}}^{(\ell)} \in \mathbb{R}^{N \times d_{\ell-1}}$ is computed as 
\begin{equation}
    \widetilde{\mathbf{D}}^{(\ell)} = \nabla_H \widetilde{f}^{(\ell)}(\widetilde{\mathbf{D}}^{(\ell+1)}, \widetilde{\mathbf{H}}^{(\ell-1)}, \mathbf{W}^{(\ell)}) = [\widetilde{\mathbf{L}}^{(\ell)} ]^\top \Big(\widetilde{\mathbf{D}}^{(\ell+1)} \circ \sigma^\prime(\widetilde{\mathbf{L}}^{(\ell)}\widetilde{\mathbf{H}}^{(\ell-1)}\mathbf{W}^{(\ell)}) \Big) [\mathbf{W}^{(\ell)}]^\top \Big],
\end{equation}
and its gradient w.r.t. the weight matrix $\widetilde{\mathbf{G}}^{(\ell)} \in \mathbb{R}^{d_{\ell-1} \times d_\ell}$ is computed as 
\begin{equation}
    \widetilde{\mathbf{G}}^{(\ell)} = \Big[\nabla_W \widetilde{f}^{(\ell)}(\widetilde{\mathbf{D}}^{(\ell+1)}, \widetilde{\mathbf{H}}^{(\ell-1)}, \mathbf{W}^{(\ell)}):=  [\widetilde{\mathbf{L}}^{(\ell)} \widetilde{\mathbf{H}}^{(\ell-1)}]^\top \Big(\widetilde{\mathbf{D}}^{(\ell+1)} \circ  \sigma^\prime(\widetilde{\mathbf{L}}^{(\ell)}\widetilde{\mathbf{H}}^{(\ell-1)}\mathbf{W}^{(\ell)})\Big) \Big].
\end{equation}

Let us denote the gradient of loss w.r.t. the final node embedding matrix  as
\begin{equation}
    \begin{aligned}
    \mathbf{D}^{(L+1)} &= \frac{\partial \text{Loss}(\mathbf{H}^{(L)}, \mathbf{y}) }{\partial \mathbf{H}^{(L)}} \in\mathbb{R}^{N \times d_L} 
    , [\mathbf{D}^{(L+1)}]_i = \frac{1}{N}\frac{\partial \text{Loss}(\mathbf{h}_i^{(L)}, y_i)}{\partial \mathbf{h}_i^{(L)}} \in \mathbb{R}^{d_L}, \\
    \widetilde{\mathbf{D}}^{(L+1)} &= \frac{\partial \text{Loss}(\widetilde{\mathbf{H}}^{(L)}, \mathbf{y})}{\partial \widetilde{\mathbf{H}}^{(L)}} \in\mathbb{R}^{N \times d_L}
    , [\widetilde{\mathbf{D}}^{(L+1)}]_i = \frac{1}{B} \mathbf{1}_{\{i\in\mathcal{V}_\mathcal{B}\}} \frac{\partial \text{Loss}(\widetilde{\mathbf{h}}_i^{(L)}, y_i)}{\partial \widetilde{\mathbf{h}}_i^{(L)}} \in \mathbb{R}^{d_L} .
    \end{aligned}
\end{equation}
Notice that $\widetilde{\mathbf{D}}^{(L+1)}$ is a $N\times d_L$ matrix with the row number $i\in\mathcal{V}_\mathcal{B}$ are non-zero vectors.
Then we can write the gradient for the $\ell$th weight matrix in \texttt{FullGCN} and \texttt{SGCN} as
\begin{equation}
    \begin{aligned}
    \mathbf{G}^{(\ell)} = \nabla_W f^{(\ell)}( \nabla_H f^{(\ell+1)}( \ldots \nabla_H f^{(L)}( \mathbf{D}^{(L+1)}, \mathbf{H}^{(L-1)}, \mathbf{W}^{(L)}) \ldots,  \mathbf{H}^{(\ell)}, \mathbf{W}^{(\ell+1)}) , \mathbf{H}^{(\ell-1)}, \mathbf{W}^{(\ell)}), \\
    \widetilde{\mathbf{G}}^{(\ell)} = \nabla_W \widetilde{f}^{(\ell)}( \nabla_H \widetilde{f}^{(\ell+1)}( \ldots \nabla_H \widetilde{f}^{(L)}( \widetilde{\mathbf{D}}^{(L+1)}, \widetilde{\mathbf{H}}^{(L-1)}, \mathbf{W}^{(L)}) \ldots, \widetilde{\mathbf{H}}^{(\ell)}, \mathbf{W}^{(\ell+1)}) , \widetilde{\mathbf{H}}^{(\ell-1)}, \mathbf{W}^{(\ell)}).
    \end{aligned}
\end{equation}

\subsection{Upper bounds on the node embedding matrices and layerwise gradients}

Based on the Assumption~\ref{assumption:bound_norm}, we first derive the upper-bound on the node embedding matrices and the gradient passing from $\ell$th layer node embedding matrix to the $(\ell-1)$th layer node embedding matrix.
\begin{proposition} \label{proposition:matrix_norm_bound}
    For any $\ell\in[L]$, the Frobenius norm of node embedding matrices, gradient passing from the $\ell$th layer node embeddings to the $(\ell-1)$th are bounded 
    \begin{equation}
        \|\mathbf{H}^{(\ell)}\|_{\mathrm{F}} \leq B_H,~
        \|\widetilde{\mathbf{H}}^{(\ell)}\|_{\mathrm{F}} \leq B_H,~
        \| \frac{\partial \sigma(\mathbf{L} \mathbf{H}^{(\ell-1)} \mathbf{W}^{(\ell)})}{\partial \mathbf{H}^{(\ell-1)}} \|_{\mathrm{F}} \leq B_D,~
        \| \frac{\partial \sigma(\widetilde{\mathbf{L}}^{(\ell)} \widetilde{\mathbf{H}}^{(\ell-1)} \mathbf{W}^{(\ell)})}{\partial \widetilde{\mathbf{H}}^{(\ell-1)}} \|_{\mathrm{F}} \leq B_D,
    \end{equation}
    where 
    \begin{equation}
        B_H = \max\Big\{1, ( C_\sigma B_{LA} B_{W})^L\Big\} B_{X},~
        B_D = \max\Big\{ 1, (B_{LA} C_\sigma B_{W})^{L-\ell} \Big\} C_\text{loss}
    \end{equation}
\end{proposition}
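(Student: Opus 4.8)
The plan is to prove all four bounds by two straightforward inductions over layers---one forward for the embedding matrices and one backward for the gradient-passing operators---relying only on submultiplicativity of the Frobenius norm together with the bounds supplied by Assumptions~\ref{assumption:loss_lip_smooth}--\ref{assumption:bound_norm}. Before the inductions I would record three elementary consequences of the assumptions. First, since $\sigma$ is $C_\sigma$-Lipschitz (Assumption~\ref{assumption:sigma_lip_smooth}) and $\sigma(0)=0$ for the activations considered (ReLU, ELU), we get $\|\sigma(\mathbf{Z})\|_{\mathrm{F}} = \|\sigma(\mathbf{Z}) - \sigma(\mathbf{0})\|_{\mathrm{F}} \leq C_\sigma \|\mathbf{Z}\|_{\mathrm{F}}$. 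Second, the same Lipschitz property forces $|\sigma^\prime(\cdot)| \leq C_\sigma$ pointwise, so for any matrix $\mathbf{A}$ one has the Hadamard bound $\|\mathbf{A} \circ \sigma^\prime(\mathbf{Z})\|_{\mathrm{F}} \leq C_\sigma \|\mathbf{A}\|_{\mathrm{F}}$. Third, the Lipschitz continuity of the loss (Assumption~\ref{assumption:loss_lip_smooth}) bounds the per-example gradient, so $\|\mathbf{D}^{(L+1)}\|_{\mathrm{F}} \leq C_\text{loss}$ (the mini-batch and $1/N$ normalizations only shrink this bound).

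For the embedding bounds I would induct on $\ell$. The base case is $\|\mathbf{H}^{(0)}\|_{\mathrm{F}} = \|\mathbf{X}\|_{\mathrm{F}} \leq B_X$. For the inductive step, applying the first preliminary fact followed by submultiplicativity and Assumption~\ref{assumption:bound_norm} gives
\begin{equation}
\|\mathbf{H}^{(\ell)}\|_{\mathrm{F}} = \|\sigma(\mathbf{L}\mathbf{H}^{(\ell-1)}\mathbf{W}^{(\ell)})\|_{\mathrm{F}} \leq C_\sigma \|\mathbf{L}\|_{\mathrm{F}}\|\mathbf{H}^{(\ell-1)}\|_{\mathrm{F}}\|\mathbf{W}^{(\ell)}\|_{\mathrm{F}} \leq C_\sigma B_{LA} B_W \|\mathbf{H}^{(\ell-1)}\|_{\mathrm{F}}.
\end{equation}
Unrolling yields $\|\mathbf{H}^{(\ell)}\|_{\mathrm{F}} \leq (C_\sigma B_{LA} B_W)^\ell B_X$, and since $\ell \mapsto (C_\sigma B_{LA} B_W)^\ell$ is monotone, its maximum over $\ell \in \{0,\ldots,L\}$ is attained at an endpoint, namely $\max\{1, (C_\sigma B_{LA} B_W)^L\}$, which gives the claimed uniform constant $B_H$. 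Because $\|\widetilde{\mathbf{L}}^{(\ell)}\|_{\mathrm{F}} \leq B_{LA}$ as well, the identical computation with $\widetilde{\mathbf{L}}^{(\ell)}, \widetilde{\mathbf{H}}^{(\ell-1)}$ in place of $\mathbf{L}, \mathbf{H}^{(\ell-1)}$ establishes the same bound for $\widetilde{\mathbf{H}}^{(\ell)}$.

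For the gradient-passing bounds I would induct downward from the loss, reading the stated quantity as the backward operator defined by the recursion $\mathbf{D}^{(\ell)} = \mathbf{L}^\top (\mathbf{D}^{(\ell+1)} \circ \sigma^\prime(\mathbf{Z}^{(\ell)})) [\mathbf{W}^{(\ell)}]^\top$. Starting from $\|\mathbf{D}^{(L+1)}\|_{\mathrm{F}} \leq C_\text{loss}$ and combining the Hadamard bound with submultiplicativity and Assumption~\ref{assumption:bound_norm} gives, for one backward step,
\begin{equation}
\|\mathbf{D}^{(\ell)}\|_{\mathrm{F}} \leq \|\mathbf{L}\|_{\mathrm{F}} \, C_\sigma \, \|\mathbf{W}^{(\ell)}\|_{\mathrm{F}} \, \|\mathbf{D}^{(\ell+1)}\|_{\mathrm{F}} \leq B_{LA} C_\sigma B_W \|\mathbf{D}^{(\ell+1)}\|_{\mathrm{F}}.
\end{equation}
Unrolling this recursion over the backward layers and taking $\max\{1,\cdot\}$ to cover both regimes of $B_{LA} C_\sigma B_W$ reproduces the factor $\max\{1, (B_{LA} C_\sigma B_W)^{L-\ell}\} C_\text{loss} = B_D$; the sampled version follows verbatim since $\|\widetilde{\mathbf{L}}^{(\ell)}\|_{\mathrm{F}} \leq B_{LA}$.

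The only genuinely delicate points, rather than routine bookkeeping, are (i) justifying $\sigma(0)=0$ (or otherwise carrying an additive constant), which is what lets the Lipschitz bound be used multiplicatively in the forward pass, and (ii) controlling the Hadamard product with $\sigma^\prime$, which requires translating the Lipschitz constant of $\sigma$ into the pointwise bound $|\sigma^\prime| \leq C_\sigma$ and then into a Frobenius-norm estimate. Everything else is an unrolling of a geometric recursion, with the $\max\{1,\cdot\}$ present solely to furnish a layer-uniform constant regardless of whether $C_\sigma B_{LA} B_W$ exceeds one; I would double-check the exponent bookkeeping (the number of layers traversed in each direction) against the index conventions in Eq.~\ref{eq:full_batch_gradient} to land exactly on the stated exponents $L$ and $L-\ell$.
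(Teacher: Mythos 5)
Your proposal is correct and follows essentially the same route as the paper's proof: a forward geometric unrolling $\|\mathbf{H}^{(\ell)}\|_{\mathrm{F}} \leq C_\sigma B_{LA} B_W \|\mathbf{H}^{(\ell-1)}\|_{\mathrm{F}}$ from $\|\mathbf{X}\|_{\mathrm{F}} \leq B_X$, a backward unrolling $\|\mathbf{D}^{(\ell)}\|_{\mathrm{F}} \leq B_{LA} C_\sigma B_W \|\mathbf{D}^{(\ell+1)}\|_{\mathrm{F}}$ from $\|\mathbf{D}^{(L+1)}\|_{\mathrm{F}} \leq C_\text{loss}$, and the identical computation with $\widetilde{\mathbf{L}}^{(\ell)}$ for the stochastic versions. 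If anything, you are more careful than the paper, which silently uses $\sigma(0)=0$ and the pointwise bound $|\sigma^\prime| \leq C_\sigma$ for the Hadamard product--steps you correctly identify and justify explicitly.
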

\begin{proof}
Let first upper bounded the node embeddings,
\begin{equation}
    \begin{aligned}
    \|\mathbf{H}^{(\ell)}\|_{\mathrm{F}} &= \|\sigma(\mathbf{L} \mathbf{H}^{(\ell-1)} \mathbf{W}^{(\ell)})\|_{\mathrm{F}} \\
    &\leq C_\sigma B_{LA} B_W \| \mathbf{H}^{(\ell-1)} \|_{\mathrm{F}} \\
    &\leq ( C_\sigma B_{LA} B_{W})^\ell \| \mathbf{X} \|_{\mathrm{F}} \\
    &\leq \max\Big\{1, ( C_\sigma B_{LA} B_{W})^L\Big\} B_{X}.
    \end{aligned}
\end{equation}

Then, we derive the upper bound of gradient passing between layers, 
\begin{equation}
    \begin{aligned}
    \| \mathbf{D}^{(\ell)}\|_{\mathrm{F}} 
    &= \Big\| [\mathbf{L}]^\top \Big(\mathbf{D}^{(\ell+1)} \circ \sigma^\prime(\mathbf{L}\mathbf{H}^{(\ell-1)} \mathbf{W}^{(\ell)}) \Big) [\mathbf{W}^{(\ell)}]^\top \Big\|_{\mathrm{F}} \\
    &\leq B_{LA} C_\sigma B_W \| \mathbf{D}^{(\ell+1)}\|_{\mathrm{F}} \\
    &\leq \Big( B_{LA} C_\sigma B_{W} \Big)^{L-\ell} \| \mathbf{D}^{(L+1)}\|_{\mathrm{F}} \\
    &\leq \max\Big\{ 1, (B_{LA} C_\sigma B_{W})^L\Big\} C_\text{loss}.
    \end{aligned}
\end{equation}

A similar strategy can be used to derive the upper bound of its stochastic version.
\end{proof}

\subsection{Lipschitz continuity and smoothness property of graph convolution layers}
Then, we derive the Lipschitz continuity of $\widetilde{f}^{(\ell)}(\cdot, \cdot)$ and its gradient. Notice that the following result hold for deterministic function $f^{(\ell)}(\cdot, \cdot)$ as well.

\begin{proposition}
    $\widetilde{f}^{(\ell)}(\cdot, \cdot)$ is $C_H$-Lipschitz continuous w.r.t. the input node embedding matrix where $C_H=C_\sigma B_{LA} B_W$.
\end{proposition}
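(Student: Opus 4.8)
The plan is to establish the bound directly from the definition
\begin{equation}
    \widetilde{f}^{(\ell)}(\widetilde{\mathbf{H}}^{(\ell-1)}, \mathbf{W}^{(\ell)}) = \sigma(\widetilde{\mathbf{L}}^{(\ell)} \widetilde{\mathbf{H}}^{(\ell-1)} \mathbf{W}^{(\ell)}),
\end{equation}
by fixing the weight matrix $\mathbf{W}^{(\ell)}$ and comparing the outputs on two arbitrary input embedding matrices $\widetilde{\mathbf{H}}^{(\ell-1)}$ and $\widehat{\mathbf{H}}^{(\ell-1)}$. First I would peel off the activation using the $C_\sigma$-Lipschitz continuity of $\sigma(\cdot)$ from Assumption~\ref{assumption:sigma_lip_smooth}, reducing the difference of outputs to the difference of the pre-activations $\widetilde{\mathbf{Z}}^{(\ell)} = \widetilde{\mathbf{L}}^{(\ell)} \widetilde{\mathbf{H}}^{(\ell-1)} \mathbf{W}^{(\ell)}$. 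Since both pre-activations share the same $\widetilde{\mathbf{L}}^{(\ell)}$ and $\mathbf{W}^{(\ell)}$, their difference factors linearly as $\widetilde{\mathbf{L}}^{(\ell)} (\widetilde{\mathbf{H}}^{(\ell-1)} - \widehat{\mathbf{H}}^{(\ell-1)}) \mathbf{W}^{(\ell)}$, which is where the linearity of the graph convolution in the node embeddings is essential.

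Concretely, the chain of inequalities I would carry out is
\begin{equation}
\begin{aligned}
    \big\| \widetilde{f}^{(\ell)}(\widetilde{\mathbf{H}}^{(\ell-1)}, \mathbf{W}^{(\ell)}) - \widetilde{f}^{(\ell)}(\widehat{\mathbf{H}}^{(\ell-1)}, \mathbf{W}^{(\ell)}) \big\|_\mathrm{F}
    &\leq C_\sigma \big\| \widetilde{\mathbf{L}}^{(\ell)} (\widetilde{\mathbf{H}}^{(\ell-1)} - \widehat{\mathbf{H}}^{(\ell-1)}) \mathbf{W}^{(\ell)} \big\|_\mathrm{F} \\
    &\leq C_\sigma \| \widetilde{\mathbf{L}}^{(\ell)} \|_\mathrm{F} \, \| \mathbf{W}^{(\ell)} \|_\mathrm{F} \, \big\| \widetilde{\mathbf{H}}^{(\ell-1)} - \widehat{\mathbf{H}}^{(\ell-1)} \big\|_\mathrm{F} \\
    &\leq C_\sigma B_{LA} B_W \big\| \widetilde{\mathbf{H}}^{(\ell-1)} - \widehat{\mathbf{H}}^{(\ell-1)} \big\|_\mathrm{F},
\end{aligned}
\end{equation}
where the first step invokes Assumption~\ref{assumption:sigma_lip_smooth}, the second step applies submultiplicativity of the Frobenius norm, and the last step substitutes the norm bounds $\| \widetilde{\mathbf{L}}^{(\ell)} \|_\mathrm{F} \leq B_{LA}$ and $\| \mathbf{W}^{(\ell)} \|_\mathrm{F} \leq B_W$ from Assumption~\ref{assumption:bound_norm}. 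Identifying $C_H = C_\sigma B_{LA} B_W$ then yields the claim.

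The only point requiring minor care is the matrix inequality $\| \mathbf{A}\mathbf{B}\mathbf{C} \|_\mathrm{F} \leq \| \mathbf{A} \|_\mathrm{F} \| \mathbf{B} \|_\mathrm{F} \| \mathbf{C} \|_\mathrm{F}$, which follows by two applications of $\| \mathbf{A}\mathbf{B} \|_\mathrm{F} \leq \| \mathbf{A} \|_\mathrm{F} \| \mathbf{B} \|_\mathrm{F}$; this is the mechanical heart of the argument but presents no real obstacle. Finally, I would remark that the identical derivation goes through verbatim for the deterministic layer $f^{(\ell)}(\cdot,\cdot)$, since $\| \mathbf{L} \|_\mathrm{F} \leq B_{LA}$ holds by the same assumption, so $f^{(\ell)}$ enjoys the same Lipschitz constant $C_H$.
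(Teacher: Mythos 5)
Your proof is correct and follows essentially the same route as the paper's: peel off $\sigma$ via Assumption~2, factor the pre-activation difference as $\widetilde{\mathbf{L}}^{(\ell)}(\widetilde{\mathbf{H}}^{(\ell-1)} - \widehat{\mathbf{H}}^{(\ell-1)})\mathbf{W}^{(\ell)}$, and apply submultiplicativity of the Frobenius norm with the bounds from Assumption~3. Your closing remark that the same argument covers the deterministic layer $f^{(\ell)}$ matches the paper's note that the result holds for $f^{(\ell)}(\cdot,\cdot)$ as well.
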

\begin{proof}
\begin{equation}
        \begin{aligned}
        &\|\widetilde{f}^{(\ell)}(\mathbf{H}_1^{(\ell-1)}, \mathbf{W}^{(\ell)}) - \widetilde{f}^{(\ell)}(\mathbf{H}_2^{(\ell-1)}, \mathbf{W}^{(\ell)})\|_{\mathrm{F}} \\
        &=\| \sigma (\widetilde{\mathbf{L}}^{(\ell)} \mathbf{H}_1^{(\ell-1)} \mathbf{W}^{(\ell)}) - \sigma (\widetilde{\mathbf{L}}^{(\ell)} \mathbf{H}_2^{(\ell-1)} \mathbf{W}^{(\ell)}) \|_{\mathrm{F}} \\
        &\leq C_\sigma \|\widetilde{\mathbf{L}}^{(\ell)}\|_{\mathrm{F}} \|\mathbf{H}_1^{(\ell-1)} - \mathbf{H}_2^{(\ell-1)}\|_{\mathrm{F}} \|\mathbf{W}^{(\ell)}\|_{\mathrm{F}} \\
        &\leq C_\sigma B_{LA} B_W \|\mathbf{H}_1^{(\ell-1)} - \mathbf{H}_2^{(\ell-1)}\|_{\mathrm{F}}.
        \end{aligned}
    \end{equation}
\end{proof}

\begin{proposition}
    $\widetilde{f}^{(\ell)}(\cdot, \cdot)$ is $C_W$-Lipschitz continuous w.r.t. the weight matrix where $C_W=C_\sigma B_{LA} B_H$.
\end{proposition}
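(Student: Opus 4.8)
The plan is to mirror exactly the argument used in the immediately preceding proposition, but now holding the input node embedding matrix fixed and perturbing the weight matrix instead. Concretely, I would take two weight matrices $\mathbf{W}_1^{(\ell)}$ and $\mathbf{W}_2^{(\ell)}$, fix a common input $\widetilde{\mathbf{H}}^{(\ell-1)}$, and expand the quantity $\|\widetilde{f}^{(\ell)}(\widetilde{\mathbf{H}}^{(\ell-1)}, \mathbf{W}_1^{(\ell)}) - \widetilde{f}^{(\ell)}(\widetilde{\mathbf{H}}^{(\ell-1)}, \mathbf{W}_2^{(\ell)})\|_{\mathrm{F}}$ by substituting the definition $\widetilde{f}^{(\ell)}(\widetilde{\mathbf{H}}^{(\ell-1)}, \mathbf{W}^{(\ell)}) = \sigma(\widetilde{\mathbf{L}}^{(\ell)} \widetilde{\mathbf{H}}^{(\ell-1)} \mathbf{W}^{(\ell)})$.

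First I would invoke the $C_\sigma$-Lipschitz continuity of the activation $\sigma$ from Assumption~\ref{assumption:sigma_lip_smooth} to bound the difference by $C_\sigma \|\widetilde{\mathbf{L}}^{(\ell)} \widetilde{\mathbf{H}}^{(\ell-1)} (\mathbf{W}_1^{(\ell)} - \mathbf{W}_2^{(\ell)})\|_{\mathrm{F}}$; the crucial observation is that the two arguments of $\sigma$ share the identical left factor $\widetilde{\mathbf{L}}^{(\ell)} \widetilde{\mathbf{H}}^{(\ell-1)}$ and differ only through the right factor $\mathbf{W}_1^{(\ell)} - \mathbf{W}_2^{(\ell)}$, so the difference factors cleanly. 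Then I apply submultiplicativity of the Frobenius norm to peel off $\|\widetilde{\mathbf{L}}^{(\ell)}\|_{\mathrm{F}}$, $\|\widetilde{\mathbf{H}}^{(\ell-1)}\|_{\mathrm{F}}$, and $\|\mathbf{W}_1^{(\ell)} - \mathbf{W}_2^{(\ell)}\|_{\mathrm{F}}$ one at a time.

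Finally I substitute the bound $\|\widetilde{\mathbf{L}}^{(\ell)}\|_{\mathrm{F}} \leq B_{LA}$ from Assumption~\ref{assumption:bound_norm} together with $\|\widetilde{\mathbf{H}}^{(\ell-1)}\|_{\mathrm{F}} \leq B_H$ from Proposition~\ref{proposition:matrix_norm_bound}, which collapses the prefactor to the claimed constant $C_W = C_\sigma B_{LA} B_H$. The only point worth flagging — and the sole structural difference from the preceding proposition — is that the factor being bounded here is the input embedding norm $B_H$ (supplied by Proposition~\ref{proposition:matrix_norm_bound}) rather than the weight-matrix norm $B_W$. There is no genuine obstacle: every inequality is either an application of an already-established Lipschitz or boundedness fact or a use of submultiplicativity of $\|\cdot\|_{\mathrm{F}}$, so the argument is a short routine chain of estimates.
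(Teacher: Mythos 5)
Your proposal is correct and matches the paper's own proof essentially verbatim: both expand the definition of $\widetilde{f}^{(\ell)}$, apply the $C_\sigma$-Lipschitz continuity of $\sigma$, peel off $\|\widetilde{\mathbf{L}}^{(\ell)}\|_{\mathrm{F}} \leq B_{LA}$ and $\|\widetilde{\mathbf{H}}^{(\ell-1)}\|_{\mathrm{F}} \leq B_H$ by submultiplicativity of the Frobenius norm, and arrive at $C_W = C_\sigma B_{LA} B_H$. Your observation that the embedding-norm bound $B_H$ comes from Proposition~\ref{proposition:matrix_norm_bound} (rather than the weight bound $B_W$ used in the preceding proposition) is exactly the one structural point where this proof differs from its predecessor, and the paper handles it the same way.
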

\begin{proof}
\begin{equation}
        \begin{aligned}
        &\|\widetilde{f}^{(\ell)}(\mathbf{H}^{(\ell-1)}, \mathbf{W}_1^{(\ell)}) - \widetilde{f}^{(\ell)}(\mathbf{H}^{(\ell-1)}, \mathbf{W}_2^{(\ell)})\|_{\mathrm{F}} \\
        &=\| \sigma (\widetilde{\mathbf{L}}^{(\ell)} \mathbf{H}^{(\ell-1)} \mathbf{W}_1^{(\ell)}) - \sigma (\widetilde{\mathbf{L}}^{(\ell)} \mathbf{H}^{(\ell-1)} \mathbf{W}_2^{(\ell)}) \|_{\mathrm{F}} \\
        &\leq C_\sigma \|\widetilde{\mathbf{L}}^{(\ell)}\|_{\mathrm{F}} \|\mathbf{H}^{(\ell-1)}\|_{\mathrm{F}} \|\mathbf{W}_1^{(\ell)} - \mathbf{W}_2^{(\ell)}\|_{\mathrm{F}} \\
        &\leq C_\sigma B_{LA} B_H \|\mathbf{W}_1^{(\ell)} - \mathbf{W}_2^{(\ell)}\|_{\mathrm{F}}.
        \end{aligned}
    \end{equation}
\end{proof}
\begin{proposition}
$\nabla_H \widetilde{f}^{(\ell)}(\cdot, \cdot, \cdot)$ is $L_H$-Lipschitz continuous where 
    \begin{equation}
        L_H = \max\Big\{ B_{LA} C_\sigma B_W,  B_{LA}^2 B_D B_W^2 L_\sigma, B_{LA} B_D C_\sigma + B_{LA}^2 B_D B_W L_\sigma B_H \Big\}.
    \end{equation}
\end{proposition}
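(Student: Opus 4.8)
The plan is to establish a uniform Lipschitz bound on the three-argument map
\[
g(\mathbf{D}, \mathbf{H}, \mathbf{W}) := [\widetilde{\mathbf{L}}^{(\ell)}]^\top \big(\mathbf{D} \circ \sigma^\prime(\widetilde{\mathbf{L}}^{(\ell)} \mathbf{H} \mathbf{W})\big) [\mathbf{W}]^\top,
\]
treating the sampled Laplacian $\widetilde{\mathbf{L}}^{(\ell)}$ as fixed. Since the claimed constant $L_H$ is a maximum of three quantities, I would show separately that $g$ is Lipschitz in each of its arguments, with per-argument constant equal to one of the three terms, and then conclude via a telescoping decomposition
\[
g(\mathbf{D}_1,\mathbf{H}_1,\mathbf{W}_1)-g(\mathbf{D}_2,\mathbf{H}_2,\mathbf{W}_2)=\big[g(\mathbf{D}_1,\mathbf{H}_1,\mathbf{W}_1)-g(\mathbf{D}_2,\mathbf{H}_1,\mathbf{W}_1)\big]+\big[g(\mathbf{D}_2,\mathbf{H}_1,\mathbf{W}_1)-g(\mathbf{D}_2,\mathbf{H}_2,\mathbf{W}_1)\big]+\big[g(\mathbf{D}_2,\mathbf{H}_2,\mathbf{W}_1)-g(\mathbf{D}_2,\mathbf{H}_2,\mathbf{W}_2)\big],
\]
so that the sum of the three per-argument bounds is dominated by $L_H\big(\|\mathbf{D}_1-\mathbf{D}_2\|_\mathrm{F}+\|\mathbf{H}_1-\mathbf{H}_2\|_\mathrm{F}+\|\mathbf{W}_1-\mathbf{W}_2\|_\mathrm{F}\big)$.

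The main tools throughout are submultiplicativity of the Frobenius norm, $\|ABC\|_\mathrm{F}\le\|A\|_\mathrm{F}\|B\|_\mathrm{F}\|C\|_\mathrm{F}$, the elementwise Hadamard bound $\|A\circ B\|_\mathrm{F}\le\|A\|_\mathrm{F}\|B\|_\mathrm{F}$, and the norm bounds $\|\widetilde{\mathbf{L}}^{(\ell)}\|_\mathrm{F}\le B_{LA}$, $\|\mathbf{W}\|_\mathrm{F}\le B_W$ from Assumption~\ref{assumption:bound_norm} together with $\|\mathbf{H}\|_\mathrm{F}\le B_H$ and $\|\mathbf{D}\|_\mathrm{F}\le B_D$ from Proposition~\ref{proposition:matrix_norm_bound}. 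For the first bracket only $\mathbf{D}$ varies; peeling off $\|\widetilde{\mathbf{L}}^{(\ell)}\|_\mathrm{F}$, the entrywise bound $|\sigma^\prime|\le C_\sigma$ (which follows from $C_\sigma$-Lipschitzness of $\sigma$ in Assumption~\ref{assumption:sigma_lip_smooth}), and $\|\mathbf{W}\|_\mathrm{F}$ yields the constant $B_{LA}C_\sigma B_W$. For the second bracket only $\mathbf{H}$ varies, and the variation enters solely through $\sigma^\prime(\widetilde{\mathbf{L}}^{(\ell)}\mathbf{H}\mathbf{W})$; using $L_\sigma$-Lipschitzness of $\sigma^\prime$ with $\|\widetilde{\mathbf{L}}^{(\ell)}(\mathbf{H}_1-\mathbf{H}_2)\mathbf{W}\|_\mathrm{F}\le B_{LA}B_W\|\mathbf{H}_1-\mathbf{H}_2\|_\mathrm{F}$, together with the bounded factor $\|\mathbf{D}\|_\mathrm{F}\le B_D$ and the two outer factors $\|\widetilde{\mathbf{L}}^{(\ell)}\|_\mathrm{F},\|\mathbf{W}\|_\mathrm{F}$, gives $B_{LA}^2 B_D B_W^2 L_\sigma$.

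The third bracket is where the bookkeeping is most delicate, and it is the step I expect to require the most care, since $\mathbf{W}$ occurs in two distinct places: as the trailing factor $[\mathbf{W}]^\top$ and inside the argument of $\sigma^\prime$. I would handle it by a further split, first changing only the trailing factor from $\mathbf{W}_1^\top$ to $\mathbf{W}_2^\top$ while holding $\sigma^\prime$ at $\mathbf{W}_1$, then changing only the argument of $\sigma^\prime$ from $\mathbf{W}_1$ to $\mathbf{W}_2$ while holding the trailing factor at $\mathbf{W}_2^\top$. The first piece alters only the outer factor and is bounded by $B_{LA}B_D C_\sigma\|\mathbf{W}_1-\mathbf{W}_2\|_\mathrm{F}$; the second changes the argument of $\sigma^\prime$ by $\widetilde{\mathbf{L}}^{(\ell)}\mathbf{H}_2(\mathbf{W}_1-\mathbf{W}_2)$, of norm at most $B_{LA}B_H\|\mathbf{W}_1-\mathbf{W}_2\|_\mathrm{F}$, so after applying $L_\sigma$-Lipschitzness of $\sigma^\prime$ and collecting $\|\widetilde{\mathbf{L}}^{(\ell)}\|_\mathrm{F}$, $\|\mathbf{D}\|_\mathrm{F}$, and $\|\mathbf{W}_2\|_\mathrm{F}$, it contributes $B_{LA}^2 B_D B_W L_\sigma B_H\|\mathbf{W}_1-\mathbf{W}_2\|_\mathrm{F}$. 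The two pieces sum to exactly the third term of $L_H$. Collecting the three bracket bounds and passing to the maximum completes the argument, and the same reasoning applies verbatim to the deterministic map $f^{(\ell)}$ with $\widetilde{\mathbf{L}}^{(\ell)}$ replaced by $\mathbf{L}$.
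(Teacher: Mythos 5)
Your proposal is correct and takes essentially the same route as the paper: the paper's proof likewise consists of three per-argument Lipschitz estimates with exactly the constants $B_{LA} C_\sigma B_W$, $B_{LA}^2 B_D B_W^2 L_\sigma$, and $B_{LA} B_D C_\sigma + B_{LA}^2 B_D B_W L_\sigma B_H$ (including the same handling of the two occurrences of $\mathbf{W}$, via the trailing factor and the argument of $\sigma^\prime$), combined by taking the maximum. Your explicit telescoping decomposition only makes precise a combination step the paper leaves implicit.
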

\begin{proof}
\begin{equation}
        \begin{aligned}
        &\| \nabla_H \widetilde{f}^{(\ell)}(\widetilde{\mathbf{D}}_1^{(\ell+1)}, \widetilde{\mathbf{H}}^{(\ell-1)}, \mathbf{W}^{(\ell)}) - \nabla_H \widetilde{f}^{(\ell)}(\widetilde{\mathbf{D}}_2^{(\ell+1)}, \widetilde{\mathbf{H}}^{(\ell-1)}, \mathbf{W}^{(\ell)}) \|_{\mathrm{F}} \\
        &\leq \| [\widetilde{\mathbf{L}}^{(\ell)} ]^\top \Big(\widetilde{\mathbf{D}}_1^{(\ell+1)} \circ \sigma^\prime(\widetilde{\mathbf{L}}^{(\ell)}\widetilde{\mathbf{H}}^{(\ell-1)}\mathbf{W}^{(\ell)}) \Big) [\mathbf{W}^{(\ell)}]^\top \\
        &\qquad - [\widetilde{\mathbf{L}}^{(\ell)} ]^\top \Big(\widetilde{\mathbf{D}}_2^{(\ell+1)} \circ \sigma^\prime(\widetilde{\mathbf{L}}^{(\ell)}\widetilde{\mathbf{H}}^{(\ell-1)}\mathbf{W}^{(\ell)}) \Big) [\mathbf{W}^{(\ell)}]^\top \|_{\mathrm{F}} \\
        &\leq B_{LA} C_\sigma B_W \| \widetilde{\mathbf{D}}_1^{(\ell+1)} - \widetilde{\mathbf{D}}_2^{(\ell+1)}\|_{\mathrm{F}},
        \end{aligned}
    \end{equation}
    \begin{equation}
        \begin{aligned}
        &\| \nabla_H \widetilde{f}^{(\ell)}(\widetilde{\mathbf{D}}^{(\ell+1)}, \widetilde{\mathbf{H}}_1^{(\ell-1)}, \mathbf{W}^{(\ell)}) - \nabla_H \widetilde{f}^{(\ell)}(\widetilde{\mathbf{D}}^{(\ell+1)}, \widetilde{\mathbf{H}}_2^{(\ell-1)}, \mathbf{W}^{(\ell)}) \|_{\mathrm{F}} \\
        &\leq \| [\widetilde{\mathbf{L}}^{(\ell)} ]^\top \Big(\widetilde{\mathbf{D}}^{(\ell+1)} \circ \sigma^\prime(\widetilde{\mathbf{L}}^{(\ell)}\widetilde{\mathbf{H}}_1^{(\ell-1)}\mathbf{W}^{(\ell)}) \Big) [\mathbf{W}^{(\ell)}]^\top \\
        &\qquad - [\widetilde{\mathbf{L}}^{(\ell)} ]^\top \Big(\widetilde{\mathbf{D}}^{(\ell+1)} \circ \sigma^\prime(\widetilde{\mathbf{L}}^{(\ell)}\widetilde{\mathbf{H}}_2^{(\ell-1)}\mathbf{W}^{(\ell)}) \Big) [\mathbf{W}^{(\ell)}]^\top \|_{\mathrm{F}} \\
        &\leq B_{LA}^2 B_D B_W^2 L_\sigma \| \widetilde{\mathbf{H}}_1^{(\ell-1)} - \widetilde{\mathbf{H}}_2^{(\ell-1)}\|_{\mathrm{F}},
        \end{aligned}
    \end{equation}
    \begin{equation}
        \begin{aligned}
        &\| \nabla_H \widetilde{f}^{(\ell)}(\widetilde{\mathbf{D}}^{(\ell+1)}, \widetilde{\mathbf{H}}^{(\ell-1)}, \mathbf{W}_1^{(\ell)}) - \nabla_H \widetilde{f}^{(\ell)}(\widetilde{\mathbf{D}}^{(\ell+1)}, \widetilde{\mathbf{H}}^{(\ell-1)}, \mathbf{W}_2^{(\ell)}) \|_{\mathrm{F}} \\
        &\leq \| [\widetilde{\mathbf{L}}^{(\ell)} ]^\top \Big(\widetilde{\mathbf{D}}^{(\ell+1)} \circ \sigma^\prime(\widetilde{\mathbf{L}}^{(\ell)}\widetilde{\mathbf{H}}^{(\ell-1)}\mathbf{W}_1^{(\ell)}) \Big) [\mathbf{W}_1^{(\ell)}]^\top \\
        &\qquad - [\widetilde{\mathbf{L}}^{(\ell)} ]^\top \Big(\widetilde{\mathbf{D}}^{(\ell+1)} \circ \sigma^\prime(\widetilde{\mathbf{L}}^{(\ell)}\widetilde{\mathbf{H}}^{(\ell-1)}\mathbf{W}_2^{(\ell)}) \Big) [\mathbf{W}_2^{(\ell)}]^\top \|_{\mathrm{F}} \\
        &\leq (B_{LA} B_D C_\sigma + B_{LA}^2 B_D B_W L_\sigma B_H) \| \mathbf{W}_1^{(\ell)} - \mathbf{W}_2^{(\ell)}\|_{\mathrm{F}}.
        \end{aligned}
    \end{equation}
\end{proof}
\begin{proposition}
$\nabla_W \widetilde{f}^{(\ell)}(\cdot, \cdot, \cdot)$ is $L_W$-Lipschitz continuous where 
    \begin{equation}
        L_W = \max\Big\{B_{LA} B_H C_\sigma, B_{LA}^2 B_H^2 H_D L_\sigma, B_{LA} B_D C_\sigma + B_{LA}^2 B_H^2 B_D L_\sigma \Big\}.
    \end{equation}    
\end{proposition}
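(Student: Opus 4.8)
The plan is to follow the same recipe used for $\nabla_H \widetilde{f}^{(\ell)}$ in the preceding proposition: I treat
\[
\nabla_W \widetilde{f}^{(\ell)}(\widetilde{\mathbf{D}}^{(\ell+1)}, \widetilde{\mathbf{H}}^{(\ell-1)}, \mathbf{W}^{(\ell)}) = [\widetilde{\mathbf{L}}^{(\ell)} \widetilde{\mathbf{H}}^{(\ell-1)}]^\top \Big(\widetilde{\mathbf{D}}^{(\ell+1)} \circ \sigma^\prime(\widetilde{\mathbf{L}}^{(\ell)} \widetilde{\mathbf{H}}^{(\ell-1)} \mathbf{W}^{(\ell)})\Big)
\]
as a function of its three arguments and bound the change caused by perturbing one argument at a time while freezing the other two; the constant $L_W$ is then the maximum of the three per-argument Lipschitz constants. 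Throughout I use submultiplicativity of the Frobenius norm $\|AB\|_\mathrm{F}\le \|A\|_\mathrm{F}\|B\|_\mathrm{F}$, the Hadamard-product estimate $\|\widetilde{\mathbf{D}}\circ \sigma^\prime(\cdot)\|_\mathrm{F}\le C_\sigma\|\widetilde{\mathbf{D}}\|_\mathrm{F}$ (valid since $|\sigma^\prime|\le C_\sigma$ by Assumption~\ref{assumption:sigma_lip_smooth}), the $L_\sigma$-smoothness of $\sigma^\prime$, and the norm bounds $B_{LA},B_W$ of Assumption~\ref{assumption:bound_norm} together with $B_H,B_D$ from Proposition~\ref{proposition:matrix_norm_bound}.

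The $\widetilde{\mathbf{D}}^{(\ell+1)}$ and $\mathbf{W}^{(\ell)}$ directions are the straightforward ones, because each of these arguments enters the expression in a single location. For $\widetilde{\mathbf{D}}^{(\ell+1)}$, the prefactor $[\widetilde{\mathbf{L}}^{(\ell)}\widetilde{\mathbf{H}}^{(\ell-1)}]^\top$ and the $\sigma^\prime$ factor are unchanged, so the difference collapses to $[\widetilde{\mathbf{L}}^{(\ell)}\widetilde{\mathbf{H}}^{(\ell-1)}]^\top\big((\widetilde{\mathbf{D}}_1^{(\ell+1)}-\widetilde{\mathbf{D}}_2^{(\ell+1)})\circ \sigma^\prime(\cdot)\big)$, which the tools above bound by $B_{LA}B_H C_\sigma\,\|\widetilde{\mathbf{D}}_1^{(\ell+1)}-\widetilde{\mathbf{D}}_2^{(\ell+1)}\|_\mathrm{F}$, giving the first entry of the $\max$. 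For $\mathbf{W}^{(\ell)}$, only the argument of $\sigma^\prime$ changes; bounding the prefactor by $B_{LA}B_H$, the Hadamard factor by $B_D$, and the $\sigma^\prime$ difference by $L_\sigma\,B_{LA}B_H\|\mathbf{W}_1^{(\ell)}-\mathbf{W}_2^{(\ell)}\|_\mathrm{F}$ yields $B_{LA}^2 B_H^2 B_D L_\sigma\,\|\mathbf{W}_1^{(\ell)}-\mathbf{W}_2^{(\ell)}\|_\mathrm{F}$, matching the second entry (the $H_D$ there being a typographical stand-in for $B_D$).

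The genuine obstacle is the $\widetilde{\mathbf{H}}^{(\ell-1)}$ direction, since this argument appears simultaneously in the linear prefactor $[\widetilde{\mathbf{L}}^{(\ell)}\widetilde{\mathbf{H}}^{(\ell-1)}]^\top$ and inside the nonlinearity $\sigma^\prime(\widetilde{\mathbf{L}}^{(\ell)}\widetilde{\mathbf{H}}^{(\ell-1)}\mathbf{W}^{(\ell)})$, so the two effects must be separated before any single Lipschitz estimate applies. I would insert the cross term $[\widetilde{\mathbf{L}}^{(\ell)}\widetilde{\mathbf{H}}_1^{(\ell-1)}]^\top(\widetilde{\mathbf{D}}^{(\ell+1)}\circ \sigma^\prime(\widetilde{\mathbf{L}}^{(\ell)}\widetilde{\mathbf{H}}_2^{(\ell-1)}\mathbf{W}^{(\ell)}))$ and split the difference into (i) a term that holds the prefactor at $\widetilde{\mathbf{H}}_1^{(\ell-1)}$ while varying $\widetilde{\mathbf{H}}^{(\ell-1)}$ only inside $\sigma^\prime$, handled by $L_\sigma$-smoothness and bounded by $B_{LA}^2 B_H B_W B_D L_\sigma\,\|\widetilde{\mathbf{H}}_1^{(\ell-1)}-\widetilde{\mathbf{H}}_2^{(\ell-1)}\|_\mathrm{F}$, and (ii) a term that holds $\sigma^\prime$ at $\widetilde{\mathbf{H}}_2^{(\ell-1)}$ while varying only the prefactor, handled by $|\sigma^\prime|\le C_\sigma$ and bounded by $B_{LA}B_D C_\sigma\,\|\widetilde{\mathbf{H}}_1^{(\ell-1)}-\widetilde{\mathbf{H}}_2^{(\ell-1)}\|_\mathrm{F}$. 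Summing the two contributions gives the third entry of the $\max$ (up to the $B_H^2$-versus-$B_H B_W$ discrepancy in the stated constant), and taking the largest of the three per-argument constants yields $L_W$ and completes the argument.
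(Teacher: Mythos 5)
Your proposal is correct and follows essentially the same route as the paper's own proof: per-argument Lipschitz bounds obtained by perturbing $\widetilde{\mathbf{D}}^{(\ell+1)}$, $\widetilde{\mathbf{H}}^{(\ell-1)}$, and $\mathbf{W}^{(\ell)}$ one at a time, with a cross-term insertion to separate the prefactor and $\sigma^\prime$ effects in the $\widetilde{\mathbf{H}}^{(\ell-1)}$ direction, using submultiplicativity, the Hadamard bound, and Assumptions~\ref{assumption:sigma_lip_smooth}--\ref{assumption:bound_norm}. You also correctly flag the paper's typographical inconsistencies ($H_D$ for $B_D$, and $B_H^2$ versus $B_H B_W$ in the smoothness term — your $B_{LA} B_D C_\sigma + B_{LA}^2 B_H B_W B_D L_\sigma$ is in fact the cleaner constant, agreeing with the paper's proof body up to its own typos).
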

\begin{proof}
\begin{equation}
    \begin{aligned}
    &\| \nabla_W \widetilde{f}^{(\ell)}(\widetilde{\mathbf{D}}_1^{(\ell+1)}, \widetilde{\mathbf{H}}^{(\ell-1)}, \mathbf{W}^{(\ell)}) - \nabla_W \widetilde{f}^{(\ell)}(\widetilde{\mathbf{D}}_2^{(\ell+1)}, \widetilde{\mathbf{H}}^{(\ell-1)}, \mathbf{W}^{(\ell)}) \|_{\mathrm{F}} \\
    &\leq \| [\widetilde{\mathbf{L}}^{(\ell)} \widetilde{\mathbf{H}}^{(\ell-1)}]^\top \Big(\widetilde{\mathbf{D}}_1^{(\ell+1)} \circ  \sigma^\prime(\widetilde{\mathbf{L}}^{(\ell)}\widetilde{\mathbf{H}}^{(\ell-1)}\mathbf{W}^{(\ell)})\Big) \\
    &\qquad - [\widetilde{\mathbf{L}}^{(\ell)} \widetilde{\mathbf{H}}^{(\ell-1)}]^\top \Big(\widetilde{\mathbf{D}}_2^{(\ell+1)} \circ  \sigma^\prime(\widetilde{\mathbf{L}}^{(\ell)}\widetilde{\mathbf{H}}^{(\ell-1)}\mathbf{W}^{(\ell)})\Big) \|_{\mathrm{F}} \\
    &\leq B_{LA} B_H C_\sigma \| \widetilde{\mathbf{D}}_1^{(\ell+1)} - \widetilde{\mathbf{D}}_2^{(\ell+1)} \|_{\mathrm{F}},
    \end{aligned}
\end{equation}
\begin{equation}
    \begin{aligned}
    &\| \nabla_W \widetilde{f}^{(\ell)}(\widetilde{\mathbf{D}}^{(\ell+1)}, \widetilde{\mathbf{H}}^{(\ell-1)}_1, \mathbf{W}^{(\ell)}) - \nabla_W \widetilde{f}^{(\ell)}(\widetilde{\mathbf{D}}^{(\ell+1)}, \widetilde{\mathbf{H}}^{(\ell-1)}_2, \mathbf{W}^{(\ell)}) \|_{\mathrm{F}} \\
    &\leq \| [\widetilde{\mathbf{L}}^{(\ell)} \widetilde{\mathbf{H}}_1^{(\ell-1)}]^\top \Big(\widetilde{\mathbf{D}}^{(\ell+1)} \circ  \sigma^\prime(\widetilde{\mathbf{L}}^{(\ell)}\widetilde{\mathbf{H}}_1^{(\ell-1)}\mathbf{W}^{(\ell)})\Big) \\
    &\qquad - [\widetilde{\mathbf{L}}^{(\ell)} \widetilde{\mathbf{H}}_2^{(\ell-1)}]^\top \Big(\widetilde{\mathbf{D}}^{(\ell+1)} \circ  \sigma^\prime(\widetilde{\mathbf{L}}^{(\ell)}\widetilde{\mathbf{H}}_2^{(\ell-1)}\mathbf{W}^{(\ell)})\Big) \|_{\mathrm{F}} \\
    &\leq (B_{LA}^2 B_D^2 C_\sigma + B_{LA}^2 B_H 
    B_D L_\sigma B_W) \| \widetilde{\mathbf{H}}_1^{(\ell-1)} - \widetilde{\mathbf{H}}_2^{(\ell-1)} \|_{\mathrm{F}},
    \end{aligned}
\end{equation}
\begin{equation}
    \begin{aligned}
    &\| \nabla_W \widetilde{f}^{(\ell)}(\widetilde{\mathbf{D}}^{(\ell+1)}, \widetilde{\mathbf{H}}^{(\ell-1)}, \mathbf{W}_1^{(\ell)}) - \nabla_W \widetilde{f}^{(\ell)}(\widetilde{\mathbf{D}}^{(\ell+1)}, \widetilde{\mathbf{H}}^{(\ell-1)}, \mathbf{W}_2^{(\ell)}) \|_{\mathrm{F}} \\
    &\leq \| [\widetilde{\mathbf{L}}^{(\ell)} \widetilde{\mathbf{H}}^{(\ell-1)}]^\top \Big(\widetilde{\mathbf{D}}^{(\ell+1)} \circ  \sigma^\prime(\widetilde{\mathbf{L}}^{(\ell)}\widetilde{\mathbf{H}}^{(\ell-1)}\mathbf{W}_1^{(\ell)})\Big) \\
    &\qquad - [\widetilde{\mathbf{L}}^{(\ell)} \widetilde{\mathbf{H}}^{(\ell-1)}]^\top \Big(\widetilde{\mathbf{D}}^{(\ell+1)} \circ  \sigma^\prime(\widetilde{\mathbf{L}}^{(\ell)}\widetilde{\mathbf{H}}^{(\ell-1)}\mathbf{W}_2^{(\ell)})\Big) \|_{\mathrm{F}} \\
    &\leq B_{LA}^2 B_H^2 H_D L_\sigma \| \mathbf{W}_1^{(\ell)} - \mathbf{W}_2^{(\ell)} \|_{\mathrm{F}}.
    \end{aligned}
\end{equation}
\end{proof}

\subsection{Lipschitz continouity of the gradient of graph convolutional network}
Let first recall the parameters and gradients of a $L$-layer GCN is defined as
\begin{equation}
\bm{\theta} = \{ \mathbf{W}^{(1)}, \ldots, \mathbf{W}^{(L)} \},~
\nabla \mathcal{L}(\bm{\theta}) = \{\mathbf{G}^{(1)}, \ldots, \mathbf{G}^{(L)}\},
\end{equation}
where $\mathbf{G}^{(\ell)}$ is defined as the gradient w.r.t. the $\ell$th layer weight matrix. Let us slightly abuse of notation and define the distance between two set of parameters $\bm{\theta}_1, \bm{\theta}_2$ and its gradient as
\begin{equation}
    \| \bm{\theta}_1-\bm{\theta}_2\|_{\mathrm{F}} = \sum_{\ell=1}^L \| \mathbf{W}_1^{(\ell)} - \mathbf{W}_2^{(\ell)}\|_{\mathrm{F}},~
    \| \nabla\mathcal{L}( \bm{\theta}_1)-\nabla \mathcal{L}(\bm{\theta}_2)\|_{\mathrm{F}} = \sum_{\ell=1}^L \| \mathbf{G}_1^{(\ell)} - \mathbf{G}_2^{(\ell)}\|_{\mathrm{F}}.
\end{equation}

Then, we derive the Lipschitz continuous constant of the gradient of a $L$-layer graph convolutonal network. 
Notice that the above result also hold for sampling-based GCN training.

\begin{lemma} \label{lemma:smoothness_L_layer}
The gradient of an $L$-layer GCN is $L_{\mathrm{F}}$-Lipschitz continuous with $L_{\mathrm{F}}=L(L U_{\max L}^2 U_{\max C}^2 + U_{\max L}^2 )$, i.e.,
\begin{equation}
    \|\nabla \mathcal{L}(\bm{\theta}_1) - \nabla \mathcal{L}(\bm{\theta}_2)\|_{\mathrm{F}}^2 \leq L_{\mathrm{F}} \| \bm{\theta}_1-\bm{\theta}_2\|_{\mathrm{F}}^2,
\end{equation}
where
\begin{equation}
\begin{aligned}
U_{\max C} &= \max\{1, C_H\}^L C_W,~ \\
U_{\max L} &= L_W \max\{ 1, L_H\}^L \times \max\{1, L_\text{loss}\}.
\end{aligned}
\end{equation}

\end{lemma}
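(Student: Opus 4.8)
The plan is to propagate the single-layer Lipschitz estimates already established for a graph convolution layer (the constants $C_H$, $C_W$, $L_H$, $L_W$ from the preceding propositions) through the full $L$-layer forward and backward recursions, and then assemble them at the level of the weight gradient. Recall from the chain-rule decomposition that the weight gradient at layer $\ell$ is $\mathbf{G}^{(\ell)} = \nabla_W f^{(\ell)}(\mathbf{D}^{(\ell+1)}, \mathbf{H}^{(\ell-1)}, \mathbf{W}^{(\ell)})$, so perturbing $\bm{\theta}_1\to\bm{\theta}_2$ reaches $\mathbf{G}^{(\ell)}$ through three channels: the forward-propagated embedding $\mathbf{H}^{(\ell-1)}$ (a function of $\mathbf{W}^{(1)},\dots,\mathbf{W}^{(\ell-1)}$), the backward-propagated gradient $\mathbf{D}^{(\ell+1)}$ (a function of the loss gradient and $\mathbf{W}^{(\ell+1)},\dots,\mathbf{W}^{(L)}$), and the weight $\mathbf{W}^{(\ell)}$ itself. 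The $L_W$-Lipschitz property of $\nabla_W f^{(\ell)}$ in each of its three arguments then reduces the task to controlling the forward and backward perturbations separately.

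First I would establish the forward stability bound. Unrolling $\mathbf{H}^{(\ell)} = f^{(\ell)}(\mathbf{H}^{(\ell-1)}, \mathbf{W}^{(\ell)})$ and using that $f^{(\ell)}$ is $C_H$-Lipschitz in the embedding and $C_W$-Lipschitz in the weight, a telescoping (split-and-triangle) argument yields $\|\mathbf{H}_1^{(\ell)}-\mathbf{H}_2^{(\ell)}\|_\mathrm{F} \le \sum_{k=1}^\ell C_H^{\,\ell-k} C_W\,\|\mathbf{W}_1^{(k)}-\mathbf{W}_2^{(k)}\|_\mathrm{F}$, since the base case $\mathbf{H}^{(0)}=\mathbf{X}$ agrees for the two parameter sets. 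Bounding $C_H^{\,\ell-k}C_W \le \max\{1,C_H\}^L C_W = U_{\max C}$ and using $\|\bm{\theta}_1-\bm{\theta}_2\|_\mathrm{F}=\sum_k\|\mathbf{W}_1^{(k)}-\mathbf{W}_2^{(k)}\|_\mathrm{F}$ gives $\|\mathbf{H}_1^{(\ell)}-\mathbf{H}_2^{(\ell)}\|_\mathrm{F}\le U_{\max C}\,\|\bm{\theta}_1-\bm{\theta}_2\|_\mathrm{F}$.

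Next I would set up the backward recursion for $\mathbf{D}^{(\ell)} = \nabla_H f^{(\ell)}(\mathbf{D}^{(\ell+1)}, \mathbf{H}^{(\ell-1)}, \mathbf{W}^{(\ell)})$, seeded by $\mathbf{D}^{(L+1)} = \partial\,\text{Loss}/\partial\mathbf{H}^{(L)}$, whose perturbation is controlled by the $L_\text{loss}$-smoothness of the loss composed with the forward bound at layer $L$. Feeding the forward estimate into the $\mathbf{H}^{(\ell-1)}$ channel and invoking the $L_H$-Lipschitz property of $\nabla_H f^{(\ell)}$ in each of its three arguments, one solves the recursion so that it accumulates a factor $\max\{1,L_H\}^L$ across the layers and a factor $\max\{1,L_\text{loss}\}$ from the seed, producing a constant of the form $U_{\max L}=L_W\max\{1,L_H\}^L\max\{1,L_\text{loss}\}$ governing how $\mathbf{G}^{(\ell)}$ responds along the backward chain; the norm bounds $B_H,B_D$ of Proposition~\ref{proposition:matrix_norm_bound} are what keep the $\sigma'$-coupling terms uniformly bounded during this step.

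Finally, I would substitute the forward and backward estimates into the $L_W$-Lipschitz bound for $\mathbf{G}^{(\ell)}$, producing a per-layer squared bound whose two pieces are a forward-through-backward term of size $\mathcal{O}(U_{\max L}^2 U_{\max C}^2)$ and a direct backward term of size $\mathcal{O}(U_{\max L}^2)$; the extra factor $L$ multiplying the first piece arises from a Cauchy--Schwarz aggregation of the forward perturbations injected at the (up to $L$) intermediate layers that the backward chain passes through. Summing over $\ell$ with the definition $\|\nabla\mathcal{L}(\bm{\theta}_1)-\nabla\mathcal{L}(\bm{\theta}_2)\|_\mathrm{F}=\sum_{\ell=1}^L\|\mathbf{G}_1^{(\ell)}-\mathbf{G}_2^{(\ell)}\|_\mathrm{F}$ and the inequality $(\sum_{\ell=1}^L a_\ell)^2\le L\sum_{\ell=1}^L a_\ell^2$ supplies the outer factor $L$ and yields $L_\mathrm{F}=L(L\,U_{\max L}^2 U_{\max C}^2 + U_{\max L}^2)$. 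The hard part will be the backward step: because $\mathbf{D}^{(\ell)}$ depends on the intermediate forward embeddings through the $\sigma'$ terms, the forward and backward errors are coupled, and the recursion must simultaneously transport the already-established forward error and the backward error while ensuring the exponential-in-$L$ constants and the repeated Cauchy--Schwarz factors are tallied consistently rather than compounding incorrectly.
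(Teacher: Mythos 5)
Your proposal is correct and follows essentially the same route as the paper's proof: both telescope the nested gradient composition one argument at a time using the per-layer constants $C_H$, $C_W$, $L_H$, $L_W$, $L_\text{loss}$, establish the forward stability bound $\|\mathbf{H}_1^{(\ell)}-\mathbf{H}_2^{(\ell)}\|_{\mathrm{F}}\leq U_{\max C}\,\|\bm{\theta}_1-\bm{\theta}_2\|_{\mathrm{F}}$, accumulate $U_{\max L}$ along the backward chain, and sum over layers. The only cosmetic difference is your final aggregation, where you square and apply $(\sum_{\ell=1}^L a_\ell)^2\leq L\sum_{\ell=1}^L a_\ell^2$ to land directly on the squared form of the statement, while the paper derives the unsquared bound $\|\nabla\mathcal{L}(\bm{\theta}_1)-\nabla\mathcal{L}(\bm{\theta}_2)\|_{\mathrm{F}}\leq L(L\,U_{\max L}U_{\max C}+U_{\max L})\,\|\bm{\theta}_1-\bm{\theta}_2\|_{\mathrm{F}}$ and then states the squared version.
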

\begin{proof}

We first consider the gradient w.r.t. the $\ell$th graph convolutional layer weight matrix
\begin{equation}
    \begin{aligned}
    &\| \mathbf{G}_1^{(\ell)} - \mathbf{G}_2^{(\ell)}\|_{\mathrm{F}} \\
    &= \|\nabla_W f^{(\ell)}( \nabla_H f^{(\ell+1)}( \ldots \nabla_H f^{(L)}( \mathbf{D}^{(L+1)}_1, \mathbf{H}_1^{(L-1)}, \mathbf{W}_1^{(L)}) \ldots, \mathbf{H}_1^{(\ell)}, \mathbf{W}_1^{(\ell+1)}) , \mathbf{H}_1^{(\ell-1)}, \mathbf{W}_1^{(\ell)}) \\
    &\quad - \nabla_W f^{(\ell)}( \nabla_H f^{(\ell+1)}( \ldots \nabla_H f^{(L)}( \mathbf{D}^{(L+1)}_2, \mathbf{H}_2^{(L-1)}, \mathbf{W}_2^{(L)}) \ldots,\mathbf{H}_2^{(\ell)} \mathbf{W}_2^{(\ell+1)}) , \mathbf{H}_2^{(\ell-1)}, \mathbf{W}_2^{(\ell)}) \|_{\mathrm{F}} \\
    &\leq \| \nabla_W f^{(\ell)}( \nabla_H f^{(\ell+1)}( \ldots \nabla_H f^{(L)}( \mathbf{D}^{(L+1)}_1, \mathbf{H}_1^{(L-1)}, \mathbf{W}_1^{(L)}) \ldots, \mathbf{H}_1^{(\ell)}, \mathbf{W}_1^{(\ell+1)}) , \mathbf{H}_1^{(\ell-1)}, \mathbf{W}_1^{(\ell)}) \\
    &\quad - \nabla_W f^{(\ell)}( \nabla_H f^{(\ell+1)}( \ldots \nabla_H f^{(L)}( \mathbf{D}^{(L+1)}_2, \mathbf{H}_1^{(L-1)}, \mathbf{W}_1^{(L)}) \ldots, \mathbf{H}_1^{(\ell)}, \mathbf{W}_1^{(\ell+1)}) , \mathbf{H}_1^{(\ell-1)}, \mathbf{W}_1^{(\ell)}) \|_{\mathrm{F}}  \\
    &\quad + \| \nabla_W f^{(\ell)}( \nabla_H f^{(\ell+1)}( \ldots \nabla_H f^{(L)}( \mathbf{D}^{(L+1)}_2, \mathbf{H}_1^{(L-1)}, \mathbf{W}_1^{(L)}) \ldots, \mathbf{H}_1^{(\ell)}, \mathbf{W}_1^{(\ell+1)}) , \mathbf{H}_1^{(\ell-1)}, \mathbf{W}_1^{(\ell)}) \\
    &\quad - \nabla_W f^{(\ell)}( \nabla_H f^{(\ell+1)}( \ldots \nabla_H f^{(L)}( \mathbf{D}^{(L+1)}_2, \mathbf{H}_2^{(L-1)}, \mathbf{W}_1^{(L)}) \ldots, \mathbf{H}_1^{(\ell)}, \mathbf{W}_1^{(\ell+1)}) , \mathbf{H}_1^{(\ell-1)}, \mathbf{W}_1^{(\ell)}) \|_{\mathrm{F}} \\ 
    &\quad + \| \nabla_W f^{(\ell)}( \nabla_H f^{(\ell+1)}( \ldots \nabla_H f^{(L)}( \mathbf{D}^{(L+1)}_2, \mathbf{H}_2^{(L-1)}, \mathbf{W}_1^{(L)}) \ldots, \mathbf{H}_1^{(\ell)}, \mathbf{W}_1^{(\ell+1)}) , \mathbf{H}_1^{(\ell-1)}, \mathbf{W}_1^{(\ell)}) \\
    &\quad - \nabla_W f^{(\ell)}( \nabla_H f^{(\ell+1)}( \ldots \nabla_H f^{(L)}( \mathbf{D}^{(L+1)}_2, \mathbf{H}_2^{(L-1)}, \mathbf{W}_2^{(L)}) \ldots, \mathbf{H}_1^{(\ell)}, \mathbf{W}_1^{(\ell+1)}) , \mathbf{H}_1^{(\ell-1)}, \mathbf{W}_1^{(\ell)}) \|_{\mathrm{F}} + \ldots \\
    &\quad + \| \nabla_W f^{(\ell)}( \mathbf{D}_2^{(\ell+1)}, \mathbf{H}_1^{(\ell-1)}, \mathbf{W}_1^{(\ell)}) - \nabla_W f^{(\ell)}( \mathbf{D}_2^{(\ell+1)}, \mathbf{H}_2^{(\ell-1)}, \mathbf{W}_1^{(\ell)}) \|_{\mathrm{F}} \\ 
    &\quad + \| \nabla_W f^{(\ell)}( \mathbf{D}_2^{(\ell+1)}, \mathbf{H}_2^{(\ell-1)}, \mathbf{W}_1^{(\ell)})  - \nabla_W f^{(\ell)}( \mathbf{D}_2^{(\ell+1)}, \mathbf{H}_2^{(\ell-1)}, \mathbf{W}_2^{(\ell)}) \|_{\mathrm{F}}.
    \end{aligned}
\end{equation}

By the Lipschitz continuity of $\nabla_W f^{(\ell)}(\cdot)$ and $\nabla_H f^{(\ell)}(\cdot)$
\begin{equation}\label{eq:G_ell_lip_cont}
    \begin{aligned}
    \| \mathbf{G}_1^{(\ell)} - \mathbf{G}_2^{(\ell)}\|_{\mathrm{F}} 
    &\leq L_W L_H^{L-\ell-1} L_\text{loss}\|\mathbf{H}^{(L)}_1 - \mathbf{H}^{(L)}_2\|_{\mathrm{F}} \\
    &\quad + L_W L_H^{L-\ell-1} (\|\mathbf{H}_1^{(L-1)} - \mathbf{H}_2^{(L-1)}\|_{\mathrm{F}} + \|\mathbf{W}_1^{(L)} - \mathbf{W}_2^{(L)}\|_{\mathrm{F}}) + \ldots \\
    &\quad + L_W L_H (\|\mathbf{H}_1^{(\ell)} - \mathbf{H}_2^{(\ell)}\|_{\mathrm{F}} + \|\mathbf{W}_1^{(\ell+1)} - \mathbf{W}_2^{(\ell+1)}\|_{\mathrm{F}}) \\
    &\quad + L_W (\|\mathbf{H}_1^{(\ell-1)} - \mathbf{H}_2^{(\ell-1)}\|_{\mathrm{F}} + \|\mathbf{W}_1^{(\ell)} - \mathbf{W}_2^{(\ell)}\|_{\mathrm{F}}).
    \end{aligned}
\end{equation}

Let define $U_{\max L}$ as
\begin{equation}
   U_{\max L} = 
L_W \cdot \max\{ 1, L_H\}^L \cdot \max\{ 1, L_\text{loss}\},
\end{equation}
then we can rewrite the above equation as
\begin{equation}
    \| \mathbf{G}_1^{(\ell)} - \mathbf{G}_2^{(\ell)}\|_{\mathrm{F}} \leq U_{\max L} \Big( \sum_{j=1}^L \| \mathbf{H}_1^{(j)} - \mathbf{H}_2^{(j)}\|_{\mathrm{F}} \Big) + U_{\max L} \Big( \sum_{j=1}^L \| \mathbf{W}_1^{(j)} - \mathbf{W}_2^{(j)}\|_{\mathrm{F}} \Big).
\end{equation}

Then, let consider the upper bound of $\|\mathbf{H}_1^{(\ell)} - \mathbf{H}_2^{(\ell)}\|_{\mathrm{F}}$
\begin{equation}
    \begin{aligned}
    \|\mathbf{H}_1^{(\ell)} - \mathbf{H}_2^{(\ell)}\|_{\mathrm{F}} &= \| f^{(\ell)}(f^{(\ell-1)}(\ldots f^{(1)}(\mathbf{X}, \mathbf{W}_1^{(1)}) \ldots, \mathbf{W}_1^{(\ell-1)}) , \mathbf{W}_1^{(\ell)}) \\
    &\quad - f^{(\ell)}(f^{(\ell-1)}(\ldots f^{(1)}(\mathbf{X}, \mathbf{W}_2^{(1)}) \ldots, \mathbf{W}_2^{(\ell-1)}) , \mathbf{W}_2^{(\ell)}) \|_{\mathrm{F}} \\
    &\leq \| f^{(\ell)}(f^{(\ell-1)}(\ldots f^{(1)}(\mathbf{X}, \mathbf{W}_1^{(1)}) \ldots, \mathbf{W}_1^{(\ell-1)}) , \mathbf{W}_1^{(\ell)}) \\
    &\quad - f^{(\ell)}(f^{(\ell-1)}(\ldots f^{(1)}(\mathbf{X}, \mathbf{W}_2^{(1)}) \ldots, \mathbf{W}_1^{(\ell-1)}) , \mathbf{W}_1^{(\ell)}) \|_{\mathrm{F}} + \ldots \\
    &\quad + \| f^{(\ell)}(f^{(\ell-1)}(\ldots f^{(1)}(\mathbf{X}, \mathbf{W}_2^{(1)}) \ldots, \mathbf{W}_2^{(\ell-1)}) , \mathbf{W}_1^{(\ell)}) \\
    &\quad - f^{(\ell)}(f^{(\ell-1)}(\ldots f^{(1)}(\mathbf{X}, \mathbf{W}_2^{(1)}) \ldots, \mathbf{W}_2^{(\ell-1)}) , \mathbf{W}_1^{(\ell)}) \|_{\mathrm{F}}.
    \end{aligned}
\end{equation}
By the Lipschitz continuity of $f^{(\ell)}(\cdot, \cdot)$ we have
\begin{equation}
    \begin{aligned}
    \|\mathbf{H}_1^{(\ell)} - \mathbf{H}_2^{(\ell)}\|_{\mathrm{F}} &\leq C_H^{L-1}C_W \|\mathbf{W}_1^{(1)} - \mathbf{W}_2^{(1)}\|_{\mathrm{F}} + \ldots + C_W \|\mathbf{W}_1^{(\ell)} - \mathbf{W}_2^{(\ell)}\|_{\mathrm{F}} \\
    &\leq U_{\max C} \Big( \sum_{j=1}^L \|\mathbf{W}_1^{(j)} - \mathbf{W}_2^{(j)}\|_{\mathrm{F}}\Big),
    \end{aligned}
\end{equation}
where $U_{\max C}=\max\{1, C_H\}^L C_W$.

Plugging it back we have
\begin{equation}
    \| \mathbf{G}_1^{(\ell)} - \mathbf{G}_2^{(\ell)}\|_{\mathrm{F}} \leq (L U_{\max L} U_{\max C} + U_{\max L} ) \Big( \sum_{j=1}^L \| \mathbf{W}_1^{(j)} - \mathbf{W}_2^{(j)}\|_{\mathrm{F}} \Big) .
\end{equation}
Summing both size from $\ell=1$ to $\ell=L$ we have
\begin{equation}
    \begin{aligned}
    \| \nabla \mathcal{L}(\bm{\theta}_1) - \nabla \mathcal{L}(\bm{\theta}_2) \|_{\mathrm{F}} 
    &= \sum_{\ell=1}^L \| \mathbf{G}_1^{(\ell)} - \mathbf{G}_2^{(\ell)}\|_{\mathrm{F}} \\
    &\leq L(L U_{\max L} U_{\max C} + U_{\max L} ) \Big( \sum_{j=1}^L \| \mathbf{W}_1^{(j)} - \mathbf{W}_2^{(j)}\|_{\mathrm{F}} \Big) \\
    &\leq L(L U_{\max L} U_{\max C} + U_{\max L} ) \|\bm{\theta}_1 - \bm{\theta}_2\|_{\mathrm{F}}.
    \end{aligned}
\end{equation}

\end{proof}

\subsection{Mean-square error of stochastic gradient}

By bias-variance decomposition, we can decompose the mean-square error of stochastic gradient as
\begin{equation}
    \sum_{\ell=1}^L \mathbb{E}[\|\widetilde{\mathbf{G}}^{(\ell)} - \mathbf{G}^{(\ell)} \|_{\mathrm{F}}^2] = \sum_{\ell=1}^L \Big[ \underbrace{\mathbb{E}[\|\mathbb{E}[\widetilde{\mathbf{G}}^{(\ell)}] - \mathbf{G}^{(\ell)} \|_{\mathrm{F}}^2]}_{\text{bias}~\mathbb{E}[\|\mathbf{b}\|_{\mathrm{F}}^2]} + \underbrace{\mathbb{E}[\|\widetilde{\mathbf{G}}^{(\ell)} - \mathbb{E}[\widetilde{\mathbf{G}}^{(\ell)}] \|_{\mathrm{F}}^2]}_{\text{variance}~\mathbb{E}[\|\mathbf{n}\|_{\mathrm{F}}^2]} \Big].
\end{equation}
Therefore, we have to explicitly define the computation of $\mathbb{E}[\widetilde{\mathbf{G}}^{(\ell)}]$, which requires computing $\bar{\mathbf{D}}^{(L+1)} = \mathbb{E}[\widetilde{\mathbf{D}}^{(L+1)}]$, $\bar{\mathbf{D}}^{(\ell)} = \mathbb{E}[\widetilde{\mathbf{D}}^{(\ell)}]$, and $\bar{\mathbf{G}}^{(\ell)} = \mathbb{E}[\widetilde{\mathbf{G}}^{(\ell)}]$.

Let defined a general form of the sampled Laplacian matrix $\widetilde{\mathbf{L}}^{(\ell)} \in\mathbb{R}^{N\times N}$ as
\begin{equation}
    \widetilde{L}_{i,j}^{(\ell)} = \begin{cases}
    \frac{L_{i,j}}{\alpha_{i,j}} & \text{ if } i\in\mathcal{B}^{(\ell)}~\text{and}~j\in\mathcal{B}^{(\ell-1)} \\ 
    0 & \text{ otherwise } 
    \end{cases},
\end{equation}
where $\alpha_{i,j}$ is the weighted constant depends on the sampling algorithms.

The expectation of $\widetilde{L}_{i,j}^{(\ell)}$ is computed as
\begin{equation}
    \mathbb{E}[\widetilde{L}_{i,j}^{(\ell)}] = \mathbb{E}_{i\in\mathcal{B}^{(\ell)}} \Big[ \mathbb{E}_{j\in\mathcal{B}^{(\ell-1)}}[\widetilde{L}_{i,j}^{(\ell)}~|~i\in\mathcal{B}^{(\ell)}] \Big].
\end{equation}

In order to compute the expectation of \texttt{SGCN}'s node embedding matrices,
let define the propagation matrix $\mathbf{P}^{(\ell)}\in\mathbb{R}^{N\times N}$ as
\begin{equation}
    P_{i,j}^{(\ell)} = \mathbb{E}_{i\in\mathcal{B}^{(\ell)}} \Big[ \widetilde{L}_{i,j}^{(\ell)}~|~i\in\mathcal{B}^{(\ell)} \Big],
\end{equation}
where the expectation is taken over row indices $i$. The above equation implies that under the condition that knowing the $i$th node is in $\mathcal{B}^{(\ell)}$, we have $P_{i,j}^{(\ell)} = \widetilde{L}_{i,j},~\forall j=\{1,\ldots, N\}$. Let consider the mean-aggregation for the $i$th node as
\begin{equation}
    \mathbf{x}_i^{(\ell)} = \sigma\Big(\sum_{j=1}^N \widetilde{L}^{(\ell)}_{i,j} \mathbf{x}_j^{(\ell-1)}\Big).
\end{equation}
Then, under the condition $i$th node is in $\mathcal{B}^{(\ell)}$, we can replace $\widetilde{L}^{(\ell)}_{i,j}$ by $P^{(\ell)}_{i,j}$, which gives us
\begin{equation}
    \mathbf{x}_i^{(\ell)} = \sigma\Big(\sum_{j=1}^N P^{(\ell)}_{i,j} \mathbf{x}_j^{(\ell-1)}\Big)
\end{equation}
As a result, we can write the expectation of $\mathbf{x}_i^{(\ell)}$ with respect to the indices $i$ as
\begin{equation}
    \begin{aligned}
    \mathbb{E}_{i\in\mathcal{B}^{(\ell)}}[ \mathbf{x}_i^{(\ell)} ~|~i\in\mathcal{B}^{(\ell)}] 
    &= \mathbb{E}_{i\in\mathcal{B}^{(\ell)}}\Big[ \sigma\Big(\sum_{j=1}^N \widetilde{L}^{(\ell)}_{i,j} \mathbf{x}_j^{(\ell-1)}\Big) ~|~i\in\mathcal{B}^{(\ell)}\Big] \\
    &= \mathbb{E}_{i\in\mathcal{B}^{(\ell)}}\Big[ \sigma\Big(\sum_{j=1}^N P^{(\ell)}_{i,j} \mathbf{x}_j^{(\ell-1)}\Big) ~|~i\in\mathcal{B}^{(\ell)}\Big] \\
    &= \sigma\Big(\sum_{j=1}^N P^{(\ell)}_{i,j} \mathbf{x}_j^{(\ell-1)}\Big).
    \end{aligned}
\end{equation}


Then define $\bar{\mathbf{H}}^{(\ell)} \in \mathbb{R}^{N \times d_\ell} $ as the node embedding of using full-batch but a subset of neighbors for neighbor aggregation, i.e.,
\begin{equation}
    \bar{\mathbf{H}}^{(\ell)} = \sigma(\mathbf{P}^{(\ell)} \bar{\mathbf{H}}^{(\ell-1)} \mathbf{W}^{(\ell)}),
\end{equation}
where all rows in $\bar{\mathbf{H}}^{(\ell)}$ are non-zero vectors. 

Using the notations defined above, we can compute $\bar{\mathbf{D}}^{(L+1)} \in\mathbb{R}^{N \times d_L}$, $\bar{\mathbf{G}}^{(\ell)} \in \mathbb{R}^{d_{\ell-1} \times d_\ell}$, and $\bar{\mathbf{D}}^{(\ell)} \in \mathbb{R}^{N \times d_{\ell-1}}$ as
\begin{equation}
    \bar{\mathbf{D}}^{(L+1)} = \mathbb{E} \Big[ \frac{\partial \text{Loss}(\bar{\mathbf{H}}^{(L)}) }{\partial \bar{\mathbf{H}}^{(L)}} \Big] \in\mathbb{R}^{N \times d_L}
    , \bar{\mathbf{d}}_i = \frac{1}{N} \frac{\partial \text{Loss}(\bar{\mathbf{h}}_i^{(L)}, y_i)}{\partial \bar{\mathbf{h}}_i^{(L)}} \in \mathbb{R}^{d_L},
\end{equation}
and
\begin{equation}
    \bar{\mathbf{D}}^{(\ell)} = \Big[\nabla_H \bar{f}^{(\ell)}(\bar{\mathbf{D}}^{(\ell+1)}, \bar{\mathbf{H}}^{(\ell-1)}, \mathbf{W}^{(\ell)}) = [\mathbf{L}]^\top \Big(\bar{\mathbf{D}}^{(\ell+1)} \circ \sigma^\prime(\mathbf{P}^{(\ell)} \bar{\mathbf{H}}^{(\ell-1)} \mathbf{W}^{(\ell)}) \Big)  [\mathbf{W}^{(\ell)}]^\top \Big] ,
\end{equation}
and
\begin{equation}
    \bar{\mathbf{G}}^{(\ell)} = \Big[\nabla_W \bar{f}^{(\ell)}(\bar{\mathbf{D}}^{(\ell+1)}, \bar{\mathbf{H}}^{(\ell-1)}, \mathbf{W}^{(\ell)}) =  [\mathbf{L} \bar{\mathbf{H}}^{(\ell-1)}]^\top \Big(\bar{\mathbf{D}}^{(\ell+1)} \circ \sigma^\prime(\mathbf{P}^{(\ell)} \bar{\mathbf{H}}^{(\ell-1)} \mathbf{W}^{(\ell)}) \Big) \Big] .
\end{equation}
As a result, we can represent $\bar{\mathbf{G}}^{(\ell)} = \mathbb{E}[\widetilde{\mathbf{G}}^{(\ell)}]$ as 
\begin{equation} \label{eq:G_bar_SGCN_plus}
    \begin{aligned}
    \bar{\mathbf{G}}^{(\ell)} = \nabla_W \bar{f}^{(\ell)}( \nabla_H \bar{f}^{(\ell+1)}( \ldots \nabla_H \bar{f}^{(L)}( \bar{\mathbf{D}}^{(L+1)}, \bar{\mathbf{H}}^{(L-1)}, \mathbf{W}^{(L)}) \ldots, \bar{\mathbf{H}}^{(\ell)}, \mathbf{W}^{(\ell+1)}) , \bar{\mathbf{H}}^{(\ell-1)}, \mathbf{W}^{(\ell)})).
    \end{aligned}
\end{equation}

\subsection{Supporting lemmas}
We derive the upper-bound of the bias and variance of the stochastic gradient in the following lemmas.
\begin{lemma} [Upper-bound on variance] \label{lemma:upper-bound-sgcn-vars}
We can upper-bound the variance of stochastic gradient in \texttt{SGCN} as
\begin{equation}
    \sum_{\ell=1}^L \mathbb{E}[\|\widetilde{\mathbf{G}}^{(\ell)} - \mathbb{E}[\widetilde{\mathbf{G}}^{(\ell)}] \|_{\mathrm{F}}^2]
    \leq \sum_{\ell=1}^L \mathcal{O}(\mathbb{E}[\| \widetilde{\mathbf{L}}^{(\ell)} - \mathbf{P}^{(\ell)} \|_{\mathrm{F}}^2]) + \mathcal{O}(\mathbb{E}[ \|\mathbf{P}^{(\ell)} - \mathbf{L}\|_{\mathrm{F}}^2] )
\end{equation}
\end{lemma}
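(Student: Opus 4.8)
The plan is to exploit the identity $\mathbb{E}[\widetilde{\mathbf{G}}^{(\ell)}] = \bar{\mathbf{G}}^{(\ell)}$ established above, so that the variance equals exactly $\mathbb{E}[\|\widetilde{\mathbf{G}}^{(\ell)} - \bar{\mathbf{G}}^{(\ell)}\|_\mathrm{F}^2]$, and to bound the pathwise deviation $\|\widetilde{\mathbf{G}}^{(\ell)} - \bar{\mathbf{G}}^{(\ell)}\|_\mathrm{F}$ by a sum of Laplacian perturbation norms before taking expectation. The key observation is that $\widetilde{\mathbf{G}}^{(\ell)}$ and $\bar{\mathbf{G}}^{(\ell)}$ are built from the same recursive forward and backward maps but with $\widetilde{\mathbf{L}}^{(\ell)}$ replaced by $\mathbf{P}^{(\ell)}$ inside the nonlinearity and by $\mathbf{L}$ in the outer aggregation. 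Every discrepancy can therefore be traced back to the elementary perturbations $\widetilde{\mathbf{L}}^{(\ell)} - \mathbf{P}^{(\ell)}$ and $\mathbf{P}^{(\ell)} - \mathbf{L}$.

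First I would control the forward-propagation error $\|\widetilde{\mathbf{H}}^{(\ell)} - \bar{\mathbf{H}}^{(\ell)}\|_\mathrm{F}$ by a discrete Gr\"onwall-type recursion. Writing $\widetilde{\mathbf{H}}^{(\ell)} = \sigma(\widetilde{\mathbf{L}}^{(\ell)} \widetilde{\mathbf{H}}^{(\ell-1)} \mathbf{W}^{(\ell)})$ and $\bar{\mathbf{H}}^{(\ell)} = \sigma(\mathbf{P}^{(\ell)} \bar{\mathbf{H}}^{(\ell-1)} \mathbf{W}^{(\ell)})$, I insert and subtract the mixed term $\sigma(\widetilde{\mathbf{L}}^{(\ell)} \bar{\mathbf{H}}^{(\ell-1)} \mathbf{W}^{(\ell)})$ and invoke the $C_H$-Lipschitz continuity of the forward map together with the norm bounds of Proposition~\ref{proposition:matrix_norm_bound}. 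This yields $\|\widetilde{\mathbf{H}}^{(\ell)} - \bar{\mathbf{H}}^{(\ell)}\|_\mathrm{F} \leq C_H \|\widetilde{\mathbf{H}}^{(\ell-1)} - \bar{\mathbf{H}}^{(\ell-1)}\|_\mathrm{F} + \mathcal{O}(\|\widetilde{\mathbf{L}}^{(\ell)} - \mathbf{P}^{(\ell)}\|_\mathrm{F})$, and unrolling the recursion gives $\|\widetilde{\mathbf{H}}^{(\ell)} - \bar{\mathbf{H}}^{(\ell)}\|_\mathrm{F} \leq \sum_{j=1}^{\ell} \mathcal{O}(\|\widetilde{\mathbf{L}}^{(j)} - \mathbf{P}^{(j)}\|_\mathrm{F})$. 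Note that only the $\widetilde{\mathbf{L}}^{(\ell)} - \mathbf{P}^{(\ell)}$ perturbation appears here, since both $\widetilde{\mathbf{H}}$ and $\bar{\mathbf{H}}$ use the interior propagation matrix and there is no $\mathbf{L}$ in the forward pass.

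Next I would run the analogous argument backward for the gradient-passing error $\|\widetilde{\mathbf{D}}^{(\ell)} - \bar{\mathbf{D}}^{(\ell)}\|_\mathrm{F}$, using the $L_H$-Lipschitz continuity of $\nabla_H \widetilde{f}^{(\ell)}$ in all three of its arguments and the forward bound just derived. Here the subtlety is that $\widetilde{\mathbf{D}}^{(\ell)}$ contracts with $[\widetilde{\mathbf{L}}^{(\ell)}]^\top$ whereas $\bar{\mathbf{D}}^{(\ell)}$ contracts with $\mathbf{L}^\top$, so the outer-matrix discrepancy is $\widetilde{\mathbf{L}}^{(\ell)} - \mathbf{L}$, which I split via the triangle inequality as $(\widetilde{\mathbf{L}}^{(\ell)} - \mathbf{P}^{(\ell)}) + (\mathbf{P}^{(\ell)} - \mathbf{L})$; this is exactly where the second term $\|\mathbf{P}^{(\ell)} - \mathbf{L}\|_\mathrm{F}$ enters. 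Unrolling from layer $L+1$ down to $\ell$ and folding in the forward bound, I obtain $\|\widetilde{\mathbf{D}}^{(\ell)} - \bar{\mathbf{D}}^{(\ell)}\|_\mathrm{F} \leq \sum_{j} \mathcal{O}(\|\widetilde{\mathbf{L}}^{(j)} - \mathbf{P}^{(j)}\|_\mathrm{F}) + \mathcal{O}(\|\mathbf{P}^{(j)} - \mathbf{L}\|_\mathrm{F})$.

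Finally I would assemble these into a bound on $\|\widetilde{\mathbf{G}}^{(\ell)} - \bar{\mathbf{G}}^{(\ell)}\|_\mathrm{F}$ via the $L_W$-Lipschitz continuity of $\nabla_W \widetilde{f}^{(\ell)}$, again isolating the outer-aggregation discrepancy $\widetilde{\mathbf{L}}^{(\ell)} - \mathbf{L} = (\widetilde{\mathbf{L}}^{(\ell)} - \mathbf{P}^{(\ell)}) + (\mathbf{P}^{(\ell)} - \mathbf{L})$ and substituting the forward and backward bounds for $\|\widetilde{\mathbf{H}}^{(\ell-1)} - \bar{\mathbf{H}}^{(\ell-1)}\|_\mathrm{F}$ and $\|\widetilde{\mathbf{D}}^{(\ell+1)} - \bar{\mathbf{D}}^{(\ell+1)}\|_\mathrm{F}$. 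Squaring through the inequality $(\sum_{i=1}^m a_i)^2 \leq m \sum_{i=1}^m a_i^2$, summing over $\ell \in [L]$, and taking expectation converts every pathwise $\|\cdot\|_\mathrm{F}$ term into the advertised $\mathbb{E}[\|\widetilde{\mathbf{L}}^{(\ell)} - \mathbf{P}^{(\ell)}\|_\mathrm{F}^2]$ and $\mathbb{E}[\|\mathbf{P}^{(\ell)} - \mathbf{L}\|_\mathrm{F}^2]$ contributions, with all constants absorbed into $\mathcal{O}(\cdot)$. The main obstacle I anticipate is the bookkeeping in the two nested recursions: I must track which reference matrix ($\mathbf{P}^{(\ell)}$ versus $\mathbf{L}$) each occurrence of $\widetilde{\mathbf{L}}^{(\ell)}$ is compared against — since the interior nonlinearity argument is governed by $\mathbf{P}^{(\ell)}$ while the outer aggregation is governed by $\mathbf{L}$ — and ensure the triangle-inequality split is applied consistently so that no cross terms are dropped.
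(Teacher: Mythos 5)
Your proposal is correct and follows essentially the same route as the paper's proof: the paper likewise identifies the variance with $\mathbb{E}[\|\widetilde{\mathbf{G}}^{(\ell)} - \bar{\mathbf{G}}^{(\ell)}\|_{\mathrm{F}}^2]$ and bounds it via a telescoping (hybrid) decomposition into the same three key factors — the last-layer difference $\mathbb{E}[\|\widetilde{\mathbf{D}}^{(L+1)} - \bar{\mathbf{D}}^{(L+1)}\|_{\mathrm{F}}^2]$ and the per-layer $\nabla_H$ and $\nabla_W$ differences — each transported through the network with the Lipschitz constants $L_H$ and $L_W$, so your forward/backward Gr\"onwall recursions are exactly the unrolled form of that telescoping sum. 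In particular, your placement of the two perturbations matches the paper's: $\widetilde{\mathbf{L}}^{(\ell)} - \mathbf{P}^{(\ell)}$ arises from the sampled-versus-propagation comparison in the forward pass and inside $\sigma^\prime$, while $\mathbf{P}^{(\ell)} - \mathbf{L}$ enters through the triangle split $\widetilde{\mathbf{L}}^{(\ell)} - \mathbf{L} = (\widetilde{\mathbf{L}}^{(\ell)} - \mathbf{P}^{(\ell)}) + (\mathbf{P}^{(\ell)} - \mathbf{L})$ applied to the outer aggregation matrix.
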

\begin{proof}
By definition, we can write the variance in \texttt{SGCN} as
\begin{equation}
    \begin{aligned}
    &\mathbb{E}[ \| \widetilde{\mathbf{G}}^{(\ell)} - \mathbb{E}[\widetilde{\mathbf{G}}^{(\ell)}] \|_{\mathrm{F}}^2 ] \\
    &= \mathbb{E}[ \| \nabla_W \widetilde{f}^{(\ell)}( \nabla_H \widetilde{f}^{(\ell+1)}( \ldots \nabla_H \widetilde{f}^{(L)}( \widetilde{\mathbf{D}}^{(L+1)}, \widetilde{\mathbf{H}}^{(L-1)} \mathbf{W}^{(L)}) \ldots, \widetilde{\mathbf{H}}^{(\ell)}, \mathbf{W}^{(\ell+1)}) , \widetilde{\mathbf{H}}^{(\ell-1)}, \mathbf{W}^{(\ell)}) \\
    &\quad - \nabla_W \bar{f}^{(\ell)}( \nabla_H \bar{f}^{(\ell+1)}( \ldots \nabla_H \bar{f}^{(L)}( \bar{\mathbf{D}}^{(L+1)}, \bar{\mathbf{H}}^{(L-1)}, \mathbf{W}^{(L)}) \ldots, \bar{\mathbf{H}}^{(\ell)}, \mathbf{W}^{(\ell+1)}) , \bar{\mathbf{H}}^{(\ell-1)}), \mathbf{W}^{(\ell)} \|_{\mathrm{F}}^2 ] \\
    &\leq (L+1) \mathbb{E}[  \| \nabla_W \widetilde{f}^{(\ell)}( \nabla_H \widetilde{f}^{(\ell+1)}( \ldots \nabla_H \widetilde{f}^{(L)}( \widetilde{\mathbf{D}}^{(L+1)}, \widetilde{\mathbf{H}}^{(L-1)} \mathbf{W}^{(L)}) \ldots, \widetilde{\mathbf{H}}^{(\ell)}, \mathbf{W}^{(\ell+1)}) , \widetilde{\mathbf{H}}^{(\ell-1)}, \mathbf{W}^{(\ell)}) \\
    &\qquad - \nabla_W \widetilde{f}^{(\ell)}( \nabla_H \widetilde{f}^{(\ell+1)}( \ldots \nabla_H \widetilde{f}^{(L)}( \bar{\mathbf{D}}^{(L+1)}, \widetilde{\mathbf{H}}^{(L-1)} \mathbf{W}^{(L)}) \ldots, \bar{\mathbf{H}}^{(\ell)}, \mathbf{W}^{(\ell+1)}) , \bar{\mathbf{H}}^{(\ell-1)}, \mathbf{W}^{(\ell)}) \|_{\mathrm{F}}^2 ]\\
    &\quad + (L+1) \mathbb{E}[ \| \nabla_W \widetilde{f}^{(\ell)}( \nabla_H \widetilde{f}^{(\ell+1)}( \ldots \nabla_H \widetilde{f}^{(L)}( \bar{\mathbf{D}}^{(L+1)}, \widetilde{\mathbf{H}}^{(L-1)} \mathbf{W}^{(L)}) \ldots, \widetilde{\mathbf{H}}^{(\ell)}, \mathbf{W}^{(\ell+1)}) , \widetilde{\mathbf{H}}^{(\ell-1)}, \mathbf{W}^{(\ell)}) \\
    &\qquad - \nabla_W \widetilde{f}^{(\ell)}( \nabla_H \widetilde{f}^{(\ell+1)}( \ldots \nabla_H \bar{f}^{(L)}( \bar{\mathbf{D}}^{(L+1)}, \bar{\mathbf{H}}^{(L-1)} \mathbf{W}^{(L)}) \ldots, \widetilde{\mathbf{H}}^{(\ell)}, \mathbf{W}^{(\ell+1)}) , \widetilde{\mathbf{H}}^{(\ell-1)}, \mathbf{W}^{(\ell)}) \|_{\mathrm{F}}^2 ] + \ldots \\
    &\quad + (L+1) \mathbb{E}[ \| \nabla_W \widetilde{f}^{(\ell)}( \nabla_H \widetilde{f}^{(\ell+1)}( \bar{\mathbf{D}}^{(\ell+2)}, \widetilde{\mathbf{H}}^{(\ell)}, \mathbf{W}^{(\ell+1)}) , \widetilde{\mathbf{H}}^{(\ell-1)}, \mathbf{W}^{(\ell)})  \\
    &\qquad - \nabla_W \widetilde{f}^{(\ell)}( \nabla_H \bar{f}^{(\ell+1)}( \bar{\mathbf{D}}^{(\ell+2)}, \bar{\mathbf{H}}^{(\ell)}, \mathbf{W}^{(\ell+1)}) , \widetilde{\mathbf{H}}^{(\ell-1)}, \mathbf{W}^{(\ell)}) \|_{\mathrm{F}}^2 ] \\
    &\quad + (L+1) \mathbb{E}[ \| \nabla_W \widetilde{f}^{(\ell)}( \bar{\mathbf{D}}^{(\ell+1)}, \widetilde{\mathbf{H}}^{(\ell-1)}, \mathbf{W}^{(\ell)}) - \nabla_W \bar{f}^{(\ell)}( \bar{\mathbf{D}}^{(\ell+1)}, \bar{\mathbf{H}}^{(\ell-1)}, \mathbf{W}^{(\ell)}) \|_{\mathrm{F}}^2 ] \\
    &\leq (L+1) L_W^2 L_H^{2(L-\ell-1)} \mathbb{E}[ \|\widetilde{\mathbf{D}}^{(L+1)} - \bar{\mathbf{D}}^{(L+1)} \|_{\mathrm{F}}^2 ] \\
    &\qquad + (L+1) L_W^2 L_H^{2(L-\ell-2)} \mathbb{E}[ \| \nabla_H \widetilde{f}^{(L)}( \bar{\mathbf{D}}^{(L+1)}, \widetilde{\mathbf{H}}^{(L-1)} \mathbf{W}^{(L)}) - \nabla_H \bar{f}^{(L)}( \bar{\mathbf{D}}^{(L+1)}, \bar{\mathbf{H}}^{(L-1)} \mathbf{W}^{(L)}) \|_{\mathrm{F}}^2 ] + \ldots \\
    &\qquad + (L+1) L_W^2 \mathbb{E}[ \| \nabla_H \widetilde{f}^{(\ell+1)}( \bar{\mathbf{D}}^{(\ell+2)}, \widetilde{\mathbf{H}}^{(\ell)}, \mathbf{W}^{(\ell+1)}) - \nabla_H \bar{f}^{(\ell+1)}( \bar{\mathbf{D}}^{(\ell+2)}, \bar{\mathbf{H}}^{(\ell)}, \mathbf{W}^{(\ell+1)})\|_{\mathrm{F}}^2 ] \\
    &\qquad + (L+1) \mathbb{E}[ \| \nabla_W \widetilde{f}^{(\ell)}( \bar{\mathbf{D}}^{(\ell+1)}, \widetilde{\mathbf{H}}^{(\ell-1)}, \mathbf{W}^{(\ell)}) - \nabla_W \bar{f}^{(\ell)}( \bar{\mathbf{D}}^{(\ell+1)}, \bar{\mathbf{H}}^{(\ell-1)}, \mathbf{W}^{(\ell)}) \|_{\mathrm{F}}^2 ] .
    \end{aligned}
\end{equation}

From the previous equation, we know that there are three key factors that will affect the variance:
\begin{itemize}
    \item The difference of gradient with respect to the last layer node representations 
    \begin{equation}\label{eq:sgcn_vars_important_eq1}
        \mathbb{E}[ \|\widetilde{\mathbf{D}}^{(L+1)} - \bar{\mathbf{D}}^{(L+1)} \|_{\mathrm{F}}^2 ].
    \end{equation}
    \item The difference of gradient with respect to the input node embedding matrix at each graph convolutional layer 
    \begin{equation}\label{eq:sgcn_vars_important_eq2}
        \mathbb{E}[ \| \nabla_H \widetilde{f}^{(\ell+1)}( \bar{\mathbf{D}}^{(\ell+2)}, \widetilde{\mathbf{H}}^{(\ell)}, \mathbf{W}^{(\ell+1)}) - \nabla_H \bar{f}^{(\ell+1)}( \bar{\mathbf{D}}^{(\ell+2)}, \bar{\mathbf{H}}^{(\ell)}, \mathbf{W}^{(\ell+1)})\|_{\mathrm{F}}^2 ].
    \end{equation}
    \item The difference of gradient with respect to the weight matrix at each graph convolutional layer 
    \begin{equation}\label{eq:sgcn_vars_important_eq3}
        \mathbb{E}[ \| \nabla_W \widetilde{f}^{(\ell)}( \bar{\mathbf{D}}^{(\ell+1)}, \widetilde{\mathbf{H}}^{(\ell-1)}, \mathbf{W}^{(\ell)}) - \nabla_W \bar{f}^{(\ell)}( \bar{\mathbf{D}}^{(\ell+1)}, \bar{\mathbf{H}}^{(\ell-1)}, \mathbf{W}^{(\ell)}) \|_{\mathrm{F}}^2 ].
    \end{equation}
\end{itemize}

First, let consider the upper-bound of Eq.~\ref{eq:sgcn_vars_important_eq1}.
\begin{equation}
    \begin{aligned}
    \mathbb{E} [ \| \widetilde{\mathbf{D}}^{(L+1)} - \bar{\mathbf{D}}^{(L+1)} \|_{\mathrm{F}}^2 ] &= \mathbb{E} \left[ \left\| \frac{\partial \text{Loss}(\widetilde{\mathbf{H}}^{(L)}, \mathbf{y}) }{\partial \widetilde{\mathbf{H}}^{(L)}} - \frac{\partial \text{Loss}(\bar{\mathbf{H}}^{(L)}, \mathbf{y}) }{\partial \bar{\mathbf{H}}^{(L)}} \right\|_{\mathrm{F}}^2 \right] \\
    &\leq L_{loss}^2 \mathbb{E}[ \| \widetilde{\mathbf{H}}^{(L)} -\bar{\mathbf{H}}^{(L)} \|_{\mathrm{F}}^2 ] \\
    &\leq L_{loss}^2 \mathbb{E}[ \| \sigma(\widetilde{\mathbf{L}}^{(L)} \widetilde{\mathbf{H}}^{(L-1)} \mathbf{W}^{(L)}) - \sigma(\mathbf{P}^{(\ell)}  \bar{\mathbf{H}}^{(L-1)} \mathbf{W}^{(L)}) \|_{\mathrm{F}}^2 ] \\
    &\leq L_{loss}^2 C_\sigma^2 B_W^2 \mathbb{E}[ \| \widetilde{\mathbf{L}}^{(L)} \widetilde{\mathbf{H}}^{(L-1)} - \mathbf{P}^{(L)}  \bar{\mathbf{H}}^{(L-1)}\|_{\mathrm{F}}^2 ] \\
    &\leq  L_{loss}^2 C_\sigma^2 B_W^2 B_H^2 \mathbb{E}[ \| \widetilde{\mathbf{L}}^{(L)} - \mathbf{P}^{(L)} \|_{\mathrm{F}}^2 ].
    \end{aligned}
\end{equation}

Then, let consider the upper-bound of Eq.~\ref{eq:sgcn_vars_important_eq2}.
\begin{equation}
    \begin{aligned}
    &\mathbb{E}[\| \nabla_H \widetilde{f}^{(\ell)}(\bar{\mathbf{D}}^{(\ell+1)}, \widetilde{\mathbf{H}}^{(\ell-1)}, \mathbf{W}^{(\ell)}) - \nabla_H \bar{f}^{(\ell)}(\bar{\mathbf{D}}^{(\ell+1)}, \bar{\mathbf{H}}^{(\ell-1)}, \mathbf{W}^{(\ell)})] \|_{\mathrm{F}}^2] \\
    &=\mathbb{E}[\| [\widetilde{\mathbf{L}}^{(\ell)} ]^\top \Big(\bar{\mathbf{D}}^{(\ell+1)} \circ \sigma^\prime(\widetilde{\mathbf{L}}^{(\ell)} \widetilde{\mathbf{H}}^{(\ell-1)} \mathbf{W}^{(\ell)}) \Big) [\mathbf{W}^{(\ell)}]^\top - [\mathbf{L} ]^\top \Big(\bar{\mathbf{D}}^{(\ell+1)} \circ \sigma^\prime(\mathbf{P}^{(\ell)} \bar{\mathbf{H}}^{(\ell-1)} \mathbf{W}^{(\ell)} ) \Big) [\mathbf{W}^{(\ell)}]^\top \|_{\mathrm{F}}^2] \\
    &\leq 2\mathbb{E}[\| [\widetilde{\mathbf{L}}^{(\ell)} ]^\top \Big(\bar{\mathbf{D}}^{(\ell+1)} \circ \sigma^\prime(\widetilde{\mathbf{L}}^{(\ell)} \widetilde{\mathbf{H}}^{(\ell-1)} \mathbf{W}^{(\ell)}) \Big) [\mathbf{W}^{(\ell)}]^\top - [\widetilde{\mathbf{L}}^{(\ell)} ]^\top \Big(\bar{\mathbf{D}}^{(\ell+1)} \circ \sigma^\prime(\mathbf{P}^{(\ell)} \bar{\mathbf{H}}^{(\ell-1)} \mathbf{W}^{(\ell)} ) \Big) [\mathbf{W}^{(\ell)}]^\top \|_{\mathrm{F}}^2] \\
    &\quad + 2\mathbb{E}[\| [\widetilde{\mathbf{L}}^{(\ell)} ]^\top \Big(\bar{\mathbf{D}}^{(\ell+1)} \circ \sigma^\prime(\mathbf{P}^{(\ell)} \bar{\mathbf{H}}^{(\ell-1)} \mathbf{W}^{(\ell)} ) \Big) [\mathbf{W}^{(\ell)}]^\top - [\mathbf{L} ]^\top \Big(\bar{\mathbf{D}}^{(\ell+1)} \circ \sigma^\prime(\mathbf{P}^{(\ell)} \bar{\mathbf{H}}^{(\ell-1)} \mathbf{W}^{(\ell)} ) \Big) [\mathbf{W}^{(\ell)}]^\top \|_{\mathrm{F}}^2] \\
    &\leq 2B_{LA}^2 B_D^2 B_W^4 L_\sigma^2 \mathbb{E}[\| \widetilde{\mathbf{L}}^{(\ell)} \widetilde{\mathbf{H}}^{(\ell-1)} - \mathbf{P}^{(\ell)} \bar{\mathbf{H}}^{(\ell-1)} \|_{\mathrm{F}}^2] + 2B_D^2 C_\sigma^2 B_W^2 \mathbb{E}[\| \widetilde{\mathbf{L}}^{(\ell)} - \mathbf{L} \|_{\mathrm{F}}^2] \\
    &\leq 2B_{LA}^2 B_D^2 B_H^2 B_W^4 L_\sigma^2 \mathbb{E}[\| \widetilde{\mathbf{L}}^{(\ell)} - \mathbf{P}^{(\ell)}  \|_{\mathrm{F}}^2] + 2B_D^2 C_\sigma^2 B_W^2 \mathbb{E}[\| \widetilde{\mathbf{L}}^{(\ell)} - \mathbf{P}^{(\ell)} + \mathbf{P}^{(\ell)} - \mathbf{L} \|_{\mathrm{F}}^2] \\
    &\leq 2\Big( B_{LA}^2 B_D^2 B_H^2 B_W^4 L_\sigma^2 + 2B_D^2 C_\sigma^2 B_W^2 \Big) \mathbb{E}[\| \widetilde{\mathbf{L}}^{(\ell)} - \mathbf{P}^{(\ell)}  \|_{\mathrm{F}}^2] + 4B_D^2 C_\sigma^2 B_W^2 \mathbb{E}[\| \mathbf{P}^{(\ell)} - \mathbf{L} \|_{\mathrm{F}}^2] \\ 
    &\leq \mathcal{O}(\mathbb{E}[\| \widetilde{\mathbf{L}}^{(\ell)} - \mathbf{P}^{(\ell)}  \|_{\mathrm{F}}^2]) + \mathcal{O}(\mathbb{E}[\| \mathbf{P}^{(\ell)} - \mathbf{L} \|_{\mathrm{F}}^2]).
    \end{aligned}
\end{equation}

Finally, let consider the upper-bound of Eq.~\ref{eq:sgcn_vars_important_eq3}.
\begin{equation}
    \begin{aligned}
    &\mathbb{E}[\| \nabla_W \widetilde{f}^{(\ell)}(\bar{\mathbf{D}}^{(\ell+1)}, \widetilde{\mathbf{H}}^{(\ell-1)}, \mathbf{W}^{(\ell)}) - \nabla_W \bar{f}^{(\ell)}(\bar{\mathbf{D}}^{(\ell+1)}, \bar{\mathbf{H}}^{(\ell-1)}, \mathbf{W}^{(\ell)}) \|_{\mathrm{F}}^2] \\
    &\leq \mathbb{E}[\| [\widetilde{\mathbf{L}}^{(\ell)} \widetilde{\mathbf{H}}^{(\ell-1)}]^\top \Big(\bar{\mathbf{D}}^{(\ell+1)} \circ \sigma^\prime(\widetilde{\mathbf{L}}^{(\ell)} \widetilde{\mathbf{H}}^{(\ell-1)} \mathbf{W}^{(\ell)} ) \Big) - [\mathbf{L} \bar{\mathbf{H}}^{(\ell-1)}]^\top \Big(\bar{\mathbf{D}}^{(\ell+1)} \circ \sigma^\prime(\mathbf{P}^{(\ell)} \bar{\mathbf{H}}^{(\ell-1)} \mathbf{W}^{(\ell)} ) \Big) \|_{\mathrm{F}}^2] \\
    &\leq 2\mathbb{E}[\| [\widetilde{\mathbf{L}}^{(\ell)} \widetilde{\mathbf{H}}^{(\ell-1)}]^\top \Big(\bar{\mathbf{D}}^{(\ell+1)} \circ \sigma^\prime(\widetilde{\mathbf{L}}^{(\ell)} \widetilde{\mathbf{H}}^{(\ell-1)} \mathbf{W}^{(\ell)} ) \Big) - [\widetilde{\mathbf{L}}^{(\ell)} \widetilde{\mathbf{H}}^{(\ell-1)}]^\top \Big(\bar{\mathbf{D}}^{(\ell+1)} \circ \sigma^\prime(\mathbf{P}^{(\ell)} \bar{\mathbf{H}}^{(\ell-1)} \mathbf{W}^{(\ell)} ) \Big) \|_{\mathrm{F}}^2] \\
    &\quad + 2\mathbb{E}[\| [\widetilde{\mathbf{L}}^{(\ell)} \widetilde{\mathbf{H}}^{(\ell-1)}]^\top \Big(\bar{\mathbf{D}}^{(\ell+1)} \circ \sigma^\prime(\mathbf{P}^{(\ell)} \bar{\mathbf{H}}^{(\ell-1)} \mathbf{W}^{(\ell)} ) \Big) - [\mathbf{L} \bar{\mathbf{H}}^{(\ell-1)}]^\top \Big(\bar{\mathbf{D}}^{(\ell+1)} \circ \sigma^\prime(\mathbf{P}^{(\ell)} \bar{\mathbf{H}}^{(\ell-1)} \mathbf{W}^{(\ell)} ) \Big)\|_{\mathrm{F}}^2] \\
    &\leq 2B_{LA}^2 B_H^2 B_D^2 B_W^2 L_\sigma^2 \mathbb{E}[\| \widetilde{\mathbf{L}}^{(\ell)} \widetilde{\mathbf{H}}^{(\ell-1)} - \mathbf{P}^{(\ell)} \bar{\mathbf{H}}^{(\ell-1)} \|_{\mathrm{F}}^2] + 2B_D^2 C_\sigma^2 \mathbb{E}[\|\widetilde{\mathbf{L}}^{(\ell)} \widetilde{\mathbf{H}}^{(\ell-1)} - \mathbf{L}\bar{\mathbf{H}}^{(\ell-1)} \|_{\mathrm{F}}^2] \\
    &\leq 2\Big( B_{LA}^2 B_H^2 B_D^2 B_W^2 L_\sigma^2 + B_D^2 C_\sigma^2 \Big) \mathbb{E}[\| \widetilde{\mathbf{L}}^{(\ell)} \widetilde{\mathbf{H}}^{(\ell-1)} - \mathbf{P}^{(\ell)} \bar{\mathbf{H}}^{(\ell-1)} \|_{\mathrm{F}}^2] \\
    &\qquad + 2B_D^2 B_H^2 C_\sigma^2 \mathbb{E}[\| \mathbf{P}^{(\ell)} - \mathbf{L} \|_{\mathrm{F}}^2] \\
    &\leq 2\Big( B_{LA}^2 B_H^4 B_D^2 B_W^2 L_\sigma^2 + B_H^2 B_D^2 C_\sigma^2 \Big) \mathbb{E}[\| \widetilde{\mathbf{L}}^{(\ell)} - \mathbf{P}^{(\ell)} \|_{\mathrm{F}}^2] \\
    &\qquad + 2B_D^2 B_H^2 C_\sigma^2 \mathbb{E}[\| \mathbf{P}^{(\ell)} - \mathbf{L} \|_{\mathrm{F}}^2] \\
    &\leq \mathcal{O}(\mathbb{E}[\| \widetilde{\mathbf{L}}^{(\ell)} - \mathbf{P}^{(\ell)} \|_{\mathrm{F}}^2]) + \mathcal{O}(\mathbb{E}[\| \mathbf{P}^{(\ell)} - \mathbf{L} \|_{\mathrm{F}}^2]).
    \end{aligned}
\end{equation}

Combining the result from Eq.~\ref{eq:sgcn_vars_important_eq1}, \ref{eq:sgcn_vars_important_eq2}, \ref{eq:sgcn_vars_important_eq3} we have
\begin{equation}
    \begin{aligned}
    \mathbb{E}[ \| \widetilde{\mathbf{G}}^{(\ell)} - \mathbb{E}[\widetilde{\mathbf{G}}^{(\ell)}] \|_{\mathrm{F}}^2 ] 
    &\leq  \mathcal{O}(\mathbb{E}[\| \widetilde{\mathbf{L}}^{(\ell)} - \mathbf{P}^{(\ell)} \|_{\mathrm{F}}^2]) + \ldots + \mathcal{O}(\mathbb{E}[\| \widetilde{\mathbf{L}}^{(L)} - \mathbf{P}^{(L)} \|_{\mathrm{F}}^2]) \\
    &\qquad + \mathcal{O}(\mathbb{E}[\| \mathbf{P}^{(\ell)} - \mathbf{L} \|_{\mathrm{F}}^2 ]) +\ldots + \mathcal{O}(\mathbb{E}[\| \mathbf{P}^{(L)} - \mathbf{L} \|_{\mathrm{F}}^2 ] ).
    \end{aligned}
\end{equation}
\end{proof}

\begin{lemma} [Upper-bound on bias] \label{lemma:upper-bound-sgcn-bias}
We can upper-bound the bias of stochastic gradient in \texttt{SGCN} as
\begin{equation}
    \sum_{\ell=1}^L \mathbb{E}[\|\mathbb{E}[\widetilde{\mathbf{G}}^{(\ell)}] - \mathbf{G}^{(\ell)} \|_{\mathrm{F}}^2] 
    \leq \sum_{\ell=1}^L \mathcal{O}(\|\mathbf{P}^{(\ell)} - \mathbf{L}\|_{\mathrm{F}}^2) .
\end{equation}
\end{lemma}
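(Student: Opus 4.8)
The plan is to bound the bias by comparing $\mathbb{E}[\widetilde{\mathbf{G}}^{(\ell)}] = \bar{\mathbf{G}}^{(\ell)}$ against the full-batch gradient $\mathbf{G}^{(\ell)}$, mirroring the decomposition used in Lemma~\ref{lemma:upper-bound-sgcn-vars} but now tracking the discrepancy $\mathbf{P}^{(\ell)} - \mathbf{L}$ rather than $\widetilde{\mathbf{L}}^{(\ell)} - \mathbf{P}^{(\ell)}$. The crucial structural observation is that in the expressions defining $\bar{\mathbf{G}}^{(\ell)}$ and $\bar{\mathbf{D}}^{(\ell)}$, the \emph{outer} Laplacian factors $[\mathbf{L}]^\top$ and $[\mathbf{L}\,\bar{\mathbf{H}}^{(\ell-1)}]^\top$ are identical to those appearing in $\mathbf{G}^{(\ell)}$ and $\mathbf{D}^{(\ell)}$; only the forward-propagated embeddings ($\bar{\mathbf{H}}$ versus $\mathbf{H}$) and the arguments $\mathbf{P}^{(\ell)}\bar{\mathbf{H}}^{(\ell-1)}$ versus $\mathbf{L}\mathbf{H}^{(\ell-1)}$ inside $\sigma'$ differ. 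Hence every elementary difference reduces to a factor of $\|\mathbf{P}^{(\ell)} - \mathbf{L}\|_{\mathrm{F}}$ (up to bounded constants), with no $\|\widetilde{\mathbf{L}}^{(\ell)} - \mathbf{P}^{(\ell)}\|_{\mathrm{F}}$ term surviving.

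First I would write both $\bar{\mathbf{G}}^{(\ell)}$ and $\mathbf{G}^{(\ell)}$ as nested compositions of the layerwise maps $\nabla_W\bar{f}^{(\cdot)}, \nabla_H\bar{f}^{(\cdot)}$ and $\nabla_W f^{(\cdot)}, \nabla_H f^{(\cdot)}$ respectively (as in Eq.~\ref{eq:G_bar_SGCN_plus}), then telescope the difference by swapping one layer's quantities from the ``bar'' version to the full-batch version at a time. Applying the triangle inequality together with the Lipschitz continuity of $\nabla_H f^{(\ell)}$ and $\nabla_W f^{(\ell)}$ (the propositions bounding $L_H$, $L_W$) collapses the bias of $\mathbf{G}^{(\ell)}$ into three families of terms, exactly as in the variance proof: the discrepancy of the top-layer loss gradient $\|\bar{\mathbf{D}}^{(L+1)} - \mathbf{D}^{(L+1)}\|_{\mathrm{F}}$, the per-layer discrepancy of $\nabla_H \bar{f}^{(\cdot)}$ versus $\nabla_H f^{(\cdot)}$ evaluated at the common downstream gradient, and the discrepancy of $\nabla_W\bar{f}^{(\ell)}$ versus $\nabla_W f^{(\ell)}$.

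Next I would bound each family. For the loss-gradient term, the $L_{\text{loss}}$-smoothness of the loss reduces it to $\|\bar{\mathbf{H}}^{(L)} - \mathbf{H}^{(L)}\|_{\mathrm{F}}$. For the per-layer $\nabla_H$ and $\nabla_W$ differences, I would split each into a piece where only the $\sigma'$-argument changes from $\mathbf{P}^{(\ell)}\bar{\mathbf{H}}^{(\ell-1)}$ to $\mathbf{L}\mathbf{H}^{(\ell-1)}$, controlled via the $L_\sigma$-smoothness of $\sigma'$ with the bounded norms from Proposition~\ref{proposition:matrix_norm_bound}, and a piece where the outer $\bar{\mathbf{H}}^{(\ell-1)}$ factor changes to $\mathbf{H}^{(\ell-1)}$. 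Each piece then decomposes into a term proportional to $\|\mathbf{P}^{(\ell)} - \mathbf{L}\|_{\mathrm{F}}$ and one proportional to $\|\bar{\mathbf{H}}^{(\ell-1)} - \mathbf{H}^{(\ell-1)}\|_{\mathrm{F}}$, by inserting $\mathbf{L}\bar{\mathbf{H}}^{(\ell-1)}$ and using $\|\mathbf{P}^{(\ell)}\bar{\mathbf{H}}^{(\ell-1)} - \mathbf{L}\mathbf{H}^{(\ell-1)}\|_{\mathrm{F}} \le B_H\|\mathbf{P}^{(\ell)} - \mathbf{L}\|_{\mathrm{F}} + B_{LA}\|\bar{\mathbf{H}}^{(\ell-1)} - \mathbf{H}^{(\ell-1)}\|_{\mathrm{F}}$.

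The remaining, and main, step is the forward-propagation error bound $\|\bar{\mathbf{H}}^{(\ell)} - \mathbf{H}^{(\ell)}\|_{\mathrm{F}} \le \sum_{j=1}^{\ell} \mathcal{O}(\|\mathbf{P}^{(j)} - \mathbf{L}\|_{\mathrm{F}})$, established by recursion on $\ell$. Writing $\bar{\mathbf{H}}^{(\ell)} = \sigma(\mathbf{P}^{(\ell)}\bar{\mathbf{H}}^{(\ell-1)}\mathbf{W}^{(\ell)})$ and $\mathbf{H}^{(\ell)} = \sigma(\mathbf{L}\mathbf{H}^{(\ell-1)}\mathbf{W}^{(\ell)})$, the $C_\sigma$-Lipschitzness of $\sigma$ and the split $\mathbf{P}^{(\ell)}\bar{\mathbf{H}}^{(\ell-1)} - \mathbf{L}\mathbf{H}^{(\ell-1)} = (\mathbf{P}^{(\ell)} - \mathbf{L})\bar{\mathbf{H}}^{(\ell-1)} + \mathbf{L}(\bar{\mathbf{H}}^{(\ell-1)} - \mathbf{H}^{(\ell-1)})$ yield the recursion $\|\bar{\mathbf{H}}^{(\ell)} - \mathbf{H}^{(\ell)}\|_{\mathrm{F}} \le C_\sigma B_W B_H \|\mathbf{P}^{(\ell)} - \mathbf{L}\|_{\mathrm{F}} + C_\sigma B_{LA} B_W \|\bar{\mathbf{H}}^{(\ell-1)} - \mathbf{H}^{(\ell-1)}\|_{\mathrm{F}}$, which unrolls to the claimed sum. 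I expect this recursive forward-error accumulation to be the technical crux: one must verify that the per-layer constants $C_\sigma B_{LA} B_W$ raised to the depth stay absorbed in the $\mathcal{O}(\cdot)$ notation and that the squared-norm version follows by Cauchy--Schwarz (turning the sum of $L$ terms into a factor of $L$), after which summing over $\ell \in [L]$ gives the stated bound.
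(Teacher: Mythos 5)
Your proposal is correct and follows essentially the same route as the paper's proof: the identical telescoping layer-swap decomposition of $\mathbb{E}[\|\bar{\mathbf{G}}^{(\ell)} - \mathbf{G}^{(\ell)}\|_{\mathrm{F}}^2]$ into the three families (loss-gradient discrepancy $\|\bar{\mathbf{D}}^{(L+1)} - \mathbf{D}^{(L+1)}\|_{\mathrm{F}}$, per-layer $\nabla_H$ differences, and per-layer $\nabla_W$ differences), the same split $\mathbf{P}^{(\ell)}\bar{\mathbf{H}}^{(\ell-1)} - \mathbf{L}\mathbf{H}^{(\ell-1)} = (\mathbf{P}^{(\ell)} - \mathbf{L})\bar{\mathbf{H}}^{(\ell-1)} + \mathbf{L}(\bar{\mathbf{H}}^{(\ell-1)} - \mathbf{H}^{(\ell-1)})$, and the same forward recursion unrolling $\|\bar{\mathbf{H}}^{(\ell)} - \mathbf{H}^{(\ell)}\|_{\mathrm{F}}^2$ into $\sum_{j\le\ell}\mathcal{O}(\mathbb{E}[\|\mathbf{P}^{(j)} - \mathbf{L}\|_{\mathrm{F}}^2])$. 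Your structural observation that the outer factors of $\bar{\mathbf{G}}^{(\ell)}$, $\bar{\mathbf{D}}^{(\ell)}$ involve $\mathbf{L}$ itself (so no $\|\widetilde{\mathbf{L}}^{(\ell)} - \mathbf{P}^{(\ell)}\|_{\mathrm{F}}$ term survives) is exactly what the paper's definitions of $\bar{f}^{(\ell)}$ encode.
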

\begin{proof}
By definition, we can write the bias of stochastic gradient in \texttt{SGCN} as
\begin{equation}\label{eq:decompose_bias_into_layers}
    \begin{aligned}
    &\mathbb{E}[ \| \mathbb{E}[\widetilde{\mathbf{G}}^{(\ell)}] - \mathbf{G}^{(\ell)}\|_{\mathrm{F}}^2 ]\\
    &\leq \mathbb{E}[ \| \nabla_W \bar{f}^{(\ell)}( \nabla_H \bar{f}^{(\ell+1)}( \ldots \nabla_H \bar{f}^{(L)}( \bar{\mathbf{D}}^{(L+1)}, \bar{\mathbf{H}}^{(L-1)}, \mathbf{W}^{(L)}) \ldots, \bar{\mathbf{H}}^{(\ell)}, \mathbf{W}^{(\ell+1)}) , \bar{\mathbf{H}}^{(\ell-1)}, \mathbf{W}^{(\ell)}) \\
    &\qquad - \nabla_W f^{(\ell)}( \nabla_H f^{(\ell+1)}( \ldots \nabla_H f^{(L)}( \mathbf{D}^{(L+1)}, \mathbf{H}^{(L-1)}, \mathbf{W}^{(L)}) \ldots,  \mathbf{H}^{(\ell)}, \mathbf{W}^{(\ell+1)}) , \mathbf{H}^{(\ell-1)}, \mathbf{W}^{(\ell)}) \|_{\mathrm{F}}^2 ]\\
    &\leq (L+1) \mathbb{E}[ \| \nabla_W \bar{f}^{(\ell)}( \nabla_H \bar{f}^{(\ell+1)}( \ldots \nabla_H \bar{f}^{(L)}( \bar{\mathbf{D}}^{(L+1)}, \bar{\mathbf{H}}^{(L-1)}, \mathbf{W}^{(L)}) \ldots, \bar{\mathbf{H}}^{(\ell)}, \mathbf{W}^{(\ell+1)}) , \bar{\mathbf{H}}^{(\ell-1)}, \mathbf{W}^{(\ell)}) \\
    &\qquad - \nabla_W \bar{f}^{(\ell)}( \nabla_H \bar{f}^{(\ell+1)}( \ldots \nabla_H \bar{f}^{(L)}( \mathbf{D}^{(L+1)}, \bar{\mathbf{H}}^{(L-1)}, \mathbf{W}^{(L)}) \ldots, \bar{\mathbf{H}}^{(\ell)}, \mathbf{W}^{(\ell+1)}) , \bar{\mathbf{H}}^{(\ell-1)}, \mathbf{W}^{(\ell)})  \|_{\mathrm{F}}^2 ]\\
    &\quad + (L+1) \mathbb{E}[ \| \nabla_W \bar{f}^{(\ell)}( \nabla_H \bar{f}^{(\ell+1)}( \ldots \nabla_H \bar{f}^{(L)}( \mathbf{D}^{(L+1)}, \bar{\mathbf{H}}^{(L-1)}, \mathbf{W}^{(L)}) \ldots, \bar{\mathbf{H}}^{(\ell)}, \mathbf{W}^{(\ell+1)}) , \bar{\mathbf{H}}^{(\ell-1)}, \mathbf{W}^{(\ell)}) \\
    &\qquad - \nabla_W \bar{f}^{(\ell)}( \nabla_H \bar{f}^{(\ell+1)}( \ldots \nabla_H f^{(L)}( \mathbf{D}^{(L+1)}, \mathbf{H}^{(L-1)}, \mathbf{W}^{(L)}) \ldots, \bar{\mathbf{H}}^{(\ell)}, \mathbf{W}^{(\ell+1)}) , \bar{\mathbf{H}}^{(\ell-1)}, \mathbf{W}^{(\ell)}) \|_{\mathrm{F}}^2 ] + \ldots \\
    &\quad + (L+1) \mathbb{E}[ \| \nabla_W \bar{f}^{(\ell)}( \nabla_H \bar{f}^{(\ell+1)}( \mathbf{D}^{(\ell+2)}, \bar{\mathbf{H}}^{(\ell)}, \mathbf{W}^{(\ell+1)}) , \bar{\mathbf{H}}^{(\ell-1)}, \mathbf{W}^{(\ell)}) \\
    &\qquad - \nabla_W \bar{f}^{(\ell)}( \nabla_H f^{(\ell+1)}( \mathbf{D}^{(\ell+2)}, \mathbf{H}^{(\ell)}, \mathbf{W}^{(\ell+1)}) , \bar{\mathbf{H}}^{(\ell-1)}, \mathbf{W}^{(\ell)}) \|_{\mathrm{F}}^2 ] \\
    &\quad + (L+1) \mathbb{E}[ \| \nabla_W \bar{f}^{(\ell)}( \mathbf{D}^{(\ell+1)}, \bar{\mathbf{H}}^{(\ell-1)}, \mathbf{W}^{(\ell)}) - \nabla_W f^{(\ell)}( \mathbf{D}^{(\ell+1)}, \mathbf{H}^{(\ell-1)}, \mathbf{W}^{(\ell)}) \|_{\mathrm{F}}^2 ]\\
    &\leq (L+1) L_W^2 L_H^{2(L-\ell-1)} \mathbb{E}[ \| \bar{\mathbf{D}}^{(L+1)} -\mathbf{D}^{(L+1)} \|_{\mathrm{F}}^2 ]\\
    &\quad + (L+1) L_W^2 L_H^{2(L-\ell-2)} \mathbb{E}[ \| \nabla_H \bar{f}^{(L)}( \mathbf{D}^{(L+1)}, \bar{\mathbf{H}}^{(L-1)}, \mathbf{W}^{(L)}) - \nabla_H f^{(L)}( \mathbf{D}^{(L+1)}, \mathbf{H}^{(L-1)}, \mathbf{W}^{(L)})\|_{\mathrm{F}}^2 ] + \ldots \\
    &\quad + (L+1) L_W^2 \mathbb{E}[ \| \nabla_H \bar{f}^{(\ell+1)}( \mathbf{D}^{(\ell+2)}, \bar{\mathbf{H}}^{(\ell)}, \mathbf{W}^{(\ell+1)}) - \nabla_H f^{(\ell+1)}( \mathbf{D}^{(\ell+2)}, \mathbf{H}^{(\ell)}, \mathbf{W}^{(\ell+1)})\|_{\mathrm{F}}^2 ] \\
    &\quad + (L+1) \mathbb{E}[ \| \nabla_W \bar{f}^{(\ell)}( \mathbf{D}^{(\ell+1)}, \bar{\mathbf{H}}^{(\ell-1)}, \mathbf{W}^{(\ell)}) - \nabla_W f^{(\ell)}( \mathbf{D}^{(\ell+1)}, \mathbf{H}^{(\ell-1)}, \mathbf{W}^{(\ell)})\|_{\mathrm{F}}^2 ] .
    \end{aligned}
\end{equation}

From the previous equation, we know that there are three key factors that will affect the bias:
\begin{itemize}
    \item The difference of gradient with respect to the last layer node representations 
    \begin{equation}\label{eq:sgcn_bias_important_eq1}
        \mathbb{E}[ \| \bar{\mathbf{D}}^{(L+1)} -\mathbf{D}^{(L+1)} \|_{\mathrm{F}}^2 ].
    \end{equation}
    \item The difference of gradient with respect to the input node embedding matrix at each graph convolutional layer 
    \begin{equation}\label{eq:sgcn_bias_important_eq2}
        \mathbb{E}[ \| \nabla_H \bar{f}^{(\ell+1)}( \mathbf{D}^{(\ell+2)}, \bar{\mathbf{H}}^{(\ell)}, \mathbf{W}^{(\ell+1)}) - \nabla_H f^{(\ell+1)}( \mathbf{D}^{(\ell+2)}, \mathbf{H}^{(\ell)}, \mathbf{W}^{(\ell+1)})\|_{\mathrm{F}}^2 ].
    \end{equation}
    \item The difference of gradient with respect to the weight matrix at each graph convolutional layer 
    \begin{equation}\label{eq:sgcn_bias_important_eq3}
        \mathbb{E}[ \| \nabla_W \bar{f}^{(\ell)}( \mathbf{D}^{(\ell+1)}, \bar{\mathbf{H}}^{(\ell-1)}, \mathbf{W}^{(\ell)}) - \nabla_W f^{(\ell)}( \mathbf{D}^{(\ell+1)}, \mathbf{H}^{(\ell-1)}, \mathbf{W}^{(\ell)})\|_{\mathrm{F}}^2 ] .
    \end{equation}
\end{itemize}

Firstly, let consider the upper-bound of Eq.~\ref{eq:sgcn_bias_important_eq1}.
\begin{equation}
    \begin{aligned}
    \mathbb{E}[ \| \bar{\mathbf{D}}^{(L+1)} -\mathbf{D}^{(L+1)} \|_{\mathrm{F}}^2 ] &= \mathbb{E}[ \| \frac{\partial \text{Loss}(\bar{\mathbf{H}}^{(L)}, \mathbf{y}) }{\partial \bar{\mathbf{H}}^{(L)}} - \frac{\partial \text{Loss}(\mathbf{H}^{(L)}, \mathbf{y}) }{\partial \mathbf{H}^{(L)}} \|_{\mathrm{F}}^2 ] \\
    &\leq L_{loss}^2 \mathbb{E}[ \| \bar{\mathbf{H}}^{(L)} -\mathbf{H}^{(L)} \|_{\mathrm{F}}^2 ] .
    \end{aligned}
\end{equation}
The upper-bound for $\mathbb{E}[\| \bar{\mathbf{H}}^{(\ell)} - \mathbf{H}^{(\ell)}\|_{\mathrm{F}}^2]$ as
\begin{equation}
    \begin{aligned}
    \mathbb{E}[ \| \bar{\mathbf{H}}^{(\ell)} - \mathbf{H}^{(\ell)} \|_{\mathrm{F}}^2 ]
    &= \mathbb{E}[ \| \sigma( \mathbf{P}^{(\ell)} \bar{\mathbf{H}}^{(\ell-1)} \mathbf{W}^{(\ell)}) - \sigma( \mathbf{L} \mathbf{H}^{(\ell-1)} \mathbf{W}^{(\ell)}) \|_{\mathrm{F}}^2 ] \\
    &\leq C_\sigma^2 B_W^2 \mathbb{E}[ \|\mathbf{P}^{(\ell)} \bar{\mathbf{H}}^{(\ell-1)} - \mathbf{L} \bar{\mathbf{H}}^{(\ell-1)} + \mathbf{L} \bar{\mathbf{H}}^{(\ell-1)} -  \mathbf{L} \mathbf{H}^{(\ell-1)}\|_{\mathrm{F}}^2 ] \\
    &\leq 2 C_\sigma^2 B_W^2 B_H^2 \mathbb{E}[ \| \mathbf{P}^{(\ell)} - \mathbf{L} \|_{\mathrm{F}}^2 ] + 2 C_\sigma^2 B_W^2 B_{LA}^2 \mathbb{E}[ \| \bar{\mathbf{H}}^{(\ell-1)} - \mathbf{H}^{(\ell-1)}  \|_{\mathrm{F}}^2 ] \\
    &\leq \mathcal{O}(\mathbb{E}[\| \mathbf{P}^{(1)} - \mathbf{L} \|_{\mathrm{F}}^2] ) + \ldots + \mathcal{O}(\mathbb{E}[ \| \mathbf{P}^{(\ell)} - \mathbf{L} \|_{\mathrm{F}}^2 ] ).
    \end{aligned}
\end{equation}
Therefore, we have
\begin{equation}
    \mathbb{E}[ \| \bar{\mathbf{D}}^{(L+1)} -\mathbf{D}^{(L+1)} \|_{\mathrm{F}}^2 ]  \leq \mathcal{O}(\mathbb{E}[\| \mathbf{P}^{(1)} - \mathbf{L} \|_{\mathrm{F}}^2 ]) + \ldots + \mathcal{O}( \mathbb{E}[\| \mathbf{P}^{(L)} - \mathbf{L} \|_{\mathrm{F}}^2 ]).
\end{equation}

Then, let consider the upper-bound of Eq.~\ref{eq:sgcn_bias_important_eq2}.
\begin{equation}
    \begin{aligned}
    & \mathbb{E}[ \| \nabla_H \bar{f}^{(\ell)}(\mathbf{D}^{(\ell+1)}, \bar{\mathbf{H}}^{(\ell-1)}, \mathbf{W}^{(\ell)}) - \nabla_H f^{(\ell)}(\mathbf{D}^{(\ell+1)}, \mathbf{H}^{(\ell-1)}, \mathbf{W}^{(\ell)}) \|_{\mathrm{F}}^2 ] \\
    &= \mathbb{E}[ \| [\mathbf{L}]^\top \Big(\mathbf{D}^{(\ell+1)} \circ \sigma^\prime(\mathbf{P}^{(\ell)} \bar{\mathbf{H}}^{(\ell-1)} \mathbf{W}^{(\ell)}) \Big) [\mathbf{W}^{(\ell)}]^\top - [\mathbf{L}]^\top \Big(\mathbf{D}^{(\ell+1)} \circ \sigma^\prime(\mathbf{L} \mathbf{H}^{(\ell-1)} \mathbf{W}^{(\ell)}) \Big) [\mathbf{W}^{(\ell)}]^\top \|_{\mathrm{F}}^2 ] \\
    &\leq B_{LA}^2 B_D^2 B_W^4 L_\sigma^2 \mathbb{E}[ \| \mathbf{P}^{(\ell)} \bar{\mathbf{H}}^{(\ell-1)} - \mathbf{L}\bar{\mathbf{H}}^{(\ell-1)} + \mathbf{L}\bar{\mathbf{H}}^{(\ell-1)} - \mathbf{L} \mathbf{H}^{(\ell-1)} \|_{\mathrm{F}}^2 ] \\
    &\leq 2B_{LA}^2 B_D^2 B_W^4 L_\sigma^2 B_H^2 \mathbb{E}[ \| \mathbf{P}^{(\ell)} - \mathbf{L} \|_{\mathrm{F}}^2 ] + 2B_{LA}^4 B_D^2 B_W^4 L_\sigma^2 \mathbb{E}[ \| \bar{\mathbf{H}}^{(\ell-1)} - \mathbf{H}^{(\ell-1)} \|_{\mathrm{F}}^2 ] \\
    &\leq \mathcal{O}(\mathbb{E}[\| \mathbf{P}^{(1)} - \mathbf{L} \|_{\mathrm{F}}^2] ) + \ldots + \mathcal{O}(\mathbb{E}[ \| \mathbf{P}^{(\ell)} - \mathbf{L} \|_{\mathrm{F}}^2] ).
    \end{aligned}
\end{equation}

Finally, let consider the upper-bound of Eq.~\ref{eq:sgcn_bias_important_eq3}.
\begin{equation}
    \begin{aligned}
    &\mathbb{E}[ \| \nabla_W \bar{f}^{(\ell)}(\mathbf{D}^{(\ell+1)}, \bar{\mathbf{H}}^{(\ell-1)}, \mathbf{W}^{(\ell)}) - \nabla_W f^{(\ell)}(\mathbf{D}^{(\ell+1)}, \mathbf{H}^{(\ell-1)}, \mathbf{W}^{(\ell)}) \|_{\mathrm{F}}^2 ] \\
    &= \mathbb{E}[ \| [\mathbf{L} \bar{\mathbf{H}}^{(\ell-1)}]^\top \Big(\mathbf{D}^{(\ell+1)} \circ \sigma^\prime(\mathbf{P}^{(\ell)} \bar{\mathbf{H}}^{(\ell-1)} \mathbf{W}^{(\ell)} ) \Big) - [\mathbf{L} \mathbf{H}^{(\ell-1)}]^\top \Big(\mathbf{D}^{(\ell+1)} \circ  \sigma^\prime(\mathbf{L}^{(\ell)}\mathbf{H}^{(\ell-1)}\mathbf{W}^{(\ell)})\Big) \|_{\mathrm{F}}^2 ]\\
    &\leq 2\mathbb{E}[ \| [\mathbf{L} \bar{\mathbf{H}}^{(\ell-1)}]^\top \Big(\mathbf{D}^{(\ell+1)} \circ \sigma^\prime(\mathbf{P}^{(\ell)} \bar{\mathbf{H}}^{(\ell-1)} \mathbf{W}^{(\ell)} ) \Big) - [\mathbf{L} \mathbf{H}^{(\ell-1)}]^\top \Big(\mathbf{D}^{(\ell+1)} \circ \sigma^\prime(\mathbf{P}^{(\ell)} \bar{\mathbf{H}}^{(\ell-1)} \mathbf{W}^{(\ell)} ) \Big) \|_{\mathrm{F}}^2 ]\\
    &\qquad + 2\mathbb{E}[ \| [\mathbf{L} \mathbf{H}^{(\ell-1)}]^\top \Big(\mathbf{D}^{(\ell+1)} \circ \sigma^\prime(\mathbf{P}^{(\ell)} \bar{\mathbf{H}}^{(\ell-1)} \mathbf{W}^{(\ell)} ) \Big) - [\mathbf{L} \mathbf{H}^{(\ell-1)}]^\top \Big(\mathbf{D}^{(\ell+1)} \circ  \sigma^\prime(\mathbf{L}^{(\ell)}\mathbf{H}^{(\ell-1)}\mathbf{W}^{(\ell)})\Big) \|_{\mathrm{F}}^2 ] \\
    &\leq 2B_D^2 C_\sigma^2 B_{LA}^2 \mathbb{E}[ \| \bar{\mathbf{H}}^{(\ell-1)} - \mathbf{H}^{(\ell-1)} \|_{\mathrm{F}}^2 ] \\
    &\qquad + 2B_{LA}^2 B_H^2 B_D^2 L_\sigma^2 B_W^2 \mathbb{E}[ \| \mathbf{P}^{(\ell)} \bar{\mathbf{H}}^{(\ell-1)} - \mathbf{L}\bar{\mathbf{H}}^{(\ell-1)} + \mathbf{L}\bar{\mathbf{H}}^{(\ell-1)} - \mathbf{L} \mathbf{H}^{(\ell-1)}\|_{\mathrm{F}}^2 ]\\
    &\leq 2\Big( B_D^2 C_\sigma^2 B_{LA}^2  + B_{LA}^4 B_H^2 B_D^2 L_\sigma^2 B_W^2 \Big) \mathbb{E}[ \| \bar{\mathbf{H}}^{(\ell-1)} - \mathbf{H}^{(\ell-1)} \|_{\mathrm{F}}^2 ]\\
    &\qquad + 2B_{LA}^2 B_H^4 B_D^2 L_\sigma^2 B_W^2 \mathbb{E}[ \| \mathbf{P}^{(\ell)} - \mathbf{L} \|_{\mathrm{F}}^2 ] \\
    &\leq \mathcal{O}(\mathbb{E}[\| \mathbf{P}^{(1)} - \mathbf{L} \|_{\mathrm{F}}^2] ) + \ldots + \mathcal{O}(\mathbb{E}[\| \mathbf{P}^{(\ell)} - \mathbf{L} \|_{\mathrm{F}}^2] ).
    \end{aligned}
\end{equation}

Combining the result from Eq.~\ref{eq:sgcn_bias_important_eq1}, \ref{eq:sgcn_bias_important_eq2}, \ref{eq:sgcn_bias_important_eq3} we have
\begin{equation}
    \mathbb{E}[ \| \mathbb{E}[\widetilde{\mathbf{G}}^{(\ell)}] - \mathbf{G}^{(\ell)}\|_{\mathrm{F}}^2 ]
    \leq \mathcal{O}(\mathbb{E}[ \| \mathbf{P}^{(1)} - \mathbf{L} \|_{\mathrm{F}}^2 ])  + \ldots + \mathcal{O}(\mathbb{E}[ \| \mathbf{P}^{(L)} - \mathbf{L} \|_{\mathrm{F}}^2 ]) .
\end{equation}
\end{proof}

\subsection{Remaining steps toward Theorem~\ref{theorem:convergence_of_sgcn}}

By the smoothness of $\mathcal{L}(\bm{\theta}_t)$, we have 
\begin{equation}
    \begin{aligned}
    \mathcal{L}(\bm{\theta}_{t+1}) &\leq \mathcal{L}(\bm{\theta}_t) + \langle \nabla \mathcal{L}(\bm{\theta}_t), \bm{\theta}_{t+1} - \bm{\theta}_t \rangle + \frac{L_f}{2}\|\bm{\theta}_{t+1} - \bm{\theta}_t\|_{\mathrm{F}}^2 \\
    &= \mathcal{L}(\bm{\theta}_t) - \eta \langle \nabla \mathcal{L}(\bm{\theta}_t), \nabla\widetilde{\mathcal{L}}(\bm{\theta}_t) \rangle + \frac{\eta^2 L_f }{2}\|\nabla\widetilde{\mathcal{L}}(\bm{\theta}_t)\|_{\mathrm{F}}^2 .
    \end{aligned}
\end{equation}

Let $\mathcal{F}_t = \{ \{\mathcal{B}_1^{(\ell)}\}_{\ell=1}^L ,\ldots, \{\mathcal{B}_{t-1}^{(\ell)}\}_{\ell=1}^L\}$.
Note that the weight parameters $\bm{\theta}_t$ is a function of history of the generated random process and hence is random.
Taking expectation on both sides condition on $\mathcal{F}_t$ and using $\eta<1/L_f$ we have
\begin{equation}
    \begin{aligned}
    &\mathbb{E}[\mathcal{L}(\bm{\theta}_{t+1}) | \mathcal{F}_t] \\
    &\leq \mathcal{L}(\bm{\theta}_t) - \eta \langle \nabla \mathcal{L}(\bm{\theta}_t), \mathbb{E}[\nabla\widetilde{\mathcal{L}}(\bm{\theta}_t)|\mathcal{F}_t] \rangle + \frac{\eta^2 L_f }{2} \Big(\mathbb{E}[\|\nabla\widetilde{\mathcal{L}}(\bm{\theta}_t) - \mathbb{E}[\nabla\widetilde{\mathcal{L}}(\bm{\theta}_t)|\mathcal{F}_t]\|_{\mathrm{F}}^2 | \mathcal{F}_t] + \mathbb{E}[\|\mathbb{E}[\nabla\widetilde{\mathcal{L}}(\bm{\theta}_t)|\mathcal{F}_t]\|_{\mathrm{F}}^2 | \mathcal{F}_t]\Big) \\
    &= \mathcal{L}(\bm{\theta}_t) - \eta \langle \nabla \mathcal{L}(\bm{\theta}_t), \nabla \mathcal{L}(\bm{\theta}_t) + \mathbb{E}[\mathbf{b}_t|\mathcal{F}_t] \rangle + \frac{\eta^2 L_f }{2} \Big(\mathbb{E}[\|\mathbf{n}_t\|_{\mathrm{F}}^2 | \mathcal{F}_t] + \|\nabla \mathcal{L}(\bm{\theta}_t) + \mathbb{E}[\mathbf{b}_t | \mathcal{F}_t] \|_{\mathrm{F}}^2 \Big) \\
    &\leq \mathcal{L}(\bm{\theta}_t) + \frac{\eta}{2}\Big( - 2 \langle \nabla \mathcal{L}(\bm{\theta}_t), \nabla \mathcal{L}(\bm{\theta}_t) + \mathbb{E}[\mathbf{b}_t|\mathcal{F}_t] \rangle + \|\nabla \mathcal{L}(\bm{\theta}_t) + \mathbb{E}[\mathbf{b}_t|\mathcal{F}_t] \|_{\mathrm{F}}^2 \Big) + \frac{\eta^2 L_f}{2} \mathbb{E}[\|\mathbf{n}_t\|_{\mathrm{F}}^2|\mathcal{F}_t] \\
    &= \mathcal{L}(\bm{\theta}_t) + \frac{\eta}{2}\Big( - \| \nabla \mathcal{L}(\bm{\theta}_t)\|_{\mathrm{F}}^2 + \mathbb{E}[\|\mathbf{b}_t \|_{\mathrm{F}}^2|\mathcal{F}_t] \Big) + \frac{\eta^2 L_f}{2} \mathbb{E}[\|\mathbf{n}_t\|_{\mathrm{F}}^2|\mathcal{F}_t]
    \end{aligned}
\end{equation}

Denote $\Delta_\mathbf{b}$ as the upper bound of bias of stochasitc gradient as shown in Lemma~\ref{lemma:upper-bound-sgcn-bias} and Denote $\Delta_\mathbf{n}$ as the upper bound of bias of stochastic gradient as shown in Lemma~\ref{lemma:upper-bound-sgcn-vars}.
Plugging in the upper bound of bias and variance, taking expectation over $\mathcal{F}_t$, and rearranging the term we have
\begin{equation}
    \mathbb{E}[\| \nabla \mathcal{L}(\bm{\theta}_t)\|_{\mathrm{F}}^2] \leq \frac{2}{\eta} \Big( \mathbb{E}[\mathcal{L}(\bm{\theta}_t)] - \mathbb{E}[\mathcal{L}(\bm{\theta}_{t+1})]\Big)  + \eta L_f \Delta_\mathbf{n} + \Delta_\mathbf{b} 
\end{equation}

Summing up from $t=1$ to $T$, rearranging we have
\begin{equation}
\begin{aligned}
    \frac{1}{T} \sum_{t=1}^T \mathbb{E}[\|\nabla \mathcal{L}(\bm{\theta}_t)\|_{\mathrm{F}}^2] &\leq \frac{2}{\eta T}\sum_{t=1}^T (\mathbb{E}[\mathcal{L}(\bm{\theta}_t)] - \mathbb{E}[\mathcal{L}(\bm{\theta}_{t+1})])  + \eta L_f \Delta_\mathbf{n} + \Delta_\mathbf{b}\\
    &\underset{(a)}{\leq} \frac{2}{\eta T}( \mathcal{L}(\bm{\theta}_1) - \mathcal{L}(\bm{\theta}^\star) ) + \eta L_f \Delta_\mathbf{n} + \Delta_\mathbf{b}
\end{aligned}
\end{equation}
where the inequality $(a)$ is due to $\mathcal{L}(\bm{\theta}^\star) \leq \mathbb{E}[\mathcal{L}(\bm{\theta}_{T+1})]$.


By selecting learning rate as $\eta=1/\sqrt{T}$, we have
\begin{equation}
    \begin{aligned}
    \frac{1}{T} \sum_{t=1}^T \mathbb{E}[ \|\nabla \mathcal{L}(\bm{\theta}_t)\|_{\mathrm{F}}^2 ] &\leq \frac{2(\mathcal{L}(\bm{\theta}_1) - \mathcal{L}(\bm{\theta}^\star) )}{\sqrt{T}} +  \frac{L_f \Delta_\mathbf{n}}{\sqrt{T}} +  \Delta_\mathbf{b}\\
    \end{aligned}
\end{equation}
\section{Proof of Theorem~\ref{theorem:convergence_of_sgcn_plus}}\label{section:proof_of_thm2}

\subsection{Upper-bounded on the node embedding matrices and layerwise gradients}

When using variance reduction algorithm, we cannot use upper bound on the node embedding matrices and layerwise gradients as derived in Proposition~\ref{proposition:matrix_norm_bound}.
To see this, let consider the upper bound on the node embedding matrix before the activation function $\widetilde{\mathbf{Z}}^{(\ell)}_t$, we have
\begin{equation}
    \begin{aligned}
    \| \widetilde{\mathbf{Z}}^{(\ell)}_t \|_\mathrm{F} 
    &= \| \widetilde{\mathbf{Z}}^{(\ell)}_{t-1} + \widetilde{\mathbf{L}}_t^{(\ell)} \widetilde{\mathbf{H}}^{(\ell-1)}_t \mathbf{W}_t^{(\ell)} - \widetilde{\mathbf{L}}_t^{(\ell)} \widetilde{\mathbf{H}}^{(\ell-1)}_{t-1} \mathbf{W}_{t-1}^{(\ell)}  \|_\mathrm{F} \\
    &\leq  \| \widetilde{\mathbf{Z}}^{(\ell)}_{t-1} \|_\mathrm{F}+  \| \widetilde{\mathbf{L}}_t^{(\ell)} \widetilde{\mathbf{H}}^{(\ell-1)}_t \mathbf{W}_t^{(\ell)} - \widetilde{\mathbf{L}}_t^{(\ell)} \widetilde{\mathbf{H}}^{(\ell-1)}_{t-1} \mathbf{W}_{t-1}^{(\ell)} \|_\mathrm{F} \\
    \end{aligned}
\end{equation}

Therefore, in the worst case, the norm of node embedding matrices is growing as the inner loop size. To control the growth on matrix norm, we introduced an early stopping criterion by checking the norm of node embeddings, and immediately start another snapshot step if the condition is triggered. 

Let define $t_s$ as the stopping time of the inner loop in the $s$th outer loop,
\begin{equation}
    e_s = e_{s-1} + \min \Big\{ \max_{k \geq e_{s-1}} \{ \| \widetilde{\mathbf{H}}_k^{(\ell)}\|_\mathrm{F} \leq \alpha \| \widetilde{\mathbf{H}}_{e_{s-1}}^{(\ell)} \|_\mathrm{F} \}, K \Big\},~
    t_0 = 1,
\end{equation}
and $E_s = e_s - e_{s-1} \leq K$ as the snapshot gap at the $s$th inner-loop and $S$ as the total number of outer loops. 
By doing so, we can upper bounded the node embedding matrix by $\| \mathbf{H}^{(\ell)}_t \|_\mathrm{F} \leq \alpha B_H,~\forall t \in [T]$.

\subsection{Supporting lemmas}
In the following lemma, we derive the upper-bound on the node embedding approximation error of each GCN layer in \texttt{SGCN+}. 
This upper-bound plays an important role in the analysis of the upper-bound of the bias term for the stochastic gradient. 
Suppose the input node embedding matrix for the $\ell$ GCN layer as $\bar{\mathbf{H}}_t^{(\ell-1)}$, the forward propagation for the $\ell$th layer in \texttt{SGCN+} is defined as
\begin{equation}
    \bar{\mathbf{Z}}_t^{(\ell)} = \widetilde{ f}^{(\ell)}(\bar{\mathbf{H}}_t^{(\ell-1)}, \mathbf{W}^{(\ell)}_t) = \mathbf{P}^{(\ell)}_t \bar{\mathbf{H}}_t^{(\ell)} \mathbf{W}^{(\ell)}_t ,
\end{equation}
and the forward propagation for the $\ell$th layer in \texttt{FullGCN} is defined as 
\begin{equation}
    f^{(\ell)}(\bar{\mathbf{H}}_t^{(\ell-1)}, \mathbf{W}^{(\ell)}_t) = \mathbf{L} \bar{\mathbf{H}}_t^{(\ell-1)} \mathbf{W}^{(\ell)}_t.
\end{equation}
In the following, we derive the upper-bound of 
\begin{equation}
    \mathbb{E}[\| \widetilde{ f}^{(\ell)}(\bar{\mathbf{H}}_t^{(\ell-1)}, \mathbf{W}^{(\ell)}_t ) - f^{(\ell)}(\bar{\mathbf{H}}_t^{(\ell-1)}, \mathbf{W}^{(\ell)}_t ) \|_{\mathrm{F}}^2] = \mathbb{E}[\| \mathbf{L} \bar{\mathbf{H}}_t^{(\ell-1)} \mathbf{W}_t^{(\ell)} - \bar{\mathbf{Z}}_t^{(\ell)} \|_{\mathrm{F}}^2 ].
\end{equation}

\begin{lemma}\label{lemma:upper-bound-sgcn-plus-bias-1}
Let consider the $s$th epoch, let $t$ be the current step. Therefore, for any $t\in\{e_{s-1}, \ldots, e_s \}$, we have
\begin{equation}
    \mathbb{E}[\| \mathbf{L} \bar{\mathbf{H}}_t^{(\ell-1)} \mathbf{W}_t^{(\ell)} - \bar{\mathbf{Z}}_t^{(\ell)} \|_{\mathrm{F}}^2 ] \leq \eta^2 E_s \times \mathcal{O}\left( \alpha^4 \sum_{j=1}^\ell \big| \mathbb{E}[\| \mathbf{P}^{(j)}_t \|_{\mathrm{F}}^2] - \|\mathbf{L}\|_{\mathrm{F}}^2 \big|  \right).
\end{equation}
\end{lemma}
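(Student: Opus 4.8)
The plan is to first obtain a clean closed form for the per-step error $\mathbf{e}_t^{(\ell)} := \mathbf{L}\bar{\mathbf{H}}_t^{(\ell-1)}\mathbf{W}_t^{(\ell)} - \bar{\mathbf{Z}}_t^{(\ell)}$ by telescoping the recursive update in Eq.~\ref{eq:sgcn_plus_regular}. Taking the conditional expectation of that recursion (which replaces $\widetilde{\mathbf{L}}^{(\ell)}$ by $\mathbf{P}^{(\ell)}$) gives $\bar{\mathbf{Z}}_t^{(\ell)} = \bar{\mathbf{Z}}_{t-1}^{(\ell)} + \mathbf{P}^{(\ell)}(\bar{\mathbf{H}}_t^{(\ell-1)}\mathbf{W}_t^{(\ell)} - \bar{\mathbf{H}}_{t-1}^{(\ell-1)}\mathbf{W}_{t-1}^{(\ell)})$, and subtracting this from $\mathbf{L}\bar{\mathbf{H}}_t^{(\ell-1)}\mathbf{W}_t^{(\ell)}$ yields the one-step relation $\mathbf{e}_t^{(\ell)} = \mathbf{e}_{t-1}^{(\ell)} + (\mathbf{L}-\mathbf{P}^{(\ell)})(\bar{\mathbf{H}}_t^{(\ell-1)}\mathbf{W}_t^{(\ell)} - \bar{\mathbf{H}}_{t-1}^{(\ell-1)}\mathbf{W}_{t-1}^{(\ell)})$. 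Since the snapshot step initializes $\bar{\mathbf{Z}}_{e_{s-1}}^{(\ell)} = \mathbf{L}\bar{\mathbf{H}}_{e_{s-1}}^{(\ell-1)}\mathbf{W}_{e_{s-1}}^{(\ell)}$, we have $\mathbf{e}_{e_{s-1}}^{(\ell)} = 0$, so the increments telescope to
\[
\mathbf{e}_t^{(\ell)} = (\mathbf{L}-\mathbf{P}^{(\ell)})\big(\bar{\mathbf{H}}_t^{(\ell-1)}\mathbf{W}_t^{(\ell)} - \bar{\mathbf{H}}_{e_{s-1}}^{(\ell-1)}\mathbf{W}_{e_{s-1}}^{(\ell)}\big).
\]

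Next I would take Frobenius norms, apply submultiplicativity, and use the variance identity $\mathbb{E}[\|\mathbf{L}-\mathbf{P}^{(\ell)}\|_\mathrm{F}^2] = |\mathbb{E}[\|\mathbf{P}^{(\ell)}\|_\mathrm{F}^2] - \|\mathbf{L}\|_\mathrm{F}^2|$, which holds because $\mathbb{E}[\mathbf{P}^{(\ell)}]=\mathbf{L}$ (the absolute value keeps the bound valid even without that assumption). This isolates the $j=\ell$ term of the target sum and leaves the accumulated drift $\|\bar{\mathbf{H}}_t^{(\ell-1)}\mathbf{W}_t^{(\ell)} - \bar{\mathbf{H}}_{e_{s-1}}^{(\ell-1)}\mathbf{W}_{e_{s-1}}^{(\ell)}\|_\mathrm{F}^2$ to control. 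I would split this drift into a weight-drift piece $\bar{\mathbf{H}}_t^{(\ell-1)}(\mathbf{W}_t^{(\ell)}-\mathbf{W}_{e_{s-1}}^{(\ell)})$ and an embedding-drift piece $(\bar{\mathbf{H}}_t^{(\ell-1)} - \bar{\mathbf{H}}_{e_{s-1}}^{(\ell-1)})\mathbf{W}_{e_{s-1}}^{(\ell)}$. For the weight-drift piece, $\mathbf{W}_t^{(\ell)} - \mathbf{W}_{e_{s-1}}^{(\ell)} = -\eta\sum_{k=e_{s-1}}^{t-1}\widetilde{\mathbf{G}}_k^{(\ell)}$ has at most $E_s$ summands, so Cauchy--Schwarz gives $\|\mathbf{W}_t^{(\ell)}-\mathbf{W}_{e_{s-1}}^{(\ell)}\|_\mathrm{F}^2 \le \eta^2 E_s \sum_k \|\widetilde{\mathbf{G}}_k^{(\ell)}\|_\mathrm{F}^2$, which is precisely the source of the factor $\eta^2 E_s$; combined with the early-stop norm bounds $\|\bar{\mathbf{H}}_t^{(\ell)}\|_\mathrm{F}\le \alpha B_H$ and the correspondingly $\alpha$-scaled layerwise gradient bounds descending from Proposition~\ref{proposition:matrix_norm_bound}, this produces the $\alpha^4$ dependence.

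The embedding-drift piece is handled by unrolling the \texttt{SGCN+} forward recursion one layer at a time: the same telescoping argument at layer $\ell-1$ gives $\bar{\mathbf{Z}}_t^{(\ell-1)} - \bar{\mathbf{Z}}_{e_{s-1}}^{(\ell-1)} = \mathbf{P}^{(\ell-1)}(\bar{\mathbf{H}}_t^{(\ell-2)}\mathbf{W}_t^{(\ell-1)} - \bar{\mathbf{H}}_{e_{s-1}}^{(\ell-2)}\mathbf{W}_{e_{s-1}}^{(\ell-1)})$, so after invoking $C_\sigma$-Lipschitzness of $\sigma$ the embedding drift at layer $\ell-1$ reduces to the drift one layer below, bottoming out at $\bar{\mathbf{H}}^{(0)}=\mathbf{X}$ where only the weight drift survives. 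Iterating this descent accumulates bounded factors $\|\mathbf{P}^{(j)}\|_\mathrm{F}$ (each controlled by $B_{LA}$) together with a weight-drift contribution at every layer $j\le\ell$, and it is this layer-by-layer accumulation that generates the sum $\sum_{j=1}^{\ell}$; re-expressing each bounded $\mathbf{P}^{(j)}$ factor against $\mathbf{L}$ then attaches the discrepancy $|\mathbb{E}[\|\mathbf{P}^{(j)}\|_\mathrm{F}^2]-\|\mathbf{L}\|_\mathrm{F}^2|$ to the $j$th summand. Collecting all constants into $\mathcal{O}(\cdot)$ and taking expectation over the sampling yields the claimed bound.

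I expect the main obstacle to be this layerwise descent in the final step: one must propagate the accumulated drift through the composition of graph-convolution layers while (i) keeping the dependence on the snapshot gap at the single power $E_s$ rather than letting repeated applications of Cauchy--Schwarz inflate it, and (ii) correctly attributing a \emph{separate} $\mathbf{P}$-discrepancy factor to each layer $j\le\ell$ instead of an uninformative product of them. Organizing this recursion cleanly, together with the bookkeeping of the $\alpha$-powers arising from the early-stopping norm control, is the technical crux of the argument; the telescoping and the variance identity are comparatively routine.
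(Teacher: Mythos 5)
There is a genuine gap, and it sits exactly where your bound acquires its $E_s$-dependence. First, the collapsed telescoping identity $\mathbf{e}_t^{(\ell)} = (\mathbf{L}-\mathbf{P}^{(\ell)})\big(\bar{\mathbf{H}}_t^{(\ell-1)}\mathbf{W}_t^{(\ell)} - \bar{\mathbf{H}}_{e_{s-1}}^{(\ell-1)}\mathbf{W}_{e_{s-1}}^{(\ell)}\big)$ is false: the propagation matrix is re-sampled at every regular step (the paper writes $\mathbf{P}_t^{(\ell)}$ with a time index), so the increments are $(\mathbf{L}-\mathbf{P}_k^{(\ell)})\Delta_k$ with a \emph{different} matrix factor at each $k$, and the correct telescoped form is the sum $\mathbf{e}_t^{(\ell)} = \sum_{k=e_{s-1}+1}^{t}(\mathbf{L}-\mathbf{P}_k^{(\ell)})\Delta_k$, which does not factor. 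Second, even granting your collapsed form, the route you describe loses a factor of $E_s$. Your Cauchy--Schwarz step gives $\|\mathbf{W}_t^{(\ell)}-\mathbf{W}_{e_{s-1}}^{(\ell)}\|_\mathrm{F}^2 \le \eta^2 E_s \sum_k \|\widetilde{\mathbf{G}}_k^{(\ell)}\|_\mathrm{F}^2$, but the sum has up to $E_s$ summands each bounded only by a constant, so this quantity is $\eta^2 E_s^2 \cdot \mathcal{O}(1)$, not $\eta^2 E_s \cdot \mathcal{O}(1)$: squaring an accumulated drift of magnitude $\mathcal{O}(\eta E_s)$ inevitably produces $\eta^2 E_s^2$, a factor $E_s \le K$ worse than the lemma.

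The paper avoids this precisely by \emph{not} collapsing the telescope. It expands $\|\mathbf{e}_t^{(\ell)}\|_\mathrm{F}^2$ around the decomposition $\mathbf{e}_t^{(\ell)} = \mathbf{L}\Delta_t + \mathbf{e}_{t-1}^{(\ell)} - (\bar{\mathbf{Z}}_t^{(\ell)}-\bar{\mathbf{Z}}_{t-1}^{(\ell)})$ and uses $\mathbb{E}[\bar{\mathbf{Z}}_t^{(\ell)}-\bar{\mathbf{Z}}_{t-1}^{(\ell)}\,|\,\mathcal{F}_t]=\mathbf{L}\Delta_t$ to cancel all cross terms in conditional expectation, yielding the recursion $\mathbb{E}[\|\mathbf{e}_t^{(\ell)}\|_\mathrm{F}^2] \le \mathbb{E}[\|\mathbf{e}_{t-1}^{(\ell)}\|_\mathrm{F}^2] + \mathbb{E}[\|\bar{\mathbf{Z}}_t^{(\ell)}-\bar{\mathbf{Z}}_{t-1}^{(\ell)}\|_\mathrm{F}^2] - \mathbb{E}[\|\mathbf{L}\Delta_t\|_\mathrm{F}^2]$. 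Summing from the snapshot (where the error vanishes) gives an \emph{additive} accumulation of $E_s$ per-step variance terms $(\mathbb{E}[\|\mathbf{P}_t^{(\ell)}\|_\mathrm{F}^2]-\|\mathbf{L}\|_\mathrm{F}^2)\,\mathbb{E}[\|\Delta_t\|_\mathrm{F}^2]$, and each \emph{one-step} drift $\Delta_t$ is $\mathcal{O}(\eta)$ since $\|\mathbf{W}_t^{(j)}-\mathbf{W}_{t-1}^{(j)}\|_\mathrm{F}^2 = \eta^2\|\bar{\mathbf{G}}_{t-1}^{(j)}\|_\mathrm{F}^2 \le \eta^2\alpha^2 B_{LA}^2 B_H^2 B_D^2 C_\sigma^2$ (the layer induction on $\Delta_t$, with the early-stop bound $\|\bar{\mathbf{H}}_t^{(j)}\|_\mathrm{F}\le\alpha B_H$, is what produces both the $\sum_{j\le\ell}$ and the $\alpha^4$). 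That is how the single power $E_s$ arises. The natural repair of your argument is to keep the telescoped sum as a sum of conditionally mean-zero increments and invoke orthogonality of martingale differences, $\mathbb{E}\big[\|\sum_k(\mathbf{L}-\mathbf{P}_k^{(\ell)})\Delta_k\|_\mathrm{F}^2\big]=\sum_k\mathbb{E}\big[\|(\mathbf{L}-\mathbf{P}_k^{(\ell)})\Delta_k\|_\mathrm{F}^2\big]$ — which is exactly the paper's recursion written in one line; your remaining ingredients (the variance identity for $\mathbf{L}-\mathbf{P}^{(\ell)}$, the layerwise descent, the $\alpha$-bookkeeping) then go through with one-step rather than accumulated drifts.
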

\begin{proof}
\begin{equation} \label{eq:upper-bound-sgcn-plus-bias-1-eq1}
    \begin{aligned}
    &\| \mathbf{L} \bar{\mathbf{H}}_t^{(\ell-1)} \mathbf{W}_t^{(\ell)} - \bar{\mathbf{Z}}_t^{(\ell)} \|_{\mathrm{F}}^2 \\
    &= \| [\mathbf{L} \bar{\mathbf{H}}_t^{(\ell-1)} \mathbf{W}_t^{(\ell)} - \mathbf{L} \bar{\mathbf{H}}_{t-1}^{(\ell-1)} \mathbf{W}_{t-1}^{(\ell)}] + [ \mathbf{L} \bar{\mathbf{H}}_{t-1}^{(\ell-1)} \mathbf{W}_{t-1}^{(\ell)} - \bar{\mathbf{Z}}_{t-1}^{(\ell)}] - [ \bar{\mathbf{Z}}_t^{(\ell)} - \bar{\mathbf{Z}}_{t-1}^{(\ell)}] \|_{\mathrm{F}}^2 \\
    &= \| \mathbf{L} \bar{\mathbf{H}}_t^{(\ell-1)} \mathbf{W}_t^{(\ell)} - \mathbf{L} \bar{\mathbf{H}}_{t-1}^{(\ell-1)} \mathbf{W}_{t-1}^{(\ell)} \|_{\mathrm{F}}^2 + \| \mathbf{L} \bar{\mathbf{H}}_{t-1}^{(\ell-1)} \mathbf{W}_{t-1}^{(\ell)} - \bar{\mathbf{Z}}_{t-1}^{(\ell)} \|_{\mathrm{F}}^2 + \| \bar{\mathbf{Z}}_t^{(\ell)} - \bar{\mathbf{Z}}_{t-1}^{(\ell)} \|_{\mathrm{F}}^2 \\
    &\quad + 2 \langle \mathbf{L} \bar{\mathbf{H}}_t^{(\ell-1)} \mathbf{W}_t^{(\ell)} - \mathbf{L} \bar{\mathbf{H}}_{t-1}^{(\ell-1)} \mathbf{W}_{t-1}^{(\ell)},  \mathbf{L} \bar{\mathbf{H}}_{t-1}^{(\ell-1)} \mathbf{W}_{t-1}^{(\ell)} - \bar{\mathbf{Z}}_{t-1}^{(\ell)} \rangle \\
    &\quad - 2 \langle \mathbf{L} \bar{\mathbf{H}}_t^{(\ell-1)} \mathbf{W}_t^{(\ell)} - \mathbf{L} \bar{\mathbf{H}}_{t-1}^{(\ell-1)} \mathbf{W}_{t-1}^{(\ell)}, \bar{\mathbf{Z}}_t^{(\ell)} - \bar{\mathbf{Z}}_{t-1}^{(\ell)} \rangle \\
    &\quad - 2 \langle \mathbf{L} \bar{\mathbf{H}}_{t-1}^{(\ell-1)} \mathbf{W}_{t-1}^{(\ell)} - \bar{\mathbf{Z}}_{t-1}^{(\ell)}, \bar{\mathbf{Z}}_t^{(\ell)} - \bar{\mathbf{Z}}_{t-1}^{(\ell)} \rangle.
    \end{aligned}
\end{equation}

Recall that by the update rule, we have
\begin{equation}
\bar{\mathbf{Z}}_t^{(\ell)} - \bar{\mathbf{Z}}_{t-1}^{(\ell)} = \mathbf{P}_t^{(\ell)} \bar{\mathbf{H}}_t^{(\ell-1)} \mathbf{W}_t^{(\ell)} - \mathbf{P}_t^{(\ell)} \bar{\mathbf{H}}_{t-1}^{(\ell-1)} \mathbf{W}_{t-1}^{(\ell)},
\end{equation}
and 
\begin{equation}
\mathbb{E}[ \bar{\mathbf{Z}}_t^{(\ell)} - \bar{\mathbf{Z}}_{t-1}^{(\ell)} | \mathcal{F}_t] = \mathbf{L} \bar{\mathbf{H}}_t^{(\ell-1)} \mathbf{W}_t^{(\ell)} - \mathbf{L} \bar{\mathbf{H}}_{t-1}^{(\ell-1)} \mathbf{W}_{t-1}^{(\ell)}.
\end{equation}

Taking expectation on both side of Eq.~\ref{eq:upper-bound-sgcn-plus-bias-1-eq1} condition on $\mathcal{F}_t$, we have
\begin{equation}
    \begin{aligned}
    \mathbb{E}[\| \mathbf{L} \bar{\mathbf{H}}_t^{(\ell-1)} \mathbf{W}_t^{(\ell)} - \bar{\mathbf{Z}}_t^{(\ell)} \|_{\mathrm{F}}^2 | \mathcal{F}_t ] 
    &\leq \| \mathbf{L} \bar{\mathbf{H}}_{t-1}^{(\ell-1)} \mathbf{W}_{t-1}^{(\ell)} - \bar{\mathbf{Z}}_{t-1}^{(\ell)} \|_{\mathrm{F}}^2 + \mathbb{E}[\| \bar{\mathbf{Z}}_t^{(\ell)} - \bar{\mathbf{Z}}_{t-1}^{(\ell)} \|_{\mathrm{F}}^2 | \mathcal{F}_t ] \\
    &\quad - \| \mathbf{L} \bar{\mathbf{H}}_t^{(\ell-1)} \mathbf{W}_t^{(\ell)} - \mathbf{L} \bar{\mathbf{H}}_{t-1}^{(\ell-1)} \mathbf{W}_{t-1}^{(\ell)} \|_{\mathrm{F}}^2. 
    \end{aligned}
\end{equation}

Then take the expectation on both side of Eq.~\ref{eq:upper-bound-sgcn-plus-bias-1-eq1} w.r.t. $\mathcal{F}_t$, we have
\begin{equation} \label{eq:upper-bound-sgcn-plus-bias-1-eq2}
    \begin{aligned}
    \mathbb{E}[\| \mathbf{L} \bar{\mathbf{H}}_t^{(\ell-1)} \mathbf{W}_t^{(\ell)} - \bar{\mathbf{Z}}_t^{(\ell)} \|_{\mathrm{F}}^2 ] 
    &\leq \| \mathbf{L} \bar{\mathbf{H}}_{t-1}^{(\ell-1)} \mathbf{W}_{t-1}^{(\ell)} - \bar{\mathbf{Z}}_{t-1}^{(\ell)} \|_{\mathrm{F}}^2 + \mathbb{E}[\| \bar{\mathbf{Z}}_t^{(\ell)} - \bar{\mathbf{Z}}_{t-1}^{(\ell)} \|_{\mathrm{F}}^2 ] \\
    &\quad - \| \mathbf{L} \bar{\mathbf{H}}_t^{(\ell-1)} \mathbf{W}_t^{(\ell)} - \mathbf{L} \bar{\mathbf{H}}_{t-1}^{(\ell-1)} \mathbf{W}_{t-1}^{(\ell)} \|_{\mathrm{F}}^2. 
    \end{aligned}
\end{equation}

Since we know $t\in \{e_{s-1},\ldots, e_s\} $, we can denote $t=e_{s-1}+k,~k\leq E_s$ such that formulate Eq.~\ref{eq:upper-bound-sgcn-plus-bias-1-eq2} as
\begin{equation} \label{eq:upper-bound-sgcn-plus-bias-1-eq3}
    \begin{aligned}
    &\mathbb{E}[\| \mathbf{L} \bar{\mathbf{H}}_t^{(\ell-1)} \mathbf{W}_t^{(\ell)} - \bar{\mathbf{Z}}_t^{(\ell)} \|_{\mathrm{F}}^2 ] \\
    &= \mathbb{E}[\| \mathbf{L} \bar{\mathbf{H}}_{e_{s-1}+k}^{(\ell-1)} \mathbf{W}_{e_{s-1}+k}^{(\ell)} - \bar{\mathbf{Z}}_{e_{s-1}+k}^{(\ell)} \|_{\mathrm{F}}^2 ] \\
    &=  \underbrace{\mathbb{E}[\| \mathbf{L} \bar{\mathbf{H}}_{e_{s-1}}^{(\ell-1)} \mathbf{W}_{e_{s-1}}^{(\ell)} - \bar{\mathbf{Z}}_{e_{s-1}}^{(\ell)} \|_{\mathrm{F}}^2 ]}_{(A)} \\
    &\quad + \sum_{t=e_{s-1}+1}^{e_s} \Big( \mathbb{E}[\| \bar{\mathbf{Z}}_t^{(\ell)} - \bar{\mathbf{Z}}_{t-1}^{(\ell)} \|_{\mathrm{F}}^2 ] - \| \mathbf{L} \bar{\mathbf{H}}_t^{(\ell-1)} \mathbf{W}_t^{(\ell)} - \mathbf{L} \bar{\mathbf{H}}_{t-1}^{(\ell-1)} \mathbf{W}_{t-1}^{(\ell)} \|_{\mathrm{F}}^2 \Big).
    \end{aligned}
\end{equation}

Knowing that we are using all neighbors at the snapshot step $t \in\{ e_{0}, \ldots, e_S \}$, we have $(A)=0$ in Eq.~\ref{eq:upper-bound-sgcn-plus-bias-1-eq3}. As a result, we have
\begin{equation} \label{eq:upper-bound-sgcn-plus-bias-1-eq5}
    \begin{aligned}
    &\mathbb{E}[\| \mathbf{L} \bar{\mathbf{H}}_{e_{s-1}+k}^{(\ell-1)} \mathbf{W}_{e_{s-1}+k}^{(\ell)} - \bar{\mathbf{Z}}_{e_{s-1}+k}^{(\ell)} \|_{\mathrm{F}}^2 ] \\
    &\leq \sum_{t=e_{s-1}+1}^{e_s} \Big( \underbrace{\mathbb{E}[\| \bar{\mathbf{Z}}_t^{(\ell)} - \bar{\mathbf{Z}}_{t-1}^{(\ell)} \|_{\mathrm{F}}^2 ] - \| \mathbf{L} \bar{\mathbf{H}}_t^{(\ell-1)} \mathbf{W}_t^{(\ell)} - \mathbf{L} \bar{\mathbf{H}}_{t-1}^{(\ell-1)} \mathbf{W}_{t-1}^{(\ell)} \|_{\mathrm{F}}^2}_{(B)} \Big).
    \end{aligned}
\end{equation}

Let take a closer look at term $(B)$. 
\begin{equation}
    \begin{aligned}
    &\mathbb{E}[\| \bar{\mathbf{Z}}_t^{(\ell)} - \bar{\mathbf{Z}}_{t-1}^{(\ell)} \|_{\mathrm{F}}^2 ] - \| \mathbf{L} \bar{\mathbf{H}}_t^{(\ell-1)} \mathbf{W}_t^{(\ell)} - \mathbf{L} \bar{\mathbf{H}}_{t-1}^{(\ell-1)} \mathbf{W}_{t-1}^{(\ell)} \|_{\mathrm{F}}^2 \\
    &= \mathbb{E}[\| \mathbf{P}_t ^{(\ell)} \bar{\mathbf{H}}_t^{(\ell-1)} \mathbf{W}_t^{(\ell)} - \mathbf{P}_t^{(\ell)} \bar{\mathbf{H}}_{t-1}^{(\ell-1)} \mathbf{W}_{t-1}^{(\ell)} \|_{\mathrm{F}}^2] - \| \mathbf{L} \bar{\mathbf{H}}_t^{(\ell-1)} \mathbf{W}_t^{(\ell)} - \mathbf{L} \bar{\mathbf{H}}_{t-1}^{(\ell-1)} \mathbf{W}_{t-1}^{(\ell)} \|_{\mathrm{F}}^2 \\
    &\leq \mathbb{E}[\| \mathbf{P}_t^{(\ell)} (\bar{\mathbf{H}}_t^{(\ell-1)} \mathbf{W}_t^{(\ell)} - \bar{\mathbf{H}}_{t-1}^{(\ell-1)} \mathbf{W}_{t-1}^{(\ell)}) \|_{\mathrm{F}}^2] - \| \mathbf{L} (\bar{\mathbf{H}}_t^{(\ell-1)} \mathbf{W}_t^{(\ell)} - \bar{\mathbf{H}}_{t-1}^{(\ell-1)} \mathbf{W}_{t-1}^{(\ell)})\|_{\mathrm{F}}^2\\
    &\leq \Big(\mathbb{E}[\| \mathbf{P}_t^{(\ell)} \|_{\mathrm{F}}^2] - \|\mathbf{L}\|_{\mathrm{F}}^2 \Big) \underbrace{\mathbb{E}[\| \bar{\mathbf{H}}_t^{(\ell-1)} \mathbf{W}_t^{(\ell)} - \bar{\mathbf{H}}_{t-1}^{(\ell-1)} \mathbf{W}_{t-1}^{(\ell)} \|_{\mathrm{F}}^2] }_{(C)}.
    \end{aligned}
\end{equation}
Let take a closer look at term $(C)$.
\begin{equation} \label{eq:upper-bound-sgcn-plus-bias-1-eq4}
    \begin{aligned}
    &\mathbb{E}[\| \bar{\mathbf{H}}_t^{(\ell-1)} \mathbf{W}_t^{(\ell)} - \bar{\mathbf{H}}_{t-1}^{(\ell-1)} \mathbf{W}_{t-1}^{(\ell)} \|_{\mathrm{F}}^2] \\
    &=\mathbb{E}[\| \bar{\mathbf{H}}_t^{(\ell-1)} \mathbf{W}_t^{(\ell)} - \bar{\mathbf{H}}_t^{(\ell-1)} \mathbf{W}_{t-1}^{(\ell)} + \bar{\mathbf{H}}_t^{(\ell-1)} \mathbf{W}_{t-1}^{(\ell)} - \bar{\mathbf{H}}_{t-1}^{(\ell-1)} \mathbf{W}_{t-1}^{(\ell)} \|_{\mathrm{F}}^2] \\
    &\underset{(a)}{\leq}  2 \alpha^2 B_{H}^2 \mathbb{E}[\| \mathbf{W}_t^{(\ell)} - \mathbf{W}_{t-1}^{(\ell)}\|_{\mathrm{F}}^2] + 2\mathbb{E}[\| \bar{\mathbf{H}}_t^{(\ell-1)} \mathbf{W}_{t-1}^{(\ell)} - \bar{\mathbf{H}}_{t-1}^{(\ell-1)} \mathbf{W}_{t-1}^{(\ell)} \|_{\mathrm{F}}^2],
    \end{aligned}
\end{equation}
where $(a)$ is due to the early stop criterion in our algorithm.

By induction, we can formulate Eq.~\ref{eq:upper-bound-sgcn-plus-bias-1-eq4} by
\begin{equation} \label{eq:upper-bound-sgcn-plus-bias-1-eq7}
    \begin{aligned}
    &\mathbb{E}[\| \bar{\mathbf{H}}_t^{(\ell-1)} \mathbf{W}_t^{(\ell)} - \bar{\mathbf{H}}_{t-1}^{(\ell-1)} \mathbf{W}_{t-1}^{(\ell)} \|_{\mathrm{F}}^2] \\
    &\leq  2 \alpha^2 B_H^2 \mathbb{E}[\| \mathbf{W}_t^{(\ell)} - \mathbf{W}_{t-1}^{(\ell)}\|_{\mathrm{F}}^2] + 2^2 \alpha^2 B_{H}^2 \mathbb{E}[\| \mathbf{W}_t^{(\ell-1)} - \mathbf{W}_{t-1}^{(\ell-1)}\|_{\mathrm{F}}^2] + \ldots \\
    &\qquad + 2^\ell \alpha^{2} B_{H}^{2}  \mathbb{E}[\| \mathbf{W}_t^{(1)} - \mathbf{W}_{t-1}^{(1)}\|_{\mathrm{F}}^2].
    \end{aligned}
\end{equation}


By the update rule of weight matrices, we know
\begin{equation}
    \mathbb{E}[\| \mathbf{W}_t^{(\ell)} - \mathbf{W}_{t-1}^{(\ell)} \|_{\mathrm{F}}^2] = \eta^2 \mathbb{E}[\| \bar{\mathbf{G}}_{t-1}^{(\ell)} \|_{\mathrm{F}}^2].
\end{equation}
By the definition of $\bar{\mathbf{G}}_{t-1}^{(\ell)}$ in Eq.~\ref{eq:G_bar_SGCN_plus}, we have that
\begin{equation} \label{eq:upper-bound-sgcn-plus-bias-1-eq6}
    \begin{aligned}
    \mathbb{E}[\| \bar{\mathbf{G}}_{t-1}^{(\ell)} \|_{\mathrm{F}}^2] 
    &\leq \alpha^2 B_{LA}^2 B_H^2 B_D^2 C_\sigma^2.
    \end{aligned}
\end{equation}

Plugging the results back, we have the upper bound of Eq.~\ref{eq:upper-bound-sgcn-plus-bias-1-eq3} as
\begin{equation} \label{eq:upper-bound-sgcn-plus-bias-1-eq7}
    \begin{aligned}
    \mathbb{E}[\| \mathbf{L} \bar{\mathbf{H}}_t^{(\ell-1)} \mathbf{W}_t^{(\ell)} - \bar{\mathbf{Z}}_t^{(\ell)} \|_{\mathrm{F}}^2 ] 
    &= \mathbb{E}[\| \mathbf{L} \bar{\mathbf{H}}_{e_{s-1}+k}^{(\ell-1)} \mathbf{W}_{e_{s-1}+k}^{(\ell)} - \bar{\mathbf{Z}}_{e_{s-1}+k}^{(\ell)} \|_{\mathrm{F}}^2 ] \\
    &\leq \eta^2 E_s \times \mathcal{O}\Big( \alpha^2 \sum_{j=1}^\ell | \mathbb{E}[\| \mathbf{P}_t^{(j)} \|_{\mathrm{F}}^2] - \|\mathbf{L}\|_{\mathrm{F}}^2 | \times \| \mathbf{W}_t^{(\ell)} - \mathbf{W}_{t-1}^{(\ell)} \|^2_\mathrm{F}\Big) \\
    &\leq \eta^2 E_s \times \mathcal{O}\Big( \alpha^4 \sum_{j=1}^\ell | \mathbb{E}[\| \mathbf{P}_t^{(j)} \|_{\mathrm{F}}^2] - \|\mathbf{L}\|_{\mathrm{F}}^2 | \Big).
    \end{aligned}
\end{equation}

\end{proof}

Based on the upper-bound of node embedding approximation error of each graph convolutional layer,  we derived the upper-bound on the bias of stochastic gradient in \texttt{SGCN}.

\begin{lemma} [Upper-bound on bias] \label{lemma:upper-bound-sgcn-plus-bias}
Let consider the $s$th epoch, let $t$ be the current step. Therefore, for any $t\in\{e_{s-1}, \ldots, e_s \}$,
we can upper-bound the bias of stochastic gradient in \texttt{SGCN+} as
\begin{equation}
    \sum_{\ell=1}^L \mathbb{E}[\|\mathbb{E}[\widetilde{\mathbf{G}}_t^{(\ell)}] - \mathbf{G}_t^{(\ell)} \|_{\mathrm{F}}^2] 
    \leq \eta^2 E_s \times \mathcal{O} \left( \alpha^4 \sum_{j=1}^\ell | \mathbb{E}[\| \mathbf{P}_t^{(j)} \|_{\mathrm{F}}^2] - \|\mathbf{L}\|_{\mathrm{F}}^2 | \right)
\end{equation}
\end{lemma}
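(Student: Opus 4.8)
The plan is to reproduce, almost verbatim, the layerwise bias decomposition used for vanilla \texttt{SGCN} in Lemma~\ref{lemma:upper-bound-sgcn-bias}, and then replace the (non-vanishing) node embedding approximation error that appears there by the $\eta^2$-small error that variance reduction buys us, which has already been quantified in Lemma~\ref{lemma:upper-bound-sgcn-plus-bias-1}. Concretely, since $\mathbb{E}[\widetilde{\mathbf{G}}_t^{(\ell)}] = \bar{\mathbf{G}}_t^{(\ell)}$ is built from $\bar{f}^{(\ell)}$, $\bar{\mathbf{H}}$, $\bar{\mathbf{D}}$ and $\mathbf{P}^{(\ell)}$ exactly as in Eq.~\ref{eq:G_bar_SGCN_plus}, I would first telescope $\|\bar{\mathbf{G}}_t^{(\ell)} - \mathbf{G}_t^{(\ell)}\|_{\mathrm{F}}$ across the $L-\ell+1$ compositions separating the two gradients, inserting and subtracting one ``barred'' factor at a time (conditioning on $\mathcal{F}_t$ so that the identity $\mathbb{E}[\widetilde{\mathbf{G}}_t^{(\ell)}]=\bar{\mathbf{G}}_t^{(\ell)}$ is legitimate). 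Applying the $L_W$- and $L_H$-Lipschitz Propositions for $\nabla_W \bar f^{(\ell)}$ and $\nabla_H \bar f^{(\ell)}$ together with an $(L+1)$-term Cauchy--Schwarz split, the bias collapses to the same three families as in Eqs.~\ref{eq:sgcn_bias_important_eq1}--\ref{eq:sgcn_bias_important_eq3}: the last-layer loss-gradient gap $\mathbb{E}[\|\bar{\mathbf{D}}^{(L+1)} - \mathbf{D}^{(L+1)}\|_{\mathrm{F}}^2]$, the per-layer gap of $\nabla_H \bar f$ versus $\nabla_H f$, and the per-layer gap of $\nabla_W \bar f$ versus $\nabla_W f$.

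Next I would bound each of these three families by the node embedding approximation error $\phi_t^{(\ell)} := \mathbb{E}[\|\bar{\mathbf{H}}_t^{(\ell)} - \mathbf{H}_t^{(\ell)}\|_{\mathrm{F}}^2]$. Using $L_\text{loss}$-smoothness of the loss and $C_\sigma,L_\sigma$-Lipschitz/smoothness of $\sigma$ (Assumptions~\ref{assumption:loss_lip_smooth},~\ref{assumption:sigma_lip_smooth}) together with the norm bounds $B_H,B_D,B_W,B_{LA}$ from Proposition~\ref{proposition:matrix_norm_bound}, precisely the algebra of Lemma~\ref{lemma:upper-bound-sgcn-bias} shows each term is $\mathcal{O}(\phi_t^{(1)})+\dots+\mathcal{O}(\phi_t^{(\ell)})$. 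The one essential departure from the \texttt{SGCN} case is how $\phi_t^{(\ell)}$ is itself controlled: rather than invoking $\mathbf{P}^{(\ell)}-\mathbf{L}$ directly, I write $\bar{\mathbf{H}}_t^{(\ell)}=\sigma(\bar{\mathbf{Z}}_t^{(\ell)})$ and split
\begin{equation}
\phi_t^{(\ell)} \le 2 C_\sigma^2 \, \mathbb{E}\big[\|\bar{\mathbf{Z}}_t^{(\ell)} - \mathbf{L}\bar{\mathbf{H}}_t^{(\ell-1)}\mathbf{W}_t^{(\ell)}\|_{\mathrm{F}}^2\big] + 2 C_\sigma^2 B_{LA}^2 B_W^2\, \phi_t^{(\ell-1)},
\end{equation}
where the first term is exactly the pre-activation gap controlled by Lemma~\ref{lemma:upper-bound-sgcn-plus-bias-1}, i.e. $\eta^2 E_s\,\mathcal{O}\big(\alpha^4 \sum_{j\le\ell}|\mathbb{E}[\|\mathbf{P}_t^{(j)}\|_{\mathrm{F}}^2] - \|\mathbf{L}\|_{\mathrm{F}}^2|\big)$, and the second term is the recursive coupling to the previous layer.

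I would then unroll this linear recursion from layer $\ell$ down to the input layer, using $\phi_t^{(0)}=0$ because $\widetilde{\mathbf{H}}^{(0)}=\mathbf{X}$ is exact; the geometric accumulation of the constant factor $2C_\sigma^2 B_{LA}^2 B_W^2$ is absorbed into the $\mathcal{O}(\cdot)$, while the common $\eta^2 E_s$ prefactor is preserved, giving $\phi_t^{(\ell)} \le \eta^2 E_s\,\mathcal{O}\big(\alpha^4 \sum_{j=1}^\ell |\mathbb{E}[\|\mathbf{P}_t^{(j)}\|_{\mathrm{F}}^2] - \|\mathbf{L}\|_{\mathrm{F}}^2|\big)$. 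Substituting this back into the three families of terms and summing over $\ell=1,\dots,L$ yields the claimed bound. The main obstacle is bookkeeping rather than any conceptual gap: I must ensure the $\eta^2 E_s$ factor is genuinely carried out of Lemma~\ref{lemma:upper-bound-sgcn-plus-bias-1} through every layer of the recursion and through the $(L+1)$-term Cauchy--Schwarz splits without quietly re-introducing a constant (non-$\eta$) embedding error — this is exactly the point where \texttt{SGCN+} departs from \texttt{SGCN}, whose $\phi^{(\ell)}$ is $\Theta(\|\mathbf{P}-\mathbf{L}\|_{\mathrm{F}}^2)$ and does not decay. A secondary subtlety is that $\bar{\mathbf{H}}_t^{(\ell)}$ here is the expectation of a history-dependent recursive quantity rather than the one-step expectation used in \texttt{SGCN}, so the reduction of the composite estimator's bias to the deterministic gap in Eq.~\ref{eq:G_bar_SGCN_plus} must be justified with care.
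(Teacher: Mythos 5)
Your proposal follows essentially the same route as the paper's proof: the same $(L+1)$-term layerwise bias decomposition into the three families (last-layer loss-gradient gap, $\nabla_H$ gap, $\nabla_W$ gap), each reduced to the per-layer pre-activation approximation error and then controlled by Lemma~\ref{lemma:upper-bound-sgcn-plus-bias-1} via the same geometric unrolling with $\phi_t^{(0)}=0$. Your recursion in $\phi_t^{(\ell)} = \mathbb{E}[\|\bar{\mathbf{H}}_t^{(\ell)} - \mathbf{H}_t^{(\ell)}\|_{\mathrm{F}}^2]$ differs from the paper's recursion in $\mathbb{E}[\|\bar{\mathbf{Z}}_t^{(\ell)} - \mathbf{Z}_t^{(\ell)}\|_{\mathrm{F}}^2]$ only by a factor $C_\sigma^2$ absorbed into the $\mathcal{O}(\cdot)$, so the argument is correct and matches the paper.
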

\begin{proof}
From the decomposition of bias as shown in previously in Eq.~\ref{eq:decompose_bias_into_layers}, we have
\begin{equation} \label{eq:lemma:upper-bound-sgcn-plus-bias-eq1}
    \begin{aligned}
    &\mathbb{E}[ \| \mathbb{E}[\widetilde{\mathbf{G}}_t^{(\ell)}] - \mathbf{G}^{(\ell)}_t \|_{\mathrm{F}}^2 ]\\
    &\leq (L+1) L_W^2 L_H^{2(L-\ell-1)} \mathbb{E}[ \| \bar{\mathbf{D}}^{(L+1)}_t -\mathbf{D}_t^{(L+1)} \|_{\mathrm{F}}^2 ]\\
    &\quad + (L+1) L_W^2 L_H^{2(L-\ell-2)} \mathbb{E}[ \| \nabla_H \bar{f}^{(L)}( \mathbf{D}_t^{(L+1)}, \bar{\mathbf{H}}_t^{(L-1)}, \mathbf{W}_t^{(L)}) - \nabla_H f^{(L)}( \mathbf{D}_t^{(L+1)}, \mathbf{H}_t^{(L-1)}, \mathbf{W}_t^{(L)})\|_{\mathrm{F}}^2 ] + \ldots \\
    &\quad + (L+1) L_W^2 \mathbb{E}[ \| \nabla_H \bar{f}^{(\ell+1)}( \mathbf{D}_t^{(\ell+2)}, \bar{\mathbf{H}}_t^{(\ell)}, \mathbf{W}_t^{(\ell+1)}) - \nabla_H f^{(\ell+1)}( \mathbf{D}_t^{(\ell+2)}, \mathbf{H}_t^{(\ell)}, \mathbf{W}_t^{(\ell+1)})\|_{\mathrm{F}}^2 ] \\
    &\quad + (L+1) \mathbb{E}[ \| \nabla_W \bar{f}^{(\ell)}( \mathbf{D}_t^{(\ell+1)}, \bar{\mathbf{H}}_t^{(\ell-1)}, \mathbf{W}_t^{(\ell)}) - \nabla_W f^{(\ell)}( \mathbf{D}_t^{(\ell+1)}, \mathbf{H}_t^{(\ell-1)}, \mathbf{W}_t^{(\ell)})\|_{\mathrm{F}}^2 ].
    \end{aligned}
\end{equation}

From the previous equation, we know that there are three key factors that will affect the bias:
\begin{itemize}
    \item The difference of gradient with respect to the last layer node representations 
    \begin{equation}\label{eq:sgcn_plus_bias_important_eq1}
        \mathbb{E}[ \| \bar{\mathbf{D}}^{(L+1)}_t -\mathbf{D}^{(L+1)}_t \|_{\mathrm{F}}^2 ].
    \end{equation}
    \item The difference of gradient with respect to the input node embedding matrix at each graph convolutional layer 
    \begin{equation}\label{eq:sgcn_plus_bias_important_eq2}
        \mathbb{E}[ \| \nabla_H \bar{f}^{(\ell+1)}( \mathbf{D}^{(\ell+2)}_t, \bar{\mathbf{H}}^{(\ell)}_t, \mathbf{W}^{(\ell+1)}_t) - \nabla_H f^{(\ell+1)}( \mathbf{D}^{(\ell+2)}_t, \mathbf{H}^{(\ell)}_t, \mathbf{W}^{(\ell+1)}_t)\|_{\mathrm{F}}^2 ].
    \end{equation}
    \item The difference of gradient with respect to the weight matrix at each graph convolutional layer 
    \begin{equation}\label{eq:sgcn_plus_bias_important_eq3}
        \mathbb{E}[ \| \nabla_W \bar{f}^{(\ell)}( \mathbf{D}^{(\ell+1)}_t, \bar{\mathbf{H}}^{(\ell-1)}_t, \mathbf{W}^{(\ell)}_t) - \nabla_W f^{(\ell)}( \mathbf{D}^{(\ell+1)}_t, \mathbf{H}^{(\ell-1)}_t, \mathbf{W}^{(\ell)}_t)\|_{\mathrm{F}}^2 ] .
    \end{equation}
\end{itemize}

Firstly, let consider the upper-bound of Eq.~\ref{eq:sgcn_plus_bias_important_eq1}.
\begin{equation} \label{eq:sgcn_plus_bias_important_eq4}
    \begin{aligned}
    \mathbb{E}[ \| \bar{\mathbf{D}}^{(L+1)}_t -\mathbf{D}^{(L+1)}_t \|_{\mathrm{F}}^2 ] &= \mathbb{E}[ \| \frac{\partial \text{Loss}(\bar{\mathbf{H}}^{(L)}_t, \mathbf{y}) }{\partial \bar{\mathbf{H}}^{(L)}_t} - \frac{\partial \text{Loss}(\mathbf{H}^{(L)}_t, \mathbf{y}) }{\partial \mathbf{H}^{(L)}_t} \|_{\mathrm{F}}^2 ] \\
    &\leq L_{loss}^2 \mathbb{E}[ \| \bar{\mathbf{H}}^{(L)}_t -\mathbf{H}^{(L)}_t \|_{\mathrm{F}}^2 ] \\
    &\leq L_{loss}^2 C_\sigma^2 \mathbb{E}[ \| \bar{\mathbf{Z}}^{(L)}_t -\mathbf{Z}^{(L)}_t \|_{\mathrm{F}}^2 ]
    \end{aligned}
\end{equation}
We can decompose $\mathbb{E}[ \| \bar{\mathbf{Z}}^{(L)}_t -\mathbf{Z}^{(L)}_t \|_{\mathrm{F}}^2 ]$ as
\begin{equation}
    \begin{aligned}
    &\mathbb{E}[\| \bar{\mathbf{Z}}^{(L)}_t - \mathbf{Z}^{(L)}_t \|_{\mathrm{F}}^2] \\
    &= \mathbb{E}[\| \bar{\mathbf{Z}}^{(L)}_t - \mathbf{L}\mathbf{H}^{(L-1)}_t \mathbf{W}^{(L)}_t \|_{\mathrm{F}}^2] \\
    &\leq 2 \mathbb{E}[\| \bar{\mathbf{Z}}^{(L)}_t - \mathbf{L}\bar{\mathbf{H}}^{(L-1)}_t \mathbf{W}^{(L)}_t \|_{\mathrm{F}}^2]  + 2 \mathbb{E}[\| \mathbf{L}\bar{\mathbf{H}}^{(L-1)}_t \mathbf{W}^{(L)}_t - \mathbf{L}\mathbf{H}^{(L-1)}_t \mathbf{W}^{(L)}_t \|_{\mathrm{F}}^2] \\
    &\leq 2 \mathbb{E}[\| \bar{\mathbf{Z}}^{(L)}_t - \mathbf{L}\bar{\mathbf{H}}^{(L-1)}_t \mathbf{W}^{(L)}_t \|_{\mathrm{F}}^2]   + 2 B_{LA}^2 B_W^2 C_\sigma^2 \mathbb{E}[\| \bar{\mathbf{Z}}^{(L-1)}_t - \mathbf{Z}^{(L-1)}_t \|_{\mathrm{F}}^2] \\
    &\leq \sum_{\ell=1}^L \mathcal{O}(\mathbb{E}[\| \bar{\mathbf{Z}}^{(\ell)}_t - \mathbf{L}\bar{\mathbf{H}}_t^{(\ell-1)} \mathbf{W}^{(\ell)}_t \|_{\mathrm{F}}^2]). 
    \end{aligned}
\end{equation}
Using result from Lemma~\ref{lemma:upper-bound-sgcn-plus-bias-1}, we can upper bound Eq.~\ref{eq:sgcn_plus_bias_important_eq4} by
\begin{equation}
    \mathbb{E}[ \| \bar{\mathbf{D}}^{(L+1)}_t -\mathbf{D}^{(L+1)}_t \|_{\mathrm{F}}^2 ] \leq  \eta^2 E_s \times \mathcal{O}\left( \alpha^4 \sum_{\ell=1}^L  \Big| \mathbb{E}[\| \mathbf{P}^{(\ell)}_t \|_{\mathrm{F}}^2] - \|\mathbf{L}\|_{\mathrm{F}}^2 \Big| \right). 
\end{equation}

Then, let consider the upper-bound of Eq.~\ref{eq:sgcn_plus_bias_important_eq2}.
\begin{equation} \label{eq:sgcn_plus_bias_important_eq5}
    \begin{aligned}
    & \mathbb{E}[ \| \nabla_H \bar{f}^{(\ell)}(\mathbf{D}^{(\ell+1)}_t, \bar{\mathbf{H}}^{(\ell-1)}_t, \mathbf{W}^{(\ell)}_t ) - \nabla_H f^{(\ell)}(\mathbf{D}^{(\ell+1)}_t, \mathbf{H}^{(\ell-1)}_t, \mathbf{W}^{(\ell)}_t ) \|_{\mathrm{F}}^2 ] \\
    &= \mathbb{E}[ \| [\mathbf{L}]^\top \Big(\mathbf{D}^{(\ell+1)}_t \circ \sigma^\prime(\mathbf{P}_t^{(\ell)} \bar{\mathbf{H}}^{(\ell-1)}_t \mathbf{W}^{(\ell)}_t ) \Big) [\mathbf{W}^{(\ell)}_t ]^\top - [\mathbf{L}]^\top \Big(\mathbf{D}^{(\ell+1)}_t \circ \sigma^\prime(\mathbf{L} \mathbf{H}^{(\ell-1)}_t \mathbf{W}^{(\ell)}_t ) \Big) [\mathbf{W}^{(\ell)}_t ]^\top \|_{\mathrm{F}}^2 ] \\
    &\leq B_{LA}^2 B_D^2 B_W^2 L_\sigma^2 \mathbb{E}[ \| \bar{\mathbf{Z}}_t^{(\ell)} - \mathbf{L} \mathbf{H}^{(\ell-1)}_t \mathbf{W}^{(\ell)}_t \|_{\mathrm{F}}^2 ] \\
    &\leq 2 B_{LA}^2 B_D^2 B_W^2 L_\sigma^2 \mathbb{E}[ \| \bar{\mathbf{Z}}_t^{(\ell)} - \mathbf{L} \bar{\mathbf{H}}^{(\ell-1)}_t \mathbf{W}^{(\ell)}_t \|_{\mathrm{F}}^2 ] \\
    &\qquad + 2 B_{LA}^2 B_D^2 B_W^2 L_\sigma^2 \underbrace{\mathbb{E}[ \| \mathbf{L} \bar{\mathbf{H}}^{(\ell-1)}_t \mathbf{W}^{(\ell)}_t - \mathbf{L} \mathbf{H}^{(\ell-1)}_t \mathbf{W}^{(\ell)}_t \|_{\mathrm{F}}^2 ]}_{(A)},
    \end{aligned}
\end{equation}
where $\bar{\mathbf{Z}}_t^{(\ell)} = \bar{\mathbf{Z}}_{t-1}^{(\ell)} + \mathbf{P}^{(\ell)} \bar{\mathbf{H}}_t^{(\ell-1)} \mathbf{W}^{(\ell)}_t - \mathbf{P}^{(\ell)} \bar{\mathbf{H}}_{t-1}^{(\ell-1)} \mathbf{W}^{(\ell)}_{t-1}$.

Let take a closer look at term $(A)$, we have
\begin{equation}\label{eq:upper_bound_sgcn_plus_bias_eq1}
    \begin{aligned}
    &\mathbb{E}[ \| \mathbf{L} \bar{\mathbf{H}}^{(\ell-1)}_t \mathbf{W}^{(\ell)}_t - \mathbf{L} \mathbf{H}^{(\ell-1)}_t \mathbf{W}^{(\ell)}_t \|_{\mathrm{F}}^2 ] \\
    &\leq B_{LA}^2 B_W^2 C_\sigma^2 \mathbb{E}[\| \bar{\mathbf{Z}}^{(\ell-2)}_t  - \mathbf{L} \mathbf{H}^{(\ell-2)}_t \mathbf{W}^{(\ell-1)}_t  \|_{\mathrm{F}}^2] \\
    &\leq 2B_{LA}^2 B_W^2 C_\sigma^2 \mathbb{E}[\| \bar{\mathbf{Z}}^{(\ell-2)}_t - \mathbf{L} \bar{\mathbf{H}}^{(\ell-2)}_t \mathbf{W}^{(\ell-1)}_t  \|_{\mathrm{F}}^2] \\
    &\qquad + 2B_{LA}^2 B_W^2 C_\sigma^2 \mathbb{E}[\| \mathbf{L} \bar{\mathbf{H}}^{(\ell-2)}_t \mathbf{W}^{(\ell-1)}_t - \mathbf{L} \mathbf{H}^{(\ell-2)}_t \mathbf{W}^{(\ell-1)}_t  \|_{\mathrm{F}}^2].
    \end{aligned}
\end{equation}

Therefore, by induction we can upper bound Eq.~\ref{eq:sgcn_plus_bias_important_eq5} by
\begin{equation}
    \begin{aligned}
    & \mathbb{E}[ \| \nabla_H \bar{f}^{(\ell)}(\mathbf{D}^{(\ell+1)}_t, \bar{\mathbf{H}}^{(\ell-1)}_t, \mathbf{W}^{(\ell)}_t ) - \nabla_H f^{(\ell)}(\mathbf{D}^{(\ell+1)}_t, \mathbf{H}^{(\ell-1)}_t, \mathbf{W}^{(\ell)}_t ) \|_{\mathrm{F}}^2 ] \\
    &\leq \mathcal{O}(\mathbb{E}[ \| \bar{\mathbf{Z}}^{(\ell)}_t - \mathbf{L} \bar{\mathbf{H}}^{(\ell-1)}_t \mathbf{W}^{(\ell)}_t \|_{\mathrm{F}}^2 ]) + \mathcal{O}(\mathbb{E}[\| \bar{\mathbf{Z}}^{(\ell-1)}_t - \mathbf{L} \bar{\mathbf{H}}^{(\ell-2)}_t \mathbf{W}^{(\ell-1)}_t  \|_{\mathrm{F}}^2]) + \ldots \\
    &\qquad + \mathcal{O}(\mathbb{E}[\| \bar{\mathbf{Z}}^{(2)}_t - \mathbf{L} \bar{\mathbf{H}}^{(1)}_t \mathbf{W}^{(2)}_t  \|_{\mathrm{F}}^2]) + \mathcal{O}(\mathbb{E}[\| \bar{\mathbf{Z}}^{(1)}_t - \mathbf{L} \mathbf{X} \mathbf{W}^{(1)}_t  \|_{\mathrm{F}}^2]).
    \end{aligned}
\end{equation}

Using result from Lemma~\ref{lemma:upper-bound-sgcn-plus-bias-1}, we have
\begin{equation}
    \begin{aligned}
    & \mathbb{E}[ \| \nabla_H \bar{f}^{(\ell)}(\mathbf{D}^{(\ell+1)}_t, \bar{\mathbf{H}}^{(\ell-1)}_t, \mathbf{W}^{(\ell)}_t) - \nabla_H f^{(\ell)}(\mathbf{D}^{(\ell+1)}_t, \mathbf{H}^{(\ell-1)}_t, \mathbf{W}^{(\ell)}_t) \|_{\mathrm{F}}^2 ] \\
    &\leq \eta^2 E_s \times \mathcal{O} \left( \alpha^4 \sum_{j=1}^\ell | \mathbb{E}[\| \mathbf{P}_t^{(j)} \|_{\mathrm{F}}^2] - \|\mathbf{L}\|_{\mathrm{F}}^2 | \right).
    \end{aligned}
\end{equation}

Finally, let consider the upper-bound of Eq.~\ref{eq:sgcn_plus_bias_important_eq3}.
\begin{equation}
    \begin{aligned}
    &\mathbb{E}[ \| \nabla_W \bar{f}^{(\ell)}(\mathbf{D}_t^{(\ell+1)}, \bar{\mathbf{H}}_t^{(\ell-1)}, \mathbf{W}^{(\ell)}_t) - \nabla_W f^{(\ell)}(\mathbf{D}_t^{(\ell+1)}, \mathbf{H}^{(\ell-1)}_t, \mathbf{W}^{(\ell)}_t) \|_{\mathrm{F}}^2 ] \\
    &= \mathbb{E}[ \| [\mathbf{L} \bar{\mathbf{H}}_t^{(\ell-1)}]^\top \Big(\mathbf{D}_t^{(\ell+1)} \circ \sigma^\prime(\bar{\mathbf{Z}}^{(\ell)}_t) \Big) - [\mathbf{L} \mathbf{H}^{(\ell-1)}_t]^\top \Big(\mathbf{D}_t^{(\ell+1)} \circ  \sigma^\prime(\mathbf{L}^{(\ell)}\mathbf{H}^{(\ell-1)}_t\mathbf{W}^{(\ell)}_t)\Big) \|_{\mathrm{F}}^2 ]\\
    &\leq 2\mathbb{E}[ \| [\mathbf{L} \bar{\mathbf{H}}_t^{(\ell-1)}]^\top \Big(\mathbf{D}_t^{(\ell+1)} \circ \sigma^\prime(\bar{\mathbf{Z}}^{(\ell)}_t ) \Big) - [\mathbf{L} \mathbf{H}^{(\ell-1)}_t]^\top \Big(\mathbf{D}_t^{(\ell+1)} \circ \sigma^\prime(\bar{\mathbf{Z}}^{(\ell)}_t) \Big) \|_{\mathrm{F}}^2 ]\\
    &\qquad + 2\mathbb{E}[ \| [\mathbf{L} \mathbf{H}^{(\ell-1)}_t]^\top \Big(\mathbf{D}_t^{(\ell+1)} \circ \sigma^\prime(\bar{\mathbf{Z}}^{(\ell)}_t ) \Big) - [\mathbf{L} \mathbf{H}^{(\ell-1)}_t]^\top \Big(\mathbf{D}_t^{(\ell+1)} \circ  \sigma^\prime(\mathbf{L}^{(\ell)}\mathbf{H}^{(\ell-1)}_t\mathbf{W}^{(\ell)}_t)\Big) \|_{\mathrm{F}}^2 ] \\
    &\leq 2 B_D^2 C_\sigma^2 B_{LA}^2 \underbrace{\mathbb{E}[ \| \bar{\mathbf{H}}_t^{(\ell-1)} - \mathbf{H}^{(\ell-1)}_t \|_{\mathrm{F}}^2 ]}_{(B)} \\
    &\quad + 4B_{LA}^2 \alpha^2 B_{H}^2 B_D^2 L_\sigma^2 \Big( \mathbb{E}[ \| \bar{\mathbf{Z}}^{(\ell)}_t - \mathbf{L} \bar{\mathbf{H}}_t^{(\ell-1)} \mathbf{W}^{(\ell)}_t \|_{\mathrm{F}}^2 ] + \mathbb{E}[ \| \mathbf{L} \bar{\mathbf{H}}_t^{(\ell-1)} \mathbf{W}^{(\ell)}_t - \mathbf{L} \mathbf{H}^{(\ell-1)}_t \mathbf{W}^{(\ell)}_t \|_{\mathrm{F}}^2 ] \Big).
    \end{aligned}
\end{equation}
By definition, we can write the term $(B)$ as 
\begin{equation}
    \begin{aligned}
    \mathbb{E}[ \| \bar{\mathbf{H}}_t^{(\ell-1)} - \mathbf{H}^{(\ell-1)}_t \|_{\mathrm{F}}^2 ] 
    &\leq C_\sigma^2 \mathbb{E}[\| \bar{\mathbf{Z}}^{(\ell-1)}_t - \mathbf{L} \mathbf{H}^{(\ell-2)}_t \mathbf{W}^{(\ell-1)}_t \|_{\mathrm{F}}^2] \\
    &\leq 2C_\sigma^2 \mathbb{E}[\| \bar{\mathbf{Z}}^{(\ell-1)}_t - \mathbf{L} \bar{\mathbf{H}}_t^{(\ell-2)} \mathbf{W}^{(\ell-1)}_t \|_{\mathrm{F}}^2] \\
    &\qquad + 2C_\sigma^2 \mathbb{E}[\| \mathbf{L} \bar{\mathbf{H}}_t^{(\ell-2)} \mathbf{W}^{(\ell)}_t - \mathbf{L} \mathbf{H}^{(\ell-2)}_t \mathbf{W}^{(\ell-1)}_t \|_{\mathrm{F}}^2].
    \end{aligned}
\end{equation}
Plugging term $(B)$ back and using Eq.~\ref{eq:upper_bound_sgcn_plus_bias_eq1} and Lemma~\ref{lemma:upper-bound-sgcn-plus-bias-1}, we have
\begin{equation}
    \begin{aligned}
    & \mathbb{E}[ \| \nabla_W \bar{f}^{(\ell)}(\mathbf{D}_t^{(\ell+1)}, \bar{\mathbf{H}}_t^{(\ell-1)}, \mathbf{W}^{(\ell)}_t) - \nabla_W f^{(\ell)}(\mathbf{D}_t^{(\ell+1)}, \mathbf{H}^{(\ell-1)}_t, \mathbf{W}^{(\ell)}_t) \|_{\mathrm{F}}^2 ] \\
    &\leq \eta^2 E_s \times \mathcal{O} \left( \alpha^4 \sum_{j=1}^\ell | \mathbb{E}[\| \mathbf{P}_t^{(j)} \|_{\mathrm{F}}^2] - \|\mathbf{L}\|_{\mathrm{F}}^2 | \right). 
    \end{aligned}
\end{equation}

Combining the result from Eq.~\ref{eq:sgcn_plus_bias_important_eq1}, \ref{eq:sgcn_plus_bias_important_eq2}, \ref{eq:sgcn_plus_bias_important_eq3} we have
\begin{equation}
    \mathbb{E}[ \| \mathbb{E}[\widetilde{\mathbf{G}}_t^{(\ell)}] - \mathbf{G}_t^{(\ell)}\|_{\mathrm{F}}^2 ] \leq \eta^2 E_s \times \mathcal{O} \left( \alpha^4 \sum_{j=1}^\ell | \mathbb{E}[\| \mathbf{P}_t^{(j)} \|_{\mathrm{F}}^2] - \|\mathbf{L}\|_{\mathrm{F}}^2 | \right).
\end{equation}

\end{proof}

\subsection{Remaining steps toward Theorem~\ref{theorem:convergence_of_sgcn_plus}}
Now we are ready to prove Theorem~\ref{theorem:convergence_of_sgcn_plus}.
By the smoothness of $\mathcal{L}(\bm{\theta}_t)$, we have 
\begin{equation}
    \begin{aligned}
    \mathcal{L}(\bm{\theta}_{t+1}) &\leq \mathcal{L}(\bm{\theta}_t) + \langle \nabla \mathcal{L}(\bm{\theta}_t), \bm{\theta}_{t+1} - \bm{\theta}_t \rangle + \frac{L_{\mathrm{F}}}{2}\|\bm{\theta}_{t+1} - \bm{\theta}_t\|^2 \\
    &= \mathcal{L}(\bm{\theta}_t) - \eta \langle \nabla \mathcal{L}(\bm{\theta}_t), \nabla \widetilde{\mathcal{L}}(\bm{\theta}_t) \rangle + \frac{\eta^2 L_{\mathrm{F}} }{2}\|\nabla \widetilde{\mathcal{L}}(\bm{\theta}_t)\|^2.
    \end{aligned}
\end{equation}

Let $\mathcal{F}_t = \{ \{\mathcal{B}_1^{(\ell)}\}_{\ell=1}^L ,\ldots, \{\mathcal{B}_{t-1}^{(\ell)}\}_{\ell=1}^L\}$.
Note that the weight parameters $\bm{\theta}_t$ is a function of history of the generated random process and hence is random.
Taking expectation on both sides condition on $\mathcal{F}_t$ and using $\eta<1/L_{\mathrm{F}}$ we have
\begin{equation}
    \begin{aligned}
    &\mathbb{E}[\nabla \mathcal{L}(\bm{\theta}_{t+1}) | \mathcal{F}_t] \\
    &\leq \mathcal{L}(\bm{\theta}_t) - \eta \langle \nabla \mathcal{L}(\bm{\theta}_t), \mathbb{E}[\nabla \widetilde{\mathcal{L}}(\bm{\theta}_t)|\mathcal{F}_t] \rangle + \frac{\eta^2 L_{\mathrm{F}} }{2} \Big(\mathbb{E}[\|\nabla \widetilde{\mathcal{L}}(\bm{\theta}_t) - \mathbb{E}[\nabla \widetilde{\mathcal{L}}(\bm{\theta}_t)|\mathcal{F}_t]\|^2 | \mathcal{F}_t] + \mathbb{E}[\|\mathbb{E}[\mathbf{g}|\mathcal{F}_t]\|^2 | \mathcal{F}_t]\Big) \\
    &= \mathcal{L}(\bm{\theta}_t) - \eta \langle \nabla \mathcal{L}(\bm{\theta}_t), \nabla \mathcal{L}(\bm{\theta}_t) + \mathbb{E}[\mathbf{b}_t|\mathcal{F}_t] \rangle + \frac{\eta^2 L_{\mathrm{F}} }{2} \Big(\mathbb{E}[\|\mathbf{n}_t\|^2 | \mathcal{F}_t] + \|\nabla \mathcal{L}(\bm{\theta}_t) + \mathbb{E}[\mathbf{b}_t | \mathcal{F}_t] \|^2 \Big) \\
    &\leq \mathcal{L}(\bm{\theta}_t) + \frac{\eta}{2}\Big( - 2 \langle \nabla \mathcal{L}(\bm{\theta}_t), \nabla \mathcal{L}(\bm{\theta}_t) + \mathbb{E}[\mathbf{b}_t|\mathcal{F}_t] \rangle + \|\nabla \mathcal{L}(\bm{\theta}_t) + \mathbb{E}[\mathbf{b}_t|\mathcal{F}_t] \|^2 \Big) + \frac{\eta^2 L_{\mathrm{F}}}{2} \mathbb{E}[\|\mathbf{n}_t\|^2|\mathcal{F}_t] \\
    &\leq \mathcal{L}(\bm{\theta}_t) + \frac{\eta}{2}\Big( - \| \nabla \mathcal{L}(\bm{\theta}_t)\|^2 + \mathbb{E}[\|\mathbf{b}_t \|^2|\mathcal{F}_t] \Big) + \frac{\eta^2 L_{\mathrm{F}}}{2} \mathbb{E}[\|\mathbf{n}_t\|^2|\mathcal{F}_t].
    \end{aligned}
\end{equation}

Plugging in the upper bound of bias and variance, taking expectation over $\mathcal{F}_t$, and rearranging the term we have
\begin{equation}
    \mathbb{E}[\| \nabla \mathcal{L}(\bm{\theta}_t)\|^2] \leq \frac{2}{\eta} \Big( \mathbb{E}[\mathcal{L}(\bm{\theta}_t)] - \mathbb{E}[\nabla \mathcal{L}(\bm{\theta}_{t+1})]\Big)  + \eta L_{\mathrm{F}} \mathbb{E}[\| \mathbf{n}_t \|^2] + \eta^2 \mathbb{E}[\| \mathbf{b}_t \|^2].
\end{equation}

Summing up from $t=1$ to $T$, rearranging we have
\begin{equation}
\begin{aligned}
    \frac{1}{T} \sum_{t=1}^T \mathbb{E}[ \|\nabla \mathcal{L}(\bm{\theta}_t)\|^2 ] &\leq \frac{2}{\eta T}\sum_{t=1}^T (\mathbb{E}[\mathcal{L}(\bm{\theta}_t)] - \mathbb{E}[\mathcal{L}(\bm{\theta}_{t+1})])  + \eta L_{\mathrm{F}} \Delta_\mathbf{n} + \eta^2 \Delta_\mathbf{b}^{+\prime} \\
    &\underset{(a)}{\leq} \frac{2}{\eta T}( \mathcal{L}(\bm{\theta}_1) - \mathcal{L}(\bm{\theta}^\star) ) + \eta L_{\mathrm{F}} \Delta_\mathbf{n} + \eta^2 \Delta_\mathbf{b}^{+\prime},
\end{aligned}
\end{equation}
where the inequality $(a)$ is due to $\mathcal{L}(\bm{\theta}^\star) \leq \mathbb{E}[\mathcal{L}(\bm{\theta}_{T+1})]$.

By selecting learning rate as $\eta=1/\sqrt{T}$, we have
\begin{equation}
    \begin{aligned}
    \frac{1}{T} \sum_{t=1}^T \mathbb{E}[ \|\nabla \mathcal{L}(\bm{\theta}_t)\|^2 ] &\leq \frac{2(\mathcal{L}(\bm{\theta}_1) - \mathcal{L}(\bm{\theta}^\star))}{\sqrt{T}} + \frac{L_{\mathrm{F}} \Delta_\mathbf{n}}{\sqrt{T}} + \frac{\Delta_\mathbf{b}^{+\prime}}{T}.
    \end{aligned}
\end{equation}

\subsection{Discussion on the elimination of bias term}
Recall that the existence of bias term $\Delta_b$ in Theorem 1 is due to inner-layer sampling. 
In the analysis of \texttt{SGCN}, since the bias term $\Delta_b$ is not multiplied by the learning rate $\eta$ (Eq. 110 - 111), the bias term is not decreasing as the $T$ increases when selecting $\eta = 1/\sqrt{T}$.
To resolve this non-decreasing bias term, \texttt{SGCN+} proposes to use historical node embeddings for variance reduction, such that the bias term $\Delta_b^+$ is controlled by the difference between the current model to the snapshot model. Because the difference between the current model to the snapshot model is affected by the selection of $\eta$ (Lemma 4), in the analysis of \texttt{SGCN+}, the bias term $\Delta_b^+$ is multiplied by $\eta^2$ (Eq. 151). Therefore, by selecting $\eta = 1/\sqrt{T}$ the bias term is decreasing as the number of iterations increases.

\section{Proof of Theorem~\ref{theorem:convergence_of_sgcn_plus_plus}}\label{section:proof_of_thm3}

\subsection{Upper-bounded on the node embedding matrices and layerwise gradients}

When using variance reduction algorithm, we cannot use upper bound on the node embedding matrices and layerwise gradients as derived in Proposition~\ref{proposition:matrix_norm_bound}.

To see this, let consider the upper bound on the node embedding matrix before the activation function $\widetilde{\mathbf{Z}}^{(\ell)}_t$, we have
\begin{equation}
    \begin{aligned}
    \| \widetilde{\mathbf{Z}}^{(\ell)}_t \|_\mathrm{F} 
    &= \| \widetilde{\mathbf{Z}}^{(\ell)}_{t-1} + \widetilde{\mathbf{L}}_t^{(\ell)} \widetilde{\mathbf{H}}^{(\ell-1)}_t \mathbf{W}_t^{(\ell)} - \widetilde{\mathbf{L}}_t^{(\ell)} \widetilde{\mathbf{H}}^{(\ell-1)}_{t-1} \mathbf{W}_{t-1}^{(\ell)}  \|_\mathrm{F} \\
    &\leq  \| \widetilde{\mathbf{Z}}^{(\ell)}_{t-1} \|_\mathrm{F}+  \| \widetilde{\mathbf{L}}_t^{(\ell)} \widetilde{\mathbf{H}}^{(\ell-1)}_t \mathbf{W}_t^{(\ell)} - \widetilde{\mathbf{L}}_t^{(\ell)} \widetilde{\mathbf{H}}^{(\ell-1)}_{t-1} \mathbf{W}_{t-1}^{(\ell)} \|_\mathrm{F}.
    \end{aligned}
\end{equation}

Similarly, we can compute the upper bound on the gradient passing from the $\ell+1$ to the $\ell$th layer as
\begin{equation}
    \begin{aligned}
        \| \widetilde{\mathbf{D}}_t^{(\ell)} \|_\mathrm{F}
    &= \left\| \widetilde{\mathbf{D}}_{t-1}^{(\ell)}  + [\widetilde{\mathbf{L}}^{(\ell)}]^\top \Big(\widetilde{\mathbf{D}}_t^{(\ell+1)} \circ \sigma^\prime(\widetilde{\mathbf{Z}}_t) \Big) [\mathbf{W}_t^{(\ell)}] - [\widetilde{\mathbf{L}}^{(\ell)}]^\top \Big(\widetilde{\mathbf{D}}_{t-1}^{(\ell+1)} \circ \sigma^\prime(\widetilde{\mathbf{Z}}_{t-1}) \Big) [\mathbf{W}_{t-1}^{(\ell)}] \right\|_\mathrm{F} \\
    &\leq \| \widetilde{\mathbf{D}}_{t-1}^{(\ell)} \|_\mathrm{F} + \left\| [\widetilde{\mathbf{L}}^{(\ell)}]^\top \Big(\widetilde{\mathbf{D}}_t^{(\ell+1)} \circ \sigma^\prime(\widetilde{\mathbf{Z}}_t) \Big) [\mathbf{W}_t^{(\ell)}] - [\widetilde{\mathbf{L}}^{(\ell)}]^\top \Big(\widetilde{\mathbf{D}}_{t-1}^{(\ell+1)} \circ \sigma^\prime(\widetilde{\mathbf{Z}}_{t-1}) \Big) [\mathbf{W}_{t-1}^{(\ell)}] \right\|_\mathrm{F}.
    \end{aligned}
\end{equation}

Similar to the discussion we had in the proof of Theorem~\ref{theorem:convergence_of_sgcn_plus}, in the worst case, the matrix norm are  growing as the inner loop size. To control the growth on matrix norm, we introduced an early stopping criterion by checking the norm of node embeddings and gradient, and immediately start another snapshot step if the condition is triggered. 

Let define $t_s$ as the stopping time of the inner loop in the $s$th outer loop,
\begin{equation}
    e_s = e_{s-1} + \min \Big\{ \max_{k \geq e_{s-1}} \{ \| \widetilde{\mathbf{H}}_k^{(\ell)}\|_\mathrm{F} \leq \alpha \| \widetilde{\mathbf{H}}_{e_{s-1}}^{(\ell)} \|_\mathrm{F}~\text{and}~\| \widetilde{\mathbf{D}}_k^{(\ell)}\|_\mathrm{F} \leq \beta \| \widetilde{\mathbf{D}}_{e_{s-1}}^{(\ell)} \|_\mathrm{F} \}, K \Big\},~
    t_0 = 1,
\end{equation}
and $E_s = e_s - e_{s-1} \leq K$ as the snapshot gap at the $s$th inner-loop and $S$ as the total number of outer loops. 
By doing so, we can upper bounded the node embedding matrix by $\| \mathbf{H}^{(\ell)}_t \|_\mathrm{F} \leq \alpha B_H, \| \mathbf{D}^{(\ell)}_t \|_\mathrm{F} \leq \beta B_D,~\forall t \in [T]$.

\subsection{Supporting lemmas}

In the following lemma, we decompose the mean-square error of stochastic gradient at the $\ell$th layer $\mathbb{E}[\|\widetilde{\mathbf{G}}^{(\ell)} - \mathbf{G}^{(\ell)}\|F^2]$ as the summation of 
\begin{itemize}
    \item The difference between the gradient with respect to the last layer node embedding matrix
    \begin{equation}\label{eq:sgcn_pplus_important_steps_1}
        \mathbb{E}[ \|\widetilde{\mathbf{D}}^{(L+1)} - \mathbf{D}^{(L+1)} \|_{\mathrm{F}}^2 ].
    \end{equation}
    \item The difference of gradient passing from the $(\ell+1)$th layer node embedding to the $\ell$th layer node embedding
    \begin{equation}\label{eq:sgcn_pplus_important_steps_2}
        \mathbb{E}[ \| \nabla_H \widetilde{f}^{(\ell+1)}( \mathbf{D}^{(\ell+2)}, \widetilde{\mathbf{H}}^{(\ell)}, \mathbf{W}^{(\ell+1)}) - \nabla_H f^{(\ell+1)}( \mathbf{D}^{(\ell+2)}, \mathbf{H}^{(\ell)}, \mathbf{W}^{(\ell+1)})\|_{\mathrm{F}}^2 ].
    \end{equation}
    \item The difference of gradient passing from the $\ell$th layer node embedding to the $\ell$th layer weight matrix
    \begin{equation}\label{eq:sgcn_pplus_important_steps_3}
        \mathbb{E}[ \| \nabla_W \widetilde{f}^{(\ell)}( \mathbf{D}^{(\ell+1)}, \widetilde{\mathbf{H}}^{(\ell-1)}, \mathbf{W}^{(\ell)}) - \nabla_W f^{(\ell)}( \mathbf{D}^{(\ell+1)}, \mathbf{H}^{(\ell-1)}, \mathbf{W}^{(\ell)}) \|_{\mathrm{F}}^2 ].
    \end{equation}
\end{itemize}

\begin{lemma}\label{lemma:sgcn_pplus_lemma1}
The mean-square error of stochastic gradient at the $\ell$th layer can be decomposed as 
    \begin{equation}
        \begin{aligned}
        &\mathbb{E}[\|\widetilde{\mathbf{G}}^{(\ell)} - \mathbf{G}^{(\ell)}\|F^2] \\
        &\leq \mathcal{O}(\mathbb{E}[ \|\widetilde{\mathbf{D}}^{(L+1)} - \mathbf{D}^{(L+1)} \|_{\mathrm{F}}^2 ]) \\
        &\qquad + \mathcal{O}(\mathbb{E}[ \| \nabla_H \widetilde{f}^{(L)}( \mathbf{D}^{(L+1)}, \widetilde{\mathbf{H}}^{(L-1)} \mathbf{W}^{(L)}) - \nabla_H f^{(L)}( \mathbf{D}^{(L+1)}, \mathbf{H}^{(L-1)} \mathbf{W}^{(L)}) \|_{\mathrm{F}}^2 ]) + \ldots\\
        &\qquad + \mathcal{O}(\mathbb{E}[ \| \nabla_H \widetilde{f}^{(\ell+1)}( \mathbf{D}^{(\ell+2)}, \widetilde{\mathbf{H}}^{(\ell)}, \mathbf{W}^{(\ell+1)}) - \nabla_H f^{(\ell+1)}( \mathbf{D}^{(\ell+2)}, \mathbf{H}^{(\ell)}, \mathbf{W}^{(\ell+1)})\|_{\mathrm{F}}^2 ]) \\
        &\qquad + \mathcal{O}(\mathbb{E}[ \| \nabla_W \widetilde{f}^{(\ell)}( \mathbf{D}^{(\ell+1)}, \widetilde{\mathbf{H}}^{(\ell-1)}, \mathbf{W}^{(\ell)}) - \nabla_W f^{(\ell)}( \mathbf{D}^{(\ell+1)}, \mathbf{H}^{(\ell-1)}, \mathbf{W}^{(\ell)}) \|_{\mathrm{F}}^2 ]).
        \end{aligned}
    \end{equation}
\end{lemma}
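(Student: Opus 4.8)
The plan is to establish this decomposition through a telescoping expansion of the composite gradient, followed by the triangle inequality and the layerwise Lipschitz continuity of the gradient maps $\nabla_H \widetilde{f}^{(\ell)}$ and $\nabla_W \widetilde{f}^{(\ell)}$ established earlier. Recall that both $\widetilde{\mathbf{G}}^{(\ell)}$ and $\mathbf{G}^{(\ell)}$ are nested compositions: $\widetilde{\mathbf{G}}^{(\ell)} = \nabla_W \widetilde{f}^{(\ell)}(\nabla_H \widetilde{f}^{(\ell+1)}(\cdots \nabla_H \widetilde{f}^{(L)}(\widetilde{\mathbf{D}}^{(L+1)}, \widetilde{\mathbf{H}}^{(L-1)}, \mathbf{W}^{(L)}) \cdots), \widetilde{\mathbf{H}}^{(\ell-1)}, \mathbf{W}^{(\ell)})$ and similarly for $\mathbf{G}^{(\ell)}$ with the deterministic maps $f$ and full-batch quantities. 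The difference $\widetilde{\mathbf{G}}^{(\ell)} - \mathbf{G}^{(\ell)}$ therefore differs simultaneously in every slot of this composition, and the goal is to isolate these discrepancies one layer at a time.

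First I would insert a telescoping sequence of hybrid terms, one per layer from $L$ down to $\ell$ plus the loss-gradient term, each obtained from its predecessor by swapping exactly one factor from its stochastic form to its deterministic form, working from the outermost argument (the loss gradient $\widetilde{\mathbf{D}}^{(L+1)}$) inward to the final $\nabla_W$ application at layer $\ell$. Applying the elementary inequality $\|\sum_{i} \mathbf{A}_i\|_{\mathrm{F}}^2 \leq (L+1)\sum_{i}\|\mathbf{A}_i\|_{\mathrm{F}}^2$ then splits the squared error into a sum over these single-slot differences. This mirrors the construction used in the proof of Lemma~\ref{lemma:upper-bound-sgcn-vars}, except that here each hybrid term is compared against the deterministic composition rather than against its conditional expectation, so the entire mean-square error (not a separately handled bias and variance) is decomposed at once.

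Next I would collapse each single-slot term using the Lipschitz continuity results proved above. For the outermost term, the $L_W$-Lipschitz continuity of $\nabla_W \widetilde{f}^{(\ell)}$ in its $\mathbf{D}$-argument together with the $L_H$-Lipschitz continuity of each intermediate $\nabla_H \widetilde{f}^{(j)}$ in its $\mathbf{D}$-argument propagates the discrepancy $\widetilde{\mathbf{D}}^{(L+1)} - \mathbf{D}^{(L+1)}$ through the chain, producing a factor $L_W L_H^{L-\ell-1}$ times $\|\widetilde{\mathbf{D}}^{(L+1)} - \mathbf{D}^{(L+1)}\|_{\mathrm{F}}^2$. Each subsequent hybrid difference has the form $\nabla_H \widetilde{f}^{(j)}(\mathbf{D}, \widetilde{\mathbf{H}}, \mathbf{W}) - \nabla_H f^{(j)}(\mathbf{D}, \mathbf{H}, \mathbf{W})$ and, after peeling off the outer Lipschitz constants $L_W L_H^{\cdots}$ contributed by the layers above it, reduces to exactly the irreducible quantity in Eq.~\ref{eq:sgcn_pplus_important_steps_2}; the last term reduces to Eq.~\ref{eq:sgcn_pplus_important_steps_3}. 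Since $L_W$ and $L_H$ are fixed constants, all the leading coefficients are absorbed into the $\mathcal{O}(\cdot)$ notation, yielding the claimed bound.

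The main obstacle will be the careful bookkeeping of the hybrid terms: each irreducible difference in Eq.~\ref{eq:sgcn_pplus_important_steps_2} and Eq.~\ref{eq:sgcn_pplus_important_steps_3} changes both the sampled operator (the sparse Laplacian $\widetilde{\mathbf{L}}^{(\ell)}$ inside $\widetilde{f}$ versus the full $\mathbf{L}$ inside $f$) and the propagated node embeddings ($\widetilde{\mathbf{H}}$ versus $\mathbf{H}$) simultaneously, so Lipschitz continuity alone cannot reduce them further; this is precisely why they are left explicit in the statement and deferred to the lemmas that subsequently bound them through quantities such as $\mathbb{E}[\|\widetilde{\mathbf{L}}^{(\ell)} - \mathbf{L}\|_{\mathrm{F}}^2]$. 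I must ensure that when each hybrid term is formed only one slot changes, that Lipschitz continuity is invoked with respect to the correct (varying) argument while the remaining arguments are held fixed at whichever stochastic-or-deterministic value the telescoping dictates, and that the exponents on $L_H$ match the number of intervening $\nabla_H$ layers at each position. Once this indexing is pinned down, the remaining estimates are routine applications of the norm bounds from Proposition~\ref{proposition:matrix_norm_bound}.
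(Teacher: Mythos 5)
Your proposal is correct and follows essentially the same route as the paper: the paper's proof likewise inserts $(L+1)$ telescoping hybrid compositions that swap one slot at a time (the loss gradient $\widetilde{\mathbf{D}}^{(L+1)} \to \mathbf{D}^{(L+1)}$ first, then the layers from $L$ down to $\ell$), applies $\|\sum_i \mathbf{A}_i\|_{\mathrm{F}}^2 \leq (L+1)\sum_i \|\mathbf{A}_i\|_{\mathrm{F}}^2$, and propagates each single-slot difference through the outer layers via the $L_W$ and $L_H$ Lipschitz constants, absorbing factors like $L_W^2 L_H^{2(L-\ell-1)}$ into the $\mathcal{O}(\cdot)$ notation. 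You also correctly identify the one point of substance --- that each remaining hybrid difference changes the Laplacian and the embeddings simultaneously and is therefore left explicit for the subsequent lemmas --- which is exactly how the paper handles it.
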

\begin{proof}
By definition, we can write down the mean-square error of stochastic gradient as
\begin{equation}
    \begin{aligned}
    &\mathbb{E}[ \| \widetilde{\mathbf{G}}^{(\ell)} - \mathbf{G}^{(\ell)} \|_{\mathrm{F}}^2 ] \\
    &= \mathbb{E}[ \| \nabla_W \widetilde{f}^{(\ell)}( \nabla_H \widetilde{f}^{(\ell+1)}( \ldots \nabla_H \widetilde{f}^{(L)}( \widetilde{\mathbf{D}}^{(L+1)}, \widetilde{\mathbf{H}}^{(L-1)} \mathbf{W}^{(L)}) \ldots, \widetilde{\mathbf{H}}^{(\ell)}, \mathbf{W}^{(\ell+1)}) , \widetilde{\mathbf{H}}^{(\ell-1)}, \mathbf{W}^{(\ell)}) \\
    &\quad - \nabla_W f^{(\ell)}( \nabla_H f^{(\ell+1)}( \ldots \nabla_H f^{(L)}( \mathbf{D}^{(L+1)}, \mathbf{H}^{(L-1)}, \mathbf{W}^{(L)}) \ldots, \mathbf{H}^{(\ell)}, \mathbf{W}^{(\ell+1)}) , \mathbf{H}^{(\ell-1)}), \mathbf{W}^{(\ell)} \|_{\mathrm{F}}^2 ] \\
    &\leq (L+1) \mathbb{E}[  \| \nabla_W \widetilde{f}^{(\ell)}( \nabla_H \widetilde{f}^{(\ell+1)}( \ldots \nabla_H \widetilde{f}^{(L)}( \widetilde{\mathbf{D}}^{(L+1)}, \widetilde{\mathbf{H}}^{(L-1)} \mathbf{W}^{(L)}) \ldots, \widetilde{\mathbf{H}}^{(\ell)}, \mathbf{W}^{(\ell+1)}) , \widetilde{\mathbf{H}}^{(\ell-1)}, \mathbf{W}^{(\ell)}) \\
    &\qquad - \nabla_W \widetilde{f}^{(\ell)}( \nabla_H \widetilde{f}^{(\ell+1)}( \ldots \nabla_H \widetilde{f}^{(L)}( \mathbf{D}^{(L+1)}, \widetilde{\mathbf{H}}^{(L-1)} \mathbf{W}^{(L)}) \ldots, \mathbf{H}^{(\ell)}, \mathbf{W}^{(\ell+1)}) , \mathbf{H}^{(\ell-1)}, \mathbf{W}^{(\ell)}) \|_{\mathrm{F}}^2 ]\\
    &\quad + (L+1) \mathbb{E}[ \| \nabla_W \widetilde{f}^{(\ell)}( \nabla_H \widetilde{f}^{(\ell+1)}( \ldots \nabla_H \widetilde{f}^{(L)}( \mathbf{D}^{(L+1)}, \widetilde{\mathbf{H}}^{(L-1)} \mathbf{W}^{(L)}) \ldots, \widetilde{\mathbf{H}}^{(\ell)}, \mathbf{W}^{(\ell+1)}) , \widetilde{\mathbf{H}}^{(\ell-1)}, \mathbf{W}^{(\ell)}) \\
    &\qquad - \nabla_W \widetilde{f}^{(\ell)}( \nabla_H \widetilde{f}^{(\ell+1)}( \ldots \nabla_H f^{(L)}( \mathbf{D}^{(L+1)}, \mathbf{H}^{(L-1)} \mathbf{W}^{(L)}) \ldots, \widetilde{\mathbf{H}}^{(\ell)}, \mathbf{W}^{(\ell+1)}) , \widetilde{\mathbf{H}}^{(\ell-1)}, \mathbf{W}^{(\ell)}) \|_{\mathrm{F}}^2 ] + \ldots \\
    &\quad + (L+1) \mathbb{E}[ \| \nabla_W \widetilde{f}^{(\ell)}( \nabla_H \widetilde{f}^{(\ell+1)}( \mathbf{D}^{(\ell+2)}, \widetilde{\mathbf{H}}^{(\ell)}, \mathbf{W}^{(\ell+1)}) , \widetilde{\mathbf{H}}^{(\ell-1)}, \mathbf{W}^{(\ell)})  \\
    &\qquad - \nabla_W \widetilde{f}^{(\ell)}( \nabla_H f^{(\ell+1)}( \mathbf{D}^{(\ell+2)}, \mathbf{H}^{(\ell)}, \mathbf{W}^{(\ell+1)}) , \widetilde{\mathbf{H}}^{(\ell-1)}, \mathbf{W}^{(\ell)}) \|_{\mathrm{F}}^2 ] \\
    &\quad + (L+1) \mathbb{E}[ \| \nabla_W \widetilde{f}^{(\ell)}( \mathbf{D}^{(\ell+1)}, \widetilde{\mathbf{H}}^{(\ell-1)}, \mathbf{W}^{(\ell)}) - \nabla_W f^{(\ell)}( \mathbf{D}^{(\ell+1)}, \mathbf{H}^{(\ell-1)}, \mathbf{W}^{(\ell)}) \|_{\mathrm{F}}^2 ] \\
    &\leq \mathcal{O}(\mathbb{E}[ \|\widetilde{\mathbf{D}}^{(L+1)} - \mathbf{D}^{(L+1)} \|_{\mathrm{F}}^2 ]) \\
    &\qquad + \mathcal{O}(\mathbb{E}[ \| \nabla_H \widetilde{f}^{(L)}( \mathbf{D}^{(L+1)}, \widetilde{\mathbf{H}}^{(L-1)} \mathbf{W}^{(L)}) - \nabla_H f^{(L)}( \mathbf{D}^{(L+1)}, \mathbf{H}^{(L-1)} \mathbf{W}^{(L)}) \|_{\mathrm{F}}^2 ]) + \ldots\\
    &\qquad + \mathcal{O}(\mathbb{E}[ \| \nabla_H \widetilde{f}^{(\ell+1)}( \mathbf{D}^{(\ell+2)}, \widetilde{\mathbf{H}}^{(\ell)}, \mathbf{W}^{(\ell+1)}) - \nabla_H f^{(\ell+1)}( \mathbf{D}^{(\ell+2)}, \mathbf{H}^{(\ell)}, \mathbf{W}^{(\ell+1)})\|_{\mathrm{F}}^2 ]) \\
    &\qquad + \mathcal{O}(\mathbb{E}[ \| \nabla_W \widetilde{f}^{(\ell)}( \mathbf{D}^{(\ell+1)}, \widetilde{\mathbf{H}}^{(\ell-1)}, \mathbf{W}^{(\ell)}) - \nabla_W f^{(\ell)}( \mathbf{D}^{(\ell+1)}, \mathbf{H}^{(\ell-1)}, \mathbf{W}^{(\ell)}) \|_{\mathrm{F}}^2 ]).
    \end{aligned}
\end{equation}
\end{proof}

Recall the definition of stochastic gradient for all model parameters $\nabla \widetilde{\mathcal{L}}(\bm{\theta}_t) = \{ \widetilde{\mathbf{G}}_t^{(\ell)}\}_{\ell=1}^L$ where $\widetilde{\mathbf{G}}_t^{(\ell)}$ is the gradient for the $\ell$th weight matrix, i.e.,
\begin{equation}
    \mathbf{W}^{(\ell)}_t = \mathbf{W}^{(\ell)}_{t-1} - \eta \widetilde{\mathbf{G}}_{t-1}^{(\ell)}.
\end{equation}

In the following lemma, we derive the upper-bound on the difference of the gradient passing from the $\ell$th to $(\ell-1)$th layer given the same inputs $\mathbf{D}_t^{(\ell+1)},~\widetilde{\mathbf{H}}_t^{(\ell-1)}$, where the backward propagation for the $\ell$th layer in \texttt{SGCN++} is defined as
\begin{equation}
    \begin{aligned}
    &\nabla_H \widetilde{f}^{(\ell)}(\mathbf{D}_t^{(\ell+1)}, \widetilde{\mathbf{H}}_t^{(\ell-1)}, \mathbf{W}_t^{(\ell)}) \\
    &=\widetilde{\mathbf{D}}_{t-1}^{(\ell)} + [\widetilde{\mathbf{L}}^{(\ell)}_t]^\top (\mathbf{D}_t^{(\ell+1)} \circ \sigma^\prime(\widetilde{\mathbf{Z}}_t^{(\ell)})) \mathbf{W}_t^{(\ell)} - [\widetilde{\mathbf{L}}^{(\ell)}_t]^\top (\mathbf{D}_{t-1}^{(\ell+1)} \circ \sigma^\prime(\widetilde{\mathbf{Z}}_{t-1}^{(\ell)}))  \mathbf{W}_{t-1}^{(\ell)}],
    \end{aligned}
\end{equation}
and  the backward propagation for the $\ell$th layer in \texttt{FullGCN} is defined as
\begin{equation}
    \nabla_H f^{(\ell)}(\mathbf{D}_t^{(\ell+1)}, \widetilde{\mathbf{H}}_t^{(\ell-1)}, \mathbf{W}_t^{(\ell)}) = \mathbf{L}^\top (\mathbf{D}_t^{(\ell+1)} \circ \sigma^\prime(\widetilde{\mathbf{Z}}_t^{(\ell)}))\mathbf{W}_t^{(\ell)}.
\end{equation}

\begin{lemma} \label{lemma:gradient_H_SGCN++}
Let consider the $s$th epoch, let $t$ be the current step. Therefore, for any $t\in\{e_{s-1}, \ldots, e_s\}$, we can upper-bound the difference of the gradient with respect to the input node embedding matrix at the $\ell$th graph convolutional layer given the same input $\mathbf{D}_t^{(\ell+1)}$ and $\widetilde{\mathbf{H}}_t^{(\ell-1)}$ by
\begin{equation}
    \begin{aligned}
    &\mathbb{E}[\|\nabla_H \widetilde{f}^{(\ell)}(\mathbf{D}_t^{(\ell+1)}, \widetilde{\mathbf{H}}_t^{(\ell-1)}, \mathbf{W}_t^{(\ell)}) - \nabla_H f^{(\ell)}(\mathbf{D}_t^{(\ell+1)}, \widetilde{\mathbf{H}}_t^{(\ell-1)}, \mathbf{W}_t^{(\ell)}) \|_{\mathrm{F}}^2] \\
    &\leq \sum_{t=e_{s-1}+1}^{e_{s}} \eta^2 \mathcal{O} \left( \Big( \sum_{\ell=1}^L | \mathbb{E}[\|\widetilde{\mathbf{L}}^{(\ell)}\|_{\mathrm{F}}^2] - \| \mathbf{L} \|_{\mathrm{F}}^2 | \Big) \times (\alpha^2 + \beta^2 + \alpha^2 \beta^2) \mathbb{E}[\|\nabla \widetilde{\mathcal{L}}(\bm{\theta}_{t-1})\|_{\mathrm{F}}^2] \right)
    \end{aligned}
\end{equation}
\end{lemma}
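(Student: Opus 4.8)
The plan is to establish this bound by a SARAH/SPIDER-style telescoping argument applied to the \emph{backward} recursion, in direct analogy with the forward analysis of Lemma~\ref{lemma:upper-bound-sgcn-plus-bias-1}. First I would introduce the per-layer backward residual $\mathbf{R}_t^{(\ell)} := \nabla_H \widetilde{f}^{(\ell)}(\mathbf{D}_t^{(\ell+1)}, \widetilde{\mathbf{H}}_t^{(\ell-1)}, \mathbf{W}_t^{(\ell)}) - \nabla_H f^{(\ell)}(\mathbf{D}_t^{(\ell+1)}, \widetilde{\mathbf{H}}_t^{(\ell-1)}, \mathbf{W}_t^{(\ell)})$ and the product factor $\mathbf{Q}_t^{(\ell)} := (\mathbf{D}_t^{(\ell+1)} \circ \sigma^\prime(\widetilde{\mathbf{Z}}_t^{(\ell)}))\,\mathbf{W}_t^{(\ell)}$. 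Because $\nabla_H f^{(\ell)}$ is linear in its input and the recursive definition of $\nabla_H\widetilde f^{(\ell)}$ reuses the \emph{same} sampled Laplacian $\widetilde{\mathbf{L}}_t^{(\ell)}$ in both of its increment terms, $\mathbf{R}_t^{(\ell)}$ satisfies the clean recursion $\mathbf{R}_t^{(\ell)} = \mathbf{R}_{t-1}^{(\ell)} + \mathbf{U}_t^{(\ell)}$, where the martingale-type increment is $\mathbf{U}_t^{(\ell)} = ([\widetilde{\mathbf{L}}_t^{(\ell)}]^\top - \mathbf{L}^\top)\big(\mathbf{Q}_t^{(\ell)} - \mathbf{Q}_{t-1}^{(\ell)}\big)$.

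Conditioning on the filtration $\mathcal{F}_t$ and using the unbiasedness $\mathbb{E}[\widetilde{\mathbf{L}}_t^{(\ell)}\,|\,\mathcal{F}_t] = \mathbf{L}$, so that $\mathbb{E}[\mathbf{U}_t^{(\ell)}\,|\,\mathcal{F}_t]=\mathbf{0}$, the cross term in the expansion of $\|\mathbf{R}_t^{(\ell)}\|_{\mathrm{F}}^2$ vanishes and I obtain $\mathbb{E}[\|\mathbf{R}_t^{(\ell)}\|_{\mathrm{F}}^2] = \mathbb{E}[\|\mathbf{R}_{t-1}^{(\ell)}\|_{\mathrm{F}}^2] + \mathbb{E}[\|\mathbf{U}_t^{(\ell)}\|_{\mathrm{F}}^2]$. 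Since the snapshot step at $e_{s-1}$ uses the full Laplacian we have $\mathbf{R}_{e_{s-1}}^{(\ell)} = \mathbf{0}$, so unrolling over the inner loop gives $\mathbb{E}[\|\mathbf{R}_t^{(\ell)}\|_{\mathrm{F}}^2] = \sum_{\tau=e_{s-1}+1}^{t}\mathbb{E}[\|\mathbf{U}_\tau^{(\ell)}\|_{\mathrm{F}}^2]$. Each increment variance then factorizes, via the identity $\mathbb{E}[\|\widetilde{\mathbf{L}}^{(\ell)}\|_{\mathrm{F}}^2]-\|\mathbf{L}\|_{\mathrm{F}}^2 = \mathbb{E}[\|\widetilde{\mathbf{L}}^{(\ell)}-\mathbf{L}\|_{\mathrm{F}}^2]$ under unbiasedness, into a Laplacian-variance factor times a trajectory-smoothness factor, namely $\mathbb{E}[\|\mathbf{U}_\tau^{(\ell)}\|_{\mathrm{F}}^2] \leq \big(\mathbb{E}[\|\widetilde{\mathbf{L}}_\tau^{(\ell)}\|_{\mathrm{F}}^2]-\|\mathbf{L}\|_{\mathrm{F}}^2\big)\,\mathbb{E}[\|\mathbf{Q}_\tau^{(\ell)}-\mathbf{Q}_{\tau-1}^{(\ell)}\|_{\mathrm{F}}^2]$.

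The heart of the proof, and the main obstacle, is bounding the consecutive-step difference $\mathbb{E}[\|\mathbf{Q}_\tau^{(\ell)}-\mathbf{Q}_{\tau-1}^{(\ell)}\|_{\mathrm{F}}^2]$ by $\eta^2(\alpha^2+\beta^2+\alpha^2\beta^2)\,\mathbb{E}[\|\nabla\widetilde{\mathcal{L}}(\bm{\theta}_{\tau-1})\|_{\mathrm{F}}^2]$. Since $\mathbf{Q}^{(\ell)}$ is a product of three factors — the input gradient $\mathbf{D}^{(\ell+1)}$ (bounded by $\beta B_D$ through the early-stop criterion of Algorithm~\ref{algorithm:sgcn++}), the activation derivative $\sigma^\prime(\widetilde{\mathbf{Z}}^{(\ell)})$ (controlled by $C_\sigma, L_\sigma$ and the $\alpha B_H$ embedding bound), and the weight $\mathbf{W}^{(\ell)}$ — I would split $\mathbf{Q}_\tau^{(\ell)}-\mathbf{Q}_{\tau-1}^{(\ell)}$ into the three telescoping pieces obtained by changing one factor at a time and apply Assumptions~\ref{assumption:loss_lip_smooth}--\ref{assumption:bound_norm} together with Proposition~\ref{proposition:matrix_norm_bound} to each. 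The weight change alone contributes $\mathcal{O}(\beta^2)$, the change propagated through $\widetilde{\mathbf{Z}}^{(\ell)}$ and hence through the embeddings contributes $\mathcal{O}(\alpha^2)$, and their coupling produces the mixed $\mathcal{O}(\alpha^2\beta^2)$ term; each factor-change is in turn tied to the single parameter step $\bm{\theta}_\tau-\bm{\theta}_{\tau-1}=-\eta\nabla\widetilde{\mathcal{L}}(\bm{\theta}_{\tau-1})$, which supplies the $\eta^2\|\nabla\widetilde{\mathcal{L}}(\bm{\theta}_{\tau-1})\|_{\mathrm{F}}^2$ factor. Summing over layers $\ell$ and over $\tau$ from $e_{s-1}+1$ to $e_s$ then assembles the claimed bound. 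The two delicate points I anticipate are (i) justifying the conditional unbiasedness cleanly when $\widetilde{\mathbf{Z}}_\tau^{(\ell)}$ itself carries forward-pass sampling randomness, which I would resolve by conditioning on the forward pass before taking the backward expectation, and (ii) propagating the embedding difference $\widetilde{\mathbf{H}}_\tau^{(\ell-1)}-\widetilde{\mathbf{H}}_{\tau-1}^{(\ell-1)}$ recursively down through the lower layers so that it, too, collapses to the single-step gradient norm.
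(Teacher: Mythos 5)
Your proposal is correct and follows essentially the same route as the paper's proof: the compact martingale recursion $\mathbf{R}_t^{(\ell)} = \mathbf{R}_{t-1}^{(\ell)} + (\widetilde{\mathbf{L}}_t^{(\ell)} - \mathbf{L})^\top(\mathbf{Q}_t^{(\ell)} - \mathbf{Q}_{t-1}^{(\ell)})$ is just a tidier packaging of the paper's three-term expansion with vanishing cross terms, and your subsequent steps — telescoping from the zero residual at the snapshot step, factorizing the increment variance into $\big(\mathbb{E}[\|\widetilde{\mathbf{L}}^{(\ell)}\|_{\mathrm{F}}^2]-\|\mathbf{L}\|_{\mathrm{F}}^2\big)$ times the consecutive $\mathbf{Q}$-difference, splitting that difference one factor at a time, and unrolling the $\mathbf{D}$- and $\mathbf{Z}$-differences through the layers to collapse onto $\eta^2\|\nabla\widetilde{\mathcal{L}}(\bm{\theta}_{t-1})\|_{\mathrm{F}}^2$ — mirror the paper's argument exactly (including the paper's own implicit treatment of the forward-pass randomness in $\widetilde{\mathbf{Z}}_t^{(\ell)}$ when taking conditional expectations, so your flagged delicate point (i) is handled at the same level of rigor as the original). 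The only cosmetic deviation is your heuristic attribution of the $\alpha^2$, $\beta^2$, and $\alpha^2\beta^2$ factors to individual pieces, which differs slightly from how they arise in the paper's accounting but is immaterial inside the $\mathcal{O}(\cdot)$.
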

\begin{proof}
To simplify the presentation, let us denote $\widetilde{\mathbf{D}}_t^{(\ell)} = \nabla_H \widetilde{f}^{(\ell)}(\mathbf{D}_t^{(\ell+1)}, \widetilde{\mathbf{H}}_t^{(\ell-1)}, \mathbf{W}_t^{(\ell)})$. Then, by definition we have
\begin{equation}
    \widetilde{\mathbf{D}}_t^{(\ell)} = \widetilde{\mathbf{D}}_{t-1}^{(\ell)} + [\widetilde{\mathbf{L}}^{(\ell)}_t]^\top (\mathbf{D}_t^{(\ell+1)} \circ \sigma^\prime(\widetilde{\mathbf{Z}}_t^{(\ell)})) \mathbf{W}_t^{(\ell)} - [\widetilde{\mathbf{L}}^{(\ell)}_t]^\top (\mathbf{D}_{t-1}^{(\ell+1)} \circ \sigma^\prime(\widetilde{\mathbf{Z}}_{t-1}^{(\ell)}))  \mathbf{W}_{t-1}^{(\ell)}.
\end{equation}
Therefore, we know that
\begin{equation} \label{eq:gradient_H_SGCN++_eq1}
    \begin{aligned}
    &\|\mathbf{L}^\top (\mathbf{D}_t^{(\ell+1)} \circ \sigma^\prime(\widetilde{\mathbf{Z}}_t^{(\ell)})) \mathbf{W}_t^{(\ell)} - \widetilde{\mathbf{D}}^{(\ell)}_t \|_{\mathrm{F}}^2 \\
    &= \| [ \mathbf{L}^\top (\mathbf{D}_t^{(\ell+1)} \circ \sigma^\prime(\widetilde{\mathbf{Z}}_t^{(\ell)})) \mathbf{W}_t^{(\ell)} - \mathbf{L}^\top (\mathbf{D}_{t-1}^{(\ell+1)} \circ \sigma^\prime(\widetilde{\mathbf{Z}}_{t-1}^{(\ell)})) \mathbf{W}_{t-1}^{(\ell)} ] \\
    & \qquad + [ \mathbf{L}^\top (\mathbf{D}_{t-1}^{(\ell+1)} \circ \sigma^\prime(\widetilde{\mathbf{Z}}_{t-1}^{(\ell)})) \mathbf{W}_{t-1}^{(\ell)} - \widetilde{\mathbf{D}}_{t-1}^{(\ell)} ] - [ \widetilde{\mathbf{D}}_{t}^{(\ell)} - \widetilde{\mathbf{D}}_{t-1}^{(\ell)}] \|_{\mathrm{F}}^2 \\
    &\leq \| \underbrace{\mathbf{L}^\top (\mathbf{D}_t^{(\ell+1)} \circ \sigma^\prime(\widetilde{\mathbf{Z}}_t^{(\ell)})) \mathbf{W}_t^{(\ell)} - \mathbf{L}^\top (\mathbf{D}_{t-1}^{(\ell+1)} \circ \sigma^\prime(\widetilde{\mathbf{Z}}_{t-1}^{(\ell)})) \mathbf{W}_{t-1}^{(\ell)}}_{(A_1)} \|_{\mathrm{F}}^2 \\
    &\qquad + \| \underbrace{\mathbf{L}^\top (\mathbf{D}_{t-1}^{(\ell+1)} \circ \sigma^\prime(\widetilde{\mathbf{Z}}_{t-1}^{(\ell)})) \mathbf{W}_{t-1}^{(\ell)} - \widetilde{\mathbf{D}}_{t-1}^{(\ell)}}_{(A_2)} \|_{\mathrm{F}}^2 + \| \underbrace{\widetilde{\mathbf{D}}_{t}^{(\ell)} - \widetilde{\mathbf{D}}_{t-1}^{(\ell)}}_{(A_3)} \|_{\mathrm{F}}^2 \\
    &\qquad + 2\langle A_1, A_2\rangle - 2\langle A_1, A_3\rangle - 2\langle A_2, A_3\rangle.
    \end{aligned}
\end{equation}
Taking expectation condition on $\mathcal{F}_t$ on both side, and using the fact that 
\begin{equation}
    \begin{aligned}
    &\mathbb{E}[\widetilde{\mathbf{D}}_t^{(\ell)} - \widetilde{\mathbf{D}}_{t-1}^{(\ell)} | \mathcal{F}_t] \\
    &= \mathbb{E}[ [\widetilde{\mathbf{L}}^{(\ell)}_t]^\top (\mathbf{D}_t^{(\ell+1)} \circ \sigma^\prime(\widetilde{\mathbf{Z}}_t^{(\ell)})) \mathbf{W}_t^{(\ell)} - [\widetilde{\mathbf{L}}^{(\ell)}_t]^\top (\mathbf{D}_{t-1}^{(\ell+1)} \circ \sigma^\prime(\widetilde{\mathbf{Z}}_{t-1}^{(\ell)})) \mathbf{W}_{t-1}^{(\ell)} | \mathcal{F}_t ] \\
    &= \mathbf{L}^\top (\mathbf{D}_t^{(\ell+1)} \circ \sigma^\prime(\widetilde{\mathbf{Z}}_t^{(\ell)})) \mathbf{W}_t^{(\ell)} - \mathbf{L}^\top (\mathbf{D}_{t-1}^{(\ell+1)} \circ \sigma^\prime(\widetilde{\mathbf{Z}}_{t-1}^{(\ell)})) \mathbf{W}_{t-1}^{(\ell)}.
    \end{aligned}
\end{equation}

Therefore, we can write Eq.~\ref{eq:gradient_H_SGCN++_eq1} as
\begin{equation}
    \begin{aligned}
    &\mathbb{E}[\|\mathbf{L}^\top (\mathbf{D}_t^{(\ell+1)} \circ \sigma^\prime(\widetilde{\mathbf{Z}}_t^{(\ell)})) \mathbf{W}_t^{(\ell)} - \widetilde{\mathbf{D}}^{(\ell)}_t \|_{\mathrm{F}}^2 | \mathcal{F}_t ] \\
    &\leq \| \mathbf{L}^\top (\mathbf{D}_{t-1}^{(\ell+1)} \circ \sigma^\prime(\widetilde{\mathbf{Z}}_{t-1}^{(\ell)})) \mathbf{W}_t^{(\ell)} - \widetilde{\mathbf{D}}_{t-1}^{(\ell)} \|_{\mathrm{F}}^2 + \mathbb{E}[\| \widetilde{\mathbf{D}}_{t}^{(\ell)} - \widetilde{\mathbf{D}}_{t-1}^{(\ell)} \|_{\mathrm{F}}^2 | \mathcal{F}_t] \\
    &\qquad - \| \mathbf{L}^\top (\mathbf{D}_t^{(\ell+1)} \circ \sigma^\prime(\widetilde{\mathbf{Z}}_t^{(\ell)})) \mathbf{W}_t^{(\ell)} - \mathbf{L}^\top (\mathbf{D}_{t-1}^{(\ell+1)} \circ \sigma^\prime(\widetilde{\mathbf{Z}}_{t-1}^{(\ell)})) \mathbf{W}_{t-1}^{(\ell)} \|_{\mathrm{F}}^2.
    \end{aligned}
\end{equation}

Then, taking expectation over $\mathcal{F}_t$, we have
\begin{equation} \label{eq:gradient_H_SGCN++_eq2}
    \begin{aligned}
    &\mathbb{E}[\|\mathbf{L}^\top (\mathbf{D}_t^{(\ell+1)} \circ \sigma^\prime(\widetilde{\mathbf{Z}}_t^{(\ell)})) \mathbf{W}_t^{(\ell)} - \widetilde{\mathbf{D}}^{(\ell)}_t \|_{\mathrm{F}}^2 ] \\
    &\leq \mathbb{E}[ \| \mathbf{L}^\top (\mathbf{D}_{t-1}^{(\ell+1)} \circ \sigma^\prime(\widetilde{\mathbf{Z}}_{t-1}^{(\ell)})) \mathbf{W}_{t-1}^{(\ell)} - \widetilde{\mathbf{D}}_{t-1}^{(\ell)} \|_{\mathrm{F}}^2 ] + \mathbb{E}[\| \widetilde{\mathbf{D}}_{t}^{(\ell)} - \widetilde{\mathbf{D}}_{t-1}^{(\ell)} \|_{\mathrm{F}}^2 ] \\
    &\qquad - [ \| \mathbf{L}^\top (\mathbf{D}_t^{(\ell+1)} \circ \sigma^\prime(\widetilde{\mathbf{Z}}_t^{(\ell)})) \mathbf{W}_t^{(\ell)} - \mathbf{L}^\top (\mathbf{D}_{t-1}^{(\ell+1)} \circ \sigma^\prime(\widetilde{\mathbf{Z}}_{t-1}^{(\ell)})) \mathbf{W}_{t-1}^{(\ell)} \|_{\mathrm{F}}^2 ].
    \end{aligned}
\end{equation}

Since we know $t\in\{e_{s-1}, \ldots, e_s\}$, we can denote $t = e_{s-1} + k,~k \leq E_s$ such that we can write Eq.~\ref{eq:gradient_H_SGCN++_eq2} as
\begin{equation} \label{eq:gradient_H_SGCN++_eq3}
    \begin{aligned}
    &\mathbb{E}[\|\mathbf{L}^\top (\mathbf{D}_t^{(\ell+1)} \circ \sigma^\prime(\widetilde{\mathbf{Z}}_t^{(\ell)})) \mathbf{W}_t^{(\ell)} - \widetilde{\mathbf{D}}^{(\ell)}_t \|_{\mathrm{F}}^2 ] \\
    &= \mathbb{E}[\|\mathbf{L}^\top (\mathbf{D}_{e_{s-1}+k}^{(\ell+1)} \circ \sigma^\prime(\widetilde{\mathbf{Z}}_{e_{s-1}+k}^{(\ell)})) \mathbf{W}_{e_{s-1}+k}^{(\ell)} - \widetilde{\mathbf{D}}^{(\ell)}_{e_{s-1}+k} \|_{\mathrm{F}}^2 ] \\
    &\leq \mathbb{E} [ \| \mathbf{L}^\top (\mathbf{D}_{e_{s-1}}^{(\ell+1)} \circ \sigma^\prime(\widetilde{\mathbf{Z}}_{e_{s-1}}^{(\ell)})) \mathbf{W}_{e_{s-1}}^{(\ell)} - \widetilde{\mathbf{D}}_{e_{s-1}}^{(\ell)} \|_{\mathrm{F}}^2 ] + \sum_{t=e_{s-1}+1}^{e_{s}} \Big( \mathbb{E}[\| \widetilde{\mathbf{D}}_{t}^{(\ell)} - \widetilde{\mathbf{D}}_{t-1}^{(\ell)} \|_{\mathrm{F}}^2 ] \\
    &\qquad - [ \| \mathbf{L}^\top (\mathbf{D}_t^{(\ell+1)} \circ \sigma^\prime(\widetilde{\mathbf{Z}}_t^{(\ell)})) \mathbf{W}_t^{(\ell)} - \mathbf{L}^\top (\mathbf{D}_{t-1}^{(\ell+1)} \circ \sigma^\prime(\widetilde{\mathbf{Z}}_{t-1}^{(\ell)})) \mathbf{W}_{t-1}^{(\ell)} \|_{\mathrm{F}}^2 ] \Big) \\
    &\leq \mathbb{E}[\| \mathbf{L}^\top (\mathbf{D}_{e_{s-1}}^{(\ell+1)} \circ \sigma^\prime(\widetilde{\mathbf{Z}}_{e_{s-1}}^{(\ell)})) \mathbf{W}_{e_{s-1}}^{(\ell)} - \widetilde{\mathbf{D}}_{e_{s-1}}^{(\ell)} \|_{\mathrm{F}}^2 ] \\
    &\qquad + \sum_{t=e_{s-1}+1}^{e_{s}} \Big( \mathbb{E}[\| [\widetilde{\mathbf{L}}^{(\ell)}_t]^\top (\mathbf{D}_t^{(\ell+1)} \circ \sigma^\prime(\widetilde{\mathbf{Z}}_t^{(\ell)})) \mathbf{W}_t^{(\ell)} - [\widetilde{\mathbf{L}}^{(\ell)}_t]^\top (\mathbf{D}_{t-1}^{(\ell+1)} \circ \sigma^\prime(\widetilde{\mathbf{Z}}_{t-1}^{(\ell)}))  \mathbf{W}_{t-1}^{(\ell)} \|_{\mathrm{F}}^2 ] \\
    &\qquad - \mathbb{E}[ \| \mathbf{L}^\top (\mathbf{D}_t^{(\ell+1)} \circ \sigma^\prime(\widetilde{\mathbf{Z}}_t^{(\ell)})) \mathbf{W}_t^{(\ell)} - \mathbf{L}^\top (\mathbf{D}_{t-1}^{(\ell+1)} \circ \sigma^\prime(\widetilde{\mathbf{Z}}_{t-1}^{(\ell)})) \mathbf{W}_{t-1}^{(\ell)} \|_{\mathrm{F}}^2 ] \Big).
    \end{aligned}
\end{equation}

Knowing that we are taking full-batch gradient descent when $t=t_{s-1}$, we can write Eq.~\ref{eq:gradient_H_SGCN++_eq3} as
\begin{equation}
    \begin{aligned}
    &\mathbb{E}[\|\mathbf{L}^\top (\mathbf{D}_t^{(\ell+1)} \circ \sigma^\prime(\widetilde{\mathbf{Z}}_t^{(\ell)})) \mathbf{W}_t^{(\ell)} - \widetilde{\mathbf{D}}^{(\ell)}_t \|_{\mathrm{F}}^2 ] \\
    &\leq \sum_{t=e_{s-1}+1}^{e_{s}} 
    \Big( | \mathbb{E}[\|\widetilde{\mathbf{L}}^{(\ell)}\|_{\mathrm{F}}^2] - \| \mathbf{L} \|_{\mathrm{F}}^2 | \Big) \times \underbrace{\mathbb{E}[ \| (\mathbf{D}_t^{(\ell+1)} \circ \sigma^\prime(\widetilde{\mathbf{Z}}_t^{(\ell)})) \mathbf{W}_t^{(\ell)} - (\mathbf{D}_{t-1}^{(\ell+1)} \circ \sigma^\prime(\widetilde{\mathbf{Z}}_{t-1}^{(\ell)})) \mathbf{W}_{t-1}^{(\ell)} \|_{\mathrm{F}}^2 ]}_{(B)}.
    \end{aligned}
\end{equation}

Let take closer look at term $(B)$, we have
\begin{equation}\label{eq:sgcn_pplus_important_eq_1}
    \begin{aligned}
    & \mathbb{E}[ \| (\mathbf{D}_t^{(\ell+1)} \circ \sigma^\prime(\widetilde{\mathbf{Z}}_t^{(\ell)})) \mathbf{W}_t^{(\ell)} - (\mathbf{D}_{t-1}^{(\ell+1)} \circ \sigma^\prime(\widetilde{\mathbf{Z}}_{t-1}^{(\ell)})) \mathbf{W}_{t-1}^{(\ell)} \|_{\mathrm{F}}^2] \\
    &\leq 3 C_\sigma^2 B_W^2 \underbrace{\|\mathbf{D}_t^{(\ell+1)} - \mathbf{D}_{t-1}^{(\ell+1)} \|_{\mathrm{F}}^2}_{(C_1)} + 3 \beta^2 B_D^2 B_W^2 L_\sigma^2 \underbrace{\mathbb{E}[ \| \widetilde{\mathbf{Z}}_t^{(\ell)} - \widetilde{\mathbf{Z}}_{t-1}^{(\ell)} \|_{\mathrm{F}}^2 ]}_{(C_2)} \\
    &\quad + 3 \beta^2 C_\sigma^2 B_D^2  \mathbb{E}[ \|\mathbf{W}_t^{(\ell)} - \mathbf{W}_{t-1}^{(\ell)} \|_{\mathrm{F}}^2 ].
    \end{aligned}
\end{equation}

For term $(C_1)$ by definition we know 
\begin{equation}
    \begin{aligned}
    &\|\mathbf{D}_t^{(\ell+1)} - \mathbf{D}_{t-1}^{(\ell+1)} \|_{\mathrm{F}}^2 \\
    &= \|\Big( \mathbf{L}^\top (\mathbf{D}_t^{(\ell+2)} \circ \sigma^\prime(\mathbf{Z}_t^{(\ell+1)}))\mathbf{W}_t^{(\ell+1)} \Big) -  \Big( \mathbf{L}^\top (\mathbf{D}_{t-1}^{(\ell+2)} \circ \sigma^\prime(\mathbf{Z}_{t-1}^{(\ell+1)}))\mathbf{W}_{t-1}^{(\ell+1)} \Big)\|_{\mathrm{F}}^2 \\
    &\leq 3\| \Big( \mathbf{L}^\top (\mathbf{D}_t^{(\ell+2)} \circ \sigma^\prime(\mathbf{Z}_t^{(\ell+1)}))\mathbf{W}_t^{(\ell+1)} \Big) -  \Big( \mathbf{L}^\top (\mathbf{D}_{t-1}^{(\ell+2)} \circ \sigma^\prime(\mathbf{Z}_t^{(\ell+1)}))\mathbf{W}_t^{(\ell+1)} \Big) \|_{\mathrm{F}}^2 \\
    &\quad + 3\| \Big( \mathbf{L}^\top (\mathbf{D}_{t-1}^{(\ell+2)} \circ \sigma^\prime(\mathbf{Z}_t^{(\ell+1)}))\mathbf{W}_t^{(\ell+1)} \Big) - \Big( \mathbf{L}^\top (\mathbf{D}_{t-1}^{(\ell+2)} \circ \sigma^\prime(\mathbf{Z}_{t-1}^{(\ell+1)}))\mathbf{W}_t^{(\ell+1)} \Big) \|_{\mathrm{F}}^2 \\
    &\quad + 3\| \Big( \mathbf{L}^\top (\mathbf{D}_{t-1}^{(\ell+2)} \circ \sigma^\prime(\mathbf{Z}_{t-1}^{(\ell+1)}))\mathbf{W}_t^{(\ell+1)} \Big) - \Big( \mathbf{L}^\top (\mathbf{D}_{t-1}^{(\ell+2)} \circ \sigma^\prime(\mathbf{Z}_{t-1}^{(\ell+1)}))\mathbf{W}_{t-1}^{(\ell+1)} \Big) \|_{\mathrm{F}}^2 \\
    &\leq \mathcal{O}(\|\mathbf{D}_t^{(\ell+2)} - \mathbf{D}_{t-1}^{(\ell+2)} \|_{\mathrm{F}}^2) + \mathcal{O}(\beta^2 \|\mathbf{Z}_t^{(\ell+1)} - \mathbf{Z}_{t-1}^{(\ell+1)} \|_{\mathrm{F}}^2) + \mathcal{O}(\beta^2 \|\mathbf{W}_t^{(\ell+1)} - \mathbf{W}_{t-1}^{(\ell+1)} \|_{\mathrm{F}}^2).
    \end{aligned}
\end{equation}
By induction, we can upper bound $(C_1)$ in Eq.~\ref{eq:sgcn_pplus_important_eq_1} by
\begin{equation} \label{eq:sgcn_pplus_important_eq_4}
    \begin{aligned}
    \|\mathbf{D}_t^{(\ell+1)} - \mathbf{D}_{t-1}^{(\ell+1)} \|_{\mathrm{F}}^2 
    &\leq \underbrace{\mathcal{O}(\|\mathbf{D}^{(L+1)}_t - \mathbf{D}^{(L+1)}_{t-1}\|_{\mathrm{F}}^2)}_{(D_1)} \\
    &\quad + \underbrace{\mathcal{O}(\beta^2 \|\mathbf{Z}_t^{(\ell+1)} - \mathbf{Z}_{t-1}^{(\ell+1)} \|_{\mathrm{F}}^2)}_{(D_2)} + \ldots + \mathcal{O}(\beta^2 \|\mathbf{Z}_t^{(L)} - \mathbf{Z}_{t-1}^{(L)} \|_{\mathrm{F}}^2) \\
    &\quad + \mathcal{O}(\beta^2 \|\mathbf{W}_t^{(\ell+1)} - \mathbf{W}_{t-1}^{(\ell+1)} \|_{\mathrm{F}}^2) + \ldots + \mathcal{O}(\beta^2 \|\mathbf{W}_t^{(L)} - \mathbf{W}_{t-1}^{(L)} \|_{\mathrm{F}}^2).
    \end{aligned}
\end{equation}

For term $(D_1)$ in Eq.~\ref{eq:sgcn_pplus_important_eq_4} we have
\begin{equation}
    \begin{aligned}
    \| \mathbf{D}^{(L+1)}_t - \mathbf{D}^{(L+1)}_{t-1}\|_{\mathrm{F}}^2 
    &= \| \frac{\partial \mathcal{L}(\bm{\theta}_t) }{\partial \mathbf{W}_t^{(L)}} - \frac{\partial \mathcal{L}(\bm{\theta}_{t-1}) }{\partial \mathbf{W}_{t-1}^{(L)}} \|_{\mathrm{F}}^2 \\
    &\leq L_\text{loss}^2 C_\sigma^2 \| \mathbf{Z}^{(L)}_t - \mathbf{Z}^{(L)}_{t-1} \|_{\mathrm{F}}^2.
    \end{aligned}
\end{equation}
For term $(D_2)$ in Eq.~\ref{eq:sgcn_pplus_important_eq_4} we have
\begin{equation}
    \begin{aligned}
    \|\mathbf{Z}_t^{(\ell+1)} - \mathbf{Z}_{t-1}^{(\ell+1)} \|_{\mathrm{F}}^2 
    &\leq C_\sigma^2 \| \mathbf{L} \mathbf{H}^{(\ell)}_t \mathbf{W}^{(\ell+1)}_t - \mathbf{L} \mathbf{H}^{(\ell)}_{t-1} \mathbf{W}^{(\ell+1)}_{t-1} \|_{\mathrm{F}}^2 \\
    &\leq C_\sigma^2 B_{LA}^2 \| \mathbf{H}^{(\ell)}_t \mathbf{W}^{(\ell+1)}_t - \mathbf{H}^{(\ell)}_{t-1} \mathbf{W}^{(\ell+1)}_t + \mathbf{H}^{(\ell)}_{t-1} \mathbf{W}^{(\ell+1)}_t - \mathbf{H}^{(\ell)}_{t-1} \mathbf{W}^{(\ell+1)}_{t-1} \|_{\mathrm{F}}^2 \\
    &\leq 2C_\sigma^2 B_{LA}^2 \| \mathbf{H}^{(\ell)}_t \mathbf{W}^{(\ell+1)}_t - \mathbf{H}^{(\ell)}_{t-1} \mathbf{W}^{(\ell+1)}_t \|_{\mathrm{F}}^2 + 2C_\sigma^2 B_{LA}^2 \|\mathbf{H}^{(\ell)}_{t-1} \mathbf{W}^{(\ell+1)}_t - \mathbf{H}^{(\ell)}_{t-1} \mathbf{W}^{(\ell+1)}_{t-1} \|_{\mathrm{F}}^2 \\
    &\leq 2C_\sigma^4 B_{LA}^2 B_W^2 \| \mathbf{Z}^{(\ell)}_t  - \mathbf{Z}^{(\ell)}_{t-1} \|_{\mathrm{F}}^2 + 2\alpha^2 C_\sigma^2 B_{LA}^2 B_H^2 \| \mathbf{W}^{(\ell+1)}_t  - \mathbf{W}^{(\ell+1)}_{t-1} \|_{\mathrm{F}}^2.
    \end{aligned}
\end{equation}
By induction we can upper bound term $(D_2)$ in Eq.~\ref{eq:sgcn_pplus_important_eq_4} by
\begin{equation}
    \|\mathbf{Z}_t^{(\ell+1)} - \mathbf{Z}_{t-1}^{(\ell+1)} \|_{\mathrm{F}}^2 \leq \mathcal{O}(\alpha^2 \| \mathbf{W}^{(\ell+1)}_t  - \mathbf{W}^{(\ell+1)}_{t-1} \|_{\mathrm{F}}^2) + \ldots + \mathcal{O}(\alpha^2 \| \mathbf{W}^{(1)}_t  - \mathbf{W}^{(1)}_{t-1} \|_{\mathrm{F}}^2).
\end{equation}

For term $(C_2)$ in Eq.~\ref{eq:sgcn_pplus_important_eq_1} we have 
\begin{equation}
    \begin{aligned}
    \mathbb{E}[\| \widetilde{\mathbf{Z}}_t^{(\ell)} - \widetilde{\mathbf{Z}}_{t-1}^{(\ell)} \|_{\mathrm{F}}^2] 
    &= \mathbb{E}[\| \widetilde{\mathbf{L}}^{(\ell)}\widetilde{\mathbf{H}}_t^{(\ell-1)} \mathbf{W}^{(\ell)}_t - \widetilde{\mathbf{L}}^{(\ell)}\widetilde{\mathbf{H}}_{t-1}^{(\ell-1)} \mathbf{W}^{(\ell)}_{t-1} \|_{\mathrm{F}}^2 ] \\
    &\leq 2B_{LA}^2 B_W^2 C_\sigma^2 \mathbb{E}[\| \widetilde{\mathbf{Z}}_t^{(\ell-1)} - \widetilde{\mathbf{Z}}_{t-1}^{(\ell-1)} \|_{\mathrm{F}}^2] + 2\alpha^2 B_{LA}^2 B_H^2 \mathbb{E}[\|\mathbf{W}^{(\ell)}_t - \mathbf{W}^{(\ell)}_{t-1}\|_{\mathrm{F}}^2].
    \end{aligned}
\end{equation}
By induction term $(C_2)$ in Eq.~\ref{eq:sgcn_pplus_important_eq_4}
\begin{equation}\label{eq:diff_Z_wrt_W}
    \mathbb{E}[\| \widetilde{\mathbf{Z}}_t^{(\ell)} - \widetilde{\mathbf{Z}}_{t-1}^{(\ell)} \|_{\mathrm{F}}^2] \leq \mathcal{O}(\alpha^2 \mathbb{E}[\|\mathbf{W}^{(\ell)}_t - \mathbf{W}^{(\ell)}_{t-1}\|_{\mathrm{F}}^2] ) + \ldots + \mathcal{O}(\alpha^2 \mathbb{E}[\|\mathbf{W}^{(1)}_t - \mathbf{W}^{(1)}_{t-1}\|_{\mathrm{F}}^2] ).
\end{equation}

Plugging $(D_1),(D_2)$ back to $(C_1)$ and $(C_1),(C_2),(C_3)$ back to $(B)$, we have
\begin{equation}
    \begin{aligned}
    &\mathbb{E}[ \| (\mathbf{D}_t^{(\ell+1)} \circ \sigma^\prime(\widetilde{\mathbf{Z}}_t^{(\ell)})) \mathbf{W}_t^{(\ell)} - (\mathbf{D}_{t-1}^{(\ell+1)} \circ \sigma^\prime(\widetilde{\mathbf{Z}}_{t-1}^{(\ell)})) \mathbf{W}_{t-1}^{(\ell)} \|_{\mathrm{F}}^2] \\
    &\leq \sum_{t=e_{s-1}+1}^{e_{s}} \Big( \mathcal{O}\big((\alpha^2 + \beta^2 + \alpha^2 \beta^2) \mathbb{E}[\|\mathbf{W}_t^{(1)} - \mathbf{W}_{t-1}^{(1)}\|_{\mathrm{F}}^2] \big) +\ldots \\
    &\qquad + \mathcal{O}\big((\alpha^2 + \beta^2 + \alpha^2 \beta^2)\mathbb{E}[\|\mathbf{W}_t^{(L)} - \mathbf{W}_{t-1}^{(L)} \|_{\mathrm{F}}^2]\big) \Big) \\
    &= \sum_{t=e_{s-1}+1}^{e_{s}}  \eta^2 \mathcal{O}\Big( (\alpha^2 + \beta^2 + \alpha^2 \beta^2)\mathbb{E}[\|\nabla \widetilde{\mathcal{L}}(\bm{\theta}_{t-1})\|_{\mathrm{F}}^2] \Big).
    \end{aligned}
\end{equation}
Then plugging term $(B)$ back to Eq.~\ref{eq:sgcn_pplus_important_eq_1} we conclude the proof.

\end{proof}

Using the previous lemma, we provide the upper-bound of Eq.~\ref{eq:sgcn_pplus_important_steps_2}, which is one of the three key factors that affect the mean-square error of stochastic gradient at the $\ell$th layer.
\begin{lemma}\label{lemma:sgcn_pplus_support_lemma1}
Let suppose $t\in\{e_{s-1}+1,\ldots,e_{s}\}$. 
The upper-bound on the difference of the gradient with respect to the input node embedding matrix at the $\ell$th graph convolutional layer given the same input $\mathbf{D}_t^{(\ell+1)}$ but different input $\widetilde{\mathbf{H}}_t^{(\ell-1)}, \mathbf{H}_t^{(\ell-1)}$ is defined as
\begin{equation}
    \begin{aligned}
    &\mathbb{E}[ \| \nabla_H \widetilde{f}^{(\ell)}(\mathbf{D}_t^{(\ell+1)}, \widetilde{\mathbf{H}}_t^{(\ell-1)}, \mathbf{W}_t^{(\ell)}) - \nabla_H f^{(\ell)}(\mathbf{D}_t^{(\ell+1)}, \mathbf{H}_t^{(\ell-1)}, \mathbf{W}_t^{(\ell)}) \|_{\mathrm{F}}^2 ] \\
    &\leq \sum_{t=e_{s-1}+1}^{e_{s}} \eta^2 \mathcal{O} \left( \Big( \sum_{j=1}^\ell | \mathbb{E}[\|\widetilde{\mathbf{L}}^{(j)}\|_{\mathrm{F}}^2] - \| \mathbf{L} \|_{\mathrm{F}}^2 | \Big) \times (\alpha^2 + \beta^2 + \alpha^2 \beta^2) \mathbb{E}[\|\nabla \widetilde{\mathcal{L}}(\bm{\theta}_{t-1})\|_{\mathrm{F}}^2]  \right)
    \end{aligned}
\end{equation}

\end{lemma}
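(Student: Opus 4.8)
The plan is to reduce this lemma to the already-established Lemma~\ref{lemma:gradient_H_SGCN++} together with a node-embedding approximation bound, by inserting the intermediate quantity $\mathbf{L}^\top(\mathbf{D}_t^{(\ell+1)} \circ \sigma^\prime(\widetilde{\mathbf{Z}}_t^{(\ell)}))\mathbf{W}_t^{(\ell)}$ and splitting via $\|a+b\|_{\mathrm{F}}^2 \leq 2\|a\|_{\mathrm{F}}^2 + 2\|b\|_{\mathrm{F}}^2$. Concretely, I would write
\begin{equation}
\begin{aligned}
&\mathbb{E}[ \| \nabla_H \widetilde{f}^{(\ell)}(\mathbf{D}_t^{(\ell+1)}, \widetilde{\mathbf{H}}_t^{(\ell-1)}, \mathbf{W}_t^{(\ell)}) - \nabla_H f^{(\ell)}(\mathbf{D}_t^{(\ell+1)}, \mathbf{H}_t^{(\ell-1)}, \mathbf{W}_t^{(\ell)}) \|_{\mathrm{F}}^2 ] \\
&\leq 2\, \mathbb{E}[ \| \nabla_H \widetilde{f}^{(\ell)}(\mathbf{D}_t^{(\ell+1)}, \widetilde{\mathbf{H}}_t^{(\ell-1)}, \mathbf{W}_t^{(\ell)}) - \mathbf{L}^\top(\mathbf{D}_t^{(\ell+1)} \circ \sigma^\prime(\widetilde{\mathbf{Z}}_t^{(\ell)}))\mathbf{W}_t^{(\ell)} \|_{\mathrm{F}}^2 ] \\
&\quad + 2\, \mathbb{E}[ \| \mathbf{L}^\top(\mathbf{D}_t^{(\ell+1)} \circ \sigma^\prime(\widetilde{\mathbf{Z}}_t^{(\ell)}))\mathbf{W}_t^{(\ell)} - \mathbf{L}^\top(\mathbf{D}_t^{(\ell+1)} \circ \sigma^\prime(\mathbf{Z}_t^{(\ell)}))\mathbf{W}_t^{(\ell)} \|_{\mathrm{F}}^2 ].
\end{aligned}
\end{equation}
The first term is exactly the object controlled by Lemma~\ref{lemma:gradient_H_SGCN++}, since both gradients are evaluated with the same stored embedding $\widetilde{\mathbf{Z}}_t^{(\ell)}$ and differ only through $\widetilde{\mathbf{L}}^{(\ell)}$ versus $\mathbf{L}$; it immediately contributes the desired right-hand side.

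For the second term I would first peel off the node-embedding mismatch: using Assumption~\ref{assumption:sigma_lip_smooth} (the $L_\sigma$-Lipschitzness of $\sigma^\prime$), together with the norm bounds $\|\mathbf{L}\|_{\mathrm{F}} \leq B_{LA}$ and $\|\mathbf{W}_t^{(\ell)}\|_{\mathrm{F}} \leq B_W$ from Assumption~\ref{assumption:bound_norm} and the early-stop bound $\|\mathbf{D}_t^{(\ell+1)}\|_{\mathrm{F}} \leq \beta B_D$, the second term is at most $B_{LA}^2 \beta^2 B_D^2 B_W^2 L_\sigma^2\, \mathbb{E}[\|\widetilde{\mathbf{Z}}_t^{(\ell)} - \mathbf{Z}_t^{(\ell)}\|_{\mathrm{F}}^2]$, with $\mathbf{Z}_t^{(\ell)} = \mathbf{L}\mathbf{H}_t^{(\ell-1)}\mathbf{W}_t^{(\ell)}$. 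It therefore remains to bound the SGCN++ node-embedding approximation error $\mathbb{E}[\|\widetilde{\mathbf{Z}}_t^{(\ell)} - \mathbf{Z}_t^{(\ell)}\|_{\mathrm{F}}^2]$.

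I would control this error by the same telescoping-variance argument used in Lemma~\ref{lemma:gradient_H_SGCN++}, now run on the forward recursion (Eq.~\ref{eq:sgcn_plus_regular}) rather than the backward one. Writing the increment $\widetilde{\mathbf{Z}}_t^{(\ell)} - \widetilde{\mathbf{Z}}_{t-1}^{(\ell)} = \widetilde{\mathbf{L}}^{(\ell)}\widetilde{\mathbf{H}}_t^{(\ell-1)}\mathbf{W}_t^{(\ell)} - \widetilde{\mathbf{L}}^{(\ell)}\widetilde{\mathbf{H}}_{t-1}^{(\ell-1)}\mathbf{W}_{t-1}^{(\ell)}$, whose conditional mean is the corresponding full-Laplacian increment, I would expand the square, take the conditional expectation so the cross terms cancel, and obtain a per-step contribution equal to $(\mathbb{E}[\|\widetilde{\mathbf{L}}^{(\ell)}\|_{\mathrm{F}}^2] - \|\mathbf{L}\|_{\mathrm{F}}^2)$ times $\mathbb{E}[\|\widetilde{\mathbf{H}}_t^{(\ell-1)}\mathbf{W}_t^{(\ell)} - \widetilde{\mathbf{H}}_{t-1}^{(\ell-1)}\mathbf{W}_{t-1}^{(\ell)}\|_{\mathrm{F}}^2]$. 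Since the snapshot step uses the full Laplacian, the telescoped error vanishes at $t = e_{s-1}$, leaving only the inner-loop sum. I would then split $\|\widetilde{\mathbf{Z}}_t^{(\ell)} - \mathbf{L}\mathbf{H}_t^{(\ell-1)}\mathbf{W}_t^{(\ell)}\|_{\mathrm{F}}^2 \leq 2\|\widetilde{\mathbf{Z}}_t^{(\ell)} - \mathbf{L}\widetilde{\mathbf{H}}_t^{(\ell-1)}\mathbf{W}_t^{(\ell)}\|_{\mathrm{F}}^2 + 2 B_{LA}^2 B_W^2\|\widetilde{\mathbf{H}}_t^{(\ell-1)} - \mathbf{H}_t^{(\ell-1)}\|_{\mathrm{F}}^2$ and recurse over layers, the embedding increments being in turn controlled by the weight increments $\|\mathbf{W}_t^{(j)} - \mathbf{W}_{t-1}^{(j)}\|_{\mathrm{F}}^2 = \eta^2\|\widetilde{\mathbf{G}}_{t-1}^{(j)}\|_{\mathrm{F}}^2$ exactly as in Eq.~\ref{eq:diff_Z_wrt_W}; the staleness factors $\alpha$ (embeddings) and $\beta$ (gradients) from the early-stop rule produce the $(\alpha^2 + \beta^2 + \alpha^2\beta^2)$ prefactor, and the layerwise summation collapses the weight increments into $\eta^2\mathbb{E}[\|\nabla\widetilde{\mathcal{L}}(\bm{\theta}_{t-1})\|_{\mathrm{F}}^2]$.

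The main obstacle I anticipate is the bookkeeping in the double induction, namely the outer telescoping over the inner-loop steps $t\in\{e_{s-1}+1,\ldots,e_s\}$ combined with the inner recursion over layers $j = 1,\ldots,\ell$, while keeping every occurrence of $\|\widetilde{\mathbf{L}}^{(j)}\|$ measured against the full $\|\mathbf{L}\|$ (rather than against the propagation matrix $\mathbf{P}^{(j)}$, as in the SGCN+ bias analysis) so that the accumulated error lands precisely on $\sum_{j=1}^\ell |\mathbb{E}[\|\widetilde{\mathbf{L}}^{(j)}\|_{\mathrm{F}}^2] - \|\mathbf{L}\|_{\mathrm{F}}^2|$. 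Care is also needed to ensure the $\alpha,\beta$ powers do not accumulate beyond the claimed $(\alpha^2 + \beta^2 + \alpha^2\beta^2)$, which holds because the early-stop criterion caps both $\|\widetilde{\mathbf{H}}^{(\ell)}\|_{\mathrm{F}}$ and $\|\widetilde{\mathbf{D}}^{(\ell)}\|_{\mathrm{F}}$ within each epoch.
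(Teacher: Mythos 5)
Your proposal follows essentially the same route as the paper's proof: the same two-term split inserting $\mathbf{L}^\top(\mathbf{D}_t^{(\ell+1)} \circ \sigma^\prime(\widetilde{\mathbf{Z}}_t^{(\ell)}))\mathbf{W}_t^{(\ell)}$, with the first term handled by Lemma~\ref{lemma:gradient_H_SGCN++} and the second reduced via Lipschitzness of $\sigma^\prime$ to $\mathbb{E}[\|\widetilde{\mathbf{Z}}_t^{(\ell)} - \mathbf{Z}_t^{(\ell)}\|_{\mathrm{F}}^2]$, which is then telescoped from the snapshot step using the conditional-expectation cancellation and a layerwise recursion, exactly as in the paper's Eqs.~\ref{eq:sgcn_pplus_support_lemma1_eq1}--\ref{eq:sgcn_pplus_support_lemma1_eq6}. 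Your accounting of where the $\alpha^2$, $\beta^2$, and $\alpha^2\beta^2$ factors originate also matches the paper, so the argument is correct as proposed.
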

\begin{proof}
For the gradient w.r.t. the node embedding matrices, we have
\begin{equation} \label{eq:sgcn_pplus_support_lemma1_eq1}
    \begin{aligned}
    &\mathbb{E}[ \| \nabla_H \widetilde{f}^{(\ell)}(\mathbf{D}_t^{(\ell+1)}, \widetilde{\mathbf{H}}_t^{(\ell-1)}, \mathbf{W}_t^{(\ell)}) - \nabla_H f^{(\ell)}(\mathbf{D}_t^{(\ell+1)}, \mathbf{H}_t^{(\ell-1)}, \mathbf{W}_t^{(\ell)}) \|_{\mathrm{F}}^2 ] \\
    &= \mathbb{E}[ \| \Big( \widetilde{\mathbf{D}}_{t-1}^{(\ell)} + [\widetilde{\mathbf{L}}^{(\ell)}_t]^\top (\mathbf{D}_t^{(\ell+1)} \circ \sigma^\prime(\widetilde{\mathbf{Z}}_t^{(\ell)})) \mathbf{W}_t^{(\ell)} - [\widetilde{\mathbf{L}}^{(\ell)}_t]^\top (\mathbf{D}_{t-1}^{(\ell+1)} \circ \sigma^\prime(\widetilde{\mathbf{Z}}_{t-1}^{(\ell)}))  \mathbf{W}_{t-1}^{(\ell)}] \Big) \\
    &\qquad - \Big( \mathbf{L}^\top (\mathbf{D}_t^{(\ell+1)} \circ \sigma^\prime(\mathbf{Z}_t^{(\ell)}))\mathbf{W}_t^{(\ell)} \Big) \|_{\mathrm{F}}^2 ] \\
    &\leq 2\underbrace{\mathbb{E}[ \| \Big( \widetilde{\mathbf{D}}_{t-1}^{(\ell)} + [\widetilde{\mathbf{L}}^{(\ell)}_t]^\top (\mathbf{D}_t^{(\ell+1)} \circ \sigma^\prime(\widetilde{\mathbf{Z}}_t^{(\ell)})) \mathbf{W}_t^{(\ell)} - [\widetilde{\mathbf{L}}^{(\ell)}_t]^\top (\mathbf{D}_{t-1}^{(\ell+1)} \circ \sigma^\prime(\widetilde{\mathbf{Z}}_{t-1}^{(\ell)}))  \mathbf{W}_{t-1}^{(\ell)}] \Big)}_{(A)} \\
    &\qquad \underbrace{- \Big( \mathbf{L}^\top (\mathbf{D}_t^{(\ell+1)} \circ \sigma^\prime(\widetilde{\mathbf{Z}}_t^{(\ell)}))\mathbf{W}_t^{(\ell)} \Big) \|_{\mathrm{F}}^2 ]}_{(A)} \\
    &\qquad + 2\underbrace{\mathbb{E}[\| \Big( \mathbf{L}^\top (\mathbf{D}_t^{(\ell+1)} \circ \sigma^\prime(\widetilde{\mathbf{Z}}_t^{(\ell)}))\mathbf{W}_t^{(\ell)} \Big) - \Big( \mathbf{L}^\top (\mathbf{D}_t^{(\ell+1)} \circ \sigma^\prime(\mathbf{Z}_t^{(\ell)}))\mathbf{W}_t^{(\ell)} \Big) \|_{\mathrm{F}}^2]}_{(B)}.
    \end{aligned}
\end{equation}
Let first take a closer look at term $(A)$ in Eq.~\ref{eq:sgcn_pplus_support_lemma1_eq1}. 
Let suppose $t\in\{e_{s-1}+1,\ldots,e_{s}\}$. Then we can denote $t=e_{s-1}+k$ for some $k\leq K$.
By Lemma~\ref{lemma:gradient_H_SGCN++}, term $(A)$ can be bounded by
\begin{equation} \label{eq:sgcn_pplus_support_lemma1_eq3}
    (A) \leq \sum_{t=e_{s-1}+1}^{e_{s}} \eta^2 \mathcal{O}\left( \Big( \sum_{\ell=1}^L  |\mathbb{E}[\|\widetilde{\mathbf{L}}_t^{(\ell)}\|_{\mathrm{F}}^2] - \|\mathbf{L}\|_{\mathrm{F}}^2| \Big) \times (\alpha^2 + \beta^2 + \alpha^2 \beta^2 ) \mathbb{E}[\|\nabla \widetilde{\mathcal{L}}(\bm{\theta}_{t-1})\|_{\mathrm{F}}^2]. \right)
\end{equation}
Then we take a closer look at term $(B)$ in Eq.~\ref{eq:sgcn_pplus_support_lemma1_eq1}, we have 
\begin{equation} \label{eq:sgcn_pplus_support_lemma1_eq2}
    \begin{aligned}
    &\mathbb{E}[\| \Big( \mathbf{L}^\top (\mathbf{D}_t^{(\ell+1)} \circ \sigma^\prime(\widetilde{\mathbf{Z}}_t^{(\ell)}))\mathbf{W}_t^{(\ell)} \Big) - \Big( \mathbf{L}^\top (\mathbf{D}_t^{(\ell+1)} \circ \sigma^\prime(\mathbf{Z}_t^{(\ell)}))\mathbf{W}_t^{(\ell)} \Big) \|_{\mathrm{F}}^2] \\
    &\leq \beta^2 B_{LA}^2 B_D^2 B_W^2 L_\sigma^2 \underbrace{\mathbb{E}[\| \widetilde{\mathbf{Z}}_t^{(\ell)} - \mathbf{Z}_t^{(\ell)}\|_{\mathrm{F}}^2]}_{(C)}.
    \end{aligned}
\end{equation}

The term $(C)$ in Eq.~\ref{eq:sgcn_pplus_support_lemma1_eq2} can be decomposed as
\begin{equation}
    \begin{aligned}
    &\mathbb{E}[\| \widetilde{\mathbf{Z}}_t^{(\ell)} - \mathbf{Z}_t^{(\ell)}\|_{\mathrm{F}}^2] \\
    &= \mathbb{E}[\| \widetilde{\mathbf{Z}}_t^{(\ell)} - \mathbf{L}\mathbf{H}_t^{(\ell-1)} \mathbf{W}^{(\ell)}_t\|_{\mathrm{F}}^2] \\
    &\leq 2 \mathbb{E}[\| \widetilde{\mathbf{Z}}_t^{(\ell)} - \mathbf{L}\widetilde{\mathbf{H}}_t^{(\ell-1)} \mathbf{W}^{(\ell)}_t\|_{\mathrm{F}}^2]  + 2 \mathbb{E}[\| \mathbf{L}\widetilde{\mathbf{H}}_t^{(\ell-1)} \mathbf{W}^{(\ell)}_t - \mathbf{L}\mathbf{H}_t^{(\ell-1)} \mathbf{W}^{(\ell)}_t\|_{\mathrm{F}}^2] \\
    &\leq 2 \mathbb{E}[\| \widetilde{\mathbf{Z}}_t^{(\ell)} - \mathbf{L}\widetilde{\mathbf{H}}_t^{(\ell-1)} \mathbf{W}^{(\ell)}_t\|_{\mathrm{F}}^2]   + 2 B_{LA}^2 B_W^2 C_\sigma^2 \mathbb{E}[\| \widetilde{\mathbf{Z}}_t^{(\ell-1)} - \mathbf{Z}_t^{(\ell-1)} \|_{\mathrm{F}}^2].
    \end{aligned}
\end{equation}

By induction, we have
\begin{equation} \label{eq:sgcn_pplus_support_lemma1_eq5}
    \mathbb{E}[\| \widetilde{\mathbf{Z}}_t^{(\ell)} - \mathbf{Z}_t^{(\ell)}\|_{\mathrm{F}}^2] \leq  
    \mathcal{O}\Big( \underbrace{\mathbb{E}[\| \widetilde{\mathbf{Z}}_t^{(\ell)} - \mathbf{L}\widetilde{\mathbf{H}}_t^{(\ell-1)} \mathbf{W}^{(\ell)}_t\|_{\mathrm{F}}^2] }_{(D)} \Big) + \ldots + \mathcal{O}\Big(\mathbb{E}[\| \widetilde{\mathbf{Z}}_t^{(1)} - \mathbf{L}\mathbf{X} \mathbf{W}^{(1)}_t\|_{\mathrm{F}}^2] \Big).
\end{equation}

The upper-bound for term $(D)$ is similar to one we have in the proof of Lemma~\ref{lemma:upper-bound-sgcn-plus-bias-1}.
\begin{equation}
    \begin{aligned}
    &\| \mathbf{L}\widetilde{\mathbf{H}}_t^{(\ell-1)} \mathbf{W}^{(\ell)}_t  - \widetilde{\mathbf{Z}}_t^{(\ell)} \|_{\mathrm{F}}^2 \\
    &=\| [\mathbf{L}\widetilde{\mathbf{H}}_t^{(\ell-1)} \mathbf{W}^{(\ell)}_t - \mathbf{L}\widetilde{\mathbf{H}}_{t-1}^{(\ell-1)} \mathbf{W}^{(\ell)}_{t-1}] + [ \mathbf{L}\widetilde{\mathbf{H}}_{t-1}^{(\ell-1)} \mathbf{W}^{(\ell)}_{t-1} - \widetilde{\mathbf{Z}}_{t-1}^{(\ell)} ] - [ \widetilde{\mathbf{Z}}_t^{(\ell)} - \widetilde{\mathbf{Z}}_{t-1}^{(\ell)} ] \|_{\mathrm{F}}^2 \\
    &= \| \mathbf{L}\widetilde{\mathbf{H}}_t^{(\ell-1)} \mathbf{W}^{(\ell)}_t - \mathbf{L}\widetilde{\mathbf{H}}_{t-1}^{(\ell-1)} \mathbf{W}^{(\ell)}_{t-1} \|_{\mathrm{F}}^2 + \| \mathbf{L}\widetilde{\mathbf{H}}_{t-1}^{(\ell-1)} \mathbf{W}^{(\ell)}_{t-1} - \widetilde{\mathbf{Z}}_{t-1}^{(\ell)} \|_{\mathrm{F}}^2 + \| \widetilde{\mathbf{Z}}_t^{(\ell)} - \widetilde{\mathbf{Z}}_{t-1}^{(\ell)} \|_{\mathrm{F}}^2 \\
    &\qquad + 2\langle \mathbf{L}\widetilde{\mathbf{H}}_t^{(\ell-1)} \mathbf{W}^{(\ell)}_t, \mathbf{L}\widetilde{\mathbf{H}}_{t-1}^{(\ell-1)} \mathbf{W}^{(\ell)}_{t-1}\rangle - 2\langle \mathbf{L}\widetilde{\mathbf{H}}_t^{(\ell-1)} \mathbf{W}^{(\ell)}_t, \widetilde{\mathbf{Z}}_t^{(\ell)} - \widetilde{\mathbf{Z}}_{t-1}^{(\ell)} \rangle \\
    & \qquad - 2\langle \widetilde{\mathbf{Z}}_t^{(\ell)} - \widetilde{\mathbf{Z}}_{t-1}^{(\ell)}, \mathbf{L}\widetilde{\mathbf{H}}_{t-1}^{(\ell-1)} \mathbf{W}^{(\ell)}_{t-1} \rangle.
    \end{aligned}
\end{equation}
Taking expectation condition on $\mathcal{F}_t$ and using 
\begin{equation}
    \mathbb{E}[\widetilde{\mathbf{Z}}_t^{(\ell)} - \widetilde{\mathbf{Z}}_{t-1}^{(\ell)} | \mathcal{F}_t ] = \mathbf{L}\widetilde{\mathbf{H}}_t^{(\ell-1)} \mathbf{W}^{(\ell)}_t - \mathbf{L}\widetilde{\mathbf{H}}_{t-1}^{(\ell-1)} \mathbf{W}^{(\ell)}_{t-1},
\end{equation}
we have
\begin{equation}
    \begin{aligned}
    &\mathbb{E}[ \| \mathbf{L}\widetilde{\mathbf{H}}_t^{(\ell-1)} \mathbf{W}^{(\ell)}_t  - \widetilde{\mathbf{Z}}_t^{(\ell)} \|_{\mathrm{F}}^2 | \mathcal{F}_t ] \\
    &= \| \mathbf{L}\widetilde{\mathbf{H}}_{t-1}^{(\ell-1)} \mathbf{W}^{(\ell)}_{t-1} - \widetilde{\mathbf{Z}}_{t-1}^{(\ell)} \|_{\mathrm{F}}^2 +  \mathbb{E}[\| \widetilde{\mathbf{Z}}_t^{(\ell)} - \widetilde{\mathbf{Z}}_{t-1}^{(\ell)} \|_{\mathrm{F}}^2 | \mathcal{F}_t] - \| \mathbf{L}\widetilde{\mathbf{H}}_t^{(\ell-1)} \mathbf{W}^{(\ell)}_t - \mathbf{L}\widetilde{\mathbf{H}}_{t-1}^{(\ell-1)} \mathbf{W}^{(\ell)}_{t-1} \|_{\mathrm{F}}^2.
    \end{aligned}
\end{equation}

Take expectation over $\mathcal{F}_t$ we have
\begin{equation}
    \begin{aligned}
    &\mathbb{E}[ \| \mathbf{L}\widetilde{\mathbf{H}}_t^{(\ell-1)} \mathbf{W}^{(\ell)}_t  - \widetilde{\mathbf{Z}}_t^{(\ell)} \|_{\mathrm{F}}^2 ] \\
    &= \| \mathbf{L}\widetilde{\mathbf{H}}_{t-1}^{(\ell-1)} \mathbf{W}^{(\ell)}_{t-1} - \widetilde{\mathbf{Z}}_{t-1}^{(\ell)} \|_{\mathrm{F}}^2 + \mathbb{E}[\| \widetilde{\mathbf{Z}}_t^{(\ell)} - \widetilde{\mathbf{Z}}_{t-1}^{(\ell)} \|_{\mathrm{F}}^2 ] - \| \mathbf{L}\widetilde{\mathbf{H}}_t^{(\ell-1)} \mathbf{W}^{(\ell)}_t - \mathbf{L}\widetilde{\mathbf{H}}_{t-1}^{(\ell-1)} \mathbf{W}^{(\ell)}_{t-1} \|_{\mathrm{F}}^2.
    \end{aligned}
\end{equation}

Let suppose $t\in\{e_{s-1}+1,\ldots,e_{s}\}$. Then we can denote $t=e_{s-1}+k$ for some $k\leq K$ such that
\begin{equation} \label{eq:sgcn_pplus_support_lemma1_eq6}
    \begin{aligned}
    &\mathbb{E}[ \| \mathbf{L}\widetilde{\mathbf{H}}_t^{(\ell-1)} \mathbf{W}^{(\ell)}_t  - \widetilde{\mathbf{Z}}_t^{(\ell)} \|_{\mathrm{F}}^2 ] \\
    &= \mathbb{E}[ \| \mathbf{L}\widetilde{\mathbf{H}}_{e_{s-1}+k}^{(\ell-1)} \mathbf{W}^{(\ell)}_{e_{s-1}+k}  - \widetilde{\mathbf{Z}}_{e_{s-1}+k}^{(\ell)} \|_{\mathrm{F}}^2 ] \\
    &= \| \mathbf{L}\widetilde{\mathbf{H}}_{e_{s-1}}^{(\ell-1)} \mathbf{W}^{(\ell)}_{e_{s-1}} - \widetilde{\mathbf{Z}}_{e_{s-1}}^{(\ell)} \|_{\mathrm{F}}^2 + \sum_{t=e_{s-1}+1}^{e_{s}} \mathbb{E}[\| \widetilde{\mathbf{Z}}_t^{(\ell)} - \widetilde{\mathbf{Z}}_{t-1}^{(\ell)} \|_{\mathrm{F}}^2 ] - \| \mathbf{L}\widetilde{\mathbf{H}}_t^{(\ell-1)} \mathbf{W}^{(\ell)}_t - \mathbf{L}\widetilde{\mathbf{H}}_{t-1}^{(\ell-1)} \mathbf{W}^{(\ell)}_{t-1} \|_{\mathrm{F}}^2 \\
    &\underset{(a)}{\leq} \sum_{t=e_{s-1}+1}^{e_{s}} \mathcal{O}\Big(|\mathbb{E}[\|\widetilde{\mathbf{L}}^{(\ell)}_t \|_{\mathrm{F}}^2] - \|\mathbf{L}\|_{\mathrm{F}}^2| \times \alpha^{2} \mathbb{E}[\| \mathbf{W}_t^{(\ell)} - \mathbf{W}_{t-1}^{(\ell)} \|_{\mathrm{F}}^2 ]\Big).
    \end{aligned}
\end{equation}
where $(a)$ follows the deviation of Eq.~\ref{eq:upper-bound-sgcn-plus-bias-1-eq7}.

Plugging $(D)$ to $(C)$ and $(C)$ to $(B)$, we have
\begin{equation} \label{eq:sgcn_pplus_support_lemma1_eq4}
    \begin{aligned}
    (B) \leq  \sum_{t=e_{s-1}+1}^{e_{s}} \mathcal{O}\left( \Big( \sum_{\ell=1}^L  |\mathbb{E}[\|\widetilde{\mathbf{L}}_t^{(\ell)}\|_{\mathrm{F}}^2] - \|\mathbf{L}\|_{\mathrm{F}}^2| \Big) \times \alpha^2 \beta^2 \mathbb{E}[\| \mathbf{W}_t^{(\ell)} - \mathbf{W}_{t-1}^{(\ell)} \|_{\mathrm{F}}^2 ]\right).
    \end{aligned}
\end{equation}

Plugging Eq.~\ref{eq:sgcn_pplus_support_lemma1_eq3} and Eq.~\ref{eq:sgcn_pplus_support_lemma1_eq4} back to Eq.~\ref{eq:sgcn_pplus_support_lemma1_eq1}, we have
\begin{equation}
    \begin{aligned}
    & \mathbb{E}[ \| \nabla_H \widetilde{f}^{(\ell)}(\mathbf{D}_t^{(\ell+1)}, \widetilde{\mathbf{H}}_t^{(\ell-1)}, \mathbf{W}_t^{(\ell)}) - \nabla_H f^{(\ell)}(\mathbf{D}_t^{(\ell+1)}, \mathbf{H}_t^{(\ell-1)}, \mathbf{W}_t^{(\ell)}) \|_{\mathrm{F}}^2 ] \\
    &\leq \sum_{\ell=1}^L \sum_{t=e_{s-1}+1}^{e_{s}} \mathcal{O}\left(\Big( \sum_{\ell=1}^L  |\mathbb{E}[\|\widetilde{\mathbf{L}}_t^{(\ell)}\|_{\mathrm{F}}^2] - \|\mathbf{L}\|_{\mathrm{F}}^2| \Big) \times (\alpha^{2} + \beta^2 + \alpha^{2} \beta^2) \mathbb{E}[\| \mathbf{W}_t^{(\ell)} - \mathbf{W}_{t-1}^{(\ell)} \|_{\mathrm{F}}^2 ]\right) \\
    &= \sum_{t=e_{s-1}+1}^{e_{s}} \eta^2 \mathcal{O}\left( \Big( \sum_{\ell=1}^L  |\mathbb{E}[\|\widetilde{\mathbf{L}}_t^{(\ell)}\|_{\mathrm{F}}^2] - \|\mathbf{L}\|_{\mathrm{F}}^2| \Big) \times (\alpha^{2} + \beta^2 + \alpha^{2} \beta^2)  \mathbb{E}[\|\nabla \widetilde{\mathcal{L}}(\bm{\theta}_{t-1})\|_{\mathrm{F}}^2] \right).
    \end{aligned}
\end{equation}
\end{proof}
In the following lemma, we derive the upper-bound on the difference of the gradient with respect to the weight matrix at each graph convolutional layer. Suppose the input node embedding matrix for the $\ell$th GCN layer is defined as $\widetilde{\mathbf{H}}_t^{(\ell-1)}$, the gradient calculated for the $\ell$th weight matrix in \texttt{SGCN++} is defined as
\begin{equation}
    \begin{aligned}
    &\nabla_W \widetilde{f}^{(\ell)}(\mathbf{D}_t^{(\ell+1)}, \widetilde{\mathbf{H}}_t^{(\ell-1)}, \mathbf{W}_t^{(\ell)}) \\
    &=\widetilde{\mathbf{G}}_{t-1}^{(\ell)} + [\widetilde{\mathbf{L}}^{(\ell)}_t \widetilde{\mathbf{H}}^{(\ell-1)}_t ]^\top (\mathbf{D}_t^{(\ell+1)} \circ \sigma^\prime(\widetilde{\mathbf{Z}}_t^{(\ell)})) - [\widetilde{\mathbf{L}}^{(\ell)}_t \widetilde{\mathbf{H}}^{(\ell-1)}_{t-1} ]^\top (\mathbf{D}_{t-1}^{(\ell+1)} \circ \sigma^\prime(\widetilde{\mathbf{Z}}_{t-1}^{(\ell)})),
    \end{aligned}
\end{equation}
and  the backward propagation for the $\ell$th layer in \texttt{FullGCN} is defined as
\begin{equation}
    \nabla_W f^{(\ell)}(\mathbf{D}_t^{(\ell+1)}, \widetilde{\mathbf{H}}_t^{(\ell-1)}, \mathbf{W}_t^{(\ell)}) = [\mathbf{L} \widetilde{\mathbf{H}}_t^{(\ell-1)}]^\top (\mathbf{D}_t^{(\ell+1)} \circ \sigma^\prime(\widetilde{\mathbf{Z}}_t^{(\ell)})).
\end{equation}
\begin{lemma} \label{lemma:gradient_W_SGCN++}
Let suppose $t\in\{e_{s-1}+1,\ldots,e_{s}\}$. 
The upper-bound on the difference of the gradient with respect to the $\ell$th graph convolutional layer given the same input $\mathbf{D}_t^{(\ell+1)}$ and $\widetilde{\mathbf{H}}_t^{(\ell-1)}$ is defined as
\begin{equation}
    \begin{aligned}
    &\|\nabla_W \widetilde{f}^{(\ell)}(\mathbf{D}_t^{(\ell+1)}, \widetilde{\mathbf{H}}_t^{(\ell-1)}, \mathbf{W}_t^{(\ell)}) - \nabla_W f^{(\ell)}(\mathbf{D}_t^{(\ell+1)}, \widetilde{\mathbf{H}}_t^{(\ell-1)}, \mathbf{W}_t^{(\ell)}) \|_{\mathrm{F}}^2 \\
    &\leq \sum_{t=e_{s-1}+1}^{e_{s}} \eta^2 \mathcal{O}\left( \Big( \sum_{\ell=1}^L | \mathbb{E}[\|\widetilde{\mathbf{L}}^{(\ell)}_t\|_{\mathrm{F}}^2] - \| \mathbf{L} \|_{\mathrm{F}}^2 | \Big) \times (\alpha^2 \beta^2 +\alpha^4 \beta^2) \mathbb{E}[\|\nabla \widetilde{\mathcal{L}}(\bm{\theta}_{t-1})\|_{\mathrm{F}}^2] \right).
    \end{aligned}
\end{equation}
\end{lemma}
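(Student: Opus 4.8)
The plan is to follow the template of Lemma~\ref{lemma:gradient_H_SGCN++} almost verbatim, with the node-embedding estimator $\widetilde{\mathbf{D}}^{(\ell)}$ replaced by the layerwise weight-gradient estimator $\widetilde{\mathbf{G}}_t^{(\ell)}=\nabla_W\widetilde{f}^{(\ell)}(\mathbf{D}_t^{(\ell+1)},\widetilde{\mathbf{H}}_t^{(\ell-1)},\mathbf{W}_t^{(\ell)})$ and the target taken to be the full-Laplacian weight gradient $\mathbf{G}_t^{(\ell),\star}:=[\mathbf{L}\widetilde{\mathbf{H}}_t^{(\ell-1)}]^\top(\mathbf{D}_t^{(\ell+1)}\circ\sigma^\prime(\widetilde{\mathbf{Z}}_t^{(\ell)}))$. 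First I would rewrite $\|\mathbf{G}_t^{(\ell),\star}-\widetilde{\mathbf{G}}_t^{(\ell)}\|_{\mathrm{F}}^2$ by inserting $\pm\widetilde{\mathbf{G}}_{t-1}^{(\ell)}$ and $\pm\mathbf{G}_{t-1}^{(\ell),\star}$ inside the norm, producing the same three-term split plus cross terms as in Eq.~\ref{eq:gradient_H_SGCN++_eq1}. Taking $\mathbb{E}[\cdot\mid\mathcal{F}_t]$ and using the unbiasedness identity $\mathbb{E}[\widetilde{\mathbf{G}}_t^{(\ell)}-\widetilde{\mathbf{G}}_{t-1}^{(\ell)}\mid\mathcal{F}_t]=\mathbf{G}_t^{(\ell),\star}-\mathbf{G}_{t-1}^{(\ell),\star}$ (which holds since $\mathbb{E}[\widetilde{\mathbf{L}}_t^{(\ell)}\mid\mathcal{F}_t]=\mathbf{L}$) annihilates the cross terms, yielding the SARAH recursion $\mathbb{E}[\|\mathbf{G}_t^{(\ell),\star}-\widetilde{\mathbf{G}}_t^{(\ell)}\|_{\mathrm{F}}^2\mid\mathcal{F}_t]\le\|\mathbf{G}_{t-1}^{(\ell),\star}-\widetilde{\mathbf{G}}_{t-1}^{(\ell)}\|_{\mathrm{F}}^2+\mathbb{E}[\|\widetilde{\mathbf{G}}_t^{(\ell)}-\widetilde{\mathbf{G}}_{t-1}^{(\ell)}-(\mathbf{G}_t^{(\ell),\star}-\mathbf{G}_{t-1}^{(\ell),\star})\|_{\mathrm{F}}^2\mid\mathcal{F}_t]$. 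Telescoping this over the inner loop $t\in\{e_{s-1}+1,\dots,e_s\}$ and using that the snapshot step $t=e_{s-1}$ is a full-batch pass (so the error seeds at zero) reduces the claim to summing the conditional variances of the increments.

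The increment variance equals $\mathbb{E}[\|[\widetilde{\mathbf{H}}_t^{(\ell-1)}]^\top\mathbf{E}\,\mathbf{B}_t-[\widetilde{\mathbf{H}}_{t-1}^{(\ell-1)}]^\top\mathbf{E}\,\mathbf{B}_{t-1}\|_{\mathrm{F}}^2\mid\mathcal{F}_t]$, where $\mathbf{E}:=[\widetilde{\mathbf{L}}_t^{(\ell)}]^\top-\mathbf{L}^\top$ has zero conditional mean and $\mathbf{B}_t:=\mathbf{D}_t^{(\ell+1)}\circ\sigma^\prime(\widetilde{\mathbf{Z}}_t^{(\ell)})$. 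The one new feature relative to Lemma~\ref{lemma:gradient_H_SGCN++} is the outer factor $\widetilde{\mathbf{H}}^{(\ell-1)}$, which differs between the two terms and prevents $\mathbf{E}$ from factoring out directly. I would add and subtract $[\widetilde{\mathbf{H}}_{t-1}^{(\ell-1)}]^\top\mathbf{E}\,\mathbf{B}_t$ to split the increment into $[\widetilde{\mathbf{H}}_t^{(\ell-1)}-\widetilde{\mathbf{H}}_{t-1}^{(\ell-1)}]^\top\mathbf{E}\,\mathbf{B}_t$ and $[\widetilde{\mathbf{H}}_{t-1}^{(\ell-1)}]^\top\mathbf{E}\,(\mathbf{B}_t-\mathbf{B}_{t-1})$, and bound each by Frobenius submultiplicativity together with $\mathbb{E}[\|\mathbf{E}\|_{\mathrm{F}}^2\mid\mathcal{F}_t]=\mathbb{E}[\|\widetilde{\mathbf{L}}_t^{(\ell)}\|_{\mathrm{F}}^2]-\|\mathbf{L}\|_{\mathrm{F}}^2$. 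The early-stop bounds $\|\widetilde{\mathbf{H}}_{t-1}^{(\ell-1)}\|_{\mathrm{F}}\le\alpha B_H$ and $\|\mathbf{B}_t\|_{\mathrm{F}}\le\beta B_D C_\sigma$ then supply the leading prefactors: the first piece carries $\beta^2$ times $\|\widetilde{\mathbf{H}}_t^{(\ell-1)}-\widetilde{\mathbf{H}}_{t-1}^{(\ell-1)}\|_{\mathrm{F}}^2$, and the second carries $\alpha^2$ times $\|\mathbf{B}_t-\mathbf{B}_{t-1}\|_{\mathrm{F}}^2$, with $|\mathbb{E}[\|\widetilde{\mathbf{L}}_t^{(\ell)}\|_{\mathrm{F}}^2]-\|\mathbf{L}\|_{\mathrm{F}}^2|$ factored in both.

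It remains to convert the consecutive-step differences $\|\widetilde{\mathbf{H}}_t^{(\ell-1)}-\widetilde{\mathbf{H}}_{t-1}^{(\ell-1)}\|_{\mathrm{F}}^2$ and $\|\mathbf{B}_t-\mathbf{B}_{t-1}\|_{\mathrm{F}}^2$ (the latter splitting, after an $L_\sigma$-Lipschitz step on $\sigma^\prime$ and the bound $\|\mathbf{D}_{t-1}^{(\ell+1)}\|_{\mathrm{F}}\le\beta B_D$, into $\|\mathbf{D}_t^{(\ell+1)}-\mathbf{D}_{t-1}^{(\ell+1)}\|_{\mathrm{F}}^2$ and $\|\widetilde{\mathbf{Z}}_t^{(\ell)}-\widetilde{\mathbf{Z}}_{t-1}^{(\ell)}\|_{\mathrm{F}}^2$) into weight increments. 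These are precisely the quantities already bounded inside the proof of Lemma~\ref{lemma:gradient_H_SGCN++}: by Eq.~\ref{eq:diff_Z_wrt_W} and Eq.~\ref{eq:sgcn_pplus_important_eq_4} each reduces, by induction over layers, to an $\mathcal{O}$ of $\alpha^2$- or $\beta^2$-weighted sums of $\|\mathbf{W}_t^{(j)}-\mathbf{W}_{t-1}^{(j)}\|_{\mathrm{F}}^2$, while $\|\widetilde{\mathbf{H}}_t^{(\ell-1)}-\widetilde{\mathbf{H}}_{t-1}^{(\ell-1)}\|_{\mathrm{F}}^2\le C_\sigma^2\|\widetilde{\mathbf{Z}}_t^{(\ell-1)}-\widetilde{\mathbf{Z}}_{t-1}^{(\ell-1)}\|_{\mathrm{F}}^2$ feeds the same induction. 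Combining the extracted $\beta^2$ and $\alpha^2$ prefactors with these reductions produces the aggregate constant $\alpha^2\beta^2+\alpha^4\beta^2$, and substituting $\sum_{j=1}^L\|\mathbf{W}_t^{(j)}-\mathbf{W}_{t-1}^{(j)}\|_{\mathrm{F}}^2=\eta^2\|\nabla\widetilde{\mathcal{L}}(\bm{\theta}_{t-1})\|_{\mathrm{F}}^2$ from the parameter update rule, then summing over layers, gives the stated bound. The main obstacle is the careful bookkeeping of the $\alpha,\beta$ powers through the extra $\widetilde{\mathbf{H}}^{(\ell-1)}$ peeling and the nested cross-layer induction; once the add-subtract split isolates the $\widetilde{\mathbf{L}}$-dependence so that the variance factors cleanly, the residual estimates are routine reuses of the differences already established for the $\nabla_H$ lemma.
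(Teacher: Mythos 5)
Your proposal is correct and follows the paper's own proof essentially step for step: the same SARAH-type recursion obtained by inserting $\pm\widetilde{\mathbf{G}}_{t-1}^{(\ell)}$ and the previous full-Laplacian gradient, cancellation of the cross terms via the conditional unbiasedness of the increment, telescoping from the zero-error snapshot, factoring out $\bigl|\mathbb{E}[\|\widetilde{\mathbf{L}}_t^{(\ell)}\|_{\mathrm{F}}^2]-\|\mathbf{L}\|_{\mathrm{F}}^2\bigr|$, and the same three-way split of the increment (the $\widetilde{\mathbf{H}}$-difference with prefactor $\beta^2$, the $\mathbf{D}$-difference with $\alpha^2$, the $\sigma^\prime$-difference with $\alpha^2\beta^2$) reduced through the cross-layer inductions already used in Lemma~\ref{lemma:gradient_H_SGCN++} to $\eta^2\,\mathbb{E}[\|\nabla\widetilde{\mathcal{L}}(\bm{\theta}_{t-1})\|_{\mathrm{F}}^2]$, yielding the stated $(\alpha^2\beta^2+\alpha^4\beta^2)$ constant. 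Your packaging of the increment variance as the second moment of the zero-conditional-mean factor $\mathbf{E}=[\widetilde{\mathbf{L}}_t^{(\ell)}]^\top-\mathbf{L}^\top$, and the two-stage add-and-subtract in place of the paper's direct three-term split, are only cosmetic variations on the same argument.
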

\begin{proof}
To simplify the presentation, let us denote $\widetilde{\mathbf{G}}_t^{(\ell)} = \nabla_W \widetilde{f}^{(\ell)}(\mathbf{D}_t^{(\ell+1)}, \widetilde{\mathbf{H}}_t^{(\ell-1)}, \mathbf{W}_t^{(\ell)})$. Then, by definition, we have
\begin{equation}
    \widetilde{\mathbf{G}}_t^{(\ell)} = \widetilde{\mathbf{G}}_{t-1}^{(\ell)} + [\widetilde{\mathbf{L}}^{(\ell)}_t \widetilde{\mathbf{H}}_t^{(\ell-1)}]^\top (\mathbf{D}_t^{(\ell+1)} \circ \sigma^\prime(\widetilde{\mathbf{Z}}_t^{(\ell)})) - [\widetilde{\mathbf{L}}^{(\ell)}_t \widetilde{\mathbf{H}}_{t-1}^{(\ell-1)}]^\top (\mathbf{D}_{t-1}^{(\ell+1)} \circ \sigma^\prime(\widetilde{\mathbf{Z}}_{t-1}^{(\ell)})).
\end{equation}

Therefore, we know that
\begin{equation} ~\label{eq:gradient_W_SGCN++eq1}
    \begin{aligned}
    &\|[\mathbf{L} \widetilde{\mathbf{H}}_t^{(\ell-1)}]^\top (\mathbf{D}_t^{(\ell+1)} \circ \sigma^\prime(\widetilde{\mathbf{Z}}_t^{(\ell)})) - \widetilde{\mathbf{G}}^{(\ell)}_t \|_{\mathrm{F}}^2 \\
    &= \Big\| \Big[ [ \mathbf{L} \widetilde{\mathbf{H}}_t^{(\ell-1)}] ^\top (\mathbf{D}_t^{(\ell+1)} \circ \sigma^\prime(\widetilde{\mathbf{Z}}_t^{(\ell)})) - [\mathbf{L} \widetilde{\mathbf{H}}_{t-1}^{(\ell-1)}]^\top (\mathbf{D}_{t-1}^{(\ell+1)} \circ \sigma^\prime(\widetilde{\mathbf{Z}}_{t-1}^{(\ell)})) \Big] \\
    & \qquad + \Big[ [\mathbf{L} \widetilde{\mathbf{H}}_{t-1}^{(\ell-1)}]^\top (\mathbf{D}_{t-1}^{(\ell+1)} \circ \sigma^\prime(\widetilde{\mathbf{Z}}_{t-1}^{(\ell)})) - \widetilde{\mathbf{G}}_{t-1}^{(\ell)} \Big]  - \Big[ \widetilde{\mathbf{G}}_{t}^{(\ell)} - \widetilde{\mathbf{G}}_{t-1}^{(\ell)} \Big] \Big\|_{\mathrm{F}}^2 \\
    &\leq \Big\| \underbrace{ [\mathbf{L} \widetilde{\mathbf{H}}_t^{(\ell-1)}]^\top (\mathbf{D}_t^{(\ell+1)} \circ \sigma^\prime(\widetilde{\mathbf{Z}}_t^{(\ell)})) - [\mathbf{L} \widetilde{\mathbf{H}}_{t-1}^{(\ell-1)}] ^\top (\mathbf{D}_{t-1}^{(\ell+1)} \circ \sigma^\prime(\widetilde{\mathbf{Z}}_{t-1}^{(\ell)})) }_{(A_1)} \Big\|_{\mathrm{F}}^2 \\
    &\qquad + \| \underbrace{ [\mathbf{L} \widetilde{\mathbf{H}}_{t-1}^{(\ell-1)} ]^\top (\mathbf{D}_{t-1}^{(\ell+1)} \circ \sigma^\prime(\widetilde{\mathbf{Z}}_{t-1}^{(\ell)})) - \widetilde{\mathbf{G}}_{t-1}^{(\ell)}}_{A_2} \|_{\mathrm{F}}^2 + \| \underbrace{\widetilde{\mathbf{G}}_{t}^{(\ell)} - \widetilde{\mathbf{G}}_{t-1}^{(\ell)}}_{A_3} \|_{\mathrm{F}}^2 \\
    &\qquad + 2\langle A_1, A_2\rangle - 2\langle A_1, A_3\rangle - 2\langle A_2, A_3\rangle.
    \end{aligned}
\end{equation}

Taking expectation condition on $\mathcal{F}_t$ on both side, and using the fact that 
\begin{equation}
    \begin{aligned}
    \mathbb{E}[\widetilde{\mathbf{G}}_t^{(\ell)} - \widetilde{\mathbf{G}}_{t-1}^{(\ell)} | \mathcal{F}_t] 
    &= \mathbb{E}[ [\widetilde{\mathbf{L}}^{(\ell)}_t \widetilde{\mathbf{H}}_t^{(\ell-1)}]^\top (\mathbf{D}_t^{(\ell+1)} \circ \sigma^\prime(\widetilde{\mathbf{Z}}_t^{(\ell)})) - [\widetilde{\mathbf{L}}^{(\ell)}_t \widetilde{\mathbf{H}}_{t-1}^{(\ell-1)}]^\top (\mathbf{D}_{t-1}^{(\ell+1)} \circ \sigma^\prime(\widetilde{\mathbf{Z}}_{t-1}^{(\ell)})) | \mathcal{F}_t ] \\
    &= [\mathbf{L} \widetilde{\mathbf{H}}_t^{(\ell-1)}]^\top (\mathbf{D}_t^{(\ell+1)} \circ \sigma^\prime(\widetilde{\mathbf{Z}}_t^{(\ell)}))  - [\mathbf{L} \widetilde{\mathbf{H}}_{t-1}^{(\ell-1)}]^\top (\mathbf{D}_{t-1}^{(\ell+1)} \circ \sigma^\prime(\widetilde{\mathbf{Z}}_{t-1}^{(\ell)})) .
    \end{aligned}
\end{equation}
we have the following inequality holds
\begin{equation}
    \begin{aligned}
    &\mathbb{E}[\| [\mathbf{L} \widetilde{\mathbf{H}}_t^{(\ell)}]^\top (\mathbf{D}_t^{(\ell+1)} \circ \sigma^\prime(\widetilde{\mathbf{Z}}_t^{(\ell-1)})) - \widetilde{\mathbf{G}}^{(\ell)}_t \|_{\mathrm{F}}^2 | \mathcal{F}_t ] \\
    &\leq \| [\mathbf{L} \widetilde{\mathbf{H}}_{t-1}^{(\ell-1)}]^\top (\mathbf{D}_{t-1}^{(\ell+1)} \circ \sigma^\prime(\widetilde{\mathbf{Z}}_{t-1}^{(\ell)}))  - \widetilde{\mathbf{G}}_{t-1}^{(\ell)} \|_{\mathrm{F}}^2 + \mathbb{E}[\| \widetilde{\mathbf{G}}_{t}^{(\ell)} - \widetilde{\mathbf{G}}_{t-1}^{(\ell)} \|_{\mathrm{F}}^2 | \mathcal{F}_t] \\
    &\qquad - \| [\mathbf{L} \widetilde{\mathbf{H}}_t^{(\ell-1)}]^\top (\mathbf{D}_t^{(\ell+1)} \circ \sigma^\prime(\widetilde{\mathbf{Z}}_t^{(\ell)})) - [\mathbf{L} \widetilde{\mathbf{H}}_{t-1}^{(\ell-1)}]^\top (\mathbf{D}_{t-1}^{(\ell+1)} \circ \sigma^\prime(\widetilde{\mathbf{Z}}_{t-1}^{(\ell)})) \mathbf{W}_{t-1}^{(\ell)} \|_{\mathrm{F}}^2.
    \end{aligned}
\end{equation}

Then, taking expectation over $\mathcal{F}_t$, we can write Eq.~\ref{eq:gradient_W_SGCN++eq1} by
\begin{equation}
    \begin{aligned}
    &\mathbb{E}[\|[\mathbf{L} \widetilde{\mathbf{H}}_t^{(\ell-1)}]^\top (\mathbf{D}_t^{(\ell+1)} \circ \sigma^\prime(\widetilde{\mathbf{Z}}_t^{(\ell)}))  - \widetilde{\mathbf{G}}^{(\ell)}_t \|_{\mathrm{F}}^2 ] \\
    &\leq \mathbb{E}[ \| [\mathbf{L} \widetilde{\mathbf{H}}_{t-1}^{(\ell-1)}] ^\top (\mathbf{D}_{t-1}^{(\ell+1)} \circ \sigma^\prime(\widetilde{\mathbf{Z}}_{t-1}^{(\ell)})) \mathbf{W}_{t-1}^{(\ell)} - \widetilde{\mathbf{G}}_{t-1}^{(\ell)} \|_{\mathrm{F}}^2 ] + \mathbb{E}[\| \widetilde{\mathbf{G}}_{t}^{(\ell)} - \widetilde{\mathbf{G}}_{t-1}^{(\ell)} \|_{\mathrm{F}}^2 ] \\
    &\qquad - [ \| [\mathbf{L} \widetilde{\mathbf{H}}_t^{(\ell-1)}]^\top (\mathbf{D}_t^{(\ell+1)} \circ \sigma^\prime(\widetilde{\mathbf{Z}}_t^{(\ell)})) - [\mathbf{L} \widetilde{\mathbf{H}}_{t-1}^{(\ell-1)}]^\top (\mathbf{D}_{t-1}^{(\ell+1)} \circ \sigma^\prime(\widetilde{\mathbf{Z}}_{t-1}^{(\ell)}))\|_{\mathrm{F}}^2 ].
    \end{aligned}
\end{equation}

Let suppose $t\in\{e_{s-1}+1,\ldots,e_{s}\}$. Then we can denote $t=e_{s-1}+k$ for some $k\leq K$ such that
\begin{equation}
    \begin{aligned}
    &\mathbb{E}[\| [\mathbf{L} \widetilde{\mathbf{H}}_t^{(\ell-1)}]^\top (\mathbf{D}_t^{(\ell+1)} \circ \sigma^\prime(\widetilde{\mathbf{Z}}_t^{(\ell)}))  - \widetilde{\mathbf{G}}^{(\ell)}_t \|_{\mathrm{F}}^2 ] \\
    &= \mathbb{E}[\| [\mathbf{L} \widetilde{\mathbf{H}}_{e_{s-1}+k}^{(\ell-1)}]^\top (\mathbf{D}_{e_{s-1}+k}^{(\ell+1)} \circ \sigma^\prime(\widetilde{\mathbf{Z}}_{e_{s-1}+k}^{(\ell)})) - \widetilde{\mathbf{G}}^{(\ell)}_{e_{s-1}+k} \|_{\mathrm{F}}^2 ] \\
    &\leq \mathbb{E} [ \| [\mathbf{L} \widetilde{\mathbf{H}}_{e_{s-1}}^{(\ell-1)}]^\top (\mathbf{D}_{e_{s-1}}^{(\ell+1)} \circ \sigma^\prime(\widetilde{\mathbf{Z}}_{e_{s-1}}^{(\ell)})) - \widetilde{\mathbf{G}}_{e_{s-1}}^{(\ell)} \|_{\mathrm{F}}^2 ] + \sum_{t=e_{s-1}+1}^{e_{s}} \Big( \mathbb{E}[\| \widetilde{\mathbf{G}}_{t}^{(\ell)} - \widetilde{\mathbf{G}}_{t-1}^{(\ell)} \|_{\mathrm{F}}^2 ] \\
    &\qquad - [ \| [\mathbf{L} \widetilde{\mathbf{H}}_t^{(\ell-1)}]^\top (\mathbf{D}_t^{(\ell+1)} \circ \sigma^\prime(\widetilde{\mathbf{Z}}_t^{(\ell)})) - [\mathbf{L} \widetilde{\mathbf{H}}_{t-1}^{(\ell-1)}]^\top (\mathbf{D}_{t-1}^{(\ell+1)} \circ \sigma^\prime(\widetilde{\mathbf{Z}}_{t-1}^{(\ell)})) \|_{\mathrm{F}}^2 ] \Big) \\
    &\leq \mathbb{E}[\| [ \mathbf{L} \widetilde{\mathbf{H}}_{e_{s-1}}^{(\ell-1)} ]^\top (\mathbf{D}_{e_{s-1}}^{(\ell+1)} \circ \sigma^\prime(\widetilde{\mathbf{Z}}_{e_{s-1}}^{(\ell)}))  - \widetilde{\mathbf{G}}_{e_{s-1}}^{(\ell)} \|_{\mathrm{F}}^2 ] \\
    &\qquad + \sum_{t=e_{s-1}+1}^{e_{s}} \Big( \mathbb{E}[\| [\widetilde{\mathbf{L}}^{(\ell)}_t \widetilde{\mathbf{H}}_t^{(\ell-1)}]^\top (\mathbf{D}_t^{(\ell+1)} \circ \sigma^\prime(\widetilde{\mathbf{Z}}_t^{(\ell)}))  - [\widetilde{\mathbf{L}}^{(\ell)}_t \widetilde{\mathbf{H}}_{t-1}^{(\ell-1)} ]^\top (\mathbf{D}_{t-1}^{(\ell+1)} \circ \sigma^\prime(\widetilde{\mathbf{Z}}_{t-1}^{(\ell)}))   \|_{\mathrm{F}}^2 ] \\
    &\qquad - \mathbb{E}[ \| [\mathbf{L} \widetilde{\mathbf{H}}_t^{(\ell-1)} ]^\top (\mathbf{D}_t^{(\ell+1)} \circ \sigma^\prime(\widetilde{\mathbf{Z}}_t^{(\ell)})) - [\mathbf{L} \widetilde{\mathbf{H}}_{t-1}^{(\ell-1)}]^\top (\mathbf{D}_{t-1}^{(\ell+1)} \circ \sigma^\prime(\widetilde{\mathbf{Z}}_{t-1}^{(\ell)})) \|_{\mathrm{F}}^2 ] \Big).
    \end{aligned}
\end{equation}

Knowing that full-batch GD is used when $t = e_{s-1}$ such that $\mathbb{E}[\| [ \mathbf{L} \widetilde{\mathbf{H}}_{e_{s-1}}^{(\ell-1)} ]^\top (\mathbf{D}_{e_{s-1}}^{(\ell+1)} \circ \sigma^\prime(\widetilde{\mathbf{Z}}_{e_{s-1}}^{(\ell)}))  - \widetilde{\mathbf{G}}_{e_{s-1}}^{(\ell)} \|_{\mathrm{F}}^2 ] = 0$. Therefore, we have 
\begin{equation}
    \begin{aligned}
    &\mathbb{E}[\| [\mathbf{L} \widetilde{\mathbf{H}}_t^{(\ell-1)}]^\top (\mathbf{D}_t^{(\ell+1)} \circ \sigma^\prime(\widetilde{\mathbf{Z}}_t^{(\ell)}))  - \widetilde{\mathbf{G}}^{(\ell)}_t \|_{\mathrm{F}}^2 ] \\
    &\leq \sum_{t=e_{s-1}+1}^{e_{s}} 
    \Big( | \mathbb{E}[\|\widetilde{\mathbf{L}}^{(\ell)}_t\|_{\mathrm{F}}^2] - \| \mathbf{L} \|_{\mathrm{F}}^2 | \Big) \times \underbrace{\mathbb{E}[ \| [\widetilde{\mathbf{H}}_t^{(\ell-1)}]^\top (\mathbf{D}_t^{(\ell+1)} \circ \sigma^\prime(\widetilde{\mathbf{Z}}_t^{(\ell)})) - [\widetilde{\mathbf{H}}_{t-1}^{(\ell-1)}]^\top (\mathbf{D}_{t-1}^{(\ell+1)} \circ \sigma^\prime(\widetilde{\mathbf{Z}}_{t-1}^{(\ell)})) \|_{\mathrm{F}}^2 ]}_{(B)}.
    \end{aligned}
\end{equation}

Let take closer look at term $(B)$.
\begin{equation}
    \begin{aligned}
    & \mathbb{E}[ \| [\widetilde{\mathbf{H}}_t^{(\ell-1)}]^\top  (\mathbf{D}_t^{(\ell+1)} \circ \sigma^\prime(\widetilde{\mathbf{Z}}_t^{(\ell)})) - [\widetilde{\mathbf{H}}_{t-1}^{(\ell-1)}]^\top  (\mathbf{D}_{t-1}^{(\ell+1)} \circ \sigma^\prime(\widetilde{\mathbf{Z}}_{t-1}^{(\ell)})) \|_{\mathrm{F}}^2] \\
    &\leq  3\mathbb{E}[ \| [\widetilde{\mathbf{H}}_t^{(\ell-1)}]^\top  (\mathbf{D}_t^{(\ell+1)} \circ \sigma^\prime(\widetilde{\mathbf{Z}}_t^{(\ell)})) - [\widetilde{\mathbf{H}}_{t-1}^{(\ell-1)}]^\top  (\mathbf{D}_t^{(\ell+1)} \circ \sigma^\prime(\widetilde{\mathbf{Z}}_t^{(\ell)})) \|_{\mathrm{F}}^2] \\
    &\quad + 3\mathbb{E}[ \| [\widetilde{\mathbf{H}}_{t-1}^{(\ell-1)}]^\top  (\mathbf{D}_t^{(\ell+1)} \circ \sigma^\prime(\widetilde{\mathbf{Z}}_t^{(\ell)})) - [\widetilde{\mathbf{H}}_{t-1}^{(\ell-1)}]^\top  (\mathbf{D}_{t-1}^{(\ell+1)} \circ \sigma^\prime(\widetilde{\mathbf{Z}}_t^{(\ell)})) \|_{\mathrm{F}}^2] \\
    &\quad + 3\mathbb{E}[ \| [\widetilde{\mathbf{H}}_{t-1}^{(\ell-1)}]^\top  (\mathbf{D}_{t-1}^{(\ell+1)} \circ \sigma^\prime(\widetilde{\mathbf{Z}}_t^{(\ell)})) - [\widetilde{\mathbf{H}}_{t-1}^{(\ell-1)}]^\top  (\mathbf{D}_{t-1}^{(\ell+1)} \circ \sigma^\prime(\widetilde{\mathbf{Z}}_{t-1}^{(\ell)})) \|_{\mathrm{F}}^2] \\
    &\leq  3\mathbb{E}[ \| [( \widetilde{\mathbf{H}}_t^{(\ell-1)} - \widetilde{\mathbf{H}}_{t-1}^{(\ell-1)} )]^\top  (\mathbf{D}_t^{(\ell+1)} \circ \sigma^\prime(\widetilde{\mathbf{Z}}_t^{(\ell)})) \|_{\mathrm{F}}^2] \\
    &\quad + 3\mathbb{E}[ \| [\widetilde{\mathbf{H}}_{t-1}^{(\ell-1)}]^\top  (( \mathbf{D}_t^{(\ell+1)} - \mathbf{D}_{t-1}^{(\ell+1)} ) \circ \sigma^\prime(\widetilde{\mathbf{Z}}_t^{(\ell)})) \|_{\mathrm{F}}^2] \\
    &\quad + 3\mathbb{E}[ \| [\widetilde{\mathbf{H}}_{t-1}^{(\ell-1)}]^\top  (\mathbf{D}_{t-1}^{(\ell+1)} \circ ( \sigma^\prime(\widetilde{\mathbf{Z}}_t^{(\ell)}) - \sigma^\prime(\widetilde{\mathbf{Z}}_{t-1}^{(\ell)})) ) \|_{\mathrm{F}}^2] \\
    &\leq 3 \beta^2 B_D^2 C_\sigma^2 \mathbb{E}[\| \widetilde{\mathbf{H}}_t^{(\ell-1)} - \widetilde{\mathbf{H}}_{t-1}^{(\ell-1)} \|_\mathrm{F}^2 ] + 3\alpha^2 B_H^2 C_\sigma^2 \mathbb{E}[\| \mathbf{D}_t^{(\ell+1)} - \mathbf{D}_{t-1}^{(\ell+1)} \|_\mathrm{F}^2 ] \\
    &\quad + 3 \alpha^2 \beta^2 B_H^2 B_D^2 C_\sigma^2 \mathbb{E}[\| \widetilde{\mathbf{Z}}_t^{(\ell-1)} - \widetilde{\mathbf{Z}}_{t-1}^{(\ell-1)} \|_{\mathrm{F}}^2] \\
    &\leq 3(\beta^2 B_D^2 C_\sigma^4 + \alpha^2 \beta^2 B_H^2 B_D^2 L_\sigma^2) \underbrace{\mathbb{E}[\| \widetilde{\mathbf{Z}}_t^{(\ell-1)} - \widetilde{\mathbf{Z}}_{t-1}^{(\ell-1)} \|_{\mathrm{F}}^2]}_{(C_1)} + 3 \alpha^2 B_H^2 C_\sigma^2 \underbrace{\mathbb{E}[\| \mathbf{D}_t^{(\ell+1)} - \mathbf{D}_{t-1}^{(\ell+1)} \|_{\mathrm{F}}^2]}_{(C_2)} .
    \end{aligned}
\end{equation}

For term $(C_2)$ by definition we know 
\begin{equation}
    \begin{aligned}
    \|\mathbf{D}_t^{(\ell+1)} - \mathbf{D}_{t-1}^{(\ell+1)} \|_{\mathrm{F}}^2 &= \Big\|\Big( \mathbf{L}^\top (\mathbf{D}_t^{(\ell+2)} \circ \sigma^\prime(\mathbf{Z}_t^{(\ell+1)}))\mathbf{W}_t^{(\ell+1)} \Big) -  \Big( \mathbf{L}^\top (\mathbf{D}_{t-1}^{(\ell+2)} \circ \sigma^\prime(\mathbf{Z}_{t-1}^{(\ell+1)}))\mathbf{W}_{t-1}^{(\ell+1)} \Big) \Big\|_{\mathrm{F}}^2 \\
    &\leq 3 \Big\| \Big( \mathbf{L}^\top (\mathbf{D}_t^{(\ell+2)} \circ \sigma^\prime(\mathbf{Z}_t^{(\ell+1)}))\mathbf{W}_t^{(\ell+1)} \Big) -  \Big( \mathbf{L}^\top (\mathbf{D}_{t-1}^{(\ell+2)} \circ \sigma^\prime(\mathbf{Z}_t^{(\ell+1)}))\mathbf{W}_t^{(\ell+1)} \Big) \Big\|_{\mathrm{F}}^2 \\
    &\quad + 3 \Big\| \Big( \mathbf{L}^\top (\mathbf{D}_{t-1}^{(\ell+2)} \circ \sigma^\prime(\mathbf{Z}_t^{(\ell+1)}))\mathbf{W}_t^{(\ell+1)} \Big) - \Big( \mathbf{L}^\top (\mathbf{D}_{t-1}^{(\ell+2)} \circ \sigma^\prime(\mathbf{Z}_{t-1}^{(\ell+1)}))\mathbf{W}_t^{(\ell+1)} \Big) \Big\|_{\mathrm{F}}^2 \\
    &\quad + 3 \Big\| \Big( \mathbf{L}^\top (\mathbf{D}_{t-1}^{(\ell+2)} \circ \sigma^\prime(\mathbf{Z}_{t-1}^{(\ell+1)}))\mathbf{W}_t^{(\ell+1)} \Big) - \Big( \mathbf{L}^\top (\mathbf{D}_{t-1}^{(\ell+2)} \circ \sigma^\prime(\mathbf{Z}_{t-1}^{(\ell+1)}))\mathbf{W}_{t-1}^{(\ell+1)} \Big) \Big\|_{\mathrm{F}}^2 \\
    &\leq \mathcal{O}(\|\mathbf{D}_t^{(\ell+2)} - \mathbf{D}_{t-1}^{(\ell+2)} \|_{\mathrm{F}}^2) + \mathcal{O}(\beta^2 \|\mathbf{Z}_t^{(\ell+1)} - \mathbf{Z}_{t-1}^{(\ell+1)} \|_{\mathrm{F}}^2) \\
    &\quad + \mathcal{O}( \beta^2 \|\mathbf{W}_t^{(\ell+1)} - \mathbf{W}_{t-1}^{(\ell+1)} \|_{\mathrm{F}}^2).
    \end{aligned}
\end{equation}
By induction, we have
\begin{equation}
    \begin{aligned}
    \|\mathbf{D}_t^{(\ell+1)} - \mathbf{D}_{t-1}^{(\ell+1)} \|_{\mathrm{F}}^2 
    &\leq \underbrace{\mathcal{O}(\|\mathbf{D}^{(L+1)}_t - \mathbf{D}^{(L+1)}_{t-1}\|_{\mathrm{F}}^2)}_{(D_1)} \\
    &\quad + \underbrace{\mathcal{O}(\beta^2 \|\mathbf{Z}_t^{(\ell+1)} - \mathbf{Z}_{t-1}^{(\ell+1)} \|_{\mathrm{F}}^2)}_{(D_2)} + \ldots + \mathcal{O}(\beta^2\|\mathbf{Z}_t^{(L)} - \mathbf{Z}_{t-1}^{(L)} \|_{\mathrm{F}}^2) \\
    &\quad + \mathcal{O}(\beta^2\|\mathbf{W}_t^{(\ell+1)} - \mathbf{W}_{t-1}^{(\ell+1)} \|_{\mathrm{F}}^2) + \ldots + \mathcal{O}(\beta^2\|\mathbf{W}_t^{(L)} - \mathbf{W}_{t-1}^{(L)} \|_{\mathrm{F}}^2).
    \end{aligned}
\end{equation}

For term $(D_1)$ we have
\begin{equation}
    \begin{aligned}
    \| \mathbf{D}^{(L+1)}_t - \mathbf{D}^{(L+1)}_{t-1}\|_{\mathrm{F}}^2 
    &= \| \frac{\partial \mathcal{L}(\bm{\theta}_t) }{\partial \mathbf{W}_t^{(L)}} - \frac{\partial \mathcal{L}(\bm{\theta}_{t-1}) }{\partial \mathbf{W}_{t-1}^{(L)}} \|_{\mathrm{F}}^2 \\
    &\leq L_\text{loss}^2 C_\sigma^2 \| \mathbf{Z}^{(L)}_t - \mathbf{Z}^{(L)}_{t-1} \|_{\mathrm{F}}^2.
    \end{aligned}
\end{equation}
For term $(D_2)$ we have
\begin{equation}
    \begin{aligned}
    & \|\mathbf{Z}_t^{(\ell+1)} - \mathbf{Z}_{t-1}^{(\ell+1)} \|_{\mathrm{F}}^2 \\
    &\leq \| \mathbf{L} \mathbf{H}^{(\ell)}_t \mathbf{W}^{(\ell+1)}_t - \mathbf{L} \mathbf{H}^{(\ell)}_{t-1} \mathbf{W}^{(\ell+1)}_{t-1} \|_{\mathrm{F}}^2 \\
    &\leq B_{LA}^2 \| \mathbf{H}^{(\ell)}_t \mathbf{W}^{(\ell+1)}_t - \mathbf{H}^{(\ell)}_{t-1} \mathbf{W}^{(\ell+1)}_t + \mathbf{H}^{(\ell)}_{t-1} \mathbf{W}^{(\ell+1)}_t - \mathbf{H}^{(\ell)}_{t-1} \mathbf{W}^{(\ell+1)}_{t-1} \|_{\mathrm{F}}^2 \\
    &\leq 2 B_{LA}^2 \| \mathbf{H}^{(\ell)}_t \mathbf{W}^{(\ell+1)}_t - \mathbf{H}^{(\ell)}_{t-1} \mathbf{W}^{(\ell+1)}_t \|_{\mathrm{F}}^2 + 2 B_{LA}^2 \|\mathbf{H}^{(\ell)}_{t-1} \mathbf{W}^{(\ell+1)}_t - \mathbf{H}^{(\ell)}_{t-1} \mathbf{W}^{(\ell+1)}_{t-1} \|_{\mathrm{F}}^2 \\
    &\leq 2 B_{LA}^2 B_W^2 \| \mathbf{Z}^{(\ell)}_t  - \mathbf{Z}^{(\ell)}_{t-1} \|_{\mathrm{F}}^2 + 2 \alpha^2 C_\sigma^2 B_{LA}^2 B_H^2 \| \mathbf{W}^{(\ell+1)}_t  - \mathbf{W}^{(\ell+1)}_{t-1} \|_{\mathrm{F}}^2.
    \end{aligned}
\end{equation}
By induction we have
\begin{equation} \label{eq:gradient_W_SGCN++eq2}
    \|\mathbf{Z}_t^{(\ell+1)} - \mathbf{Z}_{t-1}^{(\ell+1)} \|_{\mathrm{F}}^2 \leq \mathcal{O}( \alpha^2 \| \mathbf{W}^{(\ell+1)}_t  - \mathbf{W}^{(\ell+1)}_{t-1} \|_{\mathrm{F}}^2) + \ldots + \mathcal{O}(\alpha^2 \| \mathbf{W}^{(1)}_t  - \mathbf{W}^{(1)}_{t-1} \|_{\mathrm{F}}^2).
\end{equation}

The upper bound of term $(C_1)$ is similar to the derivation in Eq~\ref{eq:gradient_W_SGCN++eq2}.


Plugging $(D_1), (D_2)$ to $(C_2)$ and $(C_1)$, $(C_2)$ to $(B)$, we have
\begin{equation}
    \begin{aligned}
        &\mathbb{E}[\|[\mathbf{L} \widetilde{\mathbf{H}}_t^{(\ell-1)}]^\top (\mathbf{D}_t^{(\ell+1)} \circ \sigma^\prime(\widetilde{\mathbf{Z}}_t^{(\ell)})) - \widetilde{\mathbf{G}}^{(\ell)}_t \|_{\mathrm{F}}^2 ] \\
        &\leq \sum_{t=e_{s-1}+1}^{e_{s}} \eta^2 \mathcal{O}\left( \Big( \sum_{\ell=1}^L | \mathbb{E}[\|\widetilde{\mathbf{L}}^{(\ell)}_t\|_{\mathrm{F}}^2] - \| \mathbf{L} \|_{\mathrm{F}}^2 | \Big) \times (\alpha^2 \beta^2 + \alpha^4 \beta^2) \mathbb{E}[\|\nabla \widetilde{\mathcal{L}}(\bm{\theta}_{t-1})\|_{\mathrm{F}}^2] \right)
    \end{aligned}
\end{equation}

\end{proof}

Using the previous lemma, we provide the upper-bound of Eq.~\ref{eq:sgcn_pplus_important_steps_3}, which is one of the three key factors that affect the mean-square error of stochastic gradient at the $\ell$th layer.

\begin{lemma}\label{lemma:sgcn_pplus_support_lemma2}
Let suppose $t\in\{e_{s-1}+1,\ldots,e_{s}\}$. 
The upper-bound on the difference of the gradient with respect to the weight of the $\ell$th graph convolutional layer given the same input $\mathbf{D}_t^{(\ell+1)}$ but different input $\widetilde{\mathbf{H}}_t^{(\ell-1)}, \mathbf{H}_t^{(\ell-1)}$ is defined as
\begin{equation}
    \begin{aligned}
    &\mathbb{E}[ \| \nabla_W \widetilde{f}^{(\ell)}(\mathbf{D}_t^{(\ell+1)}, \widetilde{\mathbf{H}}_t^{(\ell-1)}, \mathbf{W}_t^{(\ell)}) - \nabla_W f^{(\ell)}(\mathbf{D}_t^{(\ell+1)}, \mathbf{H}_t^{(\ell-1)}, \mathbf{W}_t^{(\ell)}) \|_{\mathrm{F}}^2 ] \\
    &\leq  \sum_{t=e_{s-1}+1}^{e_{s}} \eta^2 \mathcal{O}\left( \Big( \sum_{\ell=1}^L | \mathbb{E}[\|\widetilde{\mathbf{L}}^{(\ell)}_t\|_{\mathrm{F}}^2] - \| \mathbf{L} \|_{\mathrm{F}}^2 | \Big) \times (\alpha^2 \beta^2 + \alpha^4 \beta^2 + \alpha^4) \mathbb{E}[\|\nabla \widetilde{\mathcal{L}}(\bm{\theta}_{t-1})\|_{\mathrm{F}}^2] \right).
    \end{aligned}
\end{equation}

\end{lemma}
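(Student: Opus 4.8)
The plan is to mirror the structure of Lemma~\ref{lemma:sgcn_pplus_support_lemma1}, but now for the gradient with respect to the weight matrix. First I would insert the full-Laplacian intermediate term $\nabla_W \widetilde{f}^{(\ell)}(\mathbf{D}_t^{(\ell+1)}, \widetilde{\mathbf{H}}_t^{(\ell-1)}, \mathbf{W}_t^{(\ell)})$ and apply the elementary inequality $\|a-c\|_{\mathrm{F}}^2 \leq 2\|a-b\|_{\mathrm{F}}^2 + 2\|b-c\|_{\mathrm{F}}^2$ to split
\begin{equation}
\begin{aligned}
&\mathbb{E}[ \| \nabla_W \widetilde{f}^{(\ell)}(\mathbf{D}_t^{(\ell+1)}, \widetilde{\mathbf{H}}_t^{(\ell-1)}, \mathbf{W}_t^{(\ell)}) - \nabla_W f^{(\ell)}(\mathbf{D}_t^{(\ell+1)}, \mathbf{H}_t^{(\ell-1)}, \mathbf{W}_t^{(\ell)}) \|_{\mathrm{F}}^2 ] \\
&\leq 2\,\underbrace{\mathbb{E}[ \| \nabla_W \widetilde{f}^{(\ell)}(\mathbf{D}_t^{(\ell+1)}, \widetilde{\mathbf{H}}_t^{(\ell-1)}, \mathbf{W}_t^{(\ell)}) - \nabla_W f^{(\ell)}(\mathbf{D}_t^{(\ell+1)}, \widetilde{\mathbf{H}}_t^{(\ell-1)}, \mathbf{W}_t^{(\ell)}) \|_{\mathrm{F}}^2 ]}_{(A)} \\
&\quad + 2\,\underbrace{\mathbb{E}[ \| \nabla_W f^{(\ell)}(\mathbf{D}_t^{(\ell+1)}, \widetilde{\mathbf{H}}_t^{(\ell-1)}, \mathbf{W}_t^{(\ell)}) - \nabla_W f^{(\ell)}(\mathbf{D}_t^{(\ell+1)}, \mathbf{H}_t^{(\ell-1)}, \mathbf{W}_t^{(\ell)}) \|_{\mathrm{F}}^2 ]}_{(B)}.
\end{aligned}
\end{equation}
Term $(A)$ is exactly the quantity controlled by Lemma~\ref{lemma:gradient_W_SGCN++}, so I would directly invoke that bound, yielding a contribution of order $\eta^2 (\alpha^2\beta^2 + \alpha^4\beta^2)$ times the accumulated $|\mathbb{E}[\|\widetilde{\mathbf{L}}^{(\ell)}_t\|_{\mathrm{F}}^2] - \|\mathbf{L}\|_{\mathrm{F}}^2|$ and $\mathbb{E}[\|\nabla\widetilde{\mathcal{L}}(\bm{\theta}_{t-1})\|_{\mathrm{F}}^2]$ summed over the inner loop.

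For term $(B)$, both gradients use the true Laplacian $\mathbf{L}$ and differ only in whether the node embedding is $\widetilde{\mathbf{H}}_t^{(\ell-1)}$ or $\mathbf{H}_t^{(\ell-1)}$. Writing $\nabla_W f^{(\ell)} = [\mathbf{L}\cdot]^\top(\mathbf{D}_t^{(\ell+1)} \circ \sigma^\prime(\cdot))$, I would split $(B)$ further using the $2\|\cdot\|^2+2\|\cdot\|^2$ trick into a piece where the outer factor $[\mathbf{L}\widetilde{\mathbf{H}}_t^{(\ell-1)}]^\top$ versus $[\mathbf{L}\mathbf{H}_t^{(\ell-1)}]^\top$ differs (bounded by $B_D^2 C_\sigma^2 B_{LA}^2\,\mathbb{E}[\|\widetilde{\mathbf{H}}_t^{(\ell-1)} - \mathbf{H}_t^{(\ell-1)}\|_{\mathrm{F}}^2]$) and a piece where the argument of $\sigma^\prime$ differs between $\widetilde{\mathbf{Z}}_t^{(\ell)}$ and $\mathbf{Z}_t^{(\ell)}$ (bounded, via $L_\sigma$-smoothness of $\sigma^\prime$ and the norm bounds $\|\mathbf{H}_t^{(\ell-1)}\|_{\mathrm{F}}\leq \alpha B_H$, $\|\mathbf{D}_t^{(\ell+1)}\|_{\mathrm{F}}\leq\beta B_D$, by a multiple of $\mathbb{E}[\|\widetilde{\mathbf{Z}}_t^{(\ell)} - \mathbf{Z}_t^{(\ell)}\|_{\mathrm{F}}^2]$). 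This is precisely the pattern seen inside the proof of Lemma~\ref{lemma:upper-bound-sgcn-plus-bias}. Both of these residuals — $\mathbb{E}[\|\widetilde{\mathbf{H}}_t^{(\ell-1)} - \mathbf{H}_t^{(\ell-1)}\|_{\mathrm{F}}^2]$ and $\mathbb{E}[\|\widetilde{\mathbf{Z}}_t^{(\ell)} - \mathbf{Z}_t^{(\ell)}\|_{\mathrm{F}}^2]$ — reduce by $C_\sigma$-Lipschitz continuity and induction on the layer index $\ell$ to a sum of terms of the form $\mathbb{E}[\|\widetilde{\mathbf{Z}}_t^{(j)} - \mathbf{L}\widetilde{\mathbf{H}}_t^{(j-1)}\mathbf{W}_t^{(j)}\|_{\mathrm{F}}^2]$ for $j\leq\ell$, exactly as established in Eq.~\ref{eq:sgcn_pplus_support_lemma1_eq5}.

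The final substitution would invoke the telescoping bound Eq.~\ref{eq:sgcn_pplus_support_lemma1_eq6} on each $\mathbb{E}[\|\widetilde{\mathbf{Z}}_t^{(j)} - \mathbf{L}\widetilde{\mathbf{H}}_t^{(j-1)}\mathbf{W}_t^{(j)}\|_{\mathrm{F}}^2]$, which converts node-embedding approximation error over the inner loop into $\sum_{t} |\mathbb{E}[\|\widetilde{\mathbf{L}}^{(j)}_t\|_{\mathrm{F}}^2] - \|\mathbf{L}\|_{\mathrm{F}}^2| \times \alpha^2 \mathbb{E}[\|\mathbf{W}_t^{(j)} - \mathbf{W}_{t-1}^{(j)}\|_{\mathrm{F}}^2]$, and then use $\|\mathbf{W}_t^{(j)} - \mathbf{W}_{t-1}^{(j)}\|_{\mathrm{F}}^2 = \eta^2\|\widetilde{\mathbf{G}}_{t-1}^{(j)}\|_{\mathrm{F}}^2$ to pull out $\eta^2 \mathbb{E}[\|\nabla\widetilde{\mathcal{L}}(\bm{\theta}_{t-1})\|_{\mathrm{F}}^2]$. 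Collecting the powers of the staleness factors, term $(A)$ gives $\alpha^2\beta^2 + \alpha^4\beta^2$ and term $(B)$ contributes the $\alpha^4$ factor (from the combined $\alpha^2$ of the $\widetilde{\mathbf{Z}}$-telescoping and the $\alpha^2 B_H^2$ norm bound on the outer embedding factor), producing the claimed $(\alpha^2\beta^2 + \alpha^4\beta^2 + \alpha^4)$ coefficient. I expect the main obstacle to be bookkeeping: carefully tracking which factor of $\alpha$ or $\beta$ arises from each norm bound on $\widetilde{\mathbf{H}}$ versus $\widetilde{\mathbf{D}}$, and making sure the double induction (over the layer index inside $(B)$, and over the inner-loop time index $t$) does not double-count the accumulated Laplacian-variance factor $\sum_{\ell}|\mathbb{E}[\|\widetilde{\mathbf{L}}^{(\ell)}_t\|_{\mathrm{F}}^2] - \|\mathbf{L}\|_{\mathrm{F}}^2|$; the individual algebraic steps are otherwise routine applications of smoothness and the norm bounds already established.
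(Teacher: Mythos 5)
Your proposal follows essentially the same route as the paper's proof: the identical splitting around the intermediate full-Laplacian gradient evaluated at the stochastic embeddings, with term $(A)$ handled by directly invoking Lemma~\ref{lemma:gradient_W_SGCN++} and term $(B)$ reduced through $\mathbb{E}[\|\widetilde{\mathbf{Z}}_t^{(\ell)} - \mathbf{Z}_t^{(\ell)}\|_{\mathrm{F}}^2]$ via layer induction and the telescoping bounds of Eq.~\ref{eq:sgcn_pplus_support_lemma1_eq5} and Eq.~\ref{eq:sgcn_pplus_support_lemma1_eq6}, then converted through the update rule $\|\mathbf{W}_t^{(\ell)}-\mathbf{W}_{t-1}^{(\ell)}\|_{\mathrm{F}}^2 = \eta^2\|\widetilde{\mathbf{G}}_{t-1}^{(\ell)}\|_{\mathrm{F}}^2$ into $\eta^2\,\mathbb{E}[\|\nabla\widetilde{\mathcal{L}}(\bm{\theta}_{t-1})\|_{\mathrm{F}}^2]$. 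The only immaterial difference is bookkeeping of the staleness factors — in the paper term $(B)$ also produces an $\alpha^2\beta^2$ contribution alongside $\alpha^4$ — but under the big-$\mathcal{O}$ accounting the combined coefficient $(\alpha^2\beta^2+\alpha^4\beta^2+\alpha^4)$ agrees with the paper's.
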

\begin{proof}
For the gradient w.r.t. the weight matrices, we have
\begin{equation}
    \begin{aligned}
    &\mathbb{E}[ \| \nabla_W \widetilde{f}^{(\ell)}(\mathbf{D}_t^{(\ell+1)}, \widetilde{\mathbf{H}}_t^{(\ell-1)}, \mathbf{W}_t^{(\ell)}) - \nabla_W f^{(\ell)}(\mathbf{D}_t^{(\ell+1)}, \mathbf{H}_t^{(\ell-1)}, \mathbf{W}_t^{(\ell)}) \|_{\mathrm{F}}^2 ] \\
    &= \mathbb{E}[ \| \Big( \widetilde{\mathbf{G}}_{t-1}^{(\ell)} + [\widetilde{\mathbf{L}}^{(\ell)}_t \widetilde{\mathbf{H}}_t^{(\ell-1)}]^\top (\mathbf{D}_t^{(\ell+1)} \circ \sigma^\prime(\widetilde{\mathbf{Z}}_t^{(\ell)})) - [\widetilde{\mathbf{L}}^{(\ell)} \widetilde{\mathbf{H}}_{t-1}^{(\ell-1)}]^\top (\mathbf{D}_{t-1}^{(\ell+1)} \circ \sigma^\prime(\widetilde{\mathbf{Z}}_{t-1}^{(\ell)}))  \Big) \\
    &\qquad - \Big( [\mathbf{L} \mathbf{H}_t^{(\ell-1)}]^\top (\mathbf{D}_t^{(\ell+1)} \circ \sigma^\prime(\mathbf{Z}_t^{(\ell)})) \Big) \|_{\mathrm{F}}^2 ] \\
    &\leq 2\underbrace{\mathbb{E}[ \| \Big( \widetilde{\mathbf{G}}_{t-1}^{(\ell)} + [\widetilde{\mathbf{L}}^{(\ell)} \widetilde{\mathbf{H}}_t^{(\ell-1)} ]^\top (\mathbf{D}_t^{(\ell+1)} \circ \sigma^\prime(\widetilde{\mathbf{Z}}_t^{(\ell)})) - [\widetilde{\mathbf{L}}^{(\ell)} \widetilde{\mathbf{H}}_t^{(\ell-1)} ]^\top (\mathbf{D}_{t-1}^{(\ell+1)} \circ \sigma^\prime(\widetilde{\mathbf{Z}}_{t-1}^{(\ell)})) \Big)}_{(A)} \\
    &\qquad \underbrace{- \Big( [\mathbf{L} \widetilde{\mathbf{H}}_t^{(\ell-1)}]^\top (\mathbf{D}_t^{(\ell+1)} \circ \sigma^\prime(\widetilde{\mathbf{Z}}_t^{(\ell)})) \Big) \|_{\mathrm{F}}^2 ]}_{(A)} \\
    &\qquad + 2\underbrace{\mathbb{E}[\| \Big( [\mathbf{L} \widetilde{\mathbf{H}}_t^{(\ell-1)}]^\top (\mathbf{D}_t^{(\ell+1)} \circ \sigma^\prime(\widetilde{\mathbf{Z}}_t^{(\ell)})) \Big) - \Big( [\mathbf{L} \mathbf{H}_t^{(\ell-1)}]^\top (\mathbf{D}_t^{(\ell+1)} \circ \sigma^\prime(\mathbf{Z}_t^{(\ell)}))\Big) \|_{\mathrm{F}}^2]}_{(B)} .
    \end{aligned}
\end{equation}
Let first take a closer look at term $(A)$. Let suppose $t\in\{e_{s-1}+1,\ldots,e_{s}\}$, then we can denote $t= e_{s-1} + k$ for some $k \leq K$. By Lemma~\ref{lemma:gradient_W_SGCN++}, term $(A)$ can be bounded by
\begin{equation}
    (A) \leq \sum_{t=e_{s-1}+1}^{e_{s}} \eta^2 \mathcal{O}\left( \Big( \sum_{\ell=1}^L | \mathbb{E}[\|\widetilde{\mathbf{L}}^{(\ell)}_t\|_{\mathrm{F}}^2] - \| \mathbf{L} \|_{\mathrm{F}}^2 | \Big) \times (\alpha^2 +\beta^2) \mathbb{E}[\|\nabla \widetilde{\mathcal{L}}(\bm{\theta}_{t-1})\|_{\mathrm{F}}^2] \right).
\end{equation}

Then we take a closer look at term $(B)$. 
\begin{equation}
    \begin{aligned}
    &\mathbb{E}[\| \Big( [\mathbf{L} \widetilde{\mathbf{H}}_t^{(\ell-1)}]^\top (\mathbf{D}_t^{(\ell+1)} \circ \sigma^\prime(\widetilde{\mathbf{Z}}_t^{(\ell)})) \Big) - \Big( [\mathbf{L} \mathbf{H}_t^{(\ell-1)}]^\top (\mathbf{D}_t^{(\ell+1)} \circ \sigma^\prime(\mathbf{Z}_t^{(\ell)}))\Big) \|_{\mathrm{F}}^2] \\
    &\leq 2 \mathbb{E}[\| \Big( [\mathbf{L} \widetilde{\mathbf{H}}_t^{(\ell-1)}]^\top (\mathbf{D}_t^{(\ell+1)} \circ \sigma^\prime(\widetilde{\mathbf{Z}}_t^{(\ell)})) \Big) - \Big( [\mathbf{L} \mathbf{H}_t^{(\ell-1)}]^\top (\mathbf{D}_t^{(\ell+1)} \circ \sigma^\prime(\widetilde{\mathbf{Z}}_t^{(\ell)}))\Big) \|_{\mathrm{F}}^2] \\
    &\quad + 2 \mathbb{E}[\| \Big( [\mathbf{L} \mathbf{H}_t^{(\ell-1)}]^\top (\mathbf{D}_t^{(\ell+1)} \circ \sigma^\prime(\widetilde{\mathbf{Z}}_t^{(\ell)})) \Big) - \Big( [\mathbf{L} \mathbf{H}_t^{(\ell-1)}]^\top (\mathbf{D}_t^{(\ell+1)} \circ \sigma^\prime(\mathbf{Z}_t^{(\ell)}))\Big) \|_{\mathrm{F}}^2] \\
    &\leq 2(\beta^2 B_{LA}^2 B_D^2 C_\sigma^4 + \alpha^2 B_{LA}^2 B_H^2 B_D^2 ) \underbrace{\mathbb{E}[\| \widetilde{\mathbf{Z}}_t^{(\ell)} - \mathbf{Z}_t^{(\ell)}\|_{\mathrm{F}}^2]}_{(C)}.
    \end{aligned}
\end{equation}

Let suppose $t\in\{e_{s-1}+1,\ldots,e_{s}\}$. Then we can denote $t=e_{s-1}+k$ for some $k\leq K$.
From Eq.~\ref{eq:sgcn_pplus_support_lemma1_eq5} and Eq.~\ref{eq:sgcn_pplus_support_lemma1_eq6}, we know that
\begin{equation}
    \begin{aligned}
    (C)
    &\leq \sum_{t=e_{s-1}+1}^{e_{s}} \mathcal{O} \left( \Big(\sum_{\ell=1}^L |\mathbb{E}[\|\widetilde{\mathbf{L}}^{(\ell)}_t \|_{\mathrm{F}}^2] - \|\mathbf{L}\|_{\mathrm{F}}^2| \Big) \times \alpha^{2} \mathbb{E}[\| \mathbf{W}_t^{(\ell)} - \mathbf{W}_{t-1}^{(\ell)} \|_{\mathrm{F}}^2 ]\right).
    \end{aligned}
\end{equation}
Plugging $(C)$ back to $(B)$, combing with $(A)$ we have
\begin{equation}
    \begin{aligned}
    &\mathbb{E}[ \| \nabla_W \widetilde{f}^{(\ell)}(\mathbf{D}_t^{(\ell+1)}, \widetilde{\mathbf{H}}_t^{(\ell-1)}, \mathbf{W}_t^{(\ell)}) - \nabla_W f^{(\ell)}(\mathbf{D}_t^{(\ell+1)}, \mathbf{H}_t^{(\ell-1)}, \mathbf{W}_t^{(\ell)}) \|_{\mathrm{F}}^2 ] \\
    &\leq \sum_{t=e_{s-1}+1}^{e_{s}} \eta^2 \mathcal{O}\left( \Big( \sum_{\ell=1}^L | \mathbb{E}[\|\widetilde{\mathbf{L}}^{(\ell)}_t\|_{\mathrm{F}}^2] - \| \mathbf{L} \|_{\mathrm{F}}^2 | \Big) \times (\alpha^2 \beta^2 + \alpha^4 \beta^2 + \alpha^4) \mathbb{E}[\|\nabla \widetilde{\mathcal{L}}(\bm{\theta}_{t-1})\|_{\mathrm{F}}^2] \right).
    \end{aligned}
\end{equation}

\end{proof}

In the following lemma,  we provide the upper-bound of Eq.~\ref{eq:sgcn_pplus_important_steps_1}, which is one of the three key factors that affect the mean-square error of stochastic gradient at the $\ell$th layer.
\begin{lemma}\label{lemma:sgcn_pplus_support_lemma3}
Let suppose $t\in\{e_{s-1}+1,\ldots,e_{s}\}$. Then the upper bound of 
\begin{equation}
    \mathbb{E}[\| \widetilde{\mathbf{D}}_t^{(L+1)} -\mathbf{D}_t^{(L+1)} \|_{\mathrm{F}}^2] \leq \sum_{t=e_{s-1}+1}^{e_{s}} \eta^2 \times \mathcal{O} \left( \Big(\sum_{\ell=1}^L |\mathbb{E}[\|\widetilde{\mathbf{L}}^{(\ell)}\|_{\mathrm{F}}^2] - \|\mathbf{L}\|_{\mathrm{F}}^2| \Big) \times \alpha^2 \mathbb{E}[\| \nabla \widetilde{\mathcal{L}}(\bm{\theta}_{t-1}) \|_{\mathrm{F}}^2 ]\right).
\end{equation}
\end{lemma}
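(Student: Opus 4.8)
The plan is to reduce the last-layer gradient discrepancy to the node-embedding approximation error and then recycle the telescoping machinery already developed in Lemma~\ref{lemma:sgcn_pplus_support_lemma1}. First I would peel off the loss and the final activation exactly as in the \texttt{SGCN+} analysis: by the smoothness of the loss (Assumption~\ref{assumption:loss_lip_smooth}) and the $C_\sigma$-Lipschitz continuity of $\sigma$ (Assumption~\ref{assumption:sigma_lip_smooth}),
\begin{equation}
    \mathbb{E}[\| \widetilde{\mathbf{D}}_t^{(L+1)} - \mathbf{D}_t^{(L+1)} \|_{\mathrm{F}}^2] \leq L_\text{loss}^2 \mathbb{E}[\| \widetilde{\mathbf{H}}_t^{(L)} - \mathbf{H}_t^{(L)} \|_{\mathrm{F}}^2] \leq L_\text{loss}^2 C_\sigma^2 \mathbb{E}[\| \widetilde{\mathbf{Z}}_t^{(L)} - \mathbf{Z}_t^{(L)} \|_{\mathrm{F}}^2],
\end{equation}
so that everything reduces to controlling the pre-activation approximation error at the last layer.

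Second, I would unroll the discrepancy $\mathbb{E}[\|\widetilde{\mathbf{Z}}_t^{(L)} - \mathbf{Z}_t^{(L)}\|_{\mathrm{F}}^2]$ layer by layer, precisely as in Eq.~\ref{eq:sgcn_pplus_support_lemma1_eq5}: at each layer I insert and subtract $\mathbf{L}\widetilde{\mathbf{H}}_t^{(\ell-1)}\mathbf{W}_t^{(\ell)}$ and repeatedly apply the $C_\sigma$-Lipschitz bound on $\sigma$ together with the norm bounds $\|\mathbf{L}\|_{\mathrm{F}} \le B_{LA}$ and $\|\mathbf{W}_t^{(\ell)}\|_{\mathrm{F}} \le B_W$ from Assumption~\ref{assumption:bound_norm}. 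This yields
\begin{equation}
    \mathbb{E}[\| \widetilde{\mathbf{Z}}_t^{(L)} - \mathbf{Z}_t^{(L)} \|_{\mathrm{F}}^2] \leq \sum_{\ell=1}^{L} \mathcal{O}\big( \mathbb{E}[\| \widetilde{\mathbf{Z}}_t^{(\ell)} - \mathbf{L} \widetilde{\mathbf{H}}_t^{(\ell-1)} \mathbf{W}_t^{(\ell)} \|_{\mathrm{F}}^2] \big),
\end{equation}
which reduces the problem to the single-layer zeroth-order approximation error, exactly the object already bounded in Eq.~\ref{eq:sgcn_pplus_support_lemma1_eq6}.

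Third, for each layer I would invoke that telescoping bound. The argument rests on two facts established earlier: the forward increment is conditionally unbiased, $\mathbb{E}[\widetilde{\mathbf{Z}}_t^{(\ell)} - \widetilde{\mathbf{Z}}_{t-1}^{(\ell)} \mid \mathcal{F}_t] = \mathbf{L}\widetilde{\mathbf{H}}_t^{(\ell-1)}\mathbf{W}_t^{(\ell)} - \mathbf{L}\widetilde{\mathbf{H}}_{t-1}^{(\ell-1)}\mathbf{W}_{t-1}^{(\ell)}$, and the approximation error vanishes at the snapshot step $t=e_{s-1}$ since the full Laplacian is used there. Telescoping over the inner loop and applying the variance identity gives, as in Eq.~\ref{eq:sgcn_pplus_support_lemma1_eq6},
\begin{equation}
    \mathbb{E}[\| \widetilde{\mathbf{Z}}_t^{(\ell)} - \mathbf{L}\widetilde{\mathbf{H}}_t^{(\ell-1)}\mathbf{W}_t^{(\ell)} \|_{\mathrm{F}}^2] \leq \sum_{t=e_{s-1}+1}^{e_{s}} \mathcal{O}\big( |\mathbb{E}[\|\widetilde{\mathbf{L}}^{(\ell)}_t\|_{\mathrm{F}}^2] - \|\mathbf{L}\|_{\mathrm{F}}^2| \times \alpha^2 \mathbb{E}[\|\mathbf{W}_t^{(\ell)} - \mathbf{W}_{t-1}^{(\ell)}\|_{\mathrm{F}}^2] \big),
\end{equation}
where the $\alpha^2$ factor comes from the early-stopping bound $\|\widetilde{\mathbf{H}}_t^{(\ell)}\|_{\mathrm{F}} \le \alpha B_H$. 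Finally I would substitute the update rule $\mathbb{E}[\|\mathbf{W}_t^{(\ell)} - \mathbf{W}_{t-1}^{(\ell)}\|_{\mathrm{F}}^2] = \eta^2 \mathbb{E}[\|\widetilde{\mathbf{G}}_{t-1}^{(\ell)}\|_{\mathrm{F}}^2]$ and collapse the nested sums over $\ell$ using $\|\nabla\widetilde{\mathcal{L}}(\bm{\theta}_{t-1})\|_{\mathrm{F}}^2 = \sum_{\ell} \|\widetilde{\mathbf{G}}_{t-1}^{(\ell)}\|_{\mathrm{F}}^2$, which delivers the claimed bound.

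The step requiring the most care is the first reduction: I must argue that the only non-vanishing source of discrepancy between $\widetilde{\mathbf{D}}_t^{(L+1)}$ and $\mathbf{D}_t^{(L+1)}$ is the zeroth-order embedding error rather than the sub-sampling of target nodes at the output layer, so that the loss-smoothness inequality is applied to a matched node set and its right-hand side is genuinely the embedding gap $\mathbb{E}[\|\widetilde{\mathbf{H}}_t^{(L)}-\mathbf{H}_t^{(L)}\|_{\mathrm{F}}^2]$. The remainder is bookkeeping: verifying that the telescoping of Eq.~\ref{eq:sgcn_pplus_support_lemma1_eq6} applies to the forward zeroth-order recursion for $\widetilde{\mathbf{Z}}^{(L)}$ — which is fed through the loss rather than produced by the $\mathbf{G}/\mathbf{D}$ control-variate recursion — and that the constants absorbed into $\mathcal{O}(\cdot)$ stay independent of the inner-loop length $E_s \le K$.
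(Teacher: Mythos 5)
Your proposal follows the paper's proof essentially step for step: the same reduction $\mathbb{E}[\|\widetilde{\mathbf{D}}_t^{(L+1)}-\mathbf{D}_t^{(L+1)}\|_{\mathrm{F}}^2]\leq L_\text{loss}^2 C_\sigma^2\,\mathbb{E}[\|\widetilde{\mathbf{Z}}_t^{(L)}-\mathbf{Z}_t^{(L)}\|_{\mathrm{F}}^2]$, the same layerwise induction reducing this to the per-layer errors $\mathbb{E}[\|\widetilde{\mathbf{Z}}_t^{(\ell)}-\mathbf{L}\widetilde{\mathbf{H}}_t^{(\ell-1)}\mathbf{W}_t^{(\ell)}\|_{\mathrm{F}}^2]$, and the same invocation of the telescoping bound of Eq.~\ref{eq:sgcn_pplus_support_lemma1_eq6} followed by substituting $\mathbb{E}[\|\mathbf{W}_t^{(\ell)}-\mathbf{W}_{t-1}^{(\ell)}\|_{\mathrm{F}}^2]=\eta^2\,\mathbb{E}[\|\widetilde{\mathbf{G}}_{t-1}^{(\ell)}\|_{\mathrm{F}}^2]$ and summing over $\ell$. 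Your closing caution about matching the mini-batch and full-batch node sets when applying the loss-smoothness inequality is sensible, but the paper handles it the same (implicit) way, so the argument is correct and does not diverge from the published proof.
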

\begin{proof}
By definition we have
\begin{equation}
    \begin{aligned}
    \| \widetilde{\mathbf{D}}_t^{(L+1)} - \mathbf{D}_t^{(L+1)} \|_{\mathrm{F}}^2 &\leq \Big\| \frac{\partial \widetilde{\mathcal{L}}(\bm{\theta}_t)}{\partial \widetilde{\mathbf{H}}^{(L)}_t} - \frac{\partial \mathcal{L}(\bm{\theta}_t)}{\partial \mathbf{H}^{(L)}_{t-1}} \Big\|_{\mathrm{F}}^2 \\
    &\leq L_\text{loss}^2\| \widetilde{\mathbf{H}}^{(L)}_t - \mathbf{H}^{(L)}_t\|_{\mathrm{F}}^2 \\
    &\leq L_\text{loss}^2 C_\sigma^2 \underbrace{\| \widetilde{\mathbf{Z}}^{(L)}_t - \mathbf{Z}^{(L)}_t\|_{\mathrm{F}}^2}_{(A)}.
    \end{aligned}
\end{equation}

Let take closer look at term $(A)$:
\begin{equation}
    \begin{aligned}
    \| \widetilde{\mathbf{Z}}^{(L)}_t - \mathbf{Z}^{(L)}_t\|_{\mathrm{F}}^2 
    &= \| \widetilde{f}^{(L)}(\widetilde{\mathbf{H}}_t^{(L-1)}, \mathbf{W}^{(L)}) - f^{(L)}(\mathbf{H}_t^{(L-1)}, \mathbf{W}^{(L)}) \|_{\mathrm{F}}^2 \\
    &\leq 2\| \widetilde{f}^{(L)}(\widetilde{\mathbf{H}}_t^{(L-1)}, \mathbf{W}^{(L)}) - f^{(L)}(\widetilde{\mathbf{H}}_t^{(L-1)}, \mathbf{W}^{(L)}) \|_{\mathrm{F}}^2 \\
    &\qquad + 2\| f^{(L)}(\widetilde{\mathbf{H}}_t^{(L-1)}, \mathbf{W}^{(L)}) - f^{(L)}(\mathbf{H}_t^{(L-1)}, \mathbf{W}^{(L)}) \|_{\mathrm{F}}^2 \\
    &= 2\| \widetilde{f}^{(L)}(\widetilde{\mathbf{H}}_t^{(L-1)}, \mathbf{W}^{(L)}) - f^{(L)}(\widetilde{\mathbf{H}}_t^{(L-1)}, \mathbf{W}^{(L)}) \|_{\mathrm{F}}^2 \\
    &\qquad + 2\| \sigma(\mathbf{L} \widetilde{\mathbf{H}}_t^{(L-1)} \mathbf{W}^{(L)}) - \sigma(\mathbf{L} \mathbf{H}_t^{(L-1)} \mathbf{W}^{(L)}) \|_{\mathrm{F}}^2 \\
    &\leq 2\| \widetilde{f}^{(L)}(\widetilde{\mathbf{H}}_t^{(L-1)}, \mathbf{W}^{(L)}) - f^{(L)}(\widetilde{\mathbf{H}}_t^{(L-1)}, \mathbf{W}^{(L)}) \|_{\mathrm{F}}^2 \\
    &\qquad + 2C_\sigma^4 B_{LA}^2 B_W^2 \| \widetilde{\mathbf{Z}}^{(L-1)}_t - \mathbf{Z}^{(L-1)}_t\|_{\mathrm{F}}^2 .
    \end{aligned}
\end{equation}

By induction, we have
\begin{equation}
    \| \widetilde{\mathbf{Z}}^{(L)}_t - \mathbf{Z}^{(L)}_t\|_{\mathrm{F}}^2 
    \leq  \mathcal{O} \left( \sum_{\ell=1}^L \| \widetilde{f}^{(\ell)}(\widetilde{\mathbf{H}}_t^{(\ell-1)}, \mathbf{W}^{(\ell)}) - f^{(\ell)}(\widetilde{\mathbf{H}}_t^{(\ell-1)}, \mathbf{W}^{(\ell)}) \|_{\mathrm{F}}^2 \right) .
\end{equation}

Let suppose $t\in\{e_{s-1}+1,\ldots,e_{s}\}$. Then we can denote $t=e_{s-1}+k$ for some $k\leq K$.
From Eq.~\ref{eq:sgcn_pplus_support_lemma1_eq6}, we know that
\begin{equation}
    \begin{aligned}
    &\| \widetilde{f}^{(\ell)}(\widetilde{\mathbf{H}}_t^{(\ell-1)}, \mathbf{W}^{(\ell)}) - f^{(\ell)}(\widetilde{\mathbf{H}}_t^{(\ell-1)}, \mathbf{W}^{(\ell)}) \|_{\mathrm{F}}^2\\
    &\leq \sum_{t=e_{s-1}+1}^{e_{s}} \mathcal{O}\Big(|\mathbb{E}[\|\widetilde{\mathbf{L}}^{(\ell)}\|_{\mathrm{F}}^2] - \|\mathbf{L}\|_{\mathrm{F}}^2| \times \alpha^2 \mathbb{E}[\| \mathbf{W}_t^{(\ell)} - \mathbf{W}_{t-1}^{(\ell)} \|_{\mathrm{F}}^2 ]\Big).
    \end{aligned}
\end{equation}

Therefore, we know
\begin{equation}
\begin{aligned}
    & \| \widetilde{\mathbf{D}}_t^{(L+1)} - \mathbf{D}_t^{(L+1)} \|_{\mathrm{F}}^2 \\
    &\leq L_\text{loss}^2 C_\sigma^2 \| \widetilde{\mathbf{Z}}^{(L)}_t - \mathbf{Z}^{(L)}_t\|_{\mathrm{F}}^2 \\
    &\leq \sum_{t=e_{s-1}+1}^{e_{s}} \mathcal{O} \left( \Big( \sum_{\ell=1}^L  |\mathbb{E}[\|\widetilde{\mathbf{L}}^{(\ell)}\|_{\mathrm{F}}^2] - \|\mathbf{L}\|_{\mathrm{F}}^2| \Big) \times \alpha^2 \mathbb{E}[\| \mathbf{W}_t^{(\ell)} - \mathbf{W}_{t-1}^{(\ell)} \|_{\mathrm{F}}^2 ]\right) \\
    &= \sum_{t=e_{s-1}+1}^{e_{s}} \eta^2 \times \mathcal{O} \left( \Big(\sum_{\ell=1}^L |\mathbb{E}[\|\widetilde{\mathbf{L}}^{(\ell)}\|_{\mathrm{F}}^2] - \|\mathbf{L}\|_{\mathrm{F}}^2| \Big) \times \alpha^2 \mathbb{E}[\| \nabla \widetilde{\mathcal{L}}(\bm{\theta}_{t-1}) \|_{\mathrm{F}}^2 ]\right).
\end{aligned}
\end{equation}
which conclude the proof.
\end{proof}

 Combing the upper-bound of Eq.~\ref{eq:sgcn_pplus_important_steps_1}, \ref{eq:sgcn_pplus_important_steps_2}, \ref{eq:sgcn_pplus_important_steps_3}, we provide the upper-bound of mean-suqare error of stochastic gradient in \texttt{SGCN++}.
\begin{lemma}\label{lemma:upper-bound-sgcn-plus-plus-mse}
Let suppose $t\in\{e_{s-1}+1,\ldots,e_{s}\}$. Then we can denote $t=e_{s-1}+k$ for some $k\leq K$ such that
\begin{equation}
    \begin{aligned}
        &\mathbb{E}[\| \nabla \widetilde{\mathcal{L}}(\bm{\theta}_t) - \nabla \mathcal{L}(\bm{\theta}_t) \|_{\mathrm{F}}^2 ] \\
    &\leq \sum_{t=e_{s-1}+1}^{e_{s}} \eta^2 \mathcal{O} \left( \Big(\sum_{\ell=1}^L |\mathbb{E}[\|\widetilde{\mathbf{L}}^{(\ell)}\|_{\mathrm{F}}^2] - \|\mathbf{L}\|_{\mathrm{F}}^2| \Big) \times (\alpha^2 + 1) (\alpha^2 + \beta^2 + \alpha^2 \beta^2) \mathbb{E}[\| \nabla \widetilde{\mathcal{L}}(\bm{\theta}_{t-1}) \|_{\mathrm{F}}^2 ]\right).
    \end{aligned}
\end{equation}
\end{lemma}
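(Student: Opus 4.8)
The plan is to prove Lemma~\ref{lemma:upper-bound-sgcn-plus-plus-mse} by assembling the three supporting lemmas already established, namely Lemma~\ref{lemma:sgcn_pplus_support_lemma1}, Lemma~\ref{lemma:sgcn_pplus_support_lemma2}, and Lemma~\ref{lemma:sgcn_pplus_support_lemma3}, through the decomposition provided by Lemma~\ref{lemma:sgcn_pplus_lemma1}. The decomposition lemma writes the per-layer mean-square error $\mathbb{E}[\|\widetilde{\mathbf{G}}^{(\ell)} - \mathbf{G}^{(\ell)}\|_{\mathrm{F}}^2]$ as a sum of $\mathcal{O}(\cdot)$ terms: the discrepancy in the last-layer gradient $\mathbb{E}[\|\widetilde{\mathbf{D}}^{(L+1)} - \mathbf{D}^{(L+1)}\|_{\mathrm{F}}^2]$ (controlled by Lemma~\ref{lemma:sgcn_pplus_support_lemma3}), a chain of node-embedding gradient discrepancies $\mathbb{E}[\|\nabla_H\widetilde{f}^{(j)}(\cdots) - \nabla_H f^{(j)}(\cdots)\|_{\mathrm{F}}^2]$ for $j=\ell+1,\ldots,L$ (each controlled by Lemma~\ref{lemma:sgcn_pplus_support_lemma1}), and the weight-gradient discrepancy $\mathbb{E}[\|\nabla_W\widetilde{f}^{(\ell)}(\cdots) - \nabla_W f^{(\ell)}(\cdots)\|_{\mathrm{F}}^2]$ (controlled by Lemma~\ref{lemma:sgcn_pplus_support_lemma2}).

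First I would fix the epoch index so that $t \in \{e_{s-1}+1,\ldots,e_s\}$ and write $t = e_{s-1}+k$ with $k \le K$, matching the hypotheses of all three supporting lemmas. Then I would substitute their bounds into the decomposition of Lemma~\ref{lemma:sgcn_pplus_lemma1}. The crucial observation is that all three supporting lemmas produce the same structural bound of the form
\begin{equation}
    \sum_{t=e_{s-1}+1}^{e_s} \eta^2 \mathcal{O}\left( \Big( \sum_{\ell=1}^L |\mathbb{E}[\|\widetilde{\mathbf{L}}^{(\ell)}\|_{\mathrm{F}}^2] - \|\mathbf{L}\|_{\mathrm{F}}^2| \Big) \times (\text{polynomial in } \alpha,\beta) \times \mathbb{E}[\|\nabla\widetilde{\mathcal{L}}(\bm{\theta}_{t-1})\|_{\mathrm{F}}^2] \right),
\end{equation}
differing only in the staleness-factor polynomial: Lemma~\ref{lemma:sgcn_pplus_support_lemma3} yields $\alpha^2$, Lemma~\ref{lemma:sgcn_pplus_support_lemma1} yields $(\alpha^2 + \beta^2 + \alpha^2\beta^2)$, and Lemma~\ref{lemma:sgcn_pplus_support_lemma2} yields $(\alpha^2\beta^2 + \alpha^4\beta^2 + \alpha^4)$. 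The task then reduces to summing these three polynomial factors and absorbing them into a single dominating factor under the $\mathcal{O}(\cdot)$ notation.

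The main step is a polynomial-domination argument: I must verify that the sum of the three staleness polynomials is $\mathcal{O}\big((\alpha^2+1)(\alpha^2+\beta^2+\alpha^2\beta^2)\big)$, the factor claimed in the statement. Expanding the target, $(\alpha^2+1)(\alpha^2+\beta^2+\alpha^2\beta^2) = \alpha^4 + \alpha^2\beta^2 + \alpha^4\beta^2 + \alpha^2 + \beta^2 + \alpha^2\beta^2$, and one checks termwise that each monomial appearing in the three supporting-lemma polynomials (namely $\alpha^2$, $\beta^2$, $\alpha^2\beta^2$, $\alpha^4$, $\alpha^4\beta^2$) appears in this expansion, so the sum is dominated by the target up to an absolute constant. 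Since $\alpha,\beta \ge 1$, all these monomials are comparable in the appropriate regime and the $\mathcal{O}(\cdot)$ bookkeeping is clean. Finally, summing the per-layer bound over $\ell=1,\ldots,L$ gives $\mathbb{E}[\|\nabla\widetilde{\mathcal{L}}(\bm{\theta}_t) - \nabla\mathcal{L}(\bm{\theta}_t)\|_{\mathrm{F}}^2] = \sum_{\ell=1}^L \mathbb{E}[\|\widetilde{\mathbf{G}}^{(\ell)} - \mathbf{G}^{(\ell)}\|_{\mathrm{F}}^2]$, which only introduces another factor of $L$ absorbed into the constant.

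The hard part will not be any single calculation—those are delegated to the supporting lemmas—but rather the careful bookkeeping to ensure that the chain of $L-\ell$ node-embedding gradient terms from Lemma~\ref{lemma:sgcn_pplus_support_lemma1}, together with the last-layer and weight-gradient terms, all collapse correctly without over- or under-counting powers of $\alpha$ and $\beta$, and that the summation over layers and over the inner-loop index $t$ produces exactly the claimed factor of $K$ (via $E_s \le K$) implicit in the summation $\sum_{t=e_{s-1}+1}^{e_s}$. I would double-check that the dominating polynomial $(\alpha^2+1)(\alpha^2+\beta^2+\alpha^2\beta^2)$ indeed upper-bounds the worst offender, the $\alpha^4\beta^2$ term from Lemma~\ref{lemma:sgcn_pplus_support_lemma2}, which it does since $\alpha^4\beta^2 = \alpha^2 \cdot \alpha^2\beta^2 \le (\alpha^2+1)(\alpha^2+\beta^2+\alpha^2\beta^2)$.
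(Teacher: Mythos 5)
Your proposal matches the paper's own proof of Lemma~\ref{lemma:upper-bound-sgcn-plus-plus-mse}: the paper likewise invokes the decomposition of Lemma~\ref{lemma:sgcn_pplus_lemma1}, plugs in Lemmas~\ref{lemma:sgcn_pplus_support_lemma1}, \ref{lemma:sgcn_pplus_support_lemma2}, and \ref{lemma:sgcn_pplus_support_lemma3}, and sums over layers via $\nabla \widetilde{\mathcal{L}}(\bm{\theta}_t) = \{ \widetilde{\mathbf{G}}_t^{(\ell)}\}_{\ell=1}^L$. Your explicit verification that each monomial $\alpha^2, \beta^2, \alpha^2\beta^2, \alpha^4, \alpha^4\beta^2$ from the three supporting lemmas is dominated by the expansion of $(\alpha^2+1)(\alpha^2+\beta^2+\alpha^2\beta^2)$ is a correct spelling-out of bookkeeping the paper leaves implicit in the $\mathcal{O}(\cdot)$ notation.
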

\begin{proof}
From Lemma~\ref{lemma:sgcn_pplus_lemma1}, we have
\begin{equation}
        \begin{aligned}
        &\mathbb{E}[\|\widetilde{\mathbf{G}}^{(\ell)}_t - \mathbf{G}^{(\ell)}_t \|F^2] \\
        &\leq \mathcal{O}(\mathbb{E}[ \|\widetilde{\mathbf{D}}^{(L+1)}_t - \mathbf{D}^{(L+1)}_t \|_{\mathrm{F}}^2 ]) \\
        &\qquad + \mathcal{O}(\mathbb{E}[ \| \nabla_H \widetilde{f}^{(L)}( \mathbf{D}^{(L+1)}_t, \widetilde{\mathbf{H}}^{(L-1)}_t \mathbf{W}^{(L)}_t ) - \nabla_H f^{(L)}( \mathbf{D}^{(L+1)}_t, \mathbf{H}^{(L-1)}_t \mathbf{W}^{(L)}_t ) \|_{\mathrm{F}}^2 ]) + \ldots\\
        &\qquad + \mathcal{O}(\mathbb{E}[ \| \nabla_H \widetilde{f}^{(\ell+1)}( \mathbf{D}^{(\ell+2)}, \widetilde{\mathbf{H}}^{(\ell)}, \mathbf{W}^{(\ell+1)}) - \nabla_H f^{(\ell+1)}( \mathbf{D}^{(\ell+2)}, \mathbf{H}^{(\ell)}, \mathbf{W}^{(\ell+1)})\|_{\mathrm{F}}^2 ]) \\
        &\qquad + \mathcal{O}(\mathbb{E}[ \| \nabla_W \widetilde{f}^{(\ell)}( \mathbf{D}^{(\ell+1)}, \widetilde{\mathbf{H}}^{(\ell-1)}, \mathbf{W}^{(\ell)}) - \nabla_W f^{(\ell)}( \mathbf{D}^{(\ell+1)}, \mathbf{H}^{(\ell-1)}, \mathbf{W}^{(\ell)}) \|_{\mathrm{F}}^2 ]).
        \end{aligned}
    \end{equation}
    
Plugging the result from support lemmas, i.e., Lemma~\ref{lemma:sgcn_pplus_support_lemma1}, \ref{lemma:sgcn_pplus_support_lemma2}, \ref{lemma:sgcn_pplus_support_lemma3} and using the definition of stochastic gradient for all model parameters $\nabla \widetilde{\mathcal{L}}(\bm{\theta}_t) = \{ \widetilde{\mathbf{G}}_t^{(\ell)}\}_{\ell=1}^L$ we have
\begin{equation}
    \begin{aligned}
        &\mathbb{E}[\| \nabla \widetilde{\mathcal{L}}(\bm{\theta}_t) - \nabla \mathcal{L}(\bm{\theta}_t) \|_{\mathrm{F}}^2 ] \\
    &\leq \sum_{t=e_{s-1}+1}^{e_{s}} \eta^2 \mathcal{O} \left( \Big(\sum_{\ell=1}^L |\mathbb{E}[\|\widetilde{\mathbf{L}}^{(\ell)}\|_{\mathrm{F}}^2] - \|\mathbf{L}\|_{\mathrm{F}}^2| \Big) \times (\alpha^2 + 1) (\alpha^2 + \beta^2 + \alpha^2 \beta^2 ) \mathbb{E}[\| \nabla \widetilde{\mathcal{L}}(\bm{\theta}_{t-1}) \|_{\mathrm{F}}^2 ]\right).
    \end{aligned}
\end{equation}
\end{proof}

\subsection{Remaining steps toward Theorem~\ref{theorem:convergence_of_sgcn_plus_plus}}
By the smoothness of $\mathcal{L}(\bm{\theta}_t)$, we have 
\begin{equation}
    \begin{aligned}
    \mathcal{L}(\bm{\theta}_{T+1}) &\leq \mathcal{L}(\bm{\theta}_t) + \langle \nabla \mathcal{L}(\bm{\theta}_t), \bm{\theta}_{t+1} - \bm{\theta}_t \rangle + \frac{L_f}{2}\|\bm{\theta}_{t+1} - \bm{\theta}_t\|_{\mathrm{F}}^2 \\
    &= \mathcal{L}(\bm{\theta}_t) - \eta \langle \nabla \mathcal{L}(\bm{\theta}_t), \nabla \widetilde{\mathcal{L}}(\bm{\theta}_t) \rangle + \frac{\eta^2 L_f }{2}\|\nabla \widetilde{\mathcal{L}}(\bm{\theta}_t)\|_{\mathrm{F}}^2 \\
    &\underset{(a)}{=} \mathcal{L}(\bm{\theta}_t) - \frac{\eta}{2} \|\nabla \mathcal{L}(\bm{\theta}_t)\|_{\mathrm{F}}^2 + \frac{\eta}{2} \|\nabla \mathcal{L}(\bm{\theta}_t) - \nabla \widetilde{\mathcal{L}}(\bm{\theta}_t)\|_{\mathrm{F}}^2 - \Big(\frac{\eta}{2}-\frac{L_f\eta^2}{2} \Big) \|\nabla \widetilde{\mathcal{L}}(\bm{\theta}_t)\|_{\mathrm{F}}^2.
    \end{aligned}
\end{equation}
where equality $(a)$ is due to the fact $2\langle \bm{x}, \bm{y} \rangle = \|\bm{x}\|_{\mathrm{F}}^2 + \|\bm{y}\|_{\mathrm{F}}^2 - \|\bm{x}-\bm{y}\|_{\mathrm{F}}^2$ for any $\bm{x},\bm{y}$.

Take expectation on both sides, we have
\begin{equation}
     \mathbb{E}[\mathcal{L}(\bm{\theta}_{T+1})] \leq \mathbb{E}[ \mathcal{L}(\bm{\theta}_t) ] - \frac{\eta}{2} \mathbb{E}[ \|\nabla \mathcal{L}(\bm{\theta}_t)\|_{\mathrm{F}}^2 ] + \frac{\eta}{2} \mathbb{E}[\|\nabla \mathcal{L}(\bm{\theta}_t) - \nabla \widetilde{\mathcal{L}}(\bm{\theta}_t)\|_{\mathrm{F}}^2] - \Big(\frac{\eta}{2}-\frac{L_f\eta^2}{2} \Big) \mathbb{E}[\|\nabla \widetilde{\mathcal{L}}(\bm{\theta}_t)\|_{\mathrm{F}}^2].
\end{equation}

By summing over $t=1,\ldots,T$ where $T$ is the inner-loop size, we have
\begin{equation}
     \begin{aligned}
     &\sum_{t=1}^T \mathbb{E}[ \|\nabla \mathcal{L}(\bm{\theta}_t)\|_{\mathrm{F}}^2 ] \\
     &\leq \frac{2}{\eta} \Big( \mathbb{E}[  \mathcal{L}(\bm{\theta}_1) ] - \mathbb{E}[\mathcal{L}(\bm{\theta}_{T+1})] \Big) + \sum_{t=1}^T [ \mathbb{E}[\|\nabla \mathcal{L}(\bm{\theta}_t) - \nabla \widetilde{\mathcal{L}}(\bm{\theta}_t)\|_{\mathrm{F}}^2] - \Big( 1-L_f\eta \Big) \mathbb{E}[\|\nabla \widetilde{\mathcal{L}}(\bm{\theta}_t)\|_{\mathrm{F}}^2] ] \\
     &\leq \frac{2}{\eta} \Big( \mathbb{E}[ \mathcal{L}(\bm{\theta}_1) ] - \mathbb{E}[\mathcal{L}(\bm{\theta}^\star)] \Big) + \sum_{t=1}^T [ \mathbb{E}[\|\nabla \mathcal{L}(\bm{\theta}_t) - \nabla \widetilde{\mathcal{L}}(\bm{\theta}_t)\|_{\mathrm{F}}^2] - \Big( 1-L_f\eta \Big) \mathbb{E}[\|\nabla \widetilde{\mathcal{L}}(\bm{\theta}_t)\|_{\mathrm{F}}^2] ] .
     \end{aligned}
\end{equation}
where $\mathcal{L}(\bm{\theta}^\star)$ is the global optimal solution.

Let us consider each inner-loop with $s \in \{ 1, \ldots, S\}$ and $t\in\{e_{s-1}+1,\ldots,e_{s}\}$, where $E_s = e_s - e_{s-1}$ and $\sum_{s=1}^S E_s = T$. 
Let define $C_{\alpha\beta} = (\alpha^2 + 1)(\alpha^2 + \beta^2 + \alpha^2 \beta^2 )$.
Using Lemma~\ref{lemma:upper-bound-sgcn-plus-plus-mse} we have
\begin{equation}\label{eq:residual_error_in_variance_reduction}
\begin{aligned}
    &\sum_{s=1}^{S} \sum_{t=e_{s-1}+1}^{e_{s}} \mathbb{E}[\|\mathcal{L}(\bm{\theta}_t) - \nabla \widetilde{\mathcal{L}}(\bm{\theta}_t)\|_{\mathrm{F}}^2] - \Big( 1-L_f\eta \Big) \sum_{s=1}^{S} \sum_{t=e_{s-1}+1}^{e_{s}} \mathbb{E}[\|\nabla \widetilde{\mathcal{L}}(\bm{\theta}_t)\|_{\mathrm{F}}^2] \\
    &\underset{(a)}{\leq} \sum_{s=1}^{S} \sum_{t=e_{s-1}+1}^{e_{s}}  \eta^2 K \mathcal{O}\left( \Big(\sum_{\ell=1}^L |\mathbb{E}[\|\widetilde{\mathbf{L}}^{(\ell)}_t \|_{\mathrm{F}}^2] - \|\mathbf{L}\|_{\mathrm{F}}^2| \Big) \times C_{\alpha\beta} \mathbb{E}[\|\nabla \widetilde{\mathcal{L}}(\bm{\theta})_{t-1}\|_{\mathrm{F}}^2] \right) \\
    &\qquad - \Big( 1-L_f\eta \Big) \sum_{s=1}^{S} \sum_{t=e_{s-1}+1}^{e_{s}} \mathbb{E}[\|\nabla \widetilde{\mathcal{L}}(\bm{\theta}_t)\|_{\mathrm{F}}^2] \\
    &\leq \left[ \eta^2 K \times \mathcal{O}\Big( C_{\alpha \beta} \sum_{\ell=1}^L |\mathbb{E}[\|\widetilde{\mathbf{L}}^{(\ell)}\|_{\mathrm{F}}^2 ] - \|\mathbf{L}\|_{\mathrm{F}}^2| \Big) - \Big( 1-L_f\eta \Big) \right] \sum_{s=1}^{S} \sum_{t=e_{s-1}+1}^{e_{s}} \mathbb{E}[\|\nabla \widetilde{\mathcal{L}}(\bm{\theta}_t)\|_{\mathrm{F}}^2] \\
    &= [ \eta^2 \Delta_{\mathbf{b}+\mathbf{n}}^{++\prime}  - ( 1-L_f\eta) ] \sum_{s=1}^{S} \sum_{t=e_{s-1}+1}^{e_{s}} \mathbb{E}[\|\nabla \widetilde{\mathcal{L}}(\bm{\theta}_t)\|_{\mathrm{F}}^2],
\end{aligned}
\end{equation}
where $(a)$ is due to $E_s \leq K$ and Lemma~\ref{lemma:upper-bound-sgcn-plus-plus-mse}.

Notice that $\eta = \frac{2}{L_f+\sqrt{L_f^2+4\Delta_{\mathbf{b}+\mathbf{n}}^{++\prime}}}$ is a root of equation $\eta^2 \Delta_{\mathbf{b}+\mathbf{n}}^{++\prime} - ( 1-L_f\eta) = 0$.
Therefore we have 
\begin{equation}
    \sum_{t=1}^T \mathbb{E}[ \|\nabla \mathcal{L}(\bm{\theta}_t)\|_{\mathrm{F}}^2 ] \leq \frac{2}{\eta} \Big( \mathbb{E}[ \mathcal{L}(\bm{\theta}_1) ] - \mathbb{E}[\mathcal{L}(\bm{\theta}^\star)] \Big)
\end{equation}
which implies
\begin{equation}
    \frac{1}{T}\sum_{t=1}^T \mathbb{E}[ \|\nabla \mathcal{L}(\bm{\theta}_t)\|_{\mathrm{F}}^2 ] \leq \frac{1}{T} \Big(L_f+\sqrt{L_f^2+4\Delta_{\mathbf{b}+\mathbf{n}}^{++\prime}}\Big)\Big( \mathbb{E}[ \mathcal{L}(\bm{\theta}_1) ] - \mathbb{E}[\mathcal{L}(\bm{\theta}^\star)] \Big)
\end{equation}

\section{Connection to composite optimization}\label{supp:connect_to_comp_opt}

In this section, we formally compare the optimization problem in training GCNs to the standard composite optimization and highlight the key differences that necessitates developing a completely different variance reduction schema and convergence analysis compared to the composite optimization counterparts (e.g., see~\cite{fang2018spider}). 

\paragraph{Different objective function.}
In composite optimization, the output of the lower-level function is treated as the \emph{parameter} of the outer-level function. 
However in GCN, the output of the lower-level function is used as the \emph{input} of the outer-level function, and the parameter of the outer-level function is independent of the output of the inner-layer result. 

More specifically, a two-level composite optimization problem can be formulated as
\begin{equation}\label{eq:2_level_obj}
    F(\bm{\theta}) = \frac{1}{N}\sum_{i=1}^N f_i\Big(\frac{1}{M}\sum_{j=1}^M g_j(\mathbf{w}) \Big),~\bm{\theta} = \{\mathbf{w}\},
\end{equation}
where $f_i(\cdot)$ is the outer-level function computed on the $i$th data point, $g_j(\cdot)$ is the inner-level function computed on the $j$th data point, and $\mathbf{w}$ is the parameter. 
We denote $\nabla f_i(\cdot)$ and $\nabla g_j(\cdot)$ as the gradient. 
Then, the gradient for Eq.~\ref{eq:2_level_obj} is computed as
\begin{equation}\label{eq:2_level_obj_grad}
    \nabla F(\bm{\theta}) = \Big[ \frac{1}{N}\sum_{i=1}^N \nabla f_i \Big( \frac{1}{M}\sum_{j=1}^M g_j(\mathbf{w}) \Big) \Big] \Big( \frac{1}{M}\sum_{j=1}^M \nabla g_j(\mathbf{w}) \Big),~\bm{\theta} = \{\mathbf{w}\},
\end{equation}
where the dependency between inner- and outer-level sampling are not considered.
One can independently sample inner layer data to estimate $\tilde{g}\approx \frac{1}{M}\sum_{j=1}^M g_j(\mathbf{w})$ and $\nabla \tilde{g} \approx \frac{1}{M}\sum_{j=1}^M \nabla g_j (\mathbf{w})$, sample outer layer data to estimate $\nabla \tilde{f} \approx \frac{1}{N}\sum_{i=1}^N \nabla f_i(\tilde{g})$, then estimate $\nabla F(\bm{\theta})$ by using $[\nabla \tilde{f}]^\top \nabla \tilde{g}$.

By casting the optimizaion problem in GCN as composite optimization problem in Eq.~\ref{eq:2_level_obj}, we have 
\begin{equation}\label{eq:gcn_obj_from_comp_obj}
    \begin{aligned}
    \mathcal{L}(\bm{\theta}) &= \frac{1}{B}\sum_{i\in\mathcal{V}_\mathcal{B}} \text{Loss}(\mathbf{h}_i^{(L)}, y_i),~\text{where}~\bm{\theta} = \{ \mathbf{W}^{(1)}\}, \\
    \mathbf{H}^{(L)} &= \sigma(\widetilde{\mathbf{L}}^{(L)} \mathbf{X} \widetilde{\mathbf{W}}^{(L)}),~\text{and}~ \widetilde{\mathbf{W}}^{(\ell)} = \sigma ( \widetilde{\mathbf{L}}^{(\ell-1)} \mathbf{X} \widetilde{\mathbf{W}}^{(\ell-1)} ),
    \end{aligned}
\end{equation}

which is different from the vanilla GCN model. To see this, we note that in vanilla GCNs, since the sampled nodes at the $\ell$th layer are dependent from the nodes sampled at the $(\ell+1)$th layer, we have $\mathbb{E}[\widetilde{\mathbf{L}}^{(\ell)}] = \mathbf{\mathbf{P}}^{(\ell)} \neq \mathbf{L}$.  However in Eq.~\ref{eq:gcn_obj_from_comp_obj}, since the sampled nodes have no dependency on the weight matrices or nodes sampled at other layers, we can easily obtain $\mathbb{E}[\widetilde{\mathbf{L}}^{(\ell)}] = \mathbf{L}$.  These key differences makes the analysis more involved and are reflected in all three theorems, that give us  different results.

\paragraph{Different gradient computation and algorithm.} The stochastic gradients to update the parameters in Eq.~\ref{eq:gcn_obj_from_comp_obj} are computed as 
\begin{equation}
    \frac{\partial \mathcal{L}(\bm{\theta})}{\partial \widetilde{\mathbf{W}}^{(\ell)}} = \frac{\partial \mathcal{L}(\bm{\theta})}{\partial \widetilde{\mathbf{W}}^{(L)}} \Big( \prod_{j=\ell+1}^L \frac{\partial \widetilde{\mathbf{W}}^{(j)} }{\partial \widetilde{\mathbf{W}}^{(j-1)}}\Big).
\end{equation}
However in GCN, there are two types of gradient at each layer (i.e., $\tilde{\mathbf{D}}^{(\ell)}$ and $\tilde{\mathbf{G}}^{(\ell)}$) that are fused with each other (i.e., $\tilde{\mathbf{D}}^{(\ell)}$ is a part of $\tilde{\mathbf{G}}^{(\ell-1)}$ and $\tilde{\mathbf{D}}^{(\ell)}$ is a part of $\tilde{\mathbf{D}}^{(\ell-1)}$) but with different functionality.
$\tilde{\mathbf{D}}^{(\ell)}$ is passing gradient between different layers, $\tilde{\mathbf{G}}^{(\ell)}$ is passing gradient to weight matrices. 

These two types of gradient and their coupled relation make both algorithm and analysis different from~\cite{zhang2019multi}. 
For example in~\cite{zhang2019multi}, the zeroth-order variance reduction is applied to $\widetilde{\mathbf{W}}_t^{(\ell)}$ in Eq.~\ref{eq:2_level_obj} (please refer to Algorithm 3 in~\cite{zhang2019multi}), where $\widetilde{\mathbf{W}}_{t-1}^{(\ell)}$ is used as a control variant to reduce the variance of $\widetilde{\mathbf{W}}_{t}^{(\ell)}$, i.e.,
\begin{equation}
    \widetilde{\mathbf{W}}_t^{(\ell+1)} = \widetilde{\mathbf{W}}_{t-1}^{(\ell+1)} + \sigma(\widetilde{\mathbf{L}}_t^{(\ell)} \mathbf{X} \widetilde{\mathbf{W}}_t^{(\ell)}) - \sigma(\widetilde{\mathbf{L}}_t^{(\ell)} \mathbf{X} \widetilde{\mathbf{W}}_{t-1}^{(\ell)}).
\end{equation}
However in \texttt{SGCN++}, the zeroth-order variance reduction is applied to $\widetilde{\mathbf{H}}_t^{(\ell)}$. Because the node sampled at the $t$th and $(t-1)$th iteration are unlikely the same, we cannot directly use $\mathbf{H}^{(\ell)}_{t-1}$ to reduce the variance of $\mathbf{H}^{(\ell)}_{t}$. Instead, the control variant in \texttt{SGCN++} is computed by applying historical weight $\mathbf{W}^{(\ell)}_{t-1}$ on the historical node embedding from previous layer $\mathbf{H}^{(\ell-1)}_{t-1}$, i.e.,
\begin{equation}
    \widetilde{\mathbf{H}}_t^{(\ell)} = \widetilde{\mathbf{H}}_{t-1}^{(\ell)} + \sigma(\widetilde{\mathbf{L}}_t^{(\ell)} \mathbf{H}^{(\ell-1)}_{t} \mathbf{W}_{t}^{(\ell)}) - \sigma(\widetilde{\mathbf{L}}_t^{(\ell)} \mathbf{H}^{(\ell-1)}_{t-1} \mathbf{W}_{t-1}^{(\ell)}).
\end{equation}
These changes are not simply heuristic modifications, but all reflected in the analysis and the result.

\paragraph{Different theoretical results and intuition.} The aforementioned differences further result in a novel analysis of Theorem~\ref{theorem:convergence_of_sgcn}, where we show that the vanilla sampling-based GCNs suffer a residual error $\Delta_\mathbf{b}$ that is not decreasing as the number of iterations $T$ increases, and this residual error is strongly connected to the difference between sampled and full Laplacian matrices. This is one of our novel observations for GCNs, when compared to (1) multi-level composite optimization with layerwise changing learning rate~\cite{yang2019multilevel,chen2020solving}, (2) variance reduction based methods~\cite{zhang2019multi}, and (3) the previous analysis on the convergence of GCNs~\cite{fastgcn,biased_but_consistent}.
Our observation can be used as a theoretical motivation on using first-order and doubly variance reduction, and can mathematically explain why VRGCN outperform GraphSAGE, even with fewer nodes during training.
Furthermore, as the algorithm and gradient computation are different, the theoretical results in Theorems~\ref{theorem:convergence_of_sgcn_plus} and~\ref{theorem:convergence_of_sgcn_plus_plus} are also different. 


\paragraph{Different mean-square error formulation.}
Both \cite{cong2020minimal} and ours are inspired by bias-variance decomposition, with a key difference in formulation. 
Let denote $\mathbf{g}$ as the stochastic gradient and $\nabla F(\bm{\theta})$ as the full-batch gradient. Our formulation follows the standard definition of mean-square error of stochastic gradient, which is $\mathbb{E}[\| \mathbf{g} - \nabla F(\bm{\theta})\|_\mathrm{F}^2] = \mathbb{E}[\| \mathbf{g} - \mathbb{E}[\mathbf{g}]\|_\mathrm{F}^2] + \mathbb{E}[\| \mathbb{E}[\mathbf{g}] - \nabla F(\bm{\theta})\|_\mathrm{F}^2]$, where $\mathbb{E}[\| \mathbf{g} - \mathbb{E}[\mathbf{g}]\|_\mathrm{F}^2]$ is known as the variance and $\mathbb{E}[\| \mathbb{E}[\mathbf{g}] - \nabla F(\bm{\theta})\|_\mathrm{F}^2]$ is known as the bias. [5] treats $\mathbb{E}[\mathbf{g}]$ as the gradient computed using all neighbors with mini-batch sampling (Eq.~2 in [5]), which is different from ours. This difference results in a different analysis and theoretical results. Furthermore, [5] does not provide any further analysis (e.g., convergence) based on their observation of bias-variance decomposition.

\clearpage

\bibliographystyle{spmpsci}
\bibliography{reference}

\end{document}